\documentclass[a4paper,12pt]{article}
\usepackage{geometry}
\usepackage{cuted}
\usepackage{lipsum}
\usepackage{graphicx}
\usepackage{authblk}
\setlength{\parindent}{2em}
\setlength{\parskip}{0.8em}

\usepackage{chessfss}
\usepackage{xkeyval}
\usepackage{xifthen}
\usepackage{pgfcore}
\usepgfmodule{shapes}
\usepackage{pst-node}
\usepackage{chessboard}
\usepackage{float}
\usepackage[english]{babel}
\usepackage{csquotes}
\usepackage{tikz}
\usepackage[vlined,ruled]{algorithm2e}
\usepackage{xcolor}
\usepackage{algorithm2e}
\usepackage[title]{appendix}
% clean citations
\usepackage{graphicx}
\usepackage{lineno}
%\usepackage[normalem]{ulem}
%\usepackage[font={small}]{caption}
%\usepackage[labelfont=bf]{caption}
%\usepackage{color}
% Math packages
\usepackage{amsmath,amsfonts,amsthm}
\newtheorem{theorem}{Theorem}
\usepackage[numbers,sort,compress]{natbib}
\usepackage[normal,footnotesize,labelfont=bf,up,textfont=it,up]{caption}	% Custom captions under/above floats
\usepackage{booktabs}												% Nicer tables
\usepackage{fix-cm}													% Custom fontsizes
\usepackage{todonotes}
\usepackage[section]{placeins}
\let\Oldsection\section
\renewcommand{\section}{\FloatBarrier\Oldsection}

\let\Oldsubsection\subsection
\renewcommand{\subsection}{\FloatBarrier\Oldsubsection}

\let\Oldsubsubsection\subsubsection
\renewcommand{\subsubsection}{\FloatBarrier\Oldsubsubsection}

% Clickeable table of contents

\usepackage{color}   %May be necessary if you want to color links
\usepackage{hyperref}
\hypersetup{
    colorlinks=true, %set true if you want colored links
    linktoc=all,     %set to all if you want both sections and subsections linked
    linkcolor=blue,  %choose some color if you want links to stand out
}

\hypersetup{
    colorlinks=true,
    linkcolor=blue,
    filecolor=magenta,      
    urlcolor=blue,
}
 \geometry{
 a4paper,
 total={170mm,257mm},
 left=20mm,
 top=20mm,
 }
 
\title {\textbf{Algebraic Machine Learning}}
\author[1,*]{Fernando Martin-Maroto}
\author[2,+]{Gonzalo G. de Polavieja}
\affil[1]{\small Algebraic AI Inc. Santa Cruz, CA, USA}
\affil[2]{\small Champalimaud Research, Lisbon, Portugal}

\affil[*]{\small martin.maroto@algebraic.ai}
\affil[+]{\small gonzalo.polavieja@neuro.fchampalimaud.org}

%textsuperscript{1} Algebraic AI Inc. Santa Cruz, CA, USA\\
%\textsuperscript{2} Collective Behavior Laboratory, Champalimaud Research, Lisbon, Portugal\\}
%\vspace*{4.6in}
%{
%\scriptsize\textsuperscript{1} martin.maroto@algebraic.ai,\, \textsuperscript{2} gonzalo.polavieja@neuro.fchampalimaud.org \\
%}

\begin{document}
\maketitle
\textbf{Machine learning algorithms use error function minimization to fit a large set of parameters in a preexisting model. However, error minimization eventually leads to a memorization of the training dataset, losing the ability to generalize to other datasets. To achieve generalization something else is needed, for example a regularization method or stopping the training when error in a validation dataset is minimal. Here we propose a different approach to learning and generalization that is parameter-free, fully discrete and that does not use function minimization. We use the training data to find an algebraic representation with minimal size and maximal freedom, explicitly expressed as a product of irreducible components. This algebraic representation is shown to directly generalize, giving high accuracy in test data, more so the smaller the representation. We prove that the number of generalizing representations can be very large and the algebra only needs to find one. We also derive and test a relationship between compression and error rate. We give results for a simple problem solved step by step, hand-written character recognition, and the Queens Completion problem as an example of unsupervised learning. As an alternative to statistical learning, algebraic learning may offer advantages in combining bottom-up and top-down information, formal concept derivation from data and large-scale parallelization.
}
\newpage

\tableofcontents

%\linenumbers

\newpage

\section{Introduction}

Algebras have played an important role in logic and top-down approaches in Artificial Intelligence (AI) \cite{Nilsson1991}. They are still an active area of research in information systems, for example in knowledge representation, queries and inference \cite{Pouly2011}. Machine learning (ML) branched out from AI as a bottom-up approach of learning from data. Here we show how to use an algebraic structure \cite{Burris} to learn from data. This research programme may then be seen as a proposal to naturally combine top-down and bottom-up approaches. More specifically, we are interested in an approach to learning from data that is parameter-free and transparent to make analysis and formal proofs easier. Also, we want to explore the formation of concepts from data as transformations that lead to a large reduction of the size of an algebraic representation.  

We show how to express learning problems as elements and relationships in an extended semilattice algebra. We give a concrete algebraic algorithm, the \emph{Sparse Crossing}, that finds solutions as sets of \enquote{atomic} elements, or atoms. Learning takes place by algebraic transformations that minimize the number of atoms. The algorithm is stochastic and discrete with no floating point operations.

The algebraic approach has important differences to more standard approaches. It does not use function minimization. Minimizing functions has proven very useful in ML. However, the functions typically used have complex geometries with local minima. Navigating these surfaces often requires large datasets and special methods to avoid getting stuck in the local minima. These surfaces depend on many parameters that might need tuning with heuristic procedures. 

Instead of function minimization, our algebraic algorithm uses cardinal minimization. i.e. minimization of the number of atoms. It learns smoothly, with error rates in the test set decreasing with the number of training examples and with no risk of getting trapped in local minima. We found no evidence of overfitting using algebraic learning, so we do not use a validation dataset. Also, it is parameter-free, so there is no need to prealocate parameter values like a network architecture, with the algebra growing by itself using the training data. 

We studied Algebraic Learning in four examples to illustrate different properties. We start with the toy supervised problem of learning to classify images by whether they contain a vertical bar or not. The simplicity of this problem allows for analysis. We show how algebraic learning explicitly finds that the positive examples are indeed those that contain a vertical bar. We show also that the number of solutions with low error is astronomically large and the learning algorithm just needs to find one of them. This might also be the case in other systems, but for algebras it can be demonstrated.

Algebraic Learning is designed to \enquote{compress} training examples into atoms and not directly aimed at reducing the error. For this reason, we had to establish a relationship between compression and accuracy. We found that an algebra picked at random among the ones obeying the training examples has an error rate in test data inversely proportional to compression. We tested this theoretical result against experimental data obtained applying the Sparse Crossing algorithm to the problem of distinguishing images with even number of vertical bars from those with an odd number of bars. We found that Sparse Crossing is as at least as efficient in transforming compression into accuracy and fits very well the theoretical result when error rate is small.

We also tested the performance of algebraic learning in handwritten character classification. We used a single abstraction stage (a single processing \emph{layer}) operating in raw data, without preprocessing and with a training set that contains miss-labels. Algebraic learning achieves a good accuracy of about $99\%$ when distinguishing a digit from the rest. This is done with no overfitting and even when accuracy is not an explicit target of the algorithm. 

Our last example is the $N$-blocked $M\times M$ Queens Completion problem. Starting from $N$ blocked queens on an $M\times M$ chessboard, we need to place $M-N$ queens on the board in non-attacking positions. We encode board and attack rules as algebraic relations, and show that Algebraic Learning generates complete solutions for the standard $8 \times 8$ board and also in larger boards. Learning in this example is unsupervised, with the algebra learning the structure of the search space.

\section{The embedding algorithm}

\subsection{A toy problem illustrating algebraic learning}

Consider the very simple problem of learning how to classify $2 \times 2$ images in which pixels can be in black or white. We will learn how to classify these images into two classes using as training data the following five examples
\begin{center}
\begin{tikzpicture}[scale=0.5]

% First positive example

\draw[thick] (0,0)--(2,0);    
\draw[thick] (1,0)--(1,2);    
\draw[thick] (2,0)--(2,2);    
\draw[thick] (0,0)--(0,2);  
\draw[thick] (0,2)--(2,2);
\draw[thick] (0,1)--(2,1);

%\draw[fill=black] (-0.5,1.5) rectangle (0.5,2.5);  
\draw[fill=black] (0,0) rectangle (1,2);  
%\draw[fill=black] (1,0) rectangle (2,1);  

% Second positive example

\draw[thick] (3,0)--(5,0);    
\draw[thick] (4,0)--(4,2);    
\draw[thick] (5,0)--(5,2);    
\draw[thick] (3,0)--(3,2);  
\draw[thick] (3,2)--(5,2);
\draw[thick] (3,1)--(5,1);

\draw[fill=black] (4,0) rectangle (5,2);  

% Separator between positive and negative examples

\draw[thick] (5.5,-0.5)--(5.5,2.5);

% First negative example

\draw[thick] (6,0)--(8,0);    
\draw[thick] (6,0)--(6,2);    
\draw[thick] (8,0)--(8,2);    
\draw[thick] (6,1)--(8,1);  
\draw[thick] (6,2)--(8,2);
\draw[thick] (7,0)--(7,2);

\draw[fill=black] (6,1) rectangle (7,2);  
\draw[fill=black] (7,0) rectangle (8,1);  

% Second negative example

\draw[thick] (9,0)--(11,0);    
\draw[thick] (10,0)--(10,2);    
\draw[thick] (11,0)--(11,2);    
\draw[thick] (9,0)--(9,2);  
\draw[thick] (9,2)--(11,2);
\draw[thick] (9,1)--(11,1);

\draw[fill=black] (10,1) rectangle (11,2);  

% third negative example

\draw[thick] (12,0)--(14,0);    
\draw[thick] (13,0)--(13,2);    
\draw[thick] (14,0)--(14,2);    
\draw[thick] (12,0)--(12,2);  
\draw[thick] (12,2)--(14,2);
\draw[thick] (12,1)--(14,1);

\draw[fill=black] (13,0) rectangle (14,1);  

\end{tikzpicture}
\end{center}
We label the two examples on the left as belonging to the  \enquote{positive class} because they include a black vertical bar, and name them as $T_{1}^{+}$ and $T_{2}^{+}$. The three examples on the right are the  \enquote{negative} class, $T_{1}^{-}$, $T_{2}^{-}$ and $T_{3}^{-}$. Our goal is to build an algebra that can learn from the training how to classify new images as belonging to the positive or negative class.

\subsection{Elements of the algebra}

To embed a problem into an algebra we need the algebra to have at least one operator that is idempotent, associative and commutative. In this paper we use semilattices, the simplest algebraic structures with such operator. 

We will have three types of elements: constants, terms and atoms. Constants are the primitive description elements of our embedding problem. For images, for example, constants can be each of the pixels in black or white. For our $2 \times 2$ images we would then have the $8$ constants\newline
\begin{center}
\begin{tikzpicture}[scale=0.5]
% Now the constants
% First constant

\draw[thick] (-0.25,-5)--(0.75,-5);    
\draw[thick] (-0.25,-4)--(0.75,-4);    
\draw[thick] (-0.25,-5)--(-0.25,-4);    
\draw[thick] (0.75,-5)--(0.75,-4);  

\draw[fill=black] (-0.50,-5.25) rectangle (0,-4.75);  

% Second constant

\draw[thick] (1.5,-5)--(2.5,-5);    
\draw[thick] (1.5,-4)--(2.5,-4);    
\draw[thick] (1.5,-5)--(1.5,-4);    
\draw[thick] (2.5,-5)--(2.5,-4);  

\draw[fill=black] (1.25,-4.25) rectangle (1.75,-3.75); 

% Third constant

\draw[thick] (3.25,-5)--(4.25,-5);    
\draw[thick] (3.25,-4)--(4.25,-4);    
\draw[thick] (3.25,-5)--(3.25,-4);    
\draw[thick] (4.25,-5)--(4.25,-4);  

\draw[fill=black] (4,-4.25) rectangle (4.5,-3.75);  

% Fourth constant

\draw[thick] (5,-5)--(6,-5);    
\draw[thick] (5,-4)--(6,-4);    
\draw[thick] (5,-5)--(5,-4);    
\draw[thick] (6,-5)--(6,-4);  

\draw[fill=black] (5.75,-5.25) rectangle (6.25,-4.75);  

% 5th constant (add 6.25 in x coordinate, and black goes to white)

\draw[thick] (6.75,-5)--(7.75,-5);    
\draw[thick] (6.75,-4)--(7.75,-4);    
\draw[thick] (6.75,-5)--(6.75,-4);    
\draw[thick] (7.75,-5)--(7.75,-4);  

\draw[fill=white] (6.5,-5.25) rectangle (7,-4.75);  

% 6th constant

\draw[thick] (8.50,-5)--(9.5,-5);    
\draw[thick] (8.50,-4)--(9.5,-4);    
\draw[thick] (8.50,-5)--(8.5,-4);    
\draw[thick] (9.50,-5)--(9.50,-4);  

\draw[fill=white] (8.25,-4.25) rectangle (8.75,-3.75); 

% 7th constant

\draw[thick] (10.25,-5)--(11.25,-5);    
\draw[thick] (10.25,-4)--(11.25,-4);    
\draw[thick] (10.25,-5)--(10.25,-4);    
\draw[thick] (11.25,-5)--(11.25,-4);  

\draw[fill=white] (11,-4.25) rectangle (11.5,-3.75);  

% 8th constant

\draw[thick] (12,-5)--(13,-5);    
\draw[thick] (12,-4)--(13,-4);    
\draw[thick] (12,-5)--(12,-4);    
\draw[thick] (13,-5)--(13,-4);  

\draw[fill=white] (12.75,-5.25) rectangle (13.25,-4.75);  
\end{tikzpicture}
\end{center}
that we write as $c_{1}$ to $c_8$. 

The terms are formed by operating constants with the \enquote{merge} (or \enquote{idempotent summation}) operation, for which we use the symbol $\odot$.  This is our binary operation that is commutative, associative and idempotent. In the case of terms describing images, terms are sets of pixels. For example, the first example in the training set is a term that can be expressed as the merge of four constants as  
\begin{center}
\begin{tikzpicture}[scale=0.5]

% First positive example

\draw[thick] (0,0)--(2,0);    
\draw[thick] (1,0)--(1,2);    
\draw[thick] (2,0)--(2,2);    
\draw[thick] (0,0)--(0,2);  
\draw[thick] (0,2)--(2,2);
\draw[thick] (0,1)--(2,1);

%\draw[fill=black] (-0.5,1.5) rectangle (0.5,2.5);  
\draw[fill=black] (0,0) rectangle (1,2);  
%\draw[fill=black] (1,0) rectangle (2,1);  

% Equal symbol

\node[text width=0.1cm] at (2.9,0.9) 
   {$\LARGE{=}$};

% First constant

\draw[thick] (4.25,1.5)--(5.25,1.5);    
\draw[thick] (4.25,0.5)--(5.25,0.5);    
\draw[thick] (4.25,0.5)--(4.25,1.5);    
\draw[thick] (5.25,0.5)--(5.25,1.5);  

\draw[fill=black] (4,0.25) rectangle (4.5,0.75);  

% MERGING SYMBOL

\node[text width=0.1cm] at (6,0.9) 
   {$\LARGE{\odot}$};

% Second constant

\draw[thick] (7.25,1.5)--(8.25,1.5);    
\draw[thick] (7.25,0.5)--(8.25,0.5);    
\draw[thick] (8.25,0.5)--(8.25,1.5);    
\draw[thick] (7.25,0.5)--(7.25,1.5);  

\draw[fill=black] (7,1.25) rectangle (7.5,1.75); 

% MERGING SYMBOL

\node[text width=0.1cm] at (9,0.9) 
   {$\LARGE{\odot}$};

% 7th constant

\draw[thick] (10.25,1.5)--(11.25,1.5);    
\draw[thick] (10.25,0.5)--(11.25,0.5);    
\draw[thick] (11.25,0.5)--(11.25,1.5);    
\draw[thick] (10.25,0.5)--(10.25,1.5);  

\draw[fill=white] (11,1.25) rectangle (11.5,1.75);  

\node[text width=0.1cm] at (12.25,0.9) 
   {$\LARGE{\odot}$};
   
% 8th constant

\draw[thick] (13.5,1.5)--(14.5,1.5);    
\draw[thick] (13.5,0.5)--(14.5,0.5);    
\draw[thick] (14.5,0.5)--(14.5,1.5);    
\draw[thick] (13.5,0.5)--(13.5,1.5);  

\draw[fill=white] (14.25,0.25) rectangle (14.75,0.75);  
\end{tikzpicture}
\end{center}
Atoms are elements created by the learning algorithm, and we reserve greek letters for them. Similarly to terms being a merge of constants, $\odot_{i} c_{i}$, each constant is a merge of atoms, $\odot_{i} \phi_{i}$. A term is therefore also a merge of atoms.

An idempotent operator defines a partial order. Specifically, the merge operator allows us to establish the inclusion relationship \enquote{$<$} between elements $a$ and $b$ of the algebra, $a<b$, iff $a \odot b = b$. Take as example our first training image, which was the merge of four constants, $T_{1}^{+}=c_{1} \odot c_{2} \odot c_{7} \odot c_{8}$. Any of these constants, say $c_{1}$, obeys $c_{1} < T_{1}^{+}$, because $c_{1} \odot T_{1}^{+}=T_{1}^{+}$. Similarly, for a constant made of atoms, each of these atoms is \enquote{in} or \enquote{included in} the constant.

The \enquote{training set} of the algebra consists of a set $R$ of positive and negative relations of the form $(v<T_{1}^{+})$ or $\neg(v<T_{3}^{-})$ where $v$ is a constant, one we want to describe to our algebra by using examples and counterexamples. 
\newline
\begin{center}
\begin{tikzpicture}[scale=0.35]

% First positive example

\draw[thick] (0,0)--(2,0);    
\draw[thick] (1,0)--(1,2);    
\draw[thick] (2,0)--(2,2);    
\draw[thick] (0,0)--(0,2);  
\draw[thick] (0,2)--(2,2);
\draw[thick] (0,1)--(2,1);

%\draw[fill=black] (-0.5,1.5) rectangle (0.5,2.5);  
\draw[fill=black] (0,0) rectangle (1,2);  
%\draw[fill=black] (1,0) rectangle (2,1);  

% Vertical constant and atoms

\node[text width=0.1cm] at (-3,1) 
    {v};
\node[text width=0.1cm] at (-1.7,1) 
    {{$<$}};

% Third negative example

\draw[thick] (12,0)--(14,0);    
\draw[thick] (13,0)--(13,2);    
\draw[thick] (14,0)--(14,2);    
\draw[thick] (12,0)--(12,2);  
\draw[thick] (12,2)--(14,2);
\draw[thick] (12,1)--(14,1);

\draw[fill=black] (13,0) rectangle (14,1);  

\node[text width=0.1cm] at (9,1) 
    {v};
\node[text width=0.1cm] at (10.7,1) 
    {{$\not<$}};

\end{tikzpicture}
\end{center}

The learning algorithm transforms semilattices into other semilattices in a series of steps until finding one that satisfies the training set $R$. Using Model Theory\cite{Tent2012} jargon, we want to find a model of the theory of semilattices extended with a set of literals (the training set $R$).

\subsection{Graph of the algebra}

We use a graph to make the abstract notion of algebra more concrete and computationally amenable. Nodes in the graph $G$ are elements of the algebra. An enormous amount of terms can be defined from a set of constants. The graph has nodes only for the subset of terms mentioned in the training relations plus the \enquote{pinning terms}, terms that are calculated by the embedding algorithm and that we introduce later. We do not need to have a node for each possible term or element of the algebra. 

A directed edge $a \rightarrow b$ is used to represent some of the inclusion relationships between elements, but not all,
\begin{linenomath}
\begin{equation}
  a\rightarrow b \,\, \Rightarrow \,\, a < b,
\end{equation}
\end{linenomath}
where the implication only holds left to right. We add to the graph edges pointing from the component constants of a term to the node of the term. If a term $T$ is \emph{defined} as the merge of constants $c_i$ then
\begin{linenomath}
\begin{equation}
T\equiv \odot_i c_i \,\, \Rightarrow \,\, \forall i(c_i \rightarrow T).
\end{equation}
\end{linenomath}
If all the component constants of a term $T$ are also component constants of another term $S$ we add the edge $T \rightarrow S$.  We always use edges if any of the elements involved are atoms, 
\begin{linenomath}
\begin{equation}
 \phi \rightarrow b \,\, \Leftrightarrow \,\, \phi < b.
\end{equation}
\end{linenomath}
Graph edges can be seen as a graphical representation of an additional relation defined in our algebra that is transitive but not commutative. Graphs represent algebras only when they are transitively closed with respect to the edges. Directed edges are typically represented with arrows. However, to avoid clutter we use simple straight lines instead of arrows pointing upwards in the figures, which is unambiguous because $G$ is acyclic.  Also to avoid clutter, in the drawings we do not plot all implicit edges (for example, from atoms to terms). We also add a \enquote{$0$} atom included in all constants. This is not strictly necessary but will make exposition simpler. Our starting graph has already the form
\begin{center}
\begin{tikzpicture}[scale=0.35]

% First positive example

\draw[thick] (0,0)--(2,0);    
\draw[thick] (1,0)--(1,2);    
\draw[thick] (2,0)--(2,2);    
\draw[thick] (0,0)--(0,2);  
\draw[thick] (0,2)--(2,2);
\draw[thick] (0,1)--(2,1);

%\draw[fill=black] (-0.5,1.5) rectangle (0.5,2.5);  
\draw[fill=black] (0,0) rectangle (1,2);  
%\draw[fill=black] (1,0) rectangle (2,1);  

% Second positive example

\draw[thick] (3,0)--(5,0);    
\draw[thick] (4,0)--(4,2);    
\draw[thick] (5,0)--(5,2);    
\draw[thick] (3,0)--(3,2);  
\draw[thick] (3,2)--(5,2);
\draw[thick] (3,1)--(5,1);

\draw[fill=black] (4,0) rectangle (5,2);  

% Separator between positive and negative examples

\draw[thick] (5.5,-0.5)--(5.5,2.5);

% First negative example

\draw[thick] (6,0)--(8,0);    
\draw[thick] (6,0)--(6,2);    
\draw[thick] (8,0)--(8,2);    
\draw[thick] (6,1)--(8,1);  
\draw[thick] (6,2)--(8,2);
\draw[thick] (7,0)--(7,2);

\draw[fill=black] (6,1) rectangle (7,2);  
\draw[fill=black] (7,0) rectangle (8,1);  

% Second negative example

\draw[thick] (9,0)--(11,0);    
\draw[thick] (10,0)--(10,2);    
\draw[thick] (11,0)--(11,2);    
\draw[thick] (9,0)--(9,2);  
\draw[thick] (9,2)--(11,2);
\draw[thick] (9,1)--(11,1);

\draw[fill=black] (10,1) rectangle (11,2);  

% third negative example

\draw[thick] (12,0)--(14,0);    
\draw[thick] (13,0)--(13,2);    
\draw[thick] (14,0)--(14,2);    
\draw[thick] (12,0)--(12,2);  
\draw[thick] (12,2)--(14,2);
\draw[thick] (12,1)--(14,1);

\draw[fill=black] (13,0) rectangle (14,1);  

% Now the constants

% First constant

\draw[thick] (-0.25,-5)--(0.75,-5);    
\draw[thick] (-0.25,-4)--(0.75,-4);    
\draw[thick] (-0.25,-5)--(-0.25,-4);    
\draw[thick] (0.75,-5)--(0.75,-4);  

\draw[fill=black] (-0.50,-5.25) rectangle (0,-4.75);  

% Second constant

\draw[thick] (1.5,-5)--(2.5,-5);    
\draw[thick] (1.5,-4)--(2.5,-4);    
\draw[thick] (1.5,-5)--(1.5,-4);    
\draw[thick] (2.5,-5)--(2.5,-4);  

\draw[fill=black] (1.25,-4.25) rectangle (1.75,-3.75); 

% Third constant

\draw[thick] (3.25,-5)--(4.25,-5);    
\draw[thick] (3.25,-4)--(4.25,-4);    
\draw[thick] (3.25,-5)--(3.25,-4);    
\draw[thick] (4.25,-5)--(4.25,-4);  

\draw[fill=black] (4,-4.25) rectangle (4.5,-3.75);  

% Fourth constant

\draw[thick] (5,-5)--(6,-5);    
\draw[thick] (5,-4)--(6,-4);    
\draw[thick] (5,-5)--(5,-4);    
\draw[thick] (6,-5)--(6,-4);  

\draw[fill=black] (5.75,-5.25) rectangle (6.25,-4.75);  

% 5th constant (add 6.25 in x coordinate, and black goes to white)

\draw[thick] (6.75,-5)--(7.75,-5);    
\draw[thick] (6.75,-4)--(7.75,-4);    
\draw[thick] (6.75,-5)--(6.75,-4);    
\draw[thick] (7.75,-5)--(7.75,-4);  

\draw[fill=white] (6.5,-5.25) rectangle (7,-4.75);  

% 6th constant

\draw[thick] (8.50,-5)--(9.5,-5);    
\draw[thick] (8.50,-4)--(9.5,-4);    
\draw[thick] (8.50,-5)--(8.5,-4);    
\draw[thick] (9.50,-5)--(9.50,-4);  

\draw[fill=white] (8.25,-4.25) rectangle (8.75,-3.75); 

% 7th constant

\draw[thick] (10.25,-5)--(11.25,-5);    
\draw[thick] (10.25,-4)--(11.25,-4);    
\draw[thick] (10.25,-5)--(10.25,-4);    
\draw[thick] (11.25,-5)--(11.25,-4);  

\draw[fill=white] (11,-4.25) rectangle (11.5,-3.75);  

% 8th constant

\draw[thick] (12,-5)--(13,-5);    
\draw[thick] (12,-4)--(13,-4);    
\draw[thick] (12,-5)--(12,-4);    
\draw[thick] (13,-5)--(13,-4);  

\draw[fill=white] (12.75,-5.25) rectangle (13.25,-4.75);  
    
% Connectors from examples to constants

\draw[thin] (1,-0.3)--(0.25,-3.7);  
\draw[thin] (1,-0.3)--(2,-3.7);  
\draw[thin] (1,-0.3)--(10.75,-3.7);  
\draw[thin] (1,-0.3)--(12.5,-3.7);  

\draw[thin] (4,-0.3)--(5.5,-3.7); 
\draw[thin] (4,-0.3)--(3.75,-3.7); 
\draw[thin] (4,-0.3)--(7.25,-3.7); 
\draw[thin] (4,-0.3)--(9,-3.7); 

\draw[thin] (7,-0.3)--(2,-3.7); 
\draw[thin] (7,-0.3)--(5.5,-3.7); 
\draw[thin] (7,-0.3)--(7.25,-3.7); 
\draw[thin] (7,-0.3)--(10.75,-3.7); 

\draw[thin] (10,-0.3)--(3.75,-3.7); 
\draw[thin] (10,-0.3)--(7.25,-3.7); 
\draw[thin] (10,-0.3)--(9,-3.7); 
\draw[thin] (10,-0.3)--(12.5,-3.7); 

\draw[thin] (13,-0.3)--(5.5,-3.7); 
\draw[thin] (13,-0.3)--(7.25,-3.7); 
\draw[thin] (13,-0.3)--(9,-3.7); 
\draw[thin] (13,-0.3)--(10.75,-3.7); 

% Vertical constant and atoms

\node[text width=0.1cm] at (-3,-4.5) 
    {{$v$}};

\node[text width=0.1cm] at (6.7,-7.6) 
    {$0$};
    
% Connectors to atoms

% 0-v

\draw[thin] (7,-7)--(-3,-5.3);

% 0-contant 8

\draw[thin] (7,-7)--(12.5,-5.3);
\draw[thin] (7,-7)--(10.75,-5.3); 
\draw[thin] (7,-7)--(9,-5.3); 
\draw[thin] (7,-7)--(7.25,-5.3); 
\draw[thin] (7,-7)--(5.5,-5.3); 
\draw[thin] (7,-7)--(3.75,-5.3); 
\draw[thin] (7,-7)--(2,-5.3); 
\draw[thin] (7,-7)--(0.25,-5.3); 

% alpha

%\draw[thin] (-2.8,-5.3)--(-2.8,-9.6); 
%\draw[thin] (0.25,-5.3)--(-2.8,-9.6); 
%\draw[thin] (5.5,-5.3)--(-2.8,-9.6); 

% beta

%\draw[thin] (-2.8,-5.3)--(0.25,-9.6); 
%\draw[thin] (0.25,-5.3)--(0.25,-9.6); 
%\draw[thin] (3.75,-5.3)--(0.25,-9.6); 

% gamma

%\draw[thin] (0.25,-5.3)--(2,-9.6); 

% delta

%\draw[thin] (3.75,-5.3)--(3.75,-9.6); 

% delta

%\draw[thin] (5.5,-5.3)--(5.5,-9.6); 

%\draw[thin] (0.25,-5.3)--(0.25,-9.6); 
%\draw[thin] (0.25,-5.3)--(6.4,-9.6); 
%\draw[thin] (12.5,-5.3)--(6.4,-9.6); 

\end{tikzpicture}
\end{center}
From the edges we define the partial order $<$ as
\begin{linenomath}
\begin{equation}
\label{inclusionDef}
\forall\phi ((\phi \not\rightarrow a) \vee (\phi \rightarrow b)) \,\, \Leftrightarrow \,\, a < b,
\end{equation}
\end{linenomath}
where the universal quantifier runs over all atoms. The formula says that $a<b$ if and only if all the atoms edged to $a$ are also edged to $b$. 

When the graph is transitively closed it describes an algebra we call $M$. This algebra evolves during the learning process producing a model of the training relations $R$ at the end of the embedding. When we talk about $M$ we mean the algebra described by the graph at a given stage of the algorithm. 

\subsection{The dual algebra}

The algebraic manipulations we need to do are easier to perform using not only the algebra $M$ but also an auxiliary structure $M^{*}$. This $M^{*}$ is a semilattice closely related (but different) to the dual of $M$ \cite{Burris}, that we still call \enquote{the dual} and whose properties we detail in this section. We also use an extended algebraic structure $S$ that contains both semilattices $M$ and $M^*$, which have universes that are disjoint sets, i.e, an element of $S$ is either and element of $M$ or an element of $M^*$. The unary function $[ \ ]$ defined for $S$ maps the elements of $M$, say $a$ and $b$, into the elements $[a]$ and $[b]$ in $M^*$, that we call duals of $a$ and $b$. The duals of constants and terms are always constants and the dual of atoms are a new kind of element we name \enquote{dual-of-atom}. $M^{*}$ has constants, dual-of-atoms and atoms but it does not contain terms. Atoms of $M^{*}$ are not duals of any element of $M$. We refer to $M^{*}$ as the dual algebra and to $M$ as \enquote{the master} algebra.

Our algebra $S$ is characterized by the transitive, noncommutative relation \enquote{$\rightarrow$}, the partial order \enquote{$<$} and the unary operator $[ \ ]$. Besides the transitivity of \enquote{$\rightarrow$} and the definition of \enquote{$<$} given by Equation (\ref{inclusionDef}) we introduce the additional axiom,
\begin{linenomath}
\begin{equation}
a\rightarrow b \,\, \Rightarrow \,\, [b] \rightarrow [a],
\end{equation}
\end{linenomath}
that, again, only works from left to right. It means that the edges of the graph of $M$ are also edges of the graph of $M^*$ albeit reversed.

The auxiliary semilattice $M^*$ contains the images of the elements of $M$ under the unary operator $[ \ ]$, and has the reversed edges of $M$ plus some additional edges of its own and its own atoms. We introduced edges in $M$ to encode definitional relations like how a given training image (a term) is made up of particular pixels constants. In $M^*$ we add additional edges for the positive order relations of $R$ such as $v<T_{1}^{+}$,
\begin{linenomath}
\begin{equation}
[T_{1}^{+}] \rightarrow [v].
\end{equation}
\end{linenomath}
Positive order relations of our choosing are encoded with edges in $M^*$ and emerge in $M$ as reversed order relations, i.e. we get $(v<T_{1}^{+})$ from $[T_{1}^{+}] \rightarrow [v]$ at some point of the embedding process.

The graph of the dual $M^*$ has all the reversed edges of $M$ plus the edges corresponding to the positive order relations of $R$ and it should be also transitively closed. In this classification example, our training relations establish that $v$ is included in the positive training terms $T_{1}^{+}$ and $T_{2}^{+}$, so there are edges from the duals of both terms to the dual of $v$. Note again that these type of edges for relations of $R$ are not in the graph of $M$.
\begin{center}
\begin{tikzpicture}[scale=0.35]

% First positive example

\draw[thick] (0,-11)--(2,-11);    
\draw[thick] (1,-11)--(1,-9);    
\draw[thick] (2,-11)--(2,-9);    
\draw[thick] (0,-11)--(0,-9);  
\draw[thick] (0,-9)--(2,-9);
\draw[thick] (0,-10)--(2,-10);

%\draw[fill=black] (-0.5,1.5) rectangle (0.5,2.5);  
\draw[fill=black] (0,-11) rectangle (1,-9);  
%\draw[fill=black] (1,0) rectangle (2,1);  

% Second positive example

\draw[thick] (3,-11)--(5,-11);    
\draw[thick] (4,-11)--(4,-9);    
\draw[thick] (5,-11)--(5,-9);    
\draw[thick] (3,-11)--(3,-9);  
\draw[thick] (3,-9)--(5,-9);
\draw[thick] (3,-10)--(5,-10);

\draw[fill=black] (4,-11) rectangle (5,-9);  

% Separator between positive and negative examples

\draw[thick] (5.5,-11.5)--(5.5,-8.5);

% First negative example

\draw[thick] (6,-11)--(8,-11);    
\draw[thick] (7,-11)--(7,-9);    
\draw[thick] (8,-11)--(8,-9);    
\draw[thick] (6,-11)--(6,-9);  
\draw[thick] (6,-9)--(8,-9);
\draw[thick] (6,-10)--(8,-10);

\draw[fill=black] (6,-10) rectangle (7,-9);  
\draw[fill=black] (7,-11) rectangle (8,-10);  

% Second negative example

\draw[thick] (9,-11)--(11,-11);    
\draw[thick] (10,-11)--(10,-9);    
\draw[thick] (11,-11)--(11,-9);    
\draw[thick] (9,-11)--(9,-9);  
\draw[thick] (9,-9)--(11,-9);
\draw[thick] (9,-10)--(11,-10);

\draw[fill=black] (10,-10) rectangle (11,-9);  

% third negative example

\draw[thick] (12,-11)--(14,-11);    
\draw[thick] (13,-11)--(13,-9);    
\draw[thick] (14,-11)--(14,-9);    
\draw[thick] (12,-11)--(12,-9);  
\draw[thick] (12,-9)--(14,-9);
\draw[thick] (12,-10)--(14,-10);

\draw[fill=black] (13,-11) rectangle (14,-10);  

% Now the constants

% First atom

\draw[thick] (-0.25,-5)--(0.75,-5);    
\draw[thick] (-0.25,-4)--(0.75,-4);    
\draw[thick] (-0.25,-5)--(-0.25,-4);    
\draw[thick] (0.75,-5)--(0.75,-4);  

\draw[fill=black] (-0.50,-5.25) rectangle (0,-4.75);  

% Second constant

\draw[thick] (1.5,-5)--(2.5,-5);    
\draw[thick] (1.5,-4)--(2.5,-4);    
\draw[thick] (1.5,-5)--(1.5,-4);    
\draw[thick] (2.5,-5)--(2.5,-4);  

\draw[fill=black] (1.25,-4.25) rectangle (1.75,-3.75); 

% Third constant

\draw[thick] (3.25,-5)--(4.25,-5);    
\draw[thick] (3.25,-4)--(4.25,-4);    
\draw[thick] (3.25,-5)--(3.25,-4);    
\draw[thick] (4.25,-5)--(4.25,-4);  

\draw[fill=black] (4,-4.25) rectangle (4.5,-3.75);  

% Fourth constant

\draw[thick] (5,-5)--(6,-5);    
\draw[thick] (5,-4)--(6,-4);    
\draw[thick] (5,-5)--(5,-4);    
\draw[thick] (6,-5)--(6,-4);  

\draw[fill=black] (5.75,-5.25) rectangle (6.25,-4.75);  

% 5th constant (add 6.25 in x coordinate, and black goes to white)

\draw[thick] (6.75,-5)--(7.75,-5);    
\draw[thick] (6.75,-4)--(7.75,-4);    
\draw[thick] (6.75,-5)--(6.75,-4);    
\draw[thick] (7.75,-5)--(7.75,-4);  

\draw[fill=white] (6.5,-5.25) rectangle (7,-4.75);  

% 6th constant

\draw[thick] (8.50,-5)--(9.5,-5);    
\draw[thick] (8.50,-4)--(9.5,-4);    
\draw[thick] (8.50,-5)--(8.5,-4);    
\draw[thick] (9.50,-5)--(9.50,-4);  

\draw[fill=white] (8.25,-4.25) rectangle (8.75,-3.75); 

% 7th constant

\draw[thick] (10.25,-5)--(11.25,-5);    
\draw[thick] (10.25,-4)--(11.25,-4);    
\draw[thick] (10.25,-5)--(10.25,-4);    
\draw[thick] (11.25,-5)--(11.25,-4);  

\draw[fill=white] (11,-4.25) rectangle (11.5,-3.75);  

% 8th constant

\draw[thick] (12,-5)--(13,-5);    
\draw[thick] (12,-4)--(13,-4);    
\draw[thick] (12,-5)--(12,-4);    
\draw[thick] (13,-5)--(13,-4);  

\draw[fill=white] (12.75,-5.25) rectangle (13.25,-4.75);  
    
% Connectors from examples to constants

\draw[thin] (1,-8.7)--(0.25,-5.3);  
\draw[thin] (1,-8.7)--(2,-5.3);  
\draw[thin] (1,-8.7)--(10.75,-5.3);  
\draw[thin] (1,-8.7)--(12.5,-5.3);  

\draw[thin] (4,-8.7)--(3.75,-5.3); 
\draw[thin] (4,-8.7)--(5.5,-5.3); 
\draw[thin] (4,-8.7)--(7.25,-5.3); 
\draw[thin] (4,-8.7)--(9,-5.3); 

\draw[thin] (7,-8.7)--(2,-5.3); 
\draw[thin] (7,-8.7)--(5.5,-5.3); 
\draw[thin] (7,-8.7)--(7.25,-5.3); 
\draw[thin] (7,-8.7)--(10.75,-5.3); 

\draw[thin] (10,-8.7)--(3.75,-5.3); 
\draw[thin] (10,-8.7)--(7.25,-5.3); 
\draw[thin] (10,-8.7)--(9,-5.3); 
\draw[thin] (10,-8.7)--(12.5,-5.3); 

\draw[thin] (13,-8.7)--(5.5,-5.3); 
\draw[thin] (13,-8.7)--(7.25,-5.3); 
\draw[thin] (13,-8.7)--(9,-5.3); 
\draw[thin] (13,-8.7)--(10.75,-5.3); 

% Vertical constant and atoms

\node[text width=0.1cm] at (-3.25,-4.5) 
    {$[v]$};

\node[text width=0.1cm] at (7,-13.5) 
    {$0^{*}$};

\node[text width=0.1cm] at (6.4,-2) 
    {$[0]$};

%\node[text width=0.1cm] at (7,-14.5) 
%   {$\zeta_1$};

%\node[text width=0.1cm] at (10,-14.5) 
%    {$\zeta_2$};
    
%\node[text width=0.1cm] at (13,-14.5) 
%   {$\zeta_3$};    

%\node[text width=0.1cm] at (-3,-10) 
%    {$\Large{\alpha}$};

%\node[text width=0.1cm] at (0.25,-10) 
%    {$\Large{\beta}$};

%\node[text width=0.1cm] at (2,-10) 
%    {$\Large{\gamma}$};
    
%\node[text width=0.1cm] at (3.75,-10) 
%    {$\Large{\delta}$};

%\node[text width=0.1cm] at (5.5,-10) 
%    {$\Large{\epsilon}$};
    
%\node[text width=0.1cm] at (0.25,-10) 
%    {$\LARGE{\alpha}$};

%\node[text width=0.1cm] at (6.4,-10) 
%   {$\LARGE{0}$};
    
% Connectors to atoms

% 0*-Examples

\draw[thin] (7,-13)--(1,-11.3); 
\draw[thin] (7,-13)--(13,-11.3); 
\draw[thin] (7,-13)--(4,-11.3); 
\draw[thin] (7,-13)--(7,-11.3); 
\draw[thin] (7,-13)--(10,-11.3); 
%\draw[thin] (1,-13.9)--(13,-11.3); 
% First two examples -v

% eta-Examples
%\draw[thin] (7,-13.9)--(7,-11.3); 
%\draw[thin] (10,-13.9)--(10,-11.3); 
%\draw[thin] (13,-13.9)--(13,-11.3); 

% First two examples -v

\draw[thin] (1,-8.7)--(-2.8,-5.3); 
\draw[thin] (4,-8.7)--(-2.8,-5.3); 

% [0] to constants

\draw[thin] (7,-2.7)--(-2.8,-3.7);
\draw[thin] (7,-2.7)--(12.5,-3.7);
\draw[thin] (7,-2.7)--(10.75,-3.7);
\draw[thin] (7,-2.7)--(9,-3.7);
\draw[thin] (7,-2.7)--(7.25,-3.7);
\draw[thin] (7,-2.7)--(5.5,-3.7);
\draw[thin] (7,-2.7)--(3.75,-3.7);
\draw[thin] (7,-2.7)--(2,-3.7);---
\draw[thin] (7,-2.7)--(0.25,-3.7);
% alpha

%\draw[thin] (-2.8,-5.3)--(-2.8,-9.6); 
%\draw[thin] (0.25,-5.3)--(-2.8,-9.6); 
%\draw[thin] (5.5,-5.3)--(-2.8,-9.6); 

% beta

%\draw[thin] (-2.8,-5.3)--(0.25,-9.6); 
%\draw[thin] (0.25,-5.3)--(0.25,-9.6); 
%\draw[thin] (5.5,-5.3)--(0.25,-9.6); 

% gamma

%\draw[thin] (0.25,-5.3)--(2,-9.6); 

% delta

%\draw[thin] (3.75,-5.3)--(3.75,-9.6); 

% delta

%\draw[thin] (5.5,-5.3)--(5.5,-9.6); 

%\draw[thin] (0.25,-5.3)--(0.25,-9.6); 
%\draw[thin] (0.25,-5.3)--(6.4,-9.6); 
%\draw[thin] (12.5,-5.3)--(6.4,-9.6); 

\end{tikzpicture}
\end{center}
At the top of the graph of $M^{*}$ we draw the duals of the atoms of $M$, here only $[0]$, and at the bottom of the graph we draw the atoms of $M^{*}$, here $0^*$, again included to make our exposition simpler.

\subsection{Atomized models}

Equation (\ref{inclusionDef}) defines how to derive the partial order from the transitive, noncommutative edge relation \enquote{$\rightarrow$} and an special kind of elements we call \enquote{atoms}. We say that a model for which there is a description of the partial order in terms of a set of atoms is an \enquote{atomized} model.  In an atomized model all elements are sets of atoms. Using the language of Universal Algebra, when an algebra is atomized it explicitly becomes a direct product of directly idecomposable algebras \cite{Burris}. This does not mean, however, that we are restricting ourselves to some subset of possible models. The Stone theorem grants that any semilattice model can be described as an atomized model \cite{Burris}. 

We know how to derive the partial order from the atoms and edges but we have not given yet a definition for the idempotent operator.  The merge (or idempotent summation) of $a$ and $b$ is the element of the algebra atomized by a set of atoms that is the union of the atoms edged to $a$ and the atoms edged to $b$. The idempotent operator becomes a trivial set union of atoms. Obviously this operation is idempotent, commutative and associative. It is also consistent with our partial order given in equation \ref{inclusionDef} that satisfies $a<b$ iff $a \odot b=b$. Consistently, the partial order becomes the set inclusion. 

Before we continue with the embedding algorithm we are going to introduce some notation and redefine the problem we are trying to solve in terms of sets of atoms. In \textbf{Appendix \ref{Notation}} we define some useful sets. For the moment it is enough to consider the set ${\bf{GL}^{a}}(x)$ which is simply the set of atoms edged to element $x$ that is defined, as always, only when the graph is transitively closed. The \enquote{\textbf{G}} refers to the graph, the \enquote{\textbf{L}} to the lower segment and the superscript \enquote{\textbf{a}} to the atoms. The merge of $a$ and $b$ corresponds with the set of atoms
\begin{linenomath}
\begin{equation}
{\bf{GL}^{a}}(a \odot b)={\bf{GL}^{a}}(a) \cup {\bf{GL}^{a}}(b).
\label{eq:positive_encoding}
\end{equation}
\end{linenomath}
For our toy problem, we want a description of the constant $v$ and for the pixels (also constants) as a sets of atoms. Specifically, we want a model for which $v$ is a set included in the positive training images, ${T_{1}^{+}}$ and ${T_{2}^{+}}$ as
\begin{linenomath}
\begin{equation}
v<T_{i}^{+} \,\, \Leftrightarrow \,\, {\bf{GL}^{a}}(v) \subset {\bf{GL}^{a}}(T_{i}^{+}),
\label{eq:positive_encoding}
\end{equation}
\end{linenomath}
where the atoms of a term are the union of the atoms of its component constants. We are also looking for a particular atomic model for which the atoms of constant $v$ are not all in the terms corresponding with negative training examples
\begin{linenomath}
\begin{equation}
v\not <T_{i}^{-} \,\, \Leftrightarrow \,\,  {\bf{GL}^{a}}(v) \not \subset {\bf{GL}^{a}}(T_{i}^{-}).
\label{eq:negative_encoding}
\end{equation}
\end{linenomath}
The difficulty in finding the model lies in enforcing positive and negative training relations simultaneously, which translates in resolving a large system of equations and inequations over sets. The sets are made of elements we create in the process, the atoms, and there is the added difficulty of finding sets as small and as random (or as free) as possible. In {\bf{Sections \ref{memorizingGeneralizing}}} we introduce the concept of algebraic freedom and discuss its connection with randomness. 

We will use an operation, the \textit{crossing}, to enforce positive relations one by one. By doing so the model evolves through a series of semilattice models, all atomized, until becoming the model we want.  We can build the model step by step thanks to an invariance property related to a construct we name \textit{trace}. In the next sections we explain the trace and the crossing operation. After this we will show how to further reduce the size of model with a \textit{reduction} operation and how to do batch training. We will explain these operations for our toy example explicitly, and also give an analysis of the exact and approximate solutions.

\subsection{Trace and trace constraints} \label{traceConstraints}
The trace is central for the embedding procedure as a guiding tool for algebraic transformations. By operating the algebra while keeping the trace of some elements invariant, we can control the global effects caused by our local changes. 

The \textit{trace} $\textbf{Tr} (x)$ maps an element $x \in M$ to a set of atoms in $M^{*}$. To calculate the trace of $x$, we find first its atoms in the graph of $M$, which we write as ${\bf{GL}^{a}}(x)$. Say these are $N$ atoms ${\phi_i}$, with $\phi_{i} \rightarrow x$. Since atoms are minima of $M$, dual of atoms are maxima of $M^{*}$, so for each atom $\phi_{i}$ of $x$ there is a dual-of-atom at the top of the graph of $M^{*}$, $[\phi_i]$. Each of these $[\phi_i]$ also have an associated set of atoms in $M^{*}$, ${\bf{GL}^{a}}([\phi_{i}])$. The \textit{trace} of $x$ is defined as the intersection of these $N$ sets, ${\bf{Tr}}(x)=\bigcap_{i =1,2,...,N} {\bf{GL}^{a}}([\phi_i])$. Consistently, the trace of an atom $\phi$ equals ${\bf{Tr}}(\phi) \equiv {\bf{GL}^{a}}([\phi])$. In general we can write the trace as
\begin{linenomath}
\begin{equation}
\label{deftrace}
{\bf{Tr}}(x) \equiv \cap_{\phi \in {\bf{GL}^{a}}(x)} {\bf{GL}^{a}}([\phi]).
\end{equation}
\end{linenomath}
From this definition it follows that the \textit{trace} has the linearity property
\begin{linenomath}
\begin{equation}
{\bf{Tr}}(a \odot b)={\bf{Tr}}(a) \cap {\bf{Tr}}(b),
\end{equation}
\end{linenomath}
as the atoms in $M$ for $a \odot b$ are the union of the atoms of $a$ and the atoms of $b$ and therefore the trace is the intersection of the traces of $a$ and $b$. From this linearity and the definition of the order relation, $a<b$ iff $a \odot b=b$, it follows that an order relation is related to the traces as
\begin{linenomath}
\begin{equation}
a < b \,\, \Rightarrow \,\, {\bf{Tr}}(b) \subset {\bf{Tr}}(a).
\end{equation}
\end{linenomath}
This makes a correspondence between order relations in $M$ and trace interrelations between $M$ and $M^*$ that we call trace constraints. For our toy problem, we are interested in obeying trace constraints for the positive training examples, $v<T_{i}^{+}$, for which we then need to enforce ${{\bf{Tr}}(T_{i}^{+}) \subset \bf{Tr}}(v)$,
\begin{linenomath}
\begin{equation}
\textnormal{for} \ v<T_{i}^{+} \ \textnormal{enforce} \ {\bf{Tr}}(T_{i}^{+}) \subset {\bf{Tr}}(v).
\end{equation}
\end{linenomath}
This does not cause $v<T_{i}^{+}$ but it provides a necessary starting point. For negative training examples, $T_{i}^{-}$, we want to obey that $\neg(v < T_{i}^{-})$. This inclusion does not follow from (\ref{deftrace}), however, it can always be enforced if the embedding strategy is consistent as 
\begin{linenomath}
\begin{equation}
\textnormal{for} \ \neg(v < T_{i}^{-}) \ \textnormal{enforce} \ {\bf{Tr}}(T_{i}^{-}) \not\subset {\bf{Tr}}(v).
\end{equation}
\end{linenomath}
Once the trace constraint is met, no transformation of $M$ can produce $v < T_{i}^{-}$ unless it alters the traces. This constraint prevents positive relations to appear in $M$ in places where we do not want them. 

While the operator $[ \ ]$ does not really map $M$ into its dual semilattice, the traces of the elements of $M$ form an algebra that very much resembles the dual of $M$. This new algebra has trace constraints in the place of order relations and set intersections in the place of set unions. There are still some subtle differences between a proper dual of $M$ and the dual algebra provided by the trace. For example, the trace is defined with the atoms of $M$ instead of the constants of $M$, so it depends on the particular atomization of $M$. While finding a proper dual of $M$ amounts in difficulty to calculate $M$ itself, enforcing the trace constraints is easier because we have the extra freedom of introducing new atoms in $M$. In addition, we do not have restrictions for the size of the traces. We do not care if the traces are large or small. 

We want an atomization for $M$ but first we have to calculate an atomization for $M^{*}$. The atomization we are going to build for $M$ does not correspond with the dual of $M^{*}$, neither it corresponds with the dual of the algebra defined by the trace. It corresponds with an algebra freer than the algebra described by the traces. In \textbf{Section \ref{memorizingGeneralizing}} we explain the role that algebraic freedom plays as a counterbalance to cardinal minimization. 

Enforcing the trace constraints might look challenging but it is relatively simple. We are aided by the encoding of training relations $R$ as directed edges in the graph of $M^{*}$ so when the graph is transitively closed the \enquote{reverted} positive relations $[T_{i}^{+}] < [v]$ are always satisfied. We can start, although this step is optional, by first requiring $M^*$ to satisfy the \enquote{reverted} negative relations, positive and negative. That is, if we want to enforce $\neg(v < T_{i}^{-})$ in $M$, we enforce $\neg( [T_{i}^{-}] < [v])$ by adding an atom to $[T_{i}^{-}]$ in $M^{*}$. In our toy example, for every negative example $T_{i}^{-}$ we then add an atom $\xi_{i} \rightarrow [T_{i}^{-}]$, so in our example we introduce three atoms $\zeta_1$, $\zeta_2$ and $\zeta_3$ in the graph of $M^{*}$,
\begin{center}
\begin{tikzpicture}[scale=0.35]

% First positive example

\draw[thick] (0,-11)--(2,-11);    
\draw[thick] (1,-11)--(1,-9);    
\draw[thick] (2,-11)--(2,-9);    
\draw[thick] (0,-11)--(0,-9);  
\draw[thick] (0,-9)--(2,-9);
\draw[thick] (0,-10)--(2,-10);

%\draw[fill=black] (-0.5,1.5) rectangle (0.5,2.5);  
\draw[fill=black] (0,-11) rectangle (1,-9);  
%\draw[fill=black] (1,0) rectangle (2,1);  

% Second positive example

\draw[thick] (3,-11)--(5,-11);    
\draw[thick] (4,-11)--(4,-9);    
\draw[thick] (5,-11)--(5,-9);    
\draw[thick] (3,-11)--(3,-9);  
\draw[thick] (3,-9)--(5,-9);
\draw[thick] (3,-10)--(5,-10);

\draw[fill=black] (4,-11) rectangle (5,-9);  

% Separator between positive and negative examples

\draw[thick] (5.5,-11.5)--(5.5,-8.5);

% First negative example

\draw[thick] (6,-11)--(8,-11);    
\draw[thick] (7,-11)--(7,-9);    
\draw[thick] (8,-11)--(8,-9);    
\draw[thick] (6,-11)--(6,-9);  
\draw[thick] (6,-9)--(8,-9);
\draw[thick] (6,-10)--(8,-10);

\draw[fill=black] (6,-10) rectangle (7,-9);  
\draw[fill=black] (7,-11) rectangle (8,-10);  

% Second negative example

\draw[thick] (9,-11)--(11,-11);    
\draw[thick] (10,-11)--(10,-9);    
\draw[thick] (11,-11)--(11,-9);    
\draw[thick] (9,-11)--(9,-9);  
\draw[thick] (9,-9)--(11,-9);
\draw[thick] (9,-10)--(11,-10);

\draw[fill=black] (10,-10) rectangle (11,-9);  

% third negative example

\draw[thick] (12,-11)--(14,-11);    
\draw[thick] (13,-11)--(13,-9);    
\draw[thick] (14,-11)--(14,-9);    
\draw[thick] (12,-11)--(12,-9);  
\draw[thick] (12,-9)--(14,-9);
\draw[thick] (12,-10)--(14,-10);

\draw[fill=black] (13,-11) rectangle (14,-10);  

% Now the constants

% First constant

\draw[thick] (-0.25,-5)--(0.75,-5);    
\draw[thick] (-0.25,-4)--(0.75,-4);    
\draw[thick] (-0.25,-5)--(-0.25,-4);    
\draw[thick] (0.75,-5)--(0.75,-4);

\draw[fill=black] (-0.50,-5.25) rectangle (0,-4.75);  

% Second constant

\draw[thick] (1.5,-5)--(2.5,-5);    
\draw[thick] (1.5,-4)--(2.5,-4);    
\draw[thick] (1.5,-5)--(1.5,-4);    
\draw[thick] (2.5,-5)--(2.5,-4);  

\draw[fill=black] (1.25,-4.25) rectangle (1.75,-3.75); 

% Third constant

\draw[thick] (3.25,-5)--(4.25,-5);    
\draw[thick] (3.25,-4)--(4.25,-4);    
\draw[thick] (3.25,-5)--(3.25,-4);    
\draw[thick] (4.25,-5)--(4.25,-4);  

\draw[fill=black] (4,-4.25) rectangle (4.5,-3.75);  

% Fourth constant

\draw[thick] (5,-5)--(6,-5);    
\draw[thick] (5,-4)--(6,-4);    
\draw[thick] (5,-5)--(5,-4);    
\draw[thick] (6,-5)--(6,-4);  

\draw[fill=black] (5.75,-5.25) rectangle (6.25,-4.75);  

% 5th constant (add 6.25 in x coordinate, and black goes to white)

\draw[thick] (6.75,-5)--(7.75,-5);    
\draw[thick] (6.75,-4)--(7.75,-4);    
\draw[thick] (6.75,-5)--(6.75,-4);    
\draw[thick] (7.75,-5)--(7.75,-4);  

\draw[fill=white] (6.5,-5.25) rectangle (7,-4.75);  

% 6th constant

\draw[thick] (8.50,-5)--(9.5,-5);    
\draw[thick] (8.50,-4)--(9.5,-4);    
\draw[thick] (8.50,-5)--(8.5,-4);    
\draw[thick] (9.50,-5)--(9.50,-4);  

\draw[fill=white] (8.25,-4.25) rectangle (8.75,-3.75); 

% 7th constant

\draw[thick] (10.25,-5)--(11.25,-5);    
\draw[thick] (10.25,-4)--(11.25,-4);    
\draw[thick] (10.25,-5)--(10.25,-4);    
\draw[thick] (11.25,-5)--(11.25,-4);  

\draw[fill=white] (11,-4.25) rectangle (11.5,-3.75);  

% 8th constant

\draw[thick] (12,-5)--(13,-5);    
\draw[thick] (12,-4)--(13,-4);    
\draw[thick] (12,-5)--(12,-4);    
\draw[thick] (13,-5)--(13,-4);  

\draw[fill=white] (12.75,-5.25) rectangle (13.25,-4.75);  
    
% Connectors from examples to constants

\draw[thin] (1,-8.7)--(0.25,-5.3);  
\draw[thin] (1,-8.7)--(2,-5.3);  
\draw[thin] (1,-8.7)--(10.75,-5.3);  
\draw[thin] (1,-8.7)--(12.5,-5.3);  

\draw[thin] (4,-8.7)--(3.75,-5.3); 
\draw[thin] (4,-8.7)--(5.5,-5.3); 
\draw[thin] (4,-8.7)--(7.25,-5.3); 
\draw[thin] (4,-8.7)--(9,-5.3); 

\draw[thin] (7,-8.7)--(2,-5.3); 
\draw[thin] (7,-8.7)--(5.5,-5.3); 
\draw[thin] (7,-8.7)--(7.25,-5.3); 
\draw[thin] (7,-8.7)--(10.75,-5.3); 

\draw[thin] (10,-8.7)--(3.75,-5.3); 
\draw[thin] (10,-8.7)--(7.25,-5.3); 
\draw[thin] (10,-8.7)--(9,-5.3); 
\draw[thin] (10,-8.7)--(12.5,-5.3); 

\draw[thin] (13,-8.7)--(5.5,-5.3); 
\draw[thin] (13,-8.7)--(7.25,-5.3); 
\draw[thin] (13,-8.7)--(9,-5.3); 
\draw[thin] (13,-8.7)--(10.75,-5.3); 

% Vertical constant and atoms

\node[text width=0.1cm] at (-3,-4.5) 
    {$v$};

\node[text width=0.1cm] at (1,-14.5) 
    {$0^{*}$};

\node[text width=0.1cm] at (6.4,-2) 
    {$[0]$};

\node[text width=0.1cm] at (7,-14.5) 
   {$\zeta_1$};

\node[text width=0.1cm] at (10,-14.5) 
    {$\zeta_2$};
    
\node[text width=0.1cm] at (13,-14.5) 
   {$\zeta_3$};    

%\node[text width=0.1cm] at (-3,-10) 
%    {$\Large{\alpha}$};

%\node[text width=0.1cm] at (0.25,-10) 
%    {$\Large{\beta}$};

%\node[text width=0.1cm] at (2,-10) 
%    {$\Large{\gamma}$};
    
%\node[text width=0.1cm] at (3.75,-10) 
%    {$\Large{\delta}$};

%\node[text width=0.1cm] at (5.5,-10) 
%    {$\Large{\epsilon}$};
    
%\node[text width=0.1cm] at (0.25,-10) 
%    {$\LARGE{\alpha}$};

%\node[text width=0.1cm] at (6.4,-10) 
%   {$\LARGE{0}$};
    
% Connectors to atoms

% 0*-Examples

\draw[thin] (1,-13.9)--(1,-11.3); 
\draw[thin] (1,-13.9)--(13,-11.3); 
\draw[thin] (1,-13.9)--(4,-11.3); 
\draw[thin] (1,-13.9)--(7,-11.3); 
\draw[thin] (1,-13.9)--(10,-11.3); 
%\draw[thin] (1,-13.9)--(13,-11.3); 
% First two examples -v

% eta-Examples
\draw[thin] (7,-13.9)--(7,-11.3); 
\draw[thin] (10,-13.9)--(10,-11.3); 
\draw[thin] (13,-13.9)--(13,-11.3); 

% First two examples -v

\draw[thin] (1,-8.7)--(-2.8,-5.3); 
\draw[thin] (4,-8.7)--(-2.8,-5.3); 

% [0] to constants

\draw[thin] (7,-2.7)--(-2.8,-3.7);
\draw[thin] (7,-2.7)--(12.5,-3.7);
\draw[thin] (7,-2.7)--(10.75,-3.7);
\draw[thin] (7,-2.7)--(9,-3.7);
\draw[thin] (7,-2.7)--(7.25,-3.7);
\draw[thin] (7,-2.7)--(5.5,-3.7);
\draw[thin] (7,-2.7)--(3.75,-3.7);
\draw[thin] (7,-2.7)--(2,-3.7);
\draw[thin] (7,-2.7)--(0.25,-3.7);
% alpha

%\draw[thin] (-2.8,-5.3)--(-2.8,-9.6); 
%\draw[thin] (0.25,-5.3)--(-2.8,-9.6); 
%\draw[thin] (5.5,-5.3)--(-2.8,-9.6); 

% beta

%\draw[thin] (-2.8,-5.3)--(0.25,-9.6); 
%\draw[thin] (0.25,-5.3)--(0.25,-9.6); 
%\draw[thin] (5.5,-5.3)--(0.25,-9.6); 

% gamma

%\draw[thin] (0.25,-5.3)--(2,-9.6); 

% delta

%\draw[thin] (3.75,-5.3)--(3.75,-9.6); 

% delta

%\draw[thin] (5.5,-5.3)--(5.5,-9.6); 

%\draw[thin] (0.25,-5.3)--(0.25,-9.6); 
%\draw[thin] (0.25,-5.3)--(6.4,-9.6); 
%\draw[thin] (12.5,-5.3)--(6.4,-9.6); 

\end{tikzpicture}
\end{center} 
The new atoms are not in the set ${\bf{GL}^{a}}([v])$ so the reverted negative relations are satisfied. In fact all reverted relations, positive and negative, are satisfied at this point. We have now the chance to detect if the input order relations are inconsistent. First, make sure that for each couple of terms $T_{1}$ and $T_{2}$ mentioned in the input order relations such that the component constants of $T_{1}$ are a subset of those of $T_{2}$ we have added the edge $[T_{2}] \rightarrow [T_{1}]$. At this point, after transitive closure, the reverted order relations are satisfied if and only if the embedding is consistent. 

If there are edges pointing in both directions between two elements of $M^{*}$ we can identify them as the same element. Two ore more elements of $M$ may share the same dual.

We have completed the preprocessing step that speeds up the enforcing of trace constraints and validates the consistency of the embedding. We start now enforcing the trace constraints for the negative examples, ${\bf{Tr}}(T_{i}^{-}) \not\subset {\bf{Tr}}(v)$. To compute the \textit{trace}, we place the graph for $M$ and for $M^{*}$ side to side, to left and right, respectively
\begin{center} 
%
%
% Model and dual together
%
%
\begin{tikzpicture}[scale=0.35]

% First positive example

\draw[thick] (0,0)--(2,0);    
\draw[thick] (1,0)--(1,2);    
\draw[thick] (2,0)--(2,2);    
\draw[thick] (0,0)--(0,2);  
\draw[thick] (0,2)--(2,2);
\draw[thick] (0,1)--(2,1);

%\draw[fill=black] (-0.5,1.5) rectangle (0.5,2.5);  
\draw[fill=black] (0,0) rectangle (1,2);  
%\draw[fill=black] (1,0) rectangle (2,1);  

% Second positive example

\draw[thick] (3,0)--(5,0);    
\draw[thick] (4,0)--(4,2);    
\draw[thick] (5,0)--(5,2);    
\draw[thick] (3,0)--(3,2);  
\draw[thick] (3,2)--(5,2);
\draw[thick] (3,1)--(5,1);

\draw[fill=black] (4,0) rectangle (5,2);  

% Separator between positive and negative examples

\draw[thick] (5.5,-0.5)--(5.5,2.5);

% First negative example

\draw[thick] (6,0)--(8,0);    
\draw[thick] (6,0)--(6,2);    
\draw[thick] (8,0)--(8,2);    
\draw[thick] (6,1)--(8,1);  
\draw[thick] (6,2)--(8,2);
\draw[thick] (7,0)--(7,2);

\draw[fill=black] (6,1) rectangle (7,2);  
\draw[fill=black] (7,0) rectangle (8,1);  

% Second negative example

\draw[thick] (9,0)--(11,0);    
\draw[thick] (10,0)--(10,2);    
\draw[thick] (11,0)--(11,2);    
\draw[thick] (9,0)--(9,2);  
\draw[thick] (9,2)--(11,2);
\draw[thick] (9,1)--(11,1);

\draw[fill=black] (10,1) rectangle (11,2);  

% third negative example

\draw[thick] (12,0)--(14,0);    
\draw[thick] (13,0)--(13,2);    
\draw[thick] (14,0)--(14,2);    
\draw[thick] (12,0)--(12,2);  
\draw[thick] (12,2)--(14,2);
\draw[thick] (12,1)--(14,1);

\draw[fill=black] (13,0) rectangle (14,1);  

% Now the constants

% First constant

\draw[thick] (-0.25,-5)--(0.75,-5);    
\draw[thick] (-0.25,-4)--(0.75,-4);    
\draw[thick] (-0.25,-5)--(-0.25,-4);    
\draw[thick] (0.75,-5)--(0.75,-4);  

\draw[fill=black] (-0.50,-5.25) rectangle (0,-4.75);  

% Second constant

\draw[thick] (1.5,-5)--(2.5,-5);    
\draw[thick] (1.5,-4)--(2.5,-4);    
\draw[thick] (1.5,-5)--(1.5,-4);    
\draw[thick] (2.5,-5)--(2.5,-4);  

\draw[fill=black] (1.25,-4.25) rectangle (1.75,-3.75); 

% Third constant

\draw[thick] (3.25,-5)--(4.25,-5);    
\draw[thick] (3.25,-4)--(4.25,-4);    
\draw[thick] (3.25,-5)--(3.25,-4);    
\draw[thick] (4.25,-5)--(4.25,-4);  

\draw[fill=black] (4,-4.25) rectangle (4.5,-3.75);  

% Fourth constant

\draw[thick] (5,-5)--(6,-5);    
\draw[thick] (5,-4)--(6,-4);    
\draw[thick] (5,-5)--(5,-4);    
\draw[thick] (6,-5)--(6,-4);  

\draw[fill=black] (5.75,-5.25) rectangle (6.25,-4.75);  

% 5th constant (add 6.25 in x coordinate, and black goes to white)

\draw[thick] (6.75,-5)--(7.75,-5);    
\draw[thick] (6.75,-4)--(7.75,-4);    
\draw[thick] (6.75,-5)--(6.75,-4);    
\draw[thick] (7.75,-5)--(7.75,-4);  

\draw[fill=white] (6.5,-5.25) rectangle (7,-4.75);  

% 6th constant

\draw[thick] (8.50,-5)--(9.5,-5);    
\draw[thick] (8.50,-4)--(9.5,-4);    
\draw[thick] (8.50,-5)--(8.5,-4);    
\draw[thick] (9.50,-5)--(9.50,-4);  

\draw[fill=white] (8.25,-4.25) rectangle (8.75,-3.75); 

% 7th constant

\draw[thick] (10.25,-5)--(11.25,-5);    
\draw[thick] (10.25,-4)--(11.25,-4);    
\draw[thick] (10.25,-5)--(10.25,-4);    
\draw[thick] (11.25,-5)--(11.25,-4);  

\draw[fill=white] (11,-4.25) rectangle (11.5,-3.75);  

% 8th constant

\draw[thick] (12,-5)--(13,-5);    
\draw[thick] (12,-4)--(13,-4);    
\draw[thick] (12,-5)--(12,-4);    
\draw[thick] (13,-5)--(13,-4);  

\draw[fill=white] (12.75,-5.25) rectangle (13.25,-4.75);  
    
% Connectors from examples to constants

\draw[thin] (1,-0.3)--(0.25,-3.7);  
\draw[thin] (1,-0.3)--(2,-3.7);  
\draw[thin] (1,-0.3)--(10.75,-3.7);  
\draw[thin] (1,-0.3)--(12.5,-3.7);  

\draw[thin] (4,-0.3)--(5.5,-3.7); 
\draw[thin] (4,-0.3)--(3.75,-3.7); 
\draw[thin] (4,-0.3)--(7.25,-3.7); 
\draw[thin] (4,-0.3)--(9,-3.7); 

\draw[thin] (7,-0.3)--(2,-3.7); 
\draw[thin] (7,-0.3)--(5.5,-3.7); 
\draw[thin] (7,-0.3)--(7.25,-3.7); 
\draw[thin] (7,-0.3)--(10.75,-3.7); 

\draw[thin] (10,-0.3)--(3.75,-3.7); 
\draw[thin] (10,-0.3)--(7.25,-3.7); 
\draw[thin] (10,-0.3)--(9,-3.7); 
\draw[thin] (10,-0.3)--(12.5,-3.7); 

\draw[thin] (13,-0.3)--(5.5,-3.7); 
\draw[thin] (13,-0.3)--(7.25,-3.7); 
\draw[thin] (13,-0.3)--(9,-3.7); 
\draw[thin] (13,-0.3)--(10.75,-3.7); 

% Vertical constant and atoms

\node[text width=0.1cm] at (-3,-4.5) 
    {{$v$}};

\node[text width=0.1cm] at (6.7,-7.9) 
    {$0$};
    
% Connectors to atoms

% 0-v

\draw[thin] (7,-7.3)--(-3,-5.3);

% 0-contant 8

\draw[thin] (7,-7.3)--(12.5,-5.3);
\draw[thin] (7,-7.3)--(10.75,-5.3); 
\draw[thin] (7,-7.3)--(9,-5.3); 
\draw[thin] (7,-7.3)--(7.25,-5.3); 
\draw[thin] (7,-7.3)--(5.5,-5.3); 
\draw[thin] (7,-7.3)--(3.75,-5.3); 
\draw[thin] (7,-7.3)--(2,-5.3); 
\draw[thin] (7,-7.3)--(0.25,-5.3); 

% alpha

%\draw[thin] (-2.8,-5.3)--(-2.8,-9.6); 
%\draw[thin] (0.25,-5.3)--(-2.8,-9.6); 
%\draw[thin] (5.5,-5.3)--(-2.8,-9.6); 

% beta

%\draw[thin] (-2.8,-5.3)--(0.25,-9.6); 
%\draw[thin] (0.25,-5.3)--(0.25,-9.6); 
%\draw[thin] (3.75,-5.3)--(0.25,-9.6); 

% gamma

%\draw[thin] (0.25,-5.3)--(2,-9.6); 

% delta

%\draw[thin] (3.75,-5.3)--(3.75,-9.6); 

% delta

%\draw[thin] (5.5,-5.3)--(5.5,-9.6); 

%\draw[thin] (0.25,-5.3)--(0.25,-9.6); 
%\draw[thin] (0.25,-5.3)--(6.4,-9.6); 
%\draw[thin] (12.5,-5.3)--(6.4,-9.6); 

%%%%
%%%% HERE DUAL
%%%%

% First positive example

\draw[thick] (20,-8)--(22,-8);    
\draw[thick] (21,-8)--(21,-6);    
\draw[thick] (22,-8)--(22,-6);    
\draw[thick] (20,-8)--(20,-6);  
\draw[thick] (20,-6)--(22,-6);
\draw[thick] (20,-7)--(22,-7);

%\draw[fill=black] (-0.5,1.5) rectangle (0.5,2.5);  
\draw[fill=black] (20,-8) rectangle (21,-6);  
%\draw[fill=black] (1,0) rectangle (2,1);  

% Second positive example

\draw[thick] (23,-8)--(25,-8);    
\draw[thick] (24,-8)--(24,-6);    
\draw[thick] (25,-8)--(25,-6);    
\draw[thick] (23,-8)--(23,-6);  
\draw[thick] (23,-6)--(25,-6);
\draw[thick] (23,-7)--(25,-7);

\draw[fill=black] (24,-8) rectangle (25,-6);  

% Separator between positive and negative examples

\draw[thick] (25.5,-8.5)--(25.5,-5.5);

% First negative example

\draw[thick] (26,-8)--(28,-8);    
\draw[thick] (27,-8)--(27,-6);    
\draw[thick] (28,-8)--(28,-6);    
\draw[thick] (26,-8)--(26,-6);  
\draw[thick] (26,-6)--(28,-6);
\draw[thick] (26,-7)--(28,-7);

\draw[fill=black] (26,-7) rectangle (27,-6);  
\draw[fill=black] (27,-8) rectangle (28,-7);  

% Second negative example

\draw[thick] (29,-8)--(31,-8);    
\draw[thick] (30,-8)--(30,-6);    
\draw[thick] (31,-8)--(31,-6);    
\draw[thick] (29,-8)--(29,-6);  
\draw[thick] (29,-6)--(31,-6);
\draw[thick] (29,-7)--(31,-7);

\draw[fill=black] (30,-7) rectangle (31,-6);  

% third negative example

\draw[thick] (32,-8)--(34,-8);    
\draw[thick] (33,-8)--(33,-6);    
\draw[thick] (34,-8)--(34,-6);    
\draw[thick] (32,-8)--(32,-6);  
\draw[thick] (32,-6)--(34,-6);
\draw[thick] (32,-7)--(34,-7);

\draw[fill=black] (33,-8) rectangle (34,-7);  

% Now the constants

% First constant

\draw[thick] (19.75,-2)--(20.75,-2);    
\draw[thick] (19.75,-1)--(20.75,-1);    
\draw[thick] (19.75,-2)--(19.75,-1);    
\draw[thick] (20.75,-2)--(20.75,-1);

\draw[fill=black] (19.5,-2.25) rectangle (20,-1.75);  

% Second constant

\draw[thick] (21.5,-2)--(22.5,-2);    
\draw[thick] (21.5,-1)--(22.5,-1);    
\draw[thick] (21.5,-2)--(21.5,-1);    
\draw[thick] (22.5,-2)--(22.5,-1);  

\draw[fill=black] (21.25,-1.25) rectangle (21.75,-0.75); 

% Third constant

\draw[thick] (23.25,-2)--(24.25,-2);    
\draw[thick] (23.25,-1)--(24.25,-1);    
\draw[thick] (23.25,-2)--(23.25,-1);    
\draw[thick] (24.25,-2)--(24.25,-1);  

\draw[fill=black] (24,-1.25) rectangle (24.5,-0.75);  

% Fourth constant

\draw[thick] (25,-2)--(26,-2);    
\draw[thick] (25,-1)--(26,-1);    
\draw[thick] (25,-2)--(25,-1);    
\draw[thick] (26,-2)--(26,-1);  

\draw[fill=black] (25.75,-2.25) rectangle (26.25,-1.75);  

% 5th constant (add 6.25 in x coordinate, and black goes to white)

\draw[thick] (26.75,-2)--(27.75,-2);    
\draw[thick] (26.75,-1)--(27.75,-1);    
\draw[thick] (26.75,-2)--(26.75,-1);    
\draw[thick] (27.75,-2)--(27.75,-1);  

\draw[fill=white] (26.5,-2.25) rectangle (27,-1.75);  

% 6th constant

\draw[thick] (28.50,-2)--(29.5,-2);    
\draw[thick] (28.50,-1)--(29.5,-1);    
\draw[thick] (28.50,-2)--(28.5,-1);    
\draw[thick] (29.50,-2)--(29.50,-1);  

\draw[fill=white] (28.25,-1.25) rectangle (28.75,-0.75); 

% 7th constant

\draw[thick] (30.25,-2)--(31.25,-2);    
\draw[thick] (30.25,-1)--(31.25,-1);    
\draw[thick] (30.25,-2)--(30.25,-1);    
\draw[thick] (31.25,-2)--(31.25,-1);  

\draw[fill=white] (31,-1.25) rectangle (31.5,-0.75);  

% 8th constant

\draw[thick] (32,-2)--(33,-2);    
\draw[thick] (32,-1)--(33,-1);    
\draw[thick] (32,-2)--(32,-1);    
\draw[thick] (33,-2)--(33,-1);  

\draw[fill=white] (32.75,-2.25) rectangle (33.25,-1.75);  
    
% Connectors from examples to constants

\draw[thin] (21,-5.7)--(20.25,-2.3);  
\draw[thin] (21,-5.7)--(22,-2.3);  
\draw[thin] (21,-5.7)--(30.75,-2.3);  
\draw[thin] (21,-5.7)--(32.5,-2.3); 

\draw[thin] (24,-5.7)--(23.75,-2.3); 
\draw[thin] (24,-5.7)--(25.5,-2.3); 
\draw[thin] (24,-5.7)--(27.25,-2.3); 
\draw[thin] (24,-5.7)--(29,-2.3); 

\draw[thin] (27,-5.7)--(22,-2.3); 
\draw[thin] (27,-5.7)--(25.5,-2.3); 
\draw[thin] (27,-5.7)--(27.25,-2.3); 
\draw[thin] (27,-5.7)--(30.75,-2.3); 

\draw[thin] (30,-5.7)--(23.75,-2.3); 
\draw[thin] (30,-5.7)--(27.25,-2.3); 
\draw[thin] (30,-5.7)--(29,-2.3); 
\draw[thin] (30,-5.7)--(32.5,-2.3); 

\draw[thin] (33,-5.7)--(25.5,-2.3); 
\draw[thin] (33,-5.7)--(27.25,-2.3); 
\draw[thin] (33,-5.7)--(29,-2.3); 
\draw[thin] (33,-5.7)--(30.75,-2.3); 

% Vertical constant and atoms

\node[text width=0.1cm] at (16.7,-1.5) 
    {$[v]$};

\node[text width=0.1cm] at (21,-11.5) 
    {$0^{*}$};

\node[text width=0.1cm] at (26.6,2) 
    {$[0]$};

\node[text width=0.1cm] at (27,-11.5) 
   {$\zeta_1$};

\node[text width=0.1cm] at (30,-11.5) 
    {$\zeta_2$};
    
\node[text width=0.1cm] at (33,-11.5) 
   {$\zeta_3$};    

%\node[text width=0.1cm] at (-3,-10) 
%    {$\Large{\alpha}$};

%\node[text width=0.1cm] at (0.25,-10) 
%    {$\Large{\beta}$};

%\node[text width=0.1cm] at (2,-10) 
%    {$\Large{\gamma}$};
    
%\node[text width=0.1cm] at (3.75,-10) 
%    {$\Large{\delta}$};

%\node[text width=0.1cm] at (5.5,-10) 
%    {$\Large{\epsilon}$};
    
%\node[text width=0.1cm] at (0.25,-10) 
%    {$\LARGE{\alpha}$};

%\node[text width=0.1cm] at (6.4,-10) 
%   {$\LARGE{0}$};
    
% Connectors to atoms

% 0*-Examples

\draw[thin] (21,-10.9)--(21,-8.3); 
\draw[thin] (21,-10.9)--(33,-8.3); 
\draw[thin] (21,-10.9)--(24,-8.3); 
\draw[thin] (21,-10.9)--(27,-8.3); 
\draw[thin] (21,-10.9)--(30,-8.3); 
%\draw[thin] (1,-13.9)--(13,-11.3); 
% First two examples -v

\draw[thin] (21,-5.7)--(17.2,-2.3); 
\draw[thin] (24,-5.7)--(17.2,-2.3); 

% eta-Examples
\draw[thin] (27,-10.9)--(27,-8.3); 
\draw[thin] (30,-10.9)--(30,-8.3); 
\draw[thin] (33,-10.9)--(33,-8.3); 

% [0] to constants

\draw[thin] (27,1.1)--(17.2,-0.7);
\draw[thin] (27,1.1)--(32.5,-0.7);
\draw[thin] (27,1.1)--(30.75,-0.7);
\draw[thin] (27,1.1)--(29,-0.7);
\draw[thin] (27,1.1)--(27.25,-0.7);
\draw[thin] (27,1.1)--(25.5,-0.7);
\draw[thin] (27,1.1)--(23.75,-0.7);
\draw[thin] (27,1.1)--(22,-0.7);
\draw[thin] (27,1.1)--(20.25,-0.7);
% alpha

%\draw[thin] (-2.8,-5.3)--(-2.8,-9.6); 
%\draw[thin] (0.25,-5.3)--(-2.8,-9.6); 
%\draw[thin] (5.5,-5.3)--(-2.8,-9.6); 

% beta

%\draw[thin] (-2.8,-5.3)--(0.25,-9.6); 
%\draw[thin] (0.25,-5.3)--(0.25,-9.6); 
%\draw[thin] (5.5,-5.3)--(0.25,-9.6); 

% gamma

%\draw[thin] (0.25,-5.3)--(2,-9.6); 

% delta

%\draw[thin] (3.75,-5.3)--(3.75,-9.6); 

% delta

%\draw[thin] (5.5,-5.3)--(5.5,-9.6); 

%\draw[thin] (0.25,-5.3)--(0.25,-9.6); 
%\draw[thin] (0.25,-5.3)--(6.4,-9.6); 
%\draw[thin] (12.5,-5.3)--(6.4,-9.6); 

\end{tikzpicture}
\end{center}
We are now going to apply \textbf{Algorithms \ref{negativeTrace} and \ref{positiveTrace}} in \textbf{Appendix \ref{algorithms}} to enforce the trace constraints. We start with the negative trace constraints, \textbf{Algorithm \ref{negativeTrace}}. The trace for the negative training examples is $\textbf{Tr} (T_{i}^{-})={\bf{GL}^{a}}([0])=\{0^{*},\zeta_1, \zeta_2, \zeta_3\}$, and for constant $v$ is also $\textbf{Tr} (v)={\bf{GL}^{a}}([0])=\{0^{*},\zeta_1, \zeta_2, \zeta_3\}$. Now it is not obeyed that ${\bf{Tr}}(T_{i}^{-}) \not\subset {\bf{Tr}}(v)$ so we need to enforce it. For this we need to choose a constant $c \in M$ equal to $v$ or such that $[c]$ receives edges from $[v]$ and not from $[T_{i}^{-}]$. We then need to add an atom $\phi \rightarrow c$. The condition is fulfilled directly by $v$ so we add $\phi \rightarrow v$, and the corresponding dual-of-atom $[v] \rightarrow [\phi]$ in $M^{*}$.
\begin{center}
%
%
% Model and dual together with PHI in M
%
%
\begin{tikzpicture}[scale=0.35]

% First positive example

\draw[thick] (0,0)--(2,0);    
\draw[thick] (1,0)--(1,2);    
\draw[thick] (2,0)--(2,2);    
\draw[thick] (0,0)--(0,2);  
\draw[thick] (0,2)--(2,2);
\draw[thick] (0,1)--(2,1);

%\draw[fill=black] (-0.5,1.5) rectangle (0.5,2.5);  
\draw[fill=black] (0,0) rectangle (1,2);  
%\draw[fill=black] (1,0) rectangle (2,1);  

% Second positive example

\draw[thick] (3,0)--(5,0);    
\draw[thick] (4,0)--(4,2);    
\draw[thick] (5,0)--(5,2);    
\draw[thick] (3,0)--(3,2);  
\draw[thick] (3,2)--(5,2);
\draw[thick] (3,1)--(5,1);

\draw[fill=black] (4,0) rectangle (5,2);  

% Separator between positive and negative examples

\draw[thick] (5.5,-0.5)--(5.5,2.5);

% First negative example

\draw[thick] (6,0)--(8,0);    
\draw[thick] (6,0)--(6,2);    
\draw[thick] (8,0)--(8,2);    
\draw[thick] (6,1)--(8,1);  
\draw[thick] (6,2)--(8,2);
\draw[thick] (7,0)--(7,2);

\draw[fill=black] (6,1) rectangle (7,2);  
\draw[fill=black] (7,0) rectangle (8,1);  

% Second negative example

\draw[thick] (9,0)--(11,0);    
\draw[thick] (10,0)--(10,2);    
\draw[thick] (11,0)--(11,2);    
\draw[thick] (9,0)--(9,2);  
\draw[thick] (9,2)--(11,2);
\draw[thick] (9,1)--(11,1);

\draw[fill=black] (10,1) rectangle (11,2);  

% third negative example

\draw[thick] (12,0)--(14,0);    
\draw[thick] (13,0)--(13,2);    
\draw[thick] (14,0)--(14,2);    
\draw[thick] (12,0)--(12,2);  
\draw[thick] (12,2)--(14,2);
\draw[thick] (12,1)--(14,1);

\draw[fill=black] (13,0) rectangle (14,1);  

% Now the constants

% First constant

\draw[thick] (-0.25,-5)--(0.75,-5);    
\draw[thick] (-0.25,-4)--(0.75,-4);    
\draw[thick] (-0.25,-5)--(-0.25,-4);    
\draw[thick] (0.75,-5)--(0.75,-4);  

\draw[fill=black] (-0.50,-5.25) rectangle (0,-4.75);  

% Second constant

\draw[thick] (1.5,-5)--(2.5,-5);    
\draw[thick] (1.5,-4)--(2.5,-4);    
\draw[thick] (1.5,-5)--(1.5,-4);    
\draw[thick] (2.5,-5)--(2.5,-4);  

\draw[fill=black] (1.25,-4.25) rectangle (1.75,-3.75); 

% Third constant

\draw[thick] (3.25,-5)--(4.25,-5);    
\draw[thick] (3.25,-4)--(4.25,-4);    
\draw[thick] (3.25,-5)--(3.25,-4);    
\draw[thick] (4.25,-5)--(4.25,-4);  

\draw[fill=black] (4,-4.25) rectangle (4.5,-3.75);  

% Fourth constant

\draw[thick] (5,-5)--(6,-5);    
\draw[thick] (5,-4)--(6,-4);    
\draw[thick] (5,-5)--(5,-4);    
\draw[thick] (6,-5)--(6,-4);  

\draw[fill=black] (5.75,-5.25) rectangle (6.25,-4.75);  

% 5th constant (add 6.25 in x coordinate, and black goes to white)

\draw[thick] (6.75,-5)--(7.75,-5);    
\draw[thick] (6.75,-4)--(7.75,-4);    
\draw[thick] (6.75,-5)--(6.75,-4);    
\draw[thick] (7.75,-5)--(7.75,-4);  

\draw[fill=white] (6.5,-5.25) rectangle (7,-4.75);  

% 6th constant

\draw[thick] (8.50,-5)--(9.5,-5);    
\draw[thick] (8.50,-4)--(9.5,-4);    
\draw[thick] (8.50,-5)--(8.5,-4);    
\draw[thick] (9.50,-5)--(9.50,-4);  

\draw[fill=white] (8.25,-4.25) rectangle (8.75,-3.75); 

% 7th constant

\draw[thick] (10.25,-5)--(11.25,-5);    
\draw[thick] (10.25,-4)--(11.25,-4);    
\draw[thick] (10.25,-5)--(10.25,-4);    
\draw[thick] (11.25,-5)--(11.25,-4);  

\draw[fill=white] (11,-4.25) rectangle (11.5,-3.75);  

% 8th constant

\draw[thick] (12,-5)--(13,-5);    
\draw[thick] (12,-4)--(13,-4);    
\draw[thick] (12,-5)--(12,-4);    
\draw[thick] (13,-5)--(13,-4);  

\draw[fill=white] (12.75,-5.25) rectangle (13.25,-4.75);  
    
% Connectors from examples to constants

\draw[thin] (1,-0.3)--(0.25,-3.7);  
\draw[thin] (1,-0.3)--(2,-3.7);  
\draw[thin] (1,-0.3)--(10.75,-3.7);  
\draw[thin] (1,-0.3)--(12.5,-3.7);  

\draw[thin] (4,-0.3)--(5.5,-3.7); 
\draw[thin] (4,-0.3)--(3.75,-3.7); 
\draw[thin] (4,-0.3)--(7.25,-3.7); 
\draw[thin] (4,-0.3)--(9,-3.7); 

\draw[thin] (7,-0.3)--(2,-3.7); 
\draw[thin] (7,-0.3)--(5.5,-3.7); 
\draw[thin] (7,-0.3)--(7.25,-3.7); 
\draw[thin] (7,-0.3)--(10.75,-3.7); 

\draw[thin] (10,-0.3)--(3.75,-3.7); 
\draw[thin] (10,-0.3)--(7.25,-3.7); 
\draw[thin] (10,-0.3)--(9,-3.7); 
\draw[thin] (10,-0.3)--(12.5,-3.7); 

\draw[thin] (13,-0.3)--(5.5,-3.7); 
\draw[thin] (13,-0.3)--(7.25,-3.7); 
\draw[thin] (13,-0.3)--(9,-3.7); 
\draw[thin] (13,-0.3)--(10.75,-3.7); 

% Vertical constant and atoms

\node[text width=0.1cm] at (-3,-4.5) 
    {{$v$}};

\node[text width=0.1cm] at (6.7,-7.9) 
    {$0$};

\node[text width=0.1cm] at (-3,-8) 
    {{$\phi$}};
% Connectors to atoms

% phi-v

\draw[thin] (-3,-5.3)--(-3,-7.6);

% 0-v

\draw[thin] (7,-7.3)--(-3,-5.3);

% 0-contant 8

\draw[thin] (7,-7.3)--(12.5,-5.3);
\draw[thin] (7,-7.3)--(10.75,-5.3); 
\draw[thin] (7,-7.3)--(9,-5.3); 
\draw[thin] (7,-7.3)--(7.25,-5.3); 
\draw[thin] (7,-7.3)--(5.5,-5.3); 
\draw[thin] (7,-7.3)--(3.75,-5.3); 
\draw[thin] (7,-7.3)--(2,-5.3); 
\draw[thin] (7,-7.3)--(0.25,-5.3); 

% alpha

%\draw[thin] (-2.8,-5.3)--(-2.8,-9.6); 
%\draw[thin] (0.25,-5.3)--(-2.8,-9.6); 
%\draw[thin] (5.5,-5.3)--(-2.8,-9.6); 

% beta

%\draw[thin] (-2.8,-5.3)--(0.25,-9.6); 
%\draw[thin] (0.25,-5.3)--(0.25,-9.6); 
%\draw[thin] (3.75,-5.3)--(0.25,-9.6); 

% gamma

%\draw[thin] (0.25,-5.3)--(2,-9.6); 

% delta

%\draw[thin] (3.75,-5.3)--(3.75,-9.6); 

% delta

%\draw[thin] (5.5,-5.3)--(5.5,-9.6); 

%\draw[thin] (0.25,-5.3)--(0.25,-9.6); 
%\draw[thin] (0.25,-5.3)--(6.4,-9.6); 
%\draw[thin] (12.5,-5.3)--(6.4,-9.6); 

%%%%
%%%% HERE DUAL
%%%%

% First positive example

\draw[thick] (20,-8)--(22,-8);    
\draw[thick] (21,-8)--(21,-6);    
\draw[thick] (22,-8)--(22,-6);    
\draw[thick] (20,-8)--(20,-6);  
\draw[thick] (20,-6)--(22,-6);
\draw[thick] (20,-7)--(22,-7);

%\draw[fill=black] (-0.5,1.5) rectangle (0.5,2.5);  
\draw[fill=black] (20,-8) rectangle (21,-6);  
%\draw[fill=black] (1,0) rectangle (2,1);  

% Second positive example

\draw[thick] (23,-8)--(25,-8);    
\draw[thick] (24,-8)--(24,-6);    
\draw[thick] (25,-8)--(25,-6);    
\draw[thick] (23,-8)--(23,-6);  
\draw[thick] (23,-6)--(25,-6);
\draw[thick] (23,-7)--(25,-7);

\draw[fill=black] (24,-8) rectangle (25,-6);  

% Separator between positive and negative examples

\draw[thick] (25.5,-8.5)--(25.5,-5.5);

% First negative example

\draw[thick] (26,-8)--(28,-8);    
\draw[thick] (27,-8)--(27,-6);    
\draw[thick] (28,-8)--(28,-6);    
\draw[thick] (26,-8)--(26,-6);  
\draw[thick] (26,-6)--(28,-6);
\draw[thick] (26,-7)--(28,-7);

\draw[fill=black] (26,-7) rectangle (27,-6);  
\draw[fill=black] (27,-8) rectangle (28,-7);  

% Second negative example

\draw[thick] (29,-8)--(31,-8);    
\draw[thick] (30,-8)--(30,-6);    
\draw[thick] (31,-8)--(31,-6);    
\draw[thick] (29,-8)--(29,-6);  
\draw[thick] (29,-6)--(31,-6);
\draw[thick] (29,-7)--(31,-7);

\draw[fill=black] (30,-7) rectangle (31,-6);  

% third negative example

\draw[thick] (32,-8)--(34,-8);    
\draw[thick] (33,-8)--(33,-6);    
\draw[thick] (34,-8)--(34,-6);    
\draw[thick] (32,-8)--(32,-6);  
\draw[thick] (32,-6)--(34,-6);
\draw[thick] (32,-7)--(34,-7);

\draw[fill=black] (33,-8) rectangle (34,-7);  

% Now the constants

% First constant

\draw[thick] (19.75,-2)--(20.75,-2);    
\draw[thick] (19.75,-1)--(20.75,-1);    
\draw[thick] (19.75,-2)--(19.75,-1);    
\draw[thick] (20.75,-2)--(20.75,-1);

\draw[fill=black] (19.5,-2.25) rectangle (20,-1.75);  

% Second constant

\draw[thick] (21.5,-2)--(22.5,-2);    
\draw[thick] (21.5,-1)--(22.5,-1);    
\draw[thick] (21.5,-2)--(21.5,-1);    
\draw[thick] (22.5,-2)--(22.5,-1);  

\draw[fill=black] (21.25,-1.25) rectangle (21.75,-0.75); 

% Third constant

\draw[thick] (23.25,-2)--(24.25,-2);    
\draw[thick] (23.25,-1)--(24.25,-1);    
\draw[thick] (23.25,-2)--(23.25,-1);    
\draw[thick] (24.25,-2)--(24.25,-1);  

\draw[fill=black] (24,-1.25) rectangle (24.5,-0.75);  

% Fourth constant

\draw[thick] (25,-2)--(26,-2);    
\draw[thick] (25,-1)--(26,-1);    
\draw[thick] (25,-2)--(25,-1);    
\draw[thick] (26,-2)--(26,-1);  

\draw[fill=black] (25.75,-2.25) rectangle (26.25,-1.75);  

% 5th constant (add 6.25 in x coordinate, and black goes to white)

\draw[thick] (26.75,-2)--(27.75,-2);    
\draw[thick] (26.75,-1)--(27.75,-1);    
\draw[thick] (26.75,-2)--(26.75,-1);    
\draw[thick] (27.75,-2)--(27.75,-1);  

\draw[fill=white] (26.5,-2.25) rectangle (27,-1.75);  

% 6th constant

\draw[thick] (28.50,-2)--(29.5,-2);    
\draw[thick] (28.50,-1)--(29.5,-1);    
\draw[thick] (28.50,-2)--(28.5,-1);    
\draw[thick] (29.50,-2)--(29.50,-1);  

\draw[fill=white] (28.25,-1.25) rectangle (28.75,-0.75); 

% 7th constant

\draw[thick] (30.25,-2)--(31.25,-2);    
\draw[thick] (30.25,-1)--(31.25,-1);    
\draw[thick] (30.25,-2)--(30.25,-1);    
\draw[thick] (31.25,-2)--(31.25,-1);  

\draw[fill=white] (31,-1.25) rectangle (31.5,-0.75);  

% 8th constant

\draw[thick] (32,-2)--(33,-2);    
\draw[thick] (32,-1)--(33,-1);    
\draw[thick] (32,-2)--(32,-1);    
\draw[thick] (33,-2)--(33,-1);  

\draw[fill=white] (32.75,-2.25) rectangle (33.25,-1.75);  
    
% Connectors from examples to constants

\draw[thin] (21,-5.7)--(20.25,-2.3);  
\draw[thin] (21,-5.7)--(22,-2.3);  
\draw[thin] (21,-5.7)--(30.75,-2.3);  
\draw[thin] (21,-5.7)--(32.5,-2.3); 

\draw[thin] (24,-5.7)--(23.75,-2.3); 
\draw[thin] (24,-5.7)--(25.5,-2.3); 
\draw[thin] (24,-5.7)--(27.25,-2.3); 
\draw[thin] (24,-5.7)--(29,-2.3); 

\draw[thin] (27,-5.7)--(22,-2.3); 
\draw[thin] (27,-5.7)--(25.5,-2.3); 
\draw[thin] (27,-5.7)--(27.25,-2.3); 
\draw[thin] (27,-5.7)--(30.75,-2.3); 

\draw[thin] (30,-5.7)--(23.75,-2.3); 
\draw[thin] (30,-5.7)--(27.25,-2.3); 
\draw[thin] (30,-5.7)--(29,-2.3); 
\draw[thin] (30,-5.7)--(32.5,-2.3); 

\draw[thin] (33,-5.7)--(25.5,-2.3); 
\draw[thin] (33,-5.7)--(27.25,-2.3); 
\draw[thin] (33,-5.7)--(29,-2.3); 
\draw[thin] (33,-5.7)--(30.75,-2.3); 

% Vertical constant and atoms

\node[text width=0.1cm] at (16.7,2) 
    {$[\phi]$}; 
    
\node[text width=0.1cm] at (16.7,-1.5) 
    {$[v]$};

\node[text width=0.1cm] at (21,-11.5) 
    {$0^{*}$};

\node[text width=0.1cm] at (26.6,2) 
    {$[0]$};

\node[text width=0.1cm] at (27,-11.5) 
   {$\zeta_1$};

\node[text width=0.1cm] at (30,-11.5) 
    {$\zeta_2$};
    
\node[text width=0.1cm] at (33,-11.5) 
   {$\zeta_3$};    

%\node[text width=0.1cm] at (-3,-10) 
%    {$\Large{\alpha}$};

%\node[text width=0.1cm] at (0.25,-10) 
%    {$\Large{\beta}$};

%\node[text width=0.1cm] at (2,-10) 
%    {$\Large{\gamma}$};
    
%\node[text width=0.1cm] at (3.75,-10) 
%    {$\Large{\delta}$};

%\node[text width=0.1cm] at (5.5,-10) 
%    {$\Large{\epsilon}$};
    
%\node[text width=0.1cm] at (0.25,-10) 
%    {$\LARGE{\alpha}$};

%\node[text width=0.1cm] at (6.4,-10) 
%   {$\LARGE{0}$};
    
% Connectors to atoms

%  From [phi] to v
\draw[thin] (17.2,1.3)--(17.2,-0.7); 

% 0*-Examples

\draw[thin] (21,-10.9)--(21,-8.3); 
\draw[thin] (21,-10.9)--(33,-8.3); 
\draw[thin] (21,-10.9)--(24,-8.3); 
\draw[thin] (21,-10.9)--(27,-8.3); 
\draw[thin] (21,-10.9)--(30,-8.3); 
%\draw[thin] (1,-13.9)--(13,-11.3); 
% First two examples -v

\draw[thin] (21,-5.7)--(17.2,-2.3); 
\draw[thin] (24,-5.7)--(17.2,-2.3); 

% eta-Examples
\draw[thin] (27,-10.9)--(27,-8.3); 
\draw[thin] (30,-10.9)--(30,-8.3); 
\draw[thin] (33,-10.9)--(33,-8.3); 

% [0] to constants

\draw[thin] (27,1.1)--(17.2,-0.7);
\draw[thin] (27,1.1)--(32.5,-0.7);
\draw[thin] (27,1.1)--(30.75,-0.7);
\draw[thin] (27,1.1)--(29,-0.7);
\draw[thin] (27,1.1)--(27.25,-0.7);
\draw[thin] (27,1.1)--(25.5,-0.7);
\draw[thin] (27,1.1)--(23.75,-0.7);
\draw[thin] (27,1.1)--(22,-0.7);
\draw[thin] (27,1.1)--(20.25,-0.7);
% alpha

%\draw[thin] (-2.8,-5.3)--(-2.8,-9.6); 
%\draw[thin] (0.25,-5.3)--(-2.8,-9.6); 
%\draw[thin] (5.5,-5.3)--(-2.8,-9.6); 

% beta

%\draw[thin] (-2.8,-5.3)--(0.25,-9.6); 
%\draw[thin] (0.25,-5.3)--(0.25,-9.6); 
%\draw[thin] (5.5,-5.3)--(0.25,-9.6); 

% gamma

%\draw[thin] (0.25,-5.3)--(2,-9.6); 

% delta

%\draw[thin] (3.75,-5.3)--(3.75,-9.6); 

% delta

%\draw[thin] (5.5,-5.3)--(5.5,-9.6); 

%\draw[thin] (0.25,-5.3)--(0.25,-9.6); 
%\draw[thin] (0.25,-5.3)--(6.4,-9.6); 
%\draw[thin] (12.5,-5.3)--(6.4,-9.6); 

\end{tikzpicture}
\end{center}
We now re-check the traces, $\textbf{Tr}(T_{i}^{-})=\textbf{Tr}(0)=\{0^{*},\zeta_1, \zeta_2, \zeta_3\}$ and 
$\textbf{Tr}(v)=\textbf{Tr}(0) \cap \textbf{Tr} (\phi)=\{0^{*}\}$, thus obeying  
$\textbf{Tr}(T_{i}^{-}) \not\subset \textbf{Tr}(v)$, as required.

Now, for positive trace constraints, we apply \textbf{Algorithm \ref{positiveTrace}}. For the positive relations $v<T_{i}^{+}$, we need to enforce $\textbf{Tr}(T_{i}^{+}) \subset \textbf{Tr}(v)$. First we check the values of the traces, 
$\textbf{Tr}(T_{i}^{+})=\textbf{Tr}(0)=\{0^{*},\zeta_1, \zeta_2, \zeta_3\}$, and $\textbf{Tr}(v)=\textbf{Tr}(0)\cap \textbf{Tr}(\phi)=\textbf{Tr}(\phi)=\{0^{*}\}$. This means that $\textbf{Tr}(T_{i}^{+}) \not \subset \textbf{Tr}(v)$, so we need to enforce the trace constraint. We add atoms $\epsilon_i$ to the constants $c_i$ until $\textbf{Tr}(T_{i}^{+})=\textbf{Tr}(0)\cap_i \textbf{Tr}(\epsilon_i)$ equals $\textbf{Tr}(v)$. For the first term $T_1^{+}$ we just need to edge one atom to the first constant, $\epsilon_1 \rightarrow c_1$, so $\textbf{Tr}(T_{1}^{+})=\textbf{Tr}(0) \cap \textbf{Tr}(\epsilon_1)=\{0^{*}, \zeta_1,\zeta_2,\zeta_3\} \cap \{0^{*}\}=\{0^{*}\}$. For the second training example, $T_2^{+}$, we need to add one atom for each of its constants, $\epsilon_2 \rightarrow c_3$ and $\epsilon_3 \rightarrow c_4$. Doing this, we have 
$\textbf{Tr}(T_{2}^{+})=\textbf{Tr}(0) \cap \textbf{Tr}(\epsilon_2) \cap \textbf{Tr}(\epsilon_3)=\{0^{*},\zeta_1,\zeta_2,\zeta_3\} \cap \{0^{*},\zeta_2\} \cap \{0^{*},\zeta_1,\zeta_3\}=\{0^{*}\}$, as required.

After imposing the trace constraints the graphs look like this
\begin{center}
\begin{tikzpicture}[scale=0.35]

% First positive example

\draw[thick] (0,0)--(2,0);    
\draw[thick] (1,0)--(1,2);    
\draw[thick] (2,0)--(2,2);    
\draw[thick] (0,0)--(0,2);  
\draw[thick] (0,2)--(2,2);
\draw[thick] (0,1)--(2,1);

%\draw[fill=black] (-0.5,1.5) rectangle (0.5,2.5);  
\draw[fill=black] (0,0) rectangle (1,2);  
%\draw[fill=black] (1,0) rectangle (2,1);  

% Second positive example

\draw[thick] (3,0)--(5,0);    
\draw[thick] (4,0)--(4,2);    
\draw[thick] (5,0)--(5,2);    
\draw[thick] (3,0)--(3,2);  
\draw[thick] (3,2)--(5,2);
\draw[thick] (3,1)--(5,1);

\draw[fill=black] (4,0) rectangle (5,2);  

% Separator between positive and negative examples

\draw[thick] (5.5,-0.5)--(5.5,2.5);

% First negative example

\draw[thick] (6,0)--(8,0);    
\draw[thick] (6,0)--(6,2);    
\draw[thick] (8,0)--(8,2);    
\draw[thick] (6,1)--(8,1);  
\draw[thick] (6,2)--(8,2);
\draw[thick] (7,0)--(7,2);

\draw[fill=black] (6,1) rectangle (7,2);  
\draw[fill=black] (7,0) rectangle (8,1);  

% Second negative example

\draw[thick] (9,0)--(11,0);    
\draw[thick] (10,0)--(10,2);    
\draw[thick] (11,0)--(11,2);    
\draw[thick] (9,0)--(9,2);  
\draw[thick] (9,2)--(11,2);
\draw[thick] (9,1)--(11,1);

\draw[fill=black] (10,1) rectangle (11,2);  

% third negative example

\draw[thick] (12,0)--(14,0);    
\draw[thick] (13,0)--(13,2);    
\draw[thick] (14,0)--(14,2);    
\draw[thick] (12,0)--(12,2);  
\draw[thick] (12,2)--(14,2);
\draw[thick] (12,1)--(14,1);

\draw[fill=black] (13,0) rectangle (14,1);  

% Now the constants

% First constant

\draw[thick] (-0.25,-5)--(0.75,-5);    
\draw[thick] (-0.25,-4)--(0.75,-4);    
\draw[thick] (-0.25,-5)--(-0.25,-4);    
\draw[thick] (0.75,-5)--(0.75,-4);  

\draw[fill=black] (-0.50,-5.25) rectangle (0,-4.75);  

% Second constant

\draw[thick] (1.5,-5)--(2.5,-5);    
\draw[thick] (1.5,-4)--(2.5,-4);    
\draw[thick] (1.5,-5)--(1.5,-4);    
\draw[thick] (2.5,-5)--(2.5,-4);  

\draw[fill=black] (1.25,-4.25) rectangle (1.75,-3.75); 

% Third constant

\draw[thick] (3.25,-5)--(4.25,-5);    
\draw[thick] (3.25,-4)--(4.25,-4);    
\draw[thick] (3.25,-5)--(3.25,-4);    
\draw[thick] (4.25,-5)--(4.25,-4);  

\draw[fill=black] (4,-4.25) rectangle (4.5,-3.75);  

% Fourth constant

\draw[thick] (5,-5)--(6,-5);    
\draw[thick] (5,-4)--(6,-4);    
\draw[thick] (5,-5)--(5,-4);    
\draw[thick] (6,-5)--(6,-4);  

\draw[fill=black] (5.75,-5.25) rectangle (6.25,-4.75);  

% 5th constant (add 6.25 in x coordinate, and black goes to white)

\draw[thick] (6.75,-5)--(7.75,-5);    
\draw[thick] (6.75,-4)--(7.75,-4);    
\draw[thick] (6.75,-5)--(6.75,-4);    
\draw[thick] (7.75,-5)--(7.75,-4);  

\draw[fill=white] (6.5,-5.25) rectangle (7,-4.75);  

% 6th constant

\draw[thick] (8.50,-5)--(9.5,-5);    
\draw[thick] (8.50,-4)--(9.5,-4);    
\draw[thick] (8.50,-5)--(8.5,-4);    
\draw[thick] (9.50,-5)--(9.50,-4);  

\draw[fill=white] (8.25,-4.25) rectangle (8.75,-3.75); 

% 7th constant

\draw[thick] (10.25,-5)--(11.25,-5);    
\draw[thick] (10.25,-4)--(11.25,-4);    
\draw[thick] (10.25,-5)--(10.25,-4);    
\draw[thick] (11.25,-5)--(11.25,-4);  

\draw[fill=white] (11,-4.25) rectangle (11.5,-3.75);  

% 8th constant

\draw[thick] (12,-5)--(13,-5);    
\draw[thick] (12,-4)--(13,-4);    
\draw[thick] (12,-5)--(12,-4);    
\draw[thick] (13,-5)--(13,-4);  

\draw[fill=white] (12.75,-5.25) rectangle (13.25,-4.75);  
    
% Connectors from examples to constants

\draw[thin] (1,-0.3)--(0.25,-3.7);  
\draw[thin] (1,-0.3)--(2,-3.7);  
\draw[thin] (1,-0.3)--(10.75,-3.7);  
\draw[thin] (1,-0.3)--(12.5,-3.7);  

\draw[thin] (4,-0.3)--(5.5,-3.7); 
\draw[thin] (4,-0.3)--(3.75,-3.7); 
\draw[thin] (4,-0.3)--(7.25,-3.7); 
\draw[thin] (4,-0.3)--(9,-3.7); 

\draw[thin] (7,-0.3)--(2,-3.7); 
\draw[thin] (7,-0.3)--(5.5,-3.7); 
\draw[thin] (7,-0.3)--(7.25,-3.7); 
\draw[thin] (7,-0.3)--(10.75,-3.7); 

\draw[thin] (10,-0.3)--(3.75,-3.7); 
\draw[thin] (10,-0.3)--(7.25,-3.7); 
\draw[thin] (10,-0.3)--(9,-3.7); 
\draw[thin] (10,-0.3)--(12.5,-3.7); 

\draw[thin] (13,-0.3)--(5.5,-3.7); 
\draw[thin] (13,-0.3)--(7.25,-3.7); 
\draw[thin] (13,-0.3)--(9,-3.7); 
\draw[thin] (13,-0.3)--(10.75,-3.7); 

% Vertical constant and atoms

\node[text width=0.1cm] at (-3,-4.5) 
    {{$v$}};

\node[text width=0.1cm] at (7,-7.9) 
    {$0$};

\node[text width=0.1cm] at (-3,-8) 
    {{$\phi$}};

\node[text width=0.1cm] at (0.25,-8) 
    {{$\epsilon_1$}};

\node[text width=0.1cm] at (3.75,-8) 
    {{$\epsilon_2$}};

\node[text width=0.1cm] at (5.5,-8) 
    {{$\epsilon_3$}};

% Connectors to atoms

% phi-v

\draw[thin] (-3,-5.3)--(-3,-7.6);

%  epsilon1-constant
\draw[thin] (0.25,-5.3)--(0.25,-7.6);
%  epsilon2-constant
\draw[thin] (3.75,-5.3)--(3.75,-7.6);
%  epsilon3-constant
\draw[thin] (5.5,-5.3)--(5.5,-7.6);

% 0-v

\draw[thin] (7,-7.3)--(-3,-5.3);

% 0-contant 8

\draw[thin] (7,-7.3)--(12.5,-5.3);
\draw[thin] (7,-7.3)--(10.75,-5.3); 
\draw[thin] (7,-7.3)--(9,-5.3); 
\draw[thin] (7,-7.3)--(7.25,-5.3); 
\draw[thin] (7,-7.3)--(5.5,-5.3); 
\draw[thin] (7,-7.3)--(3.75,-5.3); 
\draw[thin] (7,-7.3)--(2,-5.3); 
\draw[thin] (7,-7.3)--(0.25,-5.3); 

% alpha

%\draw[thin] (-2.8,-5.3)--(-2.8,-9.6); 
%\draw[thin] (0.25,-5.3)--(-2.8,-9.6); 
%\draw[thin] (5.5,-5.3)--(-2.8,-9.6); 

% beta

%\draw[thin] (-2.8,-5.3)--(0.25,-9.6); 
%\draw[thin] (0.25,-5.3)--(0.25,-9.6); 
%\draw[thin] (3.75,-5.3)--(0.25,-9.6); 

% gamma

%\draw[thin] (0.25,-5.3)--(2,-9.6); 

% delta

%\draw[thin] (3.75,-5.3)--(3.75,-9.6); 

% delta

%\draw[thin] (5.5,-5.3)--(5.5,-9.6); 

%\draw[thin] (0.25,-5.3)--(0.25,-9.6); 
%\draw[thin] (0.25,-5.3)--(6.4,-9.6); 
%\draw[thin] (12.5,-5.3)--(6.4,-9.6); 

%%%%
%%%% HERE DUAL
%%%%

% First positive example

\draw[thick] (20,-8)--(22,-8);    
\draw[thick] (21,-8)--(21,-6);    
\draw[thick] (22,-8)--(22,-6);    
\draw[thick] (20,-8)--(20,-6);  
\draw[thick] (20,-6)--(22,-6);
\draw[thick] (20,-7)--(22,-7);

%\draw[fill=black] (-0.5,1.5) rectangle (0.5,2.5);  
\draw[fill=black] (20,-8) rectangle (21,-6);  
%\draw[fill=black] (1,0) rectangle (2,1);  

% Second positive example

\draw[thick] (23,-8)--(25,-8);    
\draw[thick] (24,-8)--(24,-6);    
\draw[thick] (25,-8)--(25,-6);    
\draw[thick] (23,-8)--(23,-6);  
\draw[thick] (23,-6)--(25,-6);
\draw[thick] (23,-7)--(25,-7);

\draw[fill=black] (24,-8) rectangle (25,-6);  

% Separator between positive and negative examples

\draw[thick] (25.5,-8.5)--(25.5,-5.5);

% First negative example

\draw[thick] (26,-8)--(28,-8);    
\draw[thick] (27,-8)--(27,-6);    
\draw[thick] (28,-8)--(28,-6);    
\draw[thick] (26,-8)--(26,-6);  
\draw[thick] (26,-6)--(28,-6);
\draw[thick] (26,-7)--(28,-7);

\draw[fill=black] (26,-7) rectangle (27,-6);  
\draw[fill=black] (27,-8) rectangle (28,-7);  

% Second negative example

\draw[thick] (29,-8)--(31,-8);    
\draw[thick] (30,-8)--(30,-6);    
\draw[thick] (31,-8)--(31,-6);    
\draw[thick] (29,-8)--(29,-6);  
\draw[thick] (29,-6)--(31,-6);
\draw[thick] (29,-7)--(31,-7);

\draw[fill=black] (30,-7) rectangle (31,-6);  

% third negative example

\draw[thick] (32,-8)--(34,-8);    
\draw[thick] (33,-8)--(33,-6);    
\draw[thick] (34,-8)--(34,-6);    
\draw[thick] (32,-8)--(32,-6);  
\draw[thick] (32,-6)--(34,-6);
\draw[thick] (32,-7)--(34,-7);

\draw[fill=black] (33,-8) rectangle (34,-7);  

% Now the constants

% First constant

\draw[thick] (19.75,-2)--(20.75,-2);    
\draw[thick] (19.75,-1)--(20.75,-1);    
\draw[thick] (19.75,-2)--(19.75,-1);    
\draw[thick] (20.75,-2)--(20.75,-1);

\draw[fill=black] (19.5,-2.25) rectangle (20,-1.75);  

% Second constant

\draw[thick] (21.5,-2)--(22.5,-2);    
\draw[thick] (21.5,-1)--(22.5,-1);    
\draw[thick] (21.5,-2)--(21.5,-1);    
\draw[thick] (22.5,-2)--(22.5,-1);  

\draw[fill=black] (21.25,-1.25) rectangle (21.75,-0.75); 

% Third constant

\draw[thick] (23.25,-2)--(24.25,-2);    
\draw[thick] (23.25,-1)--(24.25,-1);    
\draw[thick] (23.25,-2)--(23.25,-1);    
\draw[thick] (24.25,-2)--(24.25,-1);  

\draw[fill=black] (24,-1.25) rectangle (24.5,-0.75);  

% Fourth constant

\draw[thick] (25,-2)--(26,-2);    
\draw[thick] (25,-1)--(26,-1);    
\draw[thick] (25,-2)--(25,-1);    
\draw[thick] (26,-2)--(26,-1);  

\draw[fill=black] (25.75,-2.25) rectangle (26.25,-1.75);  

% 5th constant (add 6.25 in x coordinate, and black goes to white)

\draw[thick] (26.75,-2)--(27.75,-2);    
\draw[thick] (26.75,-1)--(27.75,-1);    
\draw[thick] (26.75,-2)--(26.75,-1);    
\draw[thick] (27.75,-2)--(27.75,-1);  

\draw[fill=white] (26.5,-2.25) rectangle (27,-1.75);  

% 6th constant

\draw[thick] (28.50,-2)--(29.5,-2);    
\draw[thick] (28.50,-1)--(29.5,-1);    
\draw[thick] (28.50,-2)--(28.5,-1);    
\draw[thick] (29.50,-2)--(29.50,-1);  

\draw[fill=white] (28.25,-1.25) rectangle (28.75,-0.75); 

% 7th constant

\draw[thick] (30.25,-2)--(31.25,-2);    
\draw[thick] (30.25,-1)--(31.25,-1);    
\draw[thick] (30.25,-2)--(30.25,-1);    
\draw[thick] (31.25,-2)--(31.25,-1);  

\draw[fill=white] (31,-1.25) rectangle (31.5,-0.75);  

% 8th constant

\draw[thick] (32,-2)--(33,-2);    
\draw[thick] (32,-1)--(33,-1);    
\draw[thick] (32,-2)--(32,-1);    
\draw[thick] (33,-2)--(33,-1);  

\draw[fill=white] (32.75,-2.25) rectangle (33.25,-1.75);  
    
% Connectors from examples to constants

\draw[thin] (21,-5.7)--(20.25,-2.3);  
\draw[thin] (21,-5.7)--(22,-2.3);  
\draw[thin] (21,-5.7)--(30.75,-2.3);  
\draw[thin] (21,-5.7)--(32.5,-2.3); 

\draw[thin] (24,-5.7)--(23.75,-2.3); 
\draw[thin] (24,-5.7)--(25.5,-2.3); 
\draw[thin] (24,-5.7)--(27.25,-2.3); 
\draw[thin] (24,-5.7)--(29,-2.3); 

\draw[thin] (27,-5.7)--(22,-2.3); 
\draw[thin] (27,-5.7)--(25.5,-2.3); 
\draw[thin] (27,-5.7)--(27.25,-2.3); 
\draw[thin] (27,-5.7)--(30.75,-2.3); 

\draw[thin] (30,-5.7)--(23.75,-2.3); 
\draw[thin] (30,-5.7)--(27.25,-2.3); 
\draw[thin] (30,-5.7)--(29,-2.3); 
\draw[thin] (30,-5.7)--(32.5,-2.3); 

\draw[thin] (33,-5.7)--(25.5,-2.3); 
\draw[thin] (33,-5.7)--(27.25,-2.3); 
\draw[thin] (33,-5.7)--(29,-2.3); 
\draw[thin] (33,-5.7)--(30.75,-2.3); 

% Vertical constant and atoms

\node[text width=0.1cm] at (16.7,2) 
    {$[\phi]$}; 

\node[text width=0.1cm] at (19.60,2) 
    {$[\epsilon_1]$};
    
\node[text width=0.1cm] at (23.1,2) 
    {$[\epsilon_2]$};
    
\node[text width=0.1cm] at (24.85,2) 
    {$[\epsilon_3]$};
    
\node[text width=0.1cm] at (16.7,-1.5) 
    {$[v]$};

\node[text width=0.1cm] at (21,-11.5) 
    {$0^{*}$};

\node[text width=0.1cm] at (26.6,2) 
    {$[0]$};

\node[text width=0.1cm] at (27,-11.5) 
   {$\zeta_1$};

\node[text width=0.1cm] at (30,-11.5) 
    {$\zeta_2$};
    
\node[text width=0.1cm] at (33,-11.5) 
   {$\zeta_3$};    

%\node[text width=0.1cm] at (-3,-10) 
%    {$\Large{\alpha}$};

%\node[text width=0.1cm] at (0.25,-10) 
%    {$\Large{\beta}$};

%\node[text width=0.1cm] at (2,-10) 
%    {$\Large{\gamma}$};
    
%\node[text width=0.1cm] at (3.75,-10) 
%    {$\Large{\delta}$};

%\node[text width=0.1cm] at (5.5,-10) 
%    {$\Large{\epsilon}$};
    
%\node[text width=0.1cm] at (0.25,-10) 
%    {$\LARGE{\alpha}$};

%\node[text width=0.1cm] at (6.4,-10) 
%   {$\LARGE{0}$};
    
% Connectors to atoms

%  From [phi] to v
\draw[thin] (17.2,1.3)--(17.2,-0.7); 
%  From [epsilon]to constants
\draw[thin] (20.25,1.3)--(20.25,-0.7); 
\draw[thin] (23.75,1.3)--(23.75,-0.7); 
\draw[thin] (25.5,1.3)--(25.5,-0.7); 
% 0*-Examples

\draw[thin] (21,-10.9)--(21,-8.3); 
\draw[thin] (21,-10.9)--(33,-8.3); 
\draw[thin] (21,-10.9)--(24,-8.3); 
\draw[thin] (21,-10.9)--(27,-8.3); 
\draw[thin] (21,-10.9)--(30,-8.3); 
%\draw[thin] (1,-13.9)--(13,-11.3); 
% First two examples -v

\draw[thin] (21,-5.7)--(17.2,-2.3); 
\draw[thin] (24,-5.7)--(17.2,-2.3); 

% eta-Examples
\draw[thin] (27,-10.9)--(27,-8.3); 
\draw[thin] (30,-10.9)--(30,-8.3); 
\draw[thin] (33,-10.9)--(33,-8.3); 

% [0] to constants

\draw[thin] (27,1.1)--(17.2,-0.7);
\draw[thin] (27,1.1)--(32.5,-0.7);
\draw[thin] (27,1.1)--(30.75,-0.7);
\draw[thin] (27,1.1)--(29,-0.7);
\draw[thin] (27,1.1)--(27.25,-0.7);
\draw[thin] (27,1.1)--(25.5,-0.7);
\draw[thin] (27,1.1)--(23.75,-0.7);
\draw[thin] (27,1.1)--(22,-0.7);
\draw[thin] (27,1.1)--(20.25,-0.7);
% alpha

%\draw[thin] (-2.8,-5.3)--(-2.8,-9.6); 
%\draw[thin] (0.25,-5.3)--(-2.8,-9.6); 
%\draw[thin] (5.5,-5.3)--(-2.8,-9.6); 

% beta

%\draw[thin] (-2.8,-5.3)--(0.25,-9.6); 
%\draw[thin] (0.25,-5.3)--(0.25,-9.6); 
%\draw[thin] (5.5,-5.3)--(0.25,-9.6); 

% gamma

%\draw[thin] (0.25,-5.3)--(2,-9.6); 

% delta

%\draw[thin] (3.75,-5.3)--(3.75,-9.6); 

% delta

%\draw[thin] (5.5,-5.3)--(5.5,-9.6); 

%\draw[thin] (0.25,-5.3)--(0.25,-9.6); 
%\draw[thin] (0.25,-5.3)--(6.4,-9.6); 
%\draw[thin] (12.5,-5.3)--(6.4,-9.6); 

\end{tikzpicture}
\end{center}
We have used our toy example to show how to enforce the trace constraints of \textbf{Algorithms \ref{negativeTrace}} and \textbf{\ref{positiveTrace}}, detailed in \textbf{Appendix \ref{algorithms}}. The general trace enforcing algorithm consists of repeatedly enforcing negative trace constraints and positive trace constraints until all constraints are satisfied, which usually occurs within a few iterations. The trace enforcing process always ends unless the embedding is inconsistent (see {\bf{Theorem \ref{traceConsistence}}}).  The number of times these algorithms loop within the do-while statements can be easily bounded by the cardinality of the sets involved and is no worse than linear with the size of the model.

\subsection{Full and Sparse Crossing operations}

After enforcing the trace constraints, all negative relations $v \not < T_{i}^{-}$ are already satisfied in $M$. This will always be the case. To build an atomized model that also satisfies the positive relations $v < T_{i}^{+}$, we use the \emph{Sparse Crossing} operation. This trace-invariant operation replaces the atoms of $v$ for others that are also in $T_i^+$ without interfering with previously enforced positive or negative relations and without changing the traces of any element of $M$. 

The Sparse Crossing can be seen as an sparse version of the \emph{Full Crossing} that is also a trace-invariant operation. Both operations are similar and can be represented with a two dimensional matrix as follows. Consider two elements $a$ and $b$ with atoms
\begin{linenomath}
\begin{align}
{\bf{GL}}^a (a) = \{ \alpha ,\beta, \chi \} \,\,\,\,\,\,\,\,\,\,\textnormal{and}\,\,\,\,\,\,\,\,\,\, {\bf{GL}}^a (b) = \{ \chi ,\delta ,\varepsilon \},
\end{align} 
\end{linenomath}
and suppose we want to enforce $a<b$. Extend the graph appending new atoms and edges as
\begin{linenomath}
\[
\begin{array}{*{20}c}
{\phi ,\varphi ,\gamma \to \alpha } \\
{\pi ,\omega ,\theta \to \beta } \\
\end{array}\,\,\,\,\,\,\,\,\,\,\begin{array}{*{20}c}
{\chi ',\phi ,\pi \to \chi } \\
{\delta ',\varphi ,\omega \to \delta } \\
{\varepsilon ',\gamma ,\theta \to \varepsilon } \\
\end{array}
\]
\end{linenomath}
Close the graph by transitive closure and delete $\alpha ,\beta ,\,\chi ,\,\delta$ and $\varepsilon$ from the graph. Then it holds that ${\bf{GL}}^a (a) \subset {\bf{GL}}^a (b)$ and therefore, $\,a<b$. This is the \emph{Full Crossing} of $a$ into $b$ and can be represented with the table
\begin{linenomath}
\[
\begin{array}{*{20}c}
{} & \chi & \delta & \varepsilon \\
\hline
\,\,\,\,\,\,\,\,\vline & \chi ' & \delta ' & \varepsilon ' \\
\alpha \,\,\,\,\vline & \phi & \varphi & \gamma \\
\beta \,\,\,\, \vline & \pi & \omega & \theta \\
\end{array}
\]
\end{linenomath}
We say that we have \emph{"crossed"} atoms $\alpha$ and $\beta$ into $b$. Note that we do not need to "cross" atom $\chi$ of $a$ as it is already in $b$.

Since the crossing is an expensive operation that multiplies the number of atoms, we instead do a Sparse Crossing. The idea is that, as long as we check that all involved atoms remain trace-invariant, the Sparse Crossing operation still enforces $a<b$ and preserves all positive relations. In addition, it also preserves negative relations as long as they are "protected" by its corresponding negative trace constraint. Later we give details of how to compute it, but is is intuitive to think of it as a Full Crossing but leaving empty spaces in the table, as in this example
\begin{linenomath}
\[
\begin{array}{*{20}c}
{} & \varepsilon_1 & \varepsilon_2 & \varepsilon_3 \\
\hline
\,\,\,\,\,\,\,\,\,\,\vline & \varepsilon_{1}' & \,\,\,\, & \varepsilon_{3}' \\
\phi _1 \,\,\,\,\vline & & \varphi_{12} & \\
\phi _2 \,\,\,\,\vline & & \varphi_{22} & \varphi_{23} \\
\end{array}
\] 
\end{linenomath}

The Full Crossing and Sparse Crossing operations transform one graph into another graph and, after transitive closure, the crossing operations also map one algebra into another. This mapping function commutes with both operations $\odot$ and $[\,\,\,]$ so it is an homomorphism. This means that if $q=r \odot s$ is true before crossing it is also true after. Since the partial order $<$ is defined using the idempotent operator $\odot$, crossing operations preserve all inclusions between elements. 

Full and Sparse Crossing operations keep unaltered all positive order relations, i.e. if $p<q$ is true before the crossing of $a$ into $b$, it is also true after. This applies to all positive relations, not only the positive training relations of $R^{+}$ (we use $R^{+}$ for the positive relations of $R$). Crossing, however, does not preserve negative order relations: a negative relation before crossing can turn positive after crossing. In fact, crossing is never an injective homomorphism because $a<b$, that is false before crossing, becomes true after the crossing.

It turns out that the enforcement of negative trace constraints and the positive trace constraint $ {\bf{Tr}}(b) \subset {\bf{Tr}}(a)$ is what is need to ensure that the crossing of $a$ into $b$ preserves the negative order relations $R^{-}$.  In addition, the atoms introduced during the trace constraint enforcement stage are precisely the ones we need to carry out the Full Crossing (or Sparse Crossing) operation and keep all atoms trace-invariant. This follows from {\bf{Theorem \ref{crossingTheorem}}} that states that the Full Crossing of $a$ into $b$ leaves the traces of all atoms unchanged if and only if the positive trace constraint for $a < b$, i.e. $ {\bf{Tr}}(b) \subset {\bf{Tr}}(a)$ is satisfied.

For a recipe on how to compute the Sparse Crossing, see the \textbf{Algorithm \ref{sparseCrossing}} in \textbf{Appendix \ref{algorithms}}. In the following, we apply it to our toy example to the two positive input relations.

The Sparse Crossing operation of $v$ into $T_i^+$ works by creating new atoms and linking them to atoms in $v$ and atoms in $T_i^+$ in a way that the traces of all atoms are kept unaltered. The linearity of the trace ensures that if the trace of the atoms remain unchanged so do the traces of all elements. 

After we edge new atoms, say $\phi_1$, $\phi_2$ and $\phi_3$, to an existing atom $\phi$ of $v$, the trace of $\phi$ should be recalculated using $\textbf{Tr}(\phi)=\textbf{Tr}(\phi_1) \cap \textbf{Tr}(\phi_2) \cap \textbf{Tr}(\phi_3)$. Since the new atoms we introduce during the Crossing operations are edged not only to $\phi$ but also to atoms of $T_i^+$, the trace of $\phi$ could change and we do not want that to happen. Before we get rid of atom $\phi$ and replace it with the new ones, we have to make sure that the trace $\textbf{Tr}(\phi)$ has not changed.

We want the Sparse Crossing to be as sparse as possible to produce a model as small as possible. We replace $\phi$ with as few new atoms as possible limited only by the need to keep $\textbf{Tr}(\phi)$ unchanged. This need may force us to introduce more than one new atom in the row of atom $\phi$ in the Sparse Crossing matrix. It is always possible to preserve $\textbf{Tr}(\phi)$ provided that we have enforced the positive trace constraint for $a < b$, i.e. $ {\bf{Tr}}(b) \subset {\bf{Tr}}(a)$. 

Keeping the trace of the atoms of $T_i^+$ unaltered is also necessary but it is easier than preserving the trace of $\phi$. Suppose an atom $\alpha < T_1^+$ and assume we add an edge $\phi_1 \rightarrow \alpha$. The trace of $\alpha$ may change but, if it does, now we have the extra freedom of appending another new atom edged just to $\alpha$, which always has the effect to leave the trace of $\alpha$ invariant. This freedom does not exist for atoms of $v$, like $\phi$. Keep in mind that the goal of the crossing is replacing the atoms of $v$ with new atoms that are in both $v$ and $T_i^+$. If we add an extra atom $\phi'$ edged only to $\phi$, then $\phi'$ is not in $T_i^+$.  

Consider the set of atoms $\Phi_1$ in $v$ and not in the first positive example, $T_{1}^{+}$, and choose one. At the point where we are in our toy example, there is only a single atom, $\phi \in \Phi_1$. Now we select one of the atoms in $T_{1}^{+}$, in this case only $\epsilon_1$. We create a new atom $\phi_{1}$, and we add an edge to $\phi$ and another to $\epsilon_1$, $\phi_{1} \rightarrow \phi$ and $\phi_{1} \rightarrow \epsilon_1$, obtaining the new graph \begin{center}
\begin{tikzpicture}[scale=0.35]

% First positive example

\draw[thick] (0,0)--(2,0); 
\draw[thick] (1,0)--(1,2); 
\draw[thick] (2,0)--(2,2); 
\draw[thick] (0,0)--(0,2); 
\draw[thick] (0,2)--(2,2);
\draw[thick] (0,1)--(2,1);

%\draw[fill=black] (-0.5,1.5) rectangle (0.5,2.5); 
\draw[fill=black] (0,0) rectangle (1,2); 
%\draw[fill=black] (1,0) rectangle (2,1); 

% Second positive example

\draw[thick] (3,0)--(5,0); 
\draw[thick] (4,0)--(4,2); 
\draw[thick] (5,0)--(5,2); 
\draw[thick] (3,0)--(3,2); 
\draw[thick] (3,2)--(5,2);
\draw[thick] (3,1)--(5,1);

\draw[fill=black] (4,0) rectangle (5,2); 

% Separator between positive and negative examples

\draw[thick] (5.5,-0.5)--(5.5,2.5);

% First negative example

\draw[thick] (6,0)--(8,0); 
\draw[thick] (6,0)--(6,2); 
\draw[thick] (8,0)--(8,2); 
\draw[thick] (6,1)--(8,1); 
\draw[thick] (6,2)--(8,2);
\draw[thick] (7,0)--(7,2);

\draw[fill=black] (6,1) rectangle (7,2); 
\draw[fill=black] (7,0) rectangle (8,1); 

% Second negative example

\draw[thick] (9,0)--(11,0); 
\draw[thick] (10,0)--(10,2); 
\draw[thick] (11,0)--(11,2); 
\draw[thick] (9,0)--(9,2); 
\draw[thick] (9,2)--(11,2);
\draw[thick] (9,1)--(11,1);

\draw[fill=black] (10,1) rectangle (11,2); 

% third negative example

\draw[thick] (12,0)--(14,0); 
\draw[thick] (13,0)--(13,2); 
\draw[thick] (14,0)--(14,2); 
\draw[thick] (12,0)--(12,2); 
\draw[thick] (12,2)--(14,2);
\draw[thick] (12,1)--(14,1);

\draw[fill=black] (13,0) rectangle (14,1); 

% Now the constants

% First constant

\draw[thick] (-0.25,-5)--(0.75,-5); 
\draw[thick] (-0.25,-4)--(0.75,-4); 
\draw[thick] (-0.25,-5)--(-0.25,-4); 
\draw[thick] (0.75,-5)--(0.75,-4); 

\draw[fill=black] (-0.50,-5.25) rectangle (0,-4.75); 

% Second constant

\draw[thick] (1.5,-5)--(2.5,-5); 
\draw[thick] (1.5,-4)--(2.5,-4); 
\draw[thick] (1.5,-5)--(1.5,-4); 
\draw[thick] (2.5,-5)--(2.5,-4); 

\draw[fill=black] (1.25,-4.25) rectangle (1.75,-3.75); 

% Third constant

\draw[thick] (3.25,-5)--(4.25,-5); 
\draw[thick] (3.25,-4)--(4.25,-4); 
\draw[thick] (3.25,-5)--(3.25,-4); 
\draw[thick] (4.25,-5)--(4.25,-4); 

\draw[fill=black] (4,-4.25) rectangle (4.5,-3.75); 

% Fourth constant

\draw[thick] (5,-5)--(6,-5); 
\draw[thick] (5,-4)--(6,-4); 
\draw[thick] (5,-5)--(5,-4); 
\draw[thick] (6,-5)--(6,-4); 

\draw[fill=black] (5.75,-5.25) rectangle (6.25,-4.75); 

% 5th constant (add 6.25 in x coordinate, and black goes to white)

\draw[thick] (6.75,-5)--(7.75,-5); 
\draw[thick] (6.75,-4)--(7.75,-4); 
\draw[thick] (6.75,-5)--(6.75,-4); 
\draw[thick] (7.75,-5)--(7.75,-4); 

\draw[fill=white] (6.5,-5.25) rectangle (7,-4.75); 

% 6th constant

\draw[thick] (8.50,-5)--(9.5,-5); 
\draw[thick] (8.50,-4)--(9.5,-4); 
\draw[thick] (8.50,-5)--(8.5,-4); 
\draw[thick] (9.50,-5)--(9.50,-4); 

\draw[fill=white] (8.25,-4.25) rectangle (8.75,-3.75); 

% 7th constant

\draw[thick] (10.25,-5)--(11.25,-5); 
\draw[thick] (10.25,-4)--(11.25,-4); 
\draw[thick] (10.25,-5)--(10.25,-4); 
\draw[thick] (11.25,-5)--(11.25,-4); 

\draw[fill=white] (11,-4.25) rectangle (11.5,-3.75); 

% 8th constant

\draw[thick] (12,-5)--(13,-5); 
\draw[thick] (12,-4)--(13,-4); 
\draw[thick] (12,-5)--(12,-4); 
\draw[thick] (13,-5)--(13,-4); 

\draw[fill=white] (12.75,-5.25) rectangle (13.25,-4.75); 
% Connectors from examples to constants

\draw[thin] (1,-0.3)--(0.25,-3.7); 
\draw[thin] (1,-0.3)--(2,-3.7); 
\draw[thin] (1,-0.3)--(10.75,-3.7); 
\draw[thin] (1,-0.3)--(12.5,-3.7); 

\draw[thin] (4,-0.3)--(5.5,-3.7); 
\draw[thin] (4,-0.3)--(3.75,-3.7); 
\draw[thin] (4,-0.3)--(7.25,-3.7); 
\draw[thin] (4,-0.3)--(9,-3.7); 

\draw[thin] (7,-0.3)--(2,-3.7); 
\draw[thin] (7,-0.3)--(5.5,-3.7); 
\draw[thin] (7,-0.3)--(7.25,-3.7); 
\draw[thin] (7,-0.3)--(10.75,-3.7); 

\draw[thin] (10,-0.3)--(3.75,-3.7); 
\draw[thin] (10,-0.3)--(7.25,-3.7); 
\draw[thin] (10,-0.3)--(9,-3.7); 
\draw[thin] (10,-0.3)--(12.5,-3.7); 

\draw[thin] (13,-0.3)--(5.5,-3.7); 
\draw[thin] (13,-0.3)--(7.25,-3.7); 
\draw[thin] (13,-0.3)--(9,-3.7); 
\draw[thin] (13,-0.3)--(10.75,-3.7); 

% Vertical constant and atoms

\node[text width=0.1cm] at (-3,-4.5) 
{{$v$}};

\node[text width=0.1cm] at (7,-7.9) 
{$0$};

\node[text width=0.1cm] at (-3,-8) 
{{$\phi$}};

\node[text width=0.1cm] at (0.25,-8) 
{{$\epsilon_1$}};

\node[text width=0.1cm] at (3.75,-8) 
{{$\epsilon_2$}};

\node[text width=0.1cm] at (5.5,-8) 
{{$\epsilon_3$}};
\node[text width=0.1cm] at (-1.875,-10) 
{{$\phi_{1}$}};
% Connectors to atoms

% phi-v

\draw[thin] (-3,-5.3)--(-3,-7.6);

% phi11-constants

\draw[thin] (-1.5,-9.3)--(-3,-8.7);
\draw[thin] (-1.5,-9.3)--(0.25,-8.7);

% epsilon1-constant
\draw[thin] (0.25,-5.3)--(0.25,-7.6);
% epsilon2-constant
\draw[thin] (3.75,-5.3)--(3.75,-7.6);
% epsilon3-constant
\draw[thin] (5.5,-5.3)--(5.5,-7.6);

% 0-v

\draw[thin] (7,-7.3)--(-3,-5.3);

% 0-contant 8

\draw[thin] (7,-7.3)--(12.5,-5.3);
\draw[thin] (7,-7.3)--(10.75,-5.3); 
\draw[thin] (7,-7.3)--(9,-5.3); 
\draw[thin] (7,-7.3)--(7.25,-5.3); 
\draw[thin] (7,-7.3)--(5.5,-5.3); 
\draw[thin] (7,-7.3)--(3.75,-5.3); 
\draw[thin] (7,-7.3)--(2,-5.3); 
\draw[thin] (7,-7.3)--(0.25,-5.3); 

% alpha

%\draw[thin] (-2.8,-5.3)--(-2.8,-9.6); 
%\draw[thin] (0.25,-5.3)--(-2.8,-9.6); 
%\draw[thin] (5.5,-5.3)--(-2.8,-9.6); 

% beta

%\draw[thin] (-2.8,-5.3)--(0.25,-9.6); 
%\draw[thin] (0.25,-5.3)--(0.25,-9.6); 
%\draw[thin] (3.75,-5.3)--(0.25,-9.6); 

% gamma

%\draw[thin] (0.25,-5.3)--(2,-9.6); 

% delta

%\draw[thin] (3.75,-5.3)--(3.75,-9.6); 

% delta

%\draw[thin] (5.5,-5.3)--(5.5,-9.6); 

%\draw[thin] (0.25,-5.3)--(0.25,-9.6); 
%\draw[thin] (0.25,-5.3)--(6.4,-9.6); 
%\draw[thin] (12.5,-5.3)--(6.4,-9.6); 

%%%%
%%%% HERE DUAL
%%%%

% First positive example

\draw[thick] (20,-8)--(22,-8); 
\draw[thick] (21,-8)--(21,-6); 
\draw[thick] (22,-8)--(22,-6); 
\draw[thick] (20,-8)--(20,-6); 
\draw[thick] (20,-6)--(22,-6);
\draw[thick] (20,-7)--(22,-7);

%\draw[fill=black] (-0.5,1.5) rectangle (0.5,2.5); 
\draw[fill=black] (20,-8) rectangle (21,-6); 
%\draw[fill=black] (1,0) rectangle (2,1); 

% Second positive example

\draw[thick] (23,-8)--(25,-8); 
\draw[thick] (24,-8)--(24,-6); 
\draw[thick] (25,-8)--(25,-6); 
\draw[thick] (23,-8)--(23,-6); 
\draw[thick] (23,-6)--(25,-6);
\draw[thick] (23,-7)--(25,-7);

\draw[fill=black] (24,-8) rectangle (25,-6); 

% Separator between positive and negative examples

\draw[thick] (25.5,-8.5)--(25.5,-5.5);

% First negative example

\draw[thick] (26,-8)--(28,-8); 
\draw[thick] (27,-8)--(27,-6); 
\draw[thick] (28,-8)--(28,-6); 
\draw[thick] (26,-8)--(26,-6); 
\draw[thick] (26,-6)--(28,-6);
\draw[thick] (26,-7)--(28,-7);

\draw[fill=black] (26,-7) rectangle (27,-6); 
\draw[fill=black] (27,-8) rectangle (28,-7); 

% Second negative example

\draw[thick] (29,-8)--(31,-8); 
\draw[thick] (30,-8)--(30,-6); 
\draw[thick] (31,-8)--(31,-6); 
\draw[thick] (29,-8)--(29,-6); 
\draw[thick] (29,-6)--(31,-6);
\draw[thick] (29,-7)--(31,-7);

\draw[fill=black] (30,-7) rectangle (31,-6); 

% third negative example

\draw[thick] (32,-8)--(34,-8); 
\draw[thick] (33,-8)--(33,-6); 
\draw[thick] (34,-8)--(34,-6); 
\draw[thick] (32,-8)--(32,-6); 
\draw[thick] (32,-6)--(34,-6);
\draw[thick] (32,-7)--(34,-7);

\draw[fill=black] (33,-8) rectangle (34,-7); 

% Now the constants

% First constant

\draw[thick] (19.75,-2)--(20.75,-2); 
\draw[thick] (19.75,-1)--(20.75,-1); 
\draw[thick] (19.75,-2)--(19.75,-1); 
\draw[thick] (20.75,-2)--(20.75,-1);

\draw[fill=black] (19.5,-2.25) rectangle (20,-1.75); 

% Second constant

\draw[thick] (21.5,-2)--(22.5,-2); 
\draw[thick] (21.5,-1)--(22.5,-1); 
\draw[thick] (21.5,-2)--(21.5,-1); 
\draw[thick] (22.5,-2)--(22.5,-1); 

\draw[fill=black] (21.25,-1.25) rectangle (21.75,-0.75); 

% Third constant

\draw[thick] (23.25,-2)--(24.25,-2); 
\draw[thick] (23.25,-1)--(24.25,-1); 
\draw[thick] (23.25,-2)--(23.25,-1); 
\draw[thick] (24.25,-2)--(24.25,-1); 

\draw[fill=black] (24,-1.25) rectangle (24.5,-0.75); 

% Fourth constant

\draw[thick] (25,-2)--(26,-2); 
\draw[thick] (25,-1)--(26,-1); 
\draw[thick] (25,-2)--(25,-1); 
\draw[thick] (26,-2)--(26,-1); 

\draw[fill=black] (25.75,-2.25) rectangle (26.25,-1.75); 

% 5th constant (add 6.25 in x coordinate, and black goes to white)

\draw[thick] (26.75,-2)--(27.75,-2); 
\draw[thick] (26.75,-1)--(27.75,-1); 
\draw[thick] (26.75,-2)--(26.75,-1); 
\draw[thick] (27.75,-2)--(27.75,-1); 

\draw[fill=white] (26.5,-2.25) rectangle (27,-1.75); 

% 6th constant

\draw[thick] (28.50,-2)--(29.5,-2); 
\draw[thick] (28.50,-1)--(29.5,-1); 
\draw[thick] (28.50,-2)--(28.5,-1); 
\draw[thick] (29.50,-2)--(29.50,-1); 

\draw[fill=white] (28.25,-1.25) rectangle (28.75,-0.75); 

% 7th constant

\draw[thick] (30.25,-2)--(31.25,-2); 
\draw[thick] (30.25,-1)--(31.25,-1); 
\draw[thick] (30.25,-2)--(30.25,-1); 
\draw[thick] (31.25,-2)--(31.25,-1); 

\draw[fill=white] (31,-1.25) rectangle (31.5,-0.75); 

% 8th constant

\draw[thick] (32,-2)--(33,-2); 
\draw[thick] (32,-1)--(33,-1); 
\draw[thick] (32,-2)--(32,-1); 
\draw[thick] (33,-2)--(33,-1); 

\draw[fill=white] (32.75,-2.25) rectangle (33.25,-1.75); 
% Connectors from examples to constants

\draw[thin] (21,-5.7)--(20.25,-2.3); 
\draw[thin] (21,-5.7)--(22,-2.3); 
\draw[thin] (21,-5.7)--(30.75,-2.3); 
\draw[thin] (21,-5.7)--(32.5,-2.3); 

\draw[thin] (24,-5.7)--(23.75,-2.3); 
\draw[thin] (24,-5.7)--(25.5,-2.3); 
\draw[thin] (24,-5.7)--(27.25,-2.3); 
\draw[thin] (24,-5.7)--(29,-2.3); 

\draw[thin] (27,-5.7)--(22,-2.3); 
\draw[thin] (27,-5.7)--(25.5,-2.3); 
\draw[thin] (27,-5.7)--(27.25,-2.3); 
\draw[thin] (27,-5.7)--(30.75,-2.3); 

\draw[thin] (30,-5.7)--(23.75,-2.3); 
\draw[thin] (30,-5.7)--(27.25,-2.3); 
\draw[thin] (30,-5.7)--(29,-2.3); 
\draw[thin] (30,-5.7)--(32.5,-2.3); 

\draw[thin] (33,-5.7)--(25.5,-2.3); 
\draw[thin] (33,-5.7)--(27.25,-2.3); 
\draw[thin] (33,-5.7)--(29,-2.3); 
\draw[thin] (33,-5.7)--(30.75,-2.3); 

% Vertical constant and atoms

\node[text width=0.1cm] at (16.7,2) 
{$[\phi]$}; 
\node[text width=0.1cm] at (17.8,4.1) 
{$[\phi_{1}]$};
\node[text width=0.1cm] at (19.60,2) 
{$[\epsilon_1]$};
\node[text width=0.1cm] at (23.1,2) 
{$[\epsilon_2]$};
\node[text width=0.1cm] at (24.85,2) 
{$[\epsilon_3]$};
\node[text width=0.1cm] at (16.7,-1.5) 
{$[v]$};

\node[text width=0.1cm] at (21,-11.5) 
{$0^{*}$};

\node[text width=0.1cm] at (26.6,2) 
{$[0]$};

\node[text width=0.1cm] at (27,-11.5) 
{$\zeta_1$};

\node[text width=0.1cm] at (30,-11.5) 
{$\zeta_2$};
\node[text width=0.1cm] at (33,-11.5) 
{$\zeta_3$}; 

%\node[text width=0.1cm] at (-3,-10) 
% {$\Large{\alpha}$};

%\node[text width=0.1cm] at (0.25,-10) 
% {$\Large{\beta}$};

%\node[text width=0.1cm] at (2,-10) 
% {$\Large{\gamma}$};
%\node[text width=0.1cm] at (3.75,-10) 
% {$\Large{\delta}$};

%\node[text width=0.1cm] at (5.5,-10) 
% {$\Large{\epsilon}$};
%\node[text width=0.1cm] at (0.25,-10) 
% {$\LARGE{\alpha}$};

%\node[text width=0.1cm] at (6.4,-10) 
% {$\LARGE{0}$};
% Connectors to atoms

% From [phi] to v
\draw[thin] (17.2,1.3)--(17.2,-0.7); 

% From [phi11] to atoms

\draw[thin] (18.5,3.3)--(17.2,2.7); 
\draw[thin] (18.5,3.3)--(20,2.7); 

% From [epsilon]to constants
\draw[thin] (20.25,1.3)--(20.25,-0.7); 
\draw[thin] (23.75,1.3)--(23.75,-0.7); 
\draw[thin] (25.5,1.3)--(25.5,-0.7); 
% 0*-Examples

\draw[thin] (21,-10.9)--(21,-8.3); 
\draw[thin] (21,-10.9)--(33,-8.3); 
\draw[thin] (21,-10.9)--(24,-8.3); 
\draw[thin] (21,-10.9)--(27,-8.3); 
\draw[thin] (21,-10.9)--(30,-8.3); 
%\draw[thin] (1,-13.9)--(13,-11.3); 
% First two examples -v

\draw[thin] (21,-5.7)--(17.2,-2.3); 
\draw[thin] (24,-5.7)--(17.2,-2.3); 

% eta-Examples
\draw[thin] (27,-10.9)--(27,-8.3); 
\draw[thin] (30,-10.9)--(30,-8.3); 
\draw[thin] (33,-10.9)--(33,-8.3); 

% [0] to constants

\draw[thin] (27,1.1)--(17.2,-0.7);
\draw[thin] (27,1.1)--(32.5,-0.7);
\draw[thin] (27,1.1)--(30.75,-0.7);
\draw[thin] (27,1.1)--(29,-0.7);
\draw[thin] (27,1.1)--(27.25,-0.7);
\draw[thin] (27,1.1)--(25.5,-0.7);
\draw[thin] (27,1.1)--(23.75,-0.7);
\draw[thin] (27,1.1)--(22,-0.7);
\draw[thin] (27,1.1)--(20.25,-0.7);
% alpha

%\draw[thin] (-2.8,-5.3)--(-2.8,-9.6); 
%\draw[thin] (0.25,-5.3)--(-2.8,-9.6); 
%\draw[thin] (5.5,-5.3)--(-2.8,-9.6); 

% beta

%\draw[thin] (-2.8,-5.3)--(0.25,-9.6); 
%\draw[thin] (0.25,-5.3)--(0.25,-9.6); 
%\draw[thin] (5.5,-5.3)--(0.25,-9.6); 

% gamma

%\draw[thin] (0.25,-5.3)--(2,-9.6); 

% delta

%\draw[thin] (3.75,-5.3)--(3.75,-9.6); 

% delta

%\draw[thin] (5.5,-5.3)--(5.5,-9.6); 

%\draw[thin] (0.25,-5.3)--(0.25,-9.6); 
%\draw[thin] (0.25,-5.3)--(6.4,-9.6); 
%\draw[thin] (12.5,-5.3)--(6.4,-9.6); 

\end{tikzpicture}
\end{center}
We have to check that the traces of all the atoms remain unchanged. We can start with atom $\phi$. Initially, we have that $\textbf{Tr}(\phi)_{i}= \{0^{*}\}$, and after the crossing $\textbf{Tr}(\phi)_{f}=\textbf{Tr}(\phi_{1})= \{0^{*}\}$, so it has not changed. For atom $\epsilon_1$, we have initially $\textbf{Tr}(\epsilon_1)_{i}=\{0^{*}\}$ and after the crossing $\textbf{Tr}(\epsilon_1)_{f}=\textbf{Tr}(\phi_{1})= \{0^{*}\}$, so it has not changed either. Now that we have checked for trace invariance of the crossing, we can eliminate the original atoms $\phi$ and $\epsilon_1$, giving
\begin{center}
\begin{tikzpicture}[scale=0.35]

% First positive example

\draw[thick] (0,0)--(2,0); 
\draw[thick] (1,0)--(1,2); 
\draw[thick] (2,0)--(2,2); 
\draw[thick] (0,0)--(0,2); 
\draw[thick] (0,2)--(2,2);
\draw[thick] (0,1)--(2,1);

%\draw[fill=black] (-0.5,1.5) rectangle (0.5,2.5); 
\draw[fill=black] (0,0) rectangle (1,2); 
%\draw[fill=black] (1,0) rectangle (2,1); 

% Second positive example

\draw[thick] (3,0)--(5,0); 
\draw[thick] (4,0)--(4,2); 
\draw[thick] (5,0)--(5,2); 
\draw[thick] (3,0)--(3,2); 
\draw[thick] (3,2)--(5,2);
\draw[thick] (3,1)--(5,1);

\draw[fill=black] (4,0) rectangle (5,2); 

% Separator between positive and negative examples

\draw[thick] (5.5,-0.5)--(5.5,2.5);

% First negative example

\draw[thick] (6,0)--(8,0); 
\draw[thick] (6,0)--(6,2); 
\draw[thick] (8,0)--(8,2); 
\draw[thick] (6,1)--(8,1); 
\draw[thick] (6,2)--(8,2);
\draw[thick] (7,0)--(7,2);

\draw[fill=black] (6,1) rectangle (7,2); 
\draw[fill=black] (7,0) rectangle (8,1); 

% Second negative example

\draw[thick] (9,0)--(11,0); 
\draw[thick] (10,0)--(10,2); 
\draw[thick] (11,0)--(11,2); 
\draw[thick] (9,0)--(9,2); 
\draw[thick] (9,2)--(11,2);
\draw[thick] (9,1)--(11,1);

\draw[fill=black] (10,1) rectangle (11,2); 

% third negative example

\draw[thick] (12,0)--(14,0); 
\draw[thick] (13,0)--(13,2); 
\draw[thick] (14,0)--(14,2); 
\draw[thick] (12,0)--(12,2); 
\draw[thick] (12,2)--(14,2);
\draw[thick] (12,1)--(14,1);

\draw[fill=black] (13,0) rectangle (14,1); 

% Now the constants

% First constant

\draw[thick] (-0.25,-5)--(0.75,-5); 
\draw[thick] (-0.25,-4)--(0.75,-4); 
\draw[thick] (-0.25,-5)--(-0.25,-4); 
\draw[thick] (0.75,-5)--(0.75,-4); 

\draw[fill=black] (-0.50,-5.25) rectangle (0,-4.75); 

% Second constant

\draw[thick] (1.5,-5)--(2.5,-5); 
\draw[thick] (1.5,-4)--(2.5,-4); 
\draw[thick] (1.5,-5)--(1.5,-4); 
\draw[thick] (2.5,-5)--(2.5,-4); 

\draw[fill=black] (1.25,-4.25) rectangle (1.75,-3.75); 

% Third constant

\draw[thick] (3.25,-5)--(4.25,-5); 
\draw[thick] (3.25,-4)--(4.25,-4); 
\draw[thick] (3.25,-5)--(3.25,-4); 
\draw[thick] (4.25,-5)--(4.25,-4); 

\draw[fill=black] (4,-4.25) rectangle (4.5,-3.75); 

% Fourth constant

\draw[thick] (5,-5)--(6,-5); 
\draw[thick] (5,-4)--(6,-4); 
\draw[thick] (5,-5)--(5,-4); 
\draw[thick] (6,-5)--(6,-4); 

\draw[fill=black] (5.75,-5.25) rectangle (6.25,-4.75); 

% 5th constant (add 6.25 in x coordinate, and black goes to white)

\draw[thick] (6.75,-5)--(7.75,-5); 
\draw[thick] (6.75,-4)--(7.75,-4); 
\draw[thick] (6.75,-5)--(6.75,-4); 
\draw[thick] (7.75,-5)--(7.75,-4); 

\draw[fill=white] (6.5,-5.25) rectangle (7,-4.75); 

% 6th constant

\draw[thick] (8.50,-5)--(9.5,-5); 
\draw[thick] (8.50,-4)--(9.5,-4); 
\draw[thick] (8.50,-5)--(8.5,-4); 
\draw[thick] (9.50,-5)--(9.50,-4); 

\draw[fill=white] (8.25,-4.25) rectangle (8.75,-3.75); 

% 7th constant

\draw[thick] (10.25,-5)--(11.25,-5); 
\draw[thick] (10.25,-4)--(11.25,-4); 
\draw[thick] (10.25,-5)--(10.25,-4); 
\draw[thick] (11.25,-5)--(11.25,-4); 

\draw[fill=white] (11,-4.25) rectangle (11.5,-3.75); 

% 8th constant

\draw[thick] (12,-5)--(13,-5); 
\draw[thick] (12,-4)--(13,-4); 
\draw[thick] (12,-5)--(12,-4); 
\draw[thick] (13,-5)--(13,-4); 

\draw[fill=white] (12.75,-5.25) rectangle (13.25,-4.75); 
% Connectors from examples to constants

\draw[thin] (1,-0.3)--(0.25,-3.7); 
\draw[thin] (1,-0.3)--(2,-3.7); 
\draw[thin] (1,-0.3)--(10.75,-3.7); 
\draw[thin] (1,-0.3)--(12.5,-3.7); 

\draw[thin] (4,-0.3)--(5.5,-3.7); 
\draw[thin] (4,-0.3)--(3.75,-3.7); 
\draw[thin] (4,-0.3)--(7.25,-3.7); 
\draw[thin] (4,-0.3)--(9,-3.7); 

\draw[thin] (7,-0.3)--(2,-3.7); 
\draw[thin] (7,-0.3)--(5.5,-3.7); 
\draw[thin] (7,-0.3)--(7.25,-3.7); 
\draw[thin] (7,-0.3)--(10.75,-3.7); 

\draw[thin] (10,-0.3)--(3.75,-3.7); 
\draw[thin] (10,-0.3)--(7.25,-3.7); 
\draw[thin] (10,-0.3)--(9,-3.7); 
\draw[thin] (10,-0.3)--(12.5,-3.7); 

\draw[thin] (13,-0.3)--(5.5,-3.7); 
\draw[thin] (13,-0.3)--(7.25,-3.7); 
\draw[thin] (13,-0.3)--(9,-3.7); 
\draw[thin] (13,-0.3)--(10.75,-3.7); 

% Vertical constant and atoms

\node[text width=0.1cm] at (-3,-4.5) 
{{$v$}};

\node[text width=0.1cm] at (7,-7.9) 
{$0$};

%\node[text width=0.1cm] at (-3,-8) 
% {{$\phi$}};

%\node[text width=0.1cm] at (0.25,-8) 
% {{$\epsilon_1$}};

\node[text width=0.1cm] at (3.75,-8) 
{{$\epsilon_2$}};

\node[text width=0.1cm] at (5.5,-8) 
{{$\epsilon_3$}};
\node[text width=0.1cm] at (-1.8,-8) 
{{$\phi_{1}$}};

% Connectors to atoms

% phi-v

%\draw[thin] (-3,-5.3)--(-3,-7.6);

% phi11-constants

\draw[thin] (-1.5,-7.4)--(-3,-5.3);
\draw[thin] (-1.5,-7.4)--(0.25,-5.3);

% epsilon1-constant
%\draw[thin] (0.25,-5.3)--(0.25,-7.6);
% epsilon2-constant
\draw[thin] (3.75,-5.3)--(3.75,-7.6);
% epsilon3-constant
\draw[thin] (5.5,-5.3)--(5.5,-7.6);

% 0-v

\draw[thin] (7,-7.3)--(-3,-5.3);

% 0-contant 8

\draw[thin] (7,-7.3)--(12.5,-5.3);
\draw[thin] (7,-7.3)--(10.75,-5.3); 
\draw[thin] (7,-7.3)--(9,-5.3); 
\draw[thin] (7,-7.3)--(7.25,-5.3); 
\draw[thin] (7,-7.3)--(5.5,-5.3); 
\draw[thin] (7,-7.3)--(3.75,-5.3); 
\draw[thin] (7,-7.3)--(2,-5.3); 
\draw[thin] (7,-7.3)--(0.25,-5.3); 

% alpha

%\draw[thin] (-2.8,-5.3)--(-2.8,-9.6); 
%\draw[thin] (0.25,-5.3)--(-2.8,-9.6); 
%\draw[thin] (5.5,-5.3)--(-2.8,-9.6); 

% beta

%\draw[thin] (-2.8,-5.3)--(0.25,-9.6); 
%\draw[thin] (0.25,-5.3)--(0.25,-9.6); 
%\draw[thin] (3.75,-5.3)--(0.25,-9.6); 

% gamma

%\draw[thin] (0.25,-5.3)--(2,-9.6); 

% delta

%\draw[thin] (3.75,-5.3)--(3.75,-9.6); 

% delta

%\draw[thin] (5.5,-5.3)--(5.5,-9.6); 

%\draw[thin] (0.25,-5.3)--(0.25,-9.6); 
%\draw[thin] (0.25,-5.3)--(6.4,-9.6); 
%\draw[thin] (12.5,-5.3)--(6.4,-9.6); 

%%%%
%%%% HERE DUAL
%%%%

% First positive example

\draw[thick] (20,-8)--(22,-8); 
\draw[thick] (21,-8)--(21,-6); 
\draw[thick] (22,-8)--(22,-6); 
\draw[thick] (20,-8)--(20,-6); 
\draw[thick] (20,-6)--(22,-6);
\draw[thick] (20,-7)--(22,-7);

%\draw[fill=black] (-0.5,1.5) rectangle (0.5,2.5); 
\draw[fill=black] (20,-8) rectangle (21,-6); 
%\draw[fill=black] (1,0) rectangle (2,1); 

% Second positive example

\draw[thick] (23,-8)--(25,-8); 
\draw[thick] (24,-8)--(24,-6); 
\draw[thick] (25,-8)--(25,-6); 
\draw[thick] (23,-8)--(23,-6); 
\draw[thick] (23,-6)--(25,-6);
\draw[thick] (23,-7)--(25,-7);

\draw[fill=black] (24,-8) rectangle (25,-6); 

% Separator between positive and negative examples

\draw[thick] (25.5,-8.5)--(25.5,-5.5);

% First negative example

\draw[thick] (26,-8)--(28,-8); 
\draw[thick] (27,-8)--(27,-6); 
\draw[thick] (28,-8)--(28,-6); 
\draw[thick] (26,-8)--(26,-6); 
\draw[thick] (26,-6)--(28,-6);
\draw[thick] (26,-7)--(28,-7);

\draw[fill=black] (26,-7) rectangle (27,-6); 
\draw[fill=black] (27,-8) rectangle (28,-7); 

% Second negative example

\draw[thick] (29,-8)--(31,-8); 
\draw[thick] (30,-8)--(30,-6); 
\draw[thick] (31,-8)--(31,-6); 
\draw[thick] (29,-8)--(29,-6); 
\draw[thick] (29,-6)--(31,-6);
\draw[thick] (29,-7)--(31,-7);

\draw[fill=black] (30,-7) rectangle (31,-6); 

% third negative example

\draw[thick] (32,-8)--(34,-8); 
\draw[thick] (33,-8)--(33,-6); 
\draw[thick] (34,-8)--(34,-6); 
\draw[thick] (32,-8)--(32,-6); 
\draw[thick] (32,-6)--(34,-6);
\draw[thick] (32,-7)--(34,-7);

\draw[fill=black] (33,-8) rectangle (34,-7); 

% Now the constants

% First constant

\draw[thick] (19.75,-2)--(20.75,-2); 
\draw[thick] (19.75,-1)--(20.75,-1); 
\draw[thick] (19.75,-2)--(19.75,-1); 
\draw[thick] (20.75,-2)--(20.75,-1);

\draw[fill=black] (19.5,-2.25) rectangle (20,-1.75); 

% Second constant

\draw[thick] (21.5,-2)--(22.5,-2); 
\draw[thick] (21.5,-1)--(22.5,-1); 
\draw[thick] (21.5,-2)--(21.5,-1); 
\draw[thick] (22.5,-2)--(22.5,-1); 

\draw[fill=black] (21.25,-1.25) rectangle (21.75,-0.75); 

% Third constant

\draw[thick] (23.25,-2)--(24.25,-2); 
\draw[thick] (23.25,-1)--(24.25,-1); 
\draw[thick] (23.25,-2)--(23.25,-1); 
\draw[thick] (24.25,-2)--(24.25,-1); 

\draw[fill=black] (24,-1.25) rectangle (24.5,-0.75); 

% Fourth constant

\draw[thick] (25,-2)--(26,-2); 
\draw[thick] (25,-1)--(26,-1); 
\draw[thick] (25,-2)--(25,-1); 
\draw[thick] (26,-2)--(26,-1); 

\draw[fill=black] (25.75,-2.25) rectangle (26.25,-1.75); 

% 5th constant (add 6.25 in x coordinate, and black goes to white)

\draw[thick] (26.75,-2)--(27.75,-2); 
\draw[thick] (26.75,-1)--(27.75,-1); 
\draw[thick] (26.75,-2)--(26.75,-1); 
\draw[thick] (27.75,-2)--(27.75,-1); 

\draw[fill=white] (26.5,-2.25) rectangle (27,-1.75); 

% 6th constant

\draw[thick] (28.50,-2)--(29.5,-2); 
\draw[thick] (28.50,-1)--(29.5,-1); 
\draw[thick] (28.50,-2)--(28.5,-1); 
\draw[thick] (29.50,-2)--(29.50,-1); 

\draw[fill=white] (28.25,-1.25) rectangle (28.75,-0.75); 

% 7th constant

\draw[thick] (30.25,-2)--(31.25,-2); 
\draw[thick] (30.25,-1)--(31.25,-1); 
\draw[thick] (30.25,-2)--(30.25,-1); 
\draw[thick] (31.25,-2)--(31.25,-1); 

\draw[fill=white] (31,-1.25) rectangle (31.5,-0.75); 

% 8th constant

\draw[thick] (32,-2)--(33,-2); 
\draw[thick] (32,-1)--(33,-1); 
\draw[thick] (32,-2)--(32,-1); 
\draw[thick] (33,-2)--(33,-1); 

\draw[fill=white] (32.75,-2.25) rectangle (33.25,-1.75); 
% Connectors from examples to constants

\draw[thin] (21,-5.7)--(20.25,-2.3); 
\draw[thin] (21,-5.7)--(22,-2.3); 
\draw[thin] (21,-5.7)--(30.75,-2.3); 
\draw[thin] (21,-5.7)--(32.5,-2.3); 

\draw[thin] (24,-5.7)--(23.75,-2.3); 
\draw[thin] (24,-5.7)--(25.5,-2.3); 
\draw[thin] (24,-5.7)--(27.25,-2.3); 
\draw[thin] (24,-5.7)--(29,-2.3); 

\draw[thin] (27,-5.7)--(22,-2.3); 
\draw[thin] (27,-5.7)--(25.5,-2.3); 
\draw[thin] (27,-5.7)--(27.25,-2.3); 
\draw[thin] (27,-5.7)--(30.75,-2.3); 

\draw[thin] (30,-5.7)--(23.75,-2.3); 
\draw[thin] (30,-5.7)--(27.25,-2.3); 
\draw[thin] (30,-5.7)--(29,-2.3); 
\draw[thin] (30,-5.7)--(32.5,-2.3); 

\draw[thin] (33,-5.7)--(25.5,-2.3); 
\draw[thin] (33,-5.7)--(27.25,-2.3); 
\draw[thin] (33,-5.7)--(29,-2.3); 
\draw[thin] (33,-5.7)--(30.75,-2.3); 

% Vertical constant and atoms

%\node[text width=0.1cm] at (16.7,2) 
% {$[\phi]$}; 
\node[text width=0.1cm] at (17.8,2) 
{$[\phi_{1}]$};
%\node[text width=0.1cm] at (19.60,2) 
% {$[\epsilon_1]$};
\node[text width=0.1cm] at (23.1,2) 
{$[\epsilon_2]$};
\node[text width=0.1cm] at (24.85,2) 
{$[\epsilon_3]$};
\node[text width=0.1cm] at (16.7,-1.5) 
{$[v]$};

\node[text width=0.1cm] at (21,-11.5) 
{$0^{*}$};

\node[text width=0.1cm] at (26.6,2) 
{$[0]$};

\node[text width=0.1cm] at (27,-11.5) 
{$\zeta_1$};

\node[text width=0.1cm] at (30,-11.5) 
{$\zeta_2$};
\node[text width=0.1cm] at (33,-11.5) 
{$\zeta_3$}; 

%\node[text width=0.1cm] at (-3,-10) 
% {$\Large{\alpha}$};

%\node[text width=0.1cm] at (0.25,-10) 
% {$\Large{\beta}$};

%\node[text width=0.1cm] at (2,-10) 
% {$\Large{\gamma}$};
%\node[text width=0.1cm] at (3.75,-10) 
% {$\Large{\delta}$};

%\node[text width=0.1cm] at (5.5,-10) 
% {$\Large{\epsilon}$};
%\node[text width=0.1cm] at (0.25,-10) 
% {$\LARGE{\alpha}$};

%\node[text width=0.1cm] at (6.4,-10) 
% {$\LARGE{0}$};
% Connectors to atoms

% From [phi] to v
%\draw[thin] (17.2,1.3)--(17.2,-0.7); 

% From [phi11] to atoms

\draw[thin] (18.5,1.3)--(17.2,-0.7); 
\draw[thin] (18.5,1.3)--(20.25,-0.7); 

% From [epsilon]to constants
%\draw[thin] (20.25,1.3)--(20.25,-0.7); 
\draw[thin] (23.75,1.3)--(23.75,-0.7); 
\draw[thin] (25.5,1.3)--(25.5,-0.7); 
% 0*-Examples

\draw[thin] (21,-10.9)--(21,-8.3); 
\draw[thin] (21,-10.9)--(33,-8.3); 
\draw[thin] (21,-10.9)--(24,-8.3); 
\draw[thin] (21,-10.9)--(27,-8.3); 
\draw[thin] (21,-10.9)--(30,-8.3); 
%\draw[thin] (1,-13.9)--(13,-11.3); 
% First two examples -v

\draw[thin] (21,-5.7)--(17.2,-2.3); 
\draw[thin] (24,-5.7)--(17.2,-2.3); 

% eta-Examples
\draw[thin] (27,-10.9)--(27,-8.3); 
\draw[thin] (30,-10.9)--(30,-8.3); 
\draw[thin] (33,-10.9)--(33,-8.3); 

% [0] to constants

\draw[thin] (27,1.1)--(17.2,-0.7);
\draw[thin] (27,1.1)--(32.5,-0.7);
\draw[thin] (27,1.1)--(30.75,-0.7);
\draw[thin] (27,1.1)--(29,-0.7);
\draw[thin] (27,1.1)--(27.25,-0.7);
\draw[thin] (27,1.1)--(25.5,-0.7);
\draw[thin] (27,1.1)--(23.75,-0.7);
\draw[thin] (27,1.1)--(22,-0.7);
\draw[thin] (27,1.1)--(20.25,-0.7);
% alpha

%\draw[thin] (-2.8,-5.3)--(-2.8,-9.6); 
%\draw[thin] (0.25,-5.3)--(-2.8,-9.6); 
%\draw[thin] (5.5,-5.3)--(-2.8,-9.6); 

% beta

%\draw[thin] (-2.8,-5.3)--(0.25,-9.6); 
%\draw[thin] (0.25,-5.3)--(0.25,-9.6); 
%\draw[thin] (5.5,-5.3)--(0.25,-9.6); 

% gamma

%\draw[thin] (0.25,-5.3)--(2,-9.6); 

% delta

%\draw[thin] (3.75,-5.3)--(3.75,-9.6); 

% delta

%\draw[thin] (5.5,-5.3)--(5.5,-9.6); 

%\draw[thin] (0.25,-5.3)--(0.25,-9.6); 
%\draw[thin] (0.25,-5.3)--(6.4,-9.6); 
%\draw[thin] (12.5,-5.3)--(6.4,-9.6); 

\end{tikzpicture}
\end{center}
We now perform the crossing between $v$ and the second positive example, $T_2^{+}$. We cross the only atom in $\Phi_2$ (the atoms of $v$ that are not atoms of $T_2^{+}$), $\phi_{1}$, with one of the two atoms in $T_2^{+}$, say $\epsilon_2$. We create a new atom $\phi_{2}$ and edges $\phi_{2} \rightarrow \phi_{1}$ and $\phi_{2} \rightarrow \epsilon_2$. Before the crossing, the trace of atom $\phi_{1}$ is $\textbf{Tr}(\phi_{1})_{i}=\{0^{*}\}$, but after the crossing is $\textbf{Tr}(\phi_{1})_{f}=\{0^{*},\zeta_2 \}$. Since the trace has changed, we proceed to select another atom in $T_2^{+}$, that is, $\epsilon_3$. We then create a new atom $\phi_{3}$ and edges $\phi_{3} \rightarrow \phi_{1}$ and $\phi_{3} \rightarrow \epsilon_3$. After appending $\phi_{3}$ with this new edges to the graph the trace of $\phi_{1}$ becomes $\textbf{Tr}(\phi_{1})_{f}=\textbf{Tr}(\phi_{2}) \cap \textbf{Tr}(\phi_{3})=\{0^{*}\}$, so we now have the trace invariance we were looking for. The trace of atom $\epsilon_{2}$ also remains unchanged as $\textbf{Tr}(\epsilon_{2})_{i}=\{0^{*},\zeta_{1}\}$ equals $\textbf{Tr}(\epsilon_{2})_{f}=\textbf{Tr}(\phi_{2})=\{0^{*},\zeta_{1}\}$. For the trace of $\epsilon_{3}$ we have $\textbf{Tr}(\epsilon_{3})_{i}=\textbf{Tr}(\epsilon_{3})_{f}=\{0^{*},\zeta_{1},\zeta_{3}\}$ so it also remains unaltered.
Eliminating the initial atoms $\phi_{1}, \epsilon_2$ and $\epsilon_3$ we have the graph
\begin{center}
\begin{tikzpicture}[scale=0.35]

% First positive example

\draw[thick] (0,0)--(2,0); 
\draw[thick] (1,0)--(1,2); 
\draw[thick] (2,0)--(2,2); 
\draw[thick] (0,0)--(0,2); 
\draw[thick] (0,2)--(2,2);
\draw[thick] (0,1)--(2,1);

%\draw[fill=black] (-0.5,1.5) rectangle (0.5,2.5); 
\draw[fill=black] (0,0) rectangle (1,2); 
%\draw[fill=black] (1,0) rectangle (2,1); 

% Second positive example

\draw[thick] (3,0)--(5,0); 
\draw[thick] (4,0)--(4,2); 
\draw[thick] (5,0)--(5,2); 
\draw[thick] (3,0)--(3,2); 
\draw[thick] (3,2)--(5,2);
\draw[thick] (3,1)--(5,1);

\draw[fill=black] (4,0) rectangle (5,2); 

% Separator between positive and negative examples

\draw[thick] (5.5,-0.5)--(5.5,2.5);

% First negative example

\draw[thick] (6,0)--(8,0); 
\draw[thick] (6,0)--(6,2); 
\draw[thick] (8,0)--(8,2); 
\draw[thick] (6,1)--(8,1); 
\draw[thick] (6,2)--(8,2);
\draw[thick] (7,0)--(7,2);

\draw[fill=black] (6,1) rectangle (7,2); 
\draw[fill=black] (7,0) rectangle (8,1); 

% Second negative example

\draw[thick] (9,0)--(11,0); 
\draw[thick] (10,0)--(10,2); 
\draw[thick] (11,0)--(11,2); 
\draw[thick] (9,0)--(9,2); 
\draw[thick] (9,2)--(11,2);
\draw[thick] (9,1)--(11,1);

\draw[fill=black] (10,1) rectangle (11,2); 

% third negative example

\draw[thick] (12,0)--(14,0); 
\draw[thick] (13,0)--(13,2); 
\draw[thick] (14,0)--(14,2); 
\draw[thick] (12,0)--(12,2); 
\draw[thick] (12,2)--(14,2);
\draw[thick] (12,1)--(14,1);

\draw[fill=black] (13,0) rectangle (14,1); 

% Now the constants

% First constant

\draw[thick] (-0.25,-5)--(0.75,-5); 
\draw[thick] (-0.25,-4)--(0.75,-4); 
\draw[thick] (-0.25,-5)--(-0.25,-4); 
\draw[thick] (0.75,-5)--(0.75,-4); 

\draw[fill=black] (-0.50,-5.25) rectangle (0,-4.75); 

% Second constant

\draw[thick] (1.5,-5)--(2.5,-5); 
\draw[thick] (1.5,-4)--(2.5,-4); 
\draw[thick] (1.5,-5)--(1.5,-4); 
\draw[thick] (2.5,-5)--(2.5,-4); 

\draw[fill=black] (1.25,-4.25) rectangle (1.75,-3.75); 

% Third constant

\draw[thick] (3.25,-5)--(4.25,-5); 
\draw[thick] (3.25,-4)--(4.25,-4); 
\draw[thick] (3.25,-5)--(3.25,-4); 
\draw[thick] (4.25,-5)--(4.25,-4); 

\draw[fill=black] (4,-4.25) rectangle (4.5,-3.75); 

% Fourth constant

\draw[thick] (5,-5)--(6,-5); 
\draw[thick] (5,-4)--(6,-4); 
\draw[thick] (5,-5)--(5,-4); 
\draw[thick] (6,-5)--(6,-4); 

\draw[fill=black] (5.75,-5.25) rectangle (6.25,-4.75); 

% 5th constant (add 6.25 in x coordinate, and black goes to white)

\draw[thick] (6.75,-5)--(7.75,-5); 
\draw[thick] (6.75,-4)--(7.75,-4); 
\draw[thick] (6.75,-5)--(6.75,-4); 
\draw[thick] (7.75,-5)--(7.75,-4); 

\draw[fill=white] (6.5,-5.25) rectangle (7,-4.75); 

% 6th constant

\draw[thick] (8.50,-5)--(9.5,-5); 
\draw[thick] (8.50,-4)--(9.5,-4); 
\draw[thick] (8.50,-5)--(8.5,-4); 
\draw[thick] (9.50,-5)--(9.50,-4); 

\draw[fill=white] (8.25,-4.25) rectangle (8.75,-3.75); 

% 7th constant

\draw[thick] (10.25,-5)--(11.25,-5); 
\draw[thick] (10.25,-4)--(11.25,-4); 
\draw[thick] (10.25,-5)--(10.25,-4); 
\draw[thick] (11.25,-5)--(11.25,-4); 

\draw[fill=white] (11,-4.25) rectangle (11.5,-3.75); 

% 8th constant

\draw[thick] (12,-5)--(13,-5); 
\draw[thick] (12,-4)--(13,-4); 
\draw[thick] (12,-5)--(12,-4); 
\draw[thick] (13,-5)--(13,-4); 

\draw[fill=white] (12.75,-5.25) rectangle (13.25,-4.75); 
% Connectors from examples to constants

\draw[thin] (1,-0.3)--(0.25,-3.7); 
\draw[thin] (1,-0.3)--(2,-3.7); 
\draw[thin] (1,-0.3)--(10.75,-3.7); 
\draw[thin] (1,-0.3)--(12.5,-3.7); 

\draw[thin] (4,-0.3)--(5.5,-3.7); 
\draw[thin] (4,-0.3)--(3.75,-3.7); 
\draw[thin] (4,-0.3)--(7.25,-3.7); 
\draw[thin] (4,-0.3)--(9,-3.7); 

\draw[thin] (7,-0.3)--(2,-3.7); 
\draw[thin] (7,-0.3)--(5.5,-3.7); 
\draw[thin] (7,-0.3)--(7.25,-3.7); 
\draw[thin] (7,-0.3)--(10.75,-3.7); 

\draw[thin] (10,-0.3)--(3.75,-3.7); 
\draw[thin] (10,-0.3)--(7.25,-3.7); 
\draw[thin] (10,-0.3)--(9,-3.7); 
\draw[thin] (10,-0.3)--(12.5,-3.7); 

\draw[thin] (13,-0.3)--(5.5,-3.7); 
\draw[thin] (13,-0.3)--(7.25,-3.7); 
\draw[thin] (13,-0.3)--(9,-3.7); 
\draw[thin] (13,-0.3)--(10.75,-3.7); 

% Vertical constant and atoms

\node[text width=0.1cm] at (-3,-4.5) 
{{$v$}};

\node[text width=0.1cm] at (7,-7.9) 
{$0$};

%\node[text width=0.1cm] at (-3,-8) 
% {{$\phi$}};

%\node[text width=0.1cm] at (0.25,-8) 
% {{$\epsilon_1$}};

%\node[text width=0.1cm] at (3.75,-8) 
% {{$\epsilon_2$}};

%\node[text width=0.1cm] at (5.5,-8) 
% {{$\epsilon_3$}};
%\node[text width=0.1cm] at (-1.8,-8) 
% {{$\phi_{1}$}};

\node[text width=0.1cm] at (0.25,-7.9) 
{{$\phi_{2}$}};

\node[text width=0.1cm] at (3.75,-7.9) 
{{$\phi_{3}$}};
% Connectors to atoms

% phi-v

%\draw[thin] (-3,-5.3)--(-3,-7.6);

% phi12-constants

\draw[thin] (0.25,-7.3)--(-3,-5.3);
\draw[thin] (0.25,-7.3)--(0.25,-5.3);
\draw[thin] (0.25,-7.3)--(3.75,-5.3);

% phi13-constants

\draw[thin] (3.75,-7.3)--(-3,-5.3);
\draw[thin] (3.75,-7.3)--(0.25,-5.3);
\draw[thin] (3.75,-7.3)--(5.5,-5.3);

% epsilon1-constant
%\draw[thin] (0.25,-5.3)--(0.25,-7.6);
% epsilon2-constant
%\draw[thin] (3.75,-5.3)--(3.75,-7.6);
% epsilon3-constants
%\draw[thin] (5.5,-5.3)--(5.5,-7.6);

% 0-v

\draw[thin] (7,-7.3)--(-3,-5.3);

% 0-constant 8

\draw[thin] (7,-7.3)--(12.5,-5.3);
\draw[thin] (7,-7.3)--(10.75,-5.3); 
\draw[thin] (7,-7.3)--(9,-5.3); 
\draw[thin] (7,-7.3)--(7.25,-5.3); 
\draw[thin] (7,-7.3)--(5.5,-5.3); 
\draw[thin] (7,-7.3)--(3.75,-5.3); 
\draw[thin] (7,-7.3)--(2,-5.3); 
\draw[thin] (7,-7.3)--(0.25,-5.3); 

% alpha

%\draw[thin] (-2.8,-5.3)--(-2.8,-9.6); 
%\draw[thin] (0.25,-5.3)--(-2.8,-9.6); 
%\draw[thin] (5.5,-5.3)--(-2.8,-9.6); 

% beta

%\draw[thin] (-2.8,-5.3)--(0.25,-9.6); 
%\draw[thin] (0.25,-5.3)--(0.25,-9.6); 
%\draw[thin] (3.75,-5.3)--(0.25,-9.6); 

% gamma

%\draw[thin] (0.25,-5.3)--(2,-9.6); 

% delta

%\draw[thin] (3.75,-5.3)--(3.75,-9.6); 

% delta

%\draw[thin] (5.5,-5.3)--(5.5,-9.6); 

%\draw[thin] (0.25,-5.3)--(0.25,-9.6); 
%\draw[thin] (0.25,-5.3)--(6.4,-9.6); 
%\draw[thin] (12.5,-5.3)--(6.4,-9.6); 

%%%%
%%%% HERE DUAL
%%%%

% First positive example

\draw[thick] (20,-8)--(22,-8); 
\draw[thick] (21,-8)--(21,-6); 
\draw[thick] (22,-8)--(22,-6); 
\draw[thick] (20,-8)--(20,-6); 
\draw[thick] (20,-6)--(22,-6);
\draw[thick] (20,-7)--(22,-7);

%\draw[fill=black] (-0.5,1.5) rectangle (0.5,2.5); 
\draw[fill=black] (20,-8) rectangle (21,-6); 
%\draw[fill=black] (1,0) rectangle (2,1); 

% Second positive example

\draw[thick] (23,-8)--(25,-8); 
\draw[thick] (24,-8)--(24,-6); 
\draw[thick] (25,-8)--(25,-6); 
\draw[thick] (23,-8)--(23,-6); 
\draw[thick] (23,-6)--(25,-6);
\draw[thick] (23,-7)--(25,-7);

\draw[fill=black] (24,-8) rectangle (25,-6); 

% Separator between positive and negative examples

\draw[thick] (25.5,-7.3)--(25.5,-5.5);

% First negative example

\draw[thick] (26,-8)--(28,-8); 
\draw[thick] (27,-8)--(27,-6); 
\draw[thick] (28,-8)--(28,-6); 
\draw[thick] (26,-8)--(26,-6); 
\draw[thick] (26,-6)--(28,-6);
\draw[thick] (26,-7)--(28,-7);

\draw[fill=black] (26,-7) rectangle (27,-6); 
\draw[fill=black] (27,-8) rectangle (28,-7); 

% Second negative example

\draw[thick] (29,-8)--(31,-8); 
\draw[thick] (30,-8)--(30,-6); 
\draw[thick] (31,-8)--(31,-6); 
\draw[thick] (29,-8)--(29,-6); 
\draw[thick] (29,-6)--(31,-6);
\draw[thick] (29,-7)--(31,-7);

\draw[fill=black] (30,-7) rectangle (31,-6); 

% third negative example

\draw[thick] (32,-8)--(34,-8); 
\draw[thick] (33,-8)--(33,-6); 
\draw[thick] (34,-8)--(34,-6); 
\draw[thick] (32,-8)--(32,-6); 
\draw[thick] (32,-6)--(34,-6);
\draw[thick] (32,-7)--(34,-7);

\draw[fill=black] (33,-8) rectangle (34,-7); 

% Now the constants

% First constant

\draw[thick] (19.75,-2)--(20.75,-2); 
\draw[thick] (19.75,-1)--(20.75,-1); 
\draw[thick] (19.75,-2)--(19.75,-1); 
\draw[thick] (20.75,-2)--(20.75,-1);

\draw[fill=black] (19.5,-2.25) rectangle (20,-1.75); 

% Second constant

\draw[thick] (21.5,-2)--(22.5,-2); 
\draw[thick] (21.5,-1)--(22.5,-1); 
\draw[thick] (21.5,-2)--(21.5,-1); 
\draw[thick] (22.5,-2)--(22.5,-1); 

\draw[fill=black] (21.25,-1.25) rectangle (21.75,-0.75); 

% Third constant

\draw[thick] (23.25,-2)--(24.25,-2); 
\draw[thick] (23.25,-1)--(24.25,-1); 
\draw[thick] (23.25,-2)--(23.25,-1); 
\draw[thick] (24.25,-2)--(24.25,-1); 

\draw[fill=black] (24,-1.25) rectangle (24.5,-0.75); 

% Fourth constant

\draw[thick] (25,-2)--(26,-2); 
\draw[thick] (25,-1)--(26,-1); 
\draw[thick] (25,-2)--(25,-1); 
\draw[thick] (26,-2)--(26,-1); 

\draw[fill=black] (25.75,-2.25) rectangle (26.25,-1.75); 

% 5th constant (add 6.25 in x coordinate, and black goes to white)

\draw[thick] (26.75,-2)--(27.75,-2); 
\draw[thick] (26.75,-1)--(27.75,-1); 
\draw[thick] (26.75,-2)--(26.75,-1); 
\draw[thick] (27.75,-2)--(27.75,-1); 

\draw[fill=white] (26.5,-2.25) rectangle (27,-1.75); 

% 6th constant

\draw[thick] (28.50,-2)--(29.5,-2); 
\draw[thick] (28.50,-1)--(29.5,-1); 
\draw[thick] (28.50,-2)--(28.5,-1); 
\draw[thick] (29.50,-2)--(29.50,-1); 

\draw[fill=white] (28.25,-1.25) rectangle (28.75,-0.75); 

% 7th constant

\draw[thick] (30.25,-2)--(31.25,-2); 
\draw[thick] (30.25,-1)--(31.25,-1); 
\draw[thick] (30.25,-2)--(30.25,-1); 
\draw[thick] (31.25,-2)--(31.25,-1); 

\draw[fill=white] (31,-1.25) rectangle (31.5,-0.75); 

% 8th constant

\draw[thick] (32,-2)--(33,-2); 
\draw[thick] (32,-1)--(33,-1); 
\draw[thick] (32,-2)--(32,-1); 
\draw[thick] (33,-2)--(33,-1); 

\draw[fill=white] (32.75,-2.25) rectangle (33.25,-1.75); 
% Connectors from examples to constants

\draw[thin] (21,-5.7)--(20.25,-2.3); 
\draw[thin] (21,-5.7)--(22,-2.3); 
\draw[thin] (21,-5.7)--(30.75,-2.3); 
\draw[thin] (21,-5.7)--(32.5,-2.3); 

\draw[thin] (24,-5.7)--(23.75,-2.3); 
\draw[thin] (24,-5.7)--(25.5,-2.3); 
\draw[thin] (24,-5.7)--(27.25,-2.3); 
\draw[thin] (24,-5.7)--(29,-2.3); 

\draw[thin] (27,-5.7)--(22,-2.3); 
\draw[thin] (27,-5.7)--(25.5,-2.3); 
\draw[thin] (27,-5.7)--(27.25,-2.3); 
\draw[thin] (27,-5.7)--(30.75,-2.3); 

\draw[thin] (30,-5.7)--(23.75,-2.3); 
\draw[thin] (30,-5.7)--(27.25,-2.3); 
\draw[thin] (30,-5.7)--(29,-2.3); 
\draw[thin] (30,-5.7)--(32.5,-2.3); 

\draw[thin] (33,-5.7)--(25.5,-2.3); 
\draw[thin] (33,-5.7)--(27.25,-2.3); 
\draw[thin] (33,-5.7)--(29,-2.3); 
\draw[thin] (33,-5.7)--(30.75,-2.3); 

% Vertical constant and atoms

%\node[text width=0.1cm] at (16.7,2) 
% {$[\phi]$}; 
%\node[text width=0.1cm] at (17.8,2) 
% {$[\phi_{1}]$};
%\node[text width=0.1cm] at (19.60,2) 
% {$[\epsilon_1]$};
\node[text width=0.1cm] at (23.1,2) 
{$[\phi_{3}]$};
\node[text width=0.1cm] at (19.6,2) 
{$[\phi_{2}]$};
\node[text width=0.1cm] at (16.7,-1.5) 
{$[v]$};

\node[text width=0.1cm] at (21,-11.5) 
{$0^{*}$};

\node[text width=0.1cm] at (26.6,2) 
{$[0]$};

\node[text width=0.1cm] at (27,-11.5) 
{$\zeta_1$};

\node[text width=0.1cm] at (30,-11.5) 
{$\zeta_2$};
\node[text width=0.1cm] at (33,-11.5) 
{$\zeta_3$}; 

%\node[text width=0.1cm] at (-3,-10) 
% {$\Large{\alpha}$};

%\node[text width=0.1cm] at (0.25,-10) 
% {$\Large{\beta}$};

%\node[text width=0.1cm] at (2,-10) 
% {$\Large{\gamma}$};
%\node[text width=0.1cm] at (3.75,-10) 
% {$\Large{\delta}$};

%\node[text width=0.1cm] at (5.5,-10) 
% {$\Large{\epsilon}$};
%\node[text width=0.1cm] at (0.25,-10) 
% {$\LARGE{\alpha}$};

%\node[text width=0.1cm] at (6.4,-10) 
% {$\LARGE{0}$};
% Connectors to atoms

% From [phi] to v
%\draw[thin] (17.2,1.3)--(17.2,-0.7); 

% From [phi12] to constants

\draw[thin] (20.25,1.1)--(17.2,-0.7); 
\draw[thin] (20.25,1.1)--(20.25,-0.7); 
\draw[thin] (20.25,1.1)--(23.75,-0.7); 

% From [phi13] to constants

\draw[thin] (23.75,1.1)--(17.2,-0.7); 
\draw[thin] (23.75,1.1)--(20.25,-0.7); 
\draw[thin] (23.75,1.1)--(25.5,-0.7); 

% From [epsilon]to constants
%\draw[thin] (20.25,1.3)--(20.25,-0.7); 
%\draw[thin] (23.75,1.3)--(23.75,-0.7); 
%\draw[thin] (25.5,1.3)--(25.5,-0.7); 
% 0*-Examples

\draw[thin] (21,-10.9)--(21,-8.3); 
\draw[thin] (21,-10.9)--(33,-8.3); 
\draw[thin] (21,-10.9)--(24,-8.3); 
\draw[thin] (21,-10.9)--(27,-8.3); 
\draw[thin] (21,-10.9)--(30,-8.3); 
%\draw[thin] (1,-13.9)--(13,-11.3); 
% First two examples -v

\draw[thin] (21,-5.7)--(17.2,-2.3); 
\draw[thin] (24,-5.7)--(17.2,-2.3); 

% eta-Examples
\draw[thin] (27,-10.9)--(27,-8.3); 
\draw[thin] (30,-10.9)--(30,-8.3); 
\draw[thin] (33,-10.9)--(33,-8.3); 

% [0] to constants

\draw[thin] (27,1.1)--(17.2,-0.7);
\draw[thin] (27,1.1)--(32.5,-0.7);
\draw[thin] (27,1.1)--(30.75,-0.7);
\draw[thin] (27,1.1)--(29,-0.7);
\draw[thin] (27,1.1)--(27.25,-0.7);
\draw[thin] (27,1.1)--(25.5,-0.7);
\draw[thin] (27,1.1)--(23.75,-0.7);
\draw[thin] (27,1.1)--(22,-0.7);
\draw[thin] (27,1.1)--(20.25,-0.7);
% alpha

%\draw[thin] (-2.8,-5.3)--(-2.8,-9.6); 
%\draw[thin] (0.25,-5.3)--(-2.8,-9.6); 
%\draw[thin] (5.5,-5.3)--(-2.8,-9.6); 

% beta

%\draw[thin] (-2.8,-5.3)--(0.25,-9.6); 
%\draw[thin] (0.25,-5.3)--(0.25,-9.6); 
%\draw[thin] (5.5,-5.3)--(0.25,-9.6); 

% gamma

%\draw[thin] (0.25,-5.3)--(2,-9.6); 

% delta

%\draw[thin] (3.75,-5.3)--(3.75,-9.6); 

% delta

%\draw[thin] (5.5,-5.3)--(5.5,-9.6); 

%\draw[thin] (0.25,-5.3)--(0.25,-9.6); 
%\draw[thin] (0.25,-5.3)--(6.4,-9.6); 
%\draw[thin] (12.5,-5.3)--(6.4,-9.6); 
\end{tikzpicture}
\end{center}
If we had more atoms in $v$, we would repeat the same procedure for these atoms. In our present case, we have finished. 

The Sparse Crossing operation has worked; the positive training examples all obey $v<T_{i}^{+}$, and the negative ones $ v \not< T_{j}^{-}$. The atoms of $v$ are ${\bf{GL}^{a}}(v)=\{0,\phi_{2},\phi_{3}\}$ and the atoms for the positive training examples are also ${\bf{GL}^{a}}(T_{1,2}^{+})=\{0,\phi_{2},\phi_{3}\}$, while for the negative examples we have ${\bf{GL}^{a}}(T_{1,3}^{-})=\{0,\phi_{3}\}$ and ${\bf{GL}^{a}}(T_{2}^{-})=\{0,\phi_{2}\}$. 

\subsection{Reduction operation} \label{reduction}

Models found by Sparse Crossing are much smaller than models found by Full Crossing.  We are still interested in further reducing their size. A suitable size reduction algorithm should be trace-invariant. Trace invariance preserves trace constraints and preserving trace constraints ensures that we will be able to carry out pending Sparse Crossing operations. This means that a trace-invariant reduction scheme can be called at any time during learning, as often as required.

While carrying out Sparse Crossing operations we were careful to keep the trace of all the atoms unaltered, for the reduction operation we will focus on constants instead. An operation that keeps the trace of all constants unaltered also keeps the trace of the terms unaltered and the trace constraints preserved. Since atoms are not mentioned in trace constraints we do not really need them to be trace-invariant; it is enough with keeping the constants trace-invariant.  Furthermore, we can remove atoms from a model as long as we keep the traces of all the constants unchanged. 

Our reduction scheme consists of finding a subset $Q$ of the atoms that produces the same traces for all constants. We can then discard the atoms that are not in $Q$. We start with $Q$ empty. Then we review the constants one by one in random order. For each constant $c$ we select a subset of its atoms such that the trace of $c$ calculated using only the atoms in this subset corresponds with the actual trace of $c$. The selected atoms are added to $Q$ before we move onto the next constant. When selecting atoms for the next constant $c$ we start with the intersection between its atoms and $Q$, i.e ${{\bf{GL}}^a}(c) \cap Q$, and continue adding atoms to $Q$ until the trace calculated taking into account the atoms of ${{\bf{GL}}^a}(c) \cap Q$ equates ${\bf{Tr}}(c)$. Once all constants are reviewed any atom that is not in $Q$ can be safely removed from the algebra. \textbf{Algorithm \ref{traceReduction}} in \textbf{Appendix \ref{algorithms}} is an improved version of this method. 

At the point where we are with our toy problem we have already obtained an atomized model that distinguishes well positive from negative examples. We are going to apply the reduction algorithm to see if we can get rid of some of its atoms. We are going to review the constants starting by the third one. In our example, the trace of the third constant only depends on a single atom $\textbf{Tr}(c_{3})=\textbf{Tr}(0) \cap \textbf{Tr}(\phi_{2})=\textbf{Tr}(\phi_{2})$, so we must keep $\phi_{2}$ for trace invariance of this constant. An analogous situation takes place for the fourth constant, for which we have that its trace only depends on $\phi_{3}$, $\textbf{Tr}(c_{4})=\textbf{Tr}(\phi_{3})$, so it cannot be eliminated, either. The model we have obtained cannot be reduced.

To illustrate a simple size reduction in action, let us add to the graphs an extra atom $\beta$ edged to the first constant, $\beta \rightarrow c_{1}$. This new atom does not change any traces and gives the graphs 
\begin{center}
\begin{tikzpicture}[scale=0.35]

% First positive example

\draw[thick] (0,0)--(2,0);    
\draw[thick] (1,0)--(1,2);    
\draw[thick] (2,0)--(2,2);    
\draw[thick] (0,0)--(0,2);  
\draw[thick] (0,2)--(2,2);
\draw[thick] (0,1)--(2,1);

%\draw[fill=black] (-0.5,1.5) rectangle (0.5,2.5);  
\draw[fill=black] (0,0) rectangle (1,2);  
%\draw[fill=black] (1,0) rectangle (2,1);  

% Second positive example

\draw[thick] (3,0)--(5,0);    
\draw[thick] (4,0)--(4,2);    
\draw[thick] (5,0)--(5,2);    
\draw[thick] (3,0)--(3,2);  
\draw[thick] (3,2)--(5,2);
\draw[thick] (3,1)--(5,1);

\draw[fill=black] (4,0) rectangle (5,2);  

% Separator between positive and negative examples

\draw[thick] (5.5,-0.5)--(5.5,2.5);

% First negative example

\draw[thick] (6,0)--(8,0);    
\draw[thick] (6,0)--(6,2);    
\draw[thick] (8,0)--(8,2);    
\draw[thick] (6,1)--(8,1);  
\draw[thick] (6,2)--(8,2);
\draw[thick] (7,0)--(7,2);

\draw[fill=black] (6,1) rectangle (7,2);  
\draw[fill=black] (7,0) rectangle (8,1);  

% Second negative example

\draw[thick] (9,0)--(11,0);    
\draw[thick] (10,0)--(10,2);    
\draw[thick] (11,0)--(11,2);    
\draw[thick] (9,0)--(9,2);  
\draw[thick] (9,2)--(11,2);
\draw[thick] (9,1)--(11,1);

\draw[fill=black] (10,1) rectangle (11,2);  

% third negative example

\draw[thick] (12,0)--(14,0);    
\draw[thick] (13,0)--(13,2);    
\draw[thick] (14,0)--(14,2);    
\draw[thick] (12,0)--(12,2);  
\draw[thick] (12,2)--(14,2);
\draw[thick] (12,1)--(14,1);

\draw[fill=black] (13,0) rectangle (14,1);  

% Now the constants

% First constant

\draw[thick] (-0.25,-5)--(0.75,-5);    
\draw[thick] (-0.25,-4)--(0.75,-4);    
\draw[thick] (-0.25,-5)--(-0.25,-4);    
\draw[thick] (0.75,-5)--(0.75,-4);  

\draw[fill=black] (-0.50,-5.25) rectangle (0,-4.75);  

% Second constant

\draw[thick] (1.5,-5)--(2.5,-5);    
\draw[thick] (1.5,-4)--(2.5,-4);    
\draw[thick] (1.5,-5)--(1.5,-4);    
\draw[thick] (2.5,-5)--(2.5,-4);  

\draw[fill=black] (1.25,-4.25) rectangle (1.75,-3.75); 

% Third constant

\draw[thick] (3.25,-5)--(4.25,-5);    
\draw[thick] (3.25,-4)--(4.25,-4);    
\draw[thick] (3.25,-5)--(3.25,-4);    
\draw[thick] (4.25,-5)--(4.25,-4);  

\draw[fill=black] (4,-4.25) rectangle (4.5,-3.75);  

% Fourth constant

\draw[thick] (5,-5)--(6,-5);    
\draw[thick] (5,-4)--(6,-4);    
\draw[thick] (5,-5)--(5,-4);    
\draw[thick] (6,-5)--(6,-4);  

\draw[fill=black] (5.75,-5.25) rectangle (6.25,-4.75);  

% 5th constant (add 6.25 in x coordinate, and black goes to white)

\draw[thick] (6.75,-5)--(7.75,-5);    
\draw[thick] (6.75,-4)--(7.75,-4);    
\draw[thick] (6.75,-5)--(6.75,-4);    
\draw[thick] (7.75,-5)--(7.75,-4);  

\draw[fill=white] (6.5,-5.25) rectangle (7,-4.75);  

% 6th constant

\draw[thick] (8.50,-5)--(9.5,-5);    
\draw[thick] (8.50,-4)--(9.5,-4);    
\draw[thick] (8.50,-5)--(8.5,-4);    
\draw[thick] (9.50,-5)--(9.50,-4);  

\draw[fill=white] (8.25,-4.25) rectangle (8.75,-3.75); 

% 7th constant

\draw[thick] (10.25,-5)--(11.25,-5);    
\draw[thick] (10.25,-4)--(11.25,-4);    
\draw[thick] (10.25,-5)--(10.25,-4);    
\draw[thick] (11.25,-5)--(11.25,-4);  

\draw[fill=white] (11,-4.25) rectangle (11.5,-3.75);  

% 8th constant

\draw[thick] (12,-5)--(13,-5);    
\draw[thick] (12,-4)--(13,-4);    
\draw[thick] (12,-5)--(12,-4);    
\draw[thick] (13,-5)--(13,-4);  

\draw[fill=white] (12.75,-5.25) rectangle (13.25,-4.75);  
    
% Connectors from examples to constants

\draw[thin] (1,-0.3)--(0.25,-3.7);  
\draw[thin] (1,-0.3)--(2,-3.7);  
\draw[thin] (1,-0.3)--(10.75,-3.7);  
\draw[thin] (1,-0.3)--(12.5,-3.7);  

\draw[thin] (4,-0.3)--(5.5,-3.7); 
\draw[thin] (4,-0.3)--(3.75,-3.7); 
\draw[thin] (4,-0.3)--(7.25,-3.7); 
\draw[thin] (4,-0.3)--(9,-3.7); 

\draw[thin] (7,-0.3)--(2,-3.7); 
\draw[thin] (7,-0.3)--(5.5,-3.7); 
\draw[thin] (7,-0.3)--(7.25,-3.7); 
\draw[thin] (7,-0.3)--(10.75,-3.7); 

\draw[thin] (10,-0.3)--(3.75,-3.7); 
\draw[thin] (10,-0.3)--(7.25,-3.7); 
\draw[thin] (10,-0.3)--(9,-3.7); 
\draw[thin] (10,-0.3)--(12.5,-3.7); 

\draw[thin] (13,-0.3)--(5.5,-3.7); 
\draw[thin] (13,-0.3)--(7.25,-3.7); 
\draw[thin] (13,-0.3)--(9,-3.7); 
\draw[thin] (13,-0.3)--(10.75,-3.7); 

% Vertical constant and atoms

\node[text width=0.1cm] at (-3,-4.5) 
    {{$v$}};

\node[text width=0.1cm] at (7,-7.9) 
    {$0$};

%\node[text width=0.1cm] at (-3,-8) 
%    {{$\phi$}};

%\node[text width=0.1cm] at (0.25,-8) 
%    {{$\epsilon_1$}};

%\node[text width=0.1cm] at (3.75,-8) 
%    {{$\epsilon_2$}};

%\node[text width=0.1cm] at (5.5,-8) 
%    {{$\epsilon_3$}};
    
%\node[text width=0.1cm] at (-1.8,-8) 
%    {{$\phi_{1}$}};

\node[text width=0.1cm] at (0.25,-7.9) 
    {{$\phi_{2}$}};

\node[text width=0.1cm] at (3.75,-7.9) 
    {{$\phi_{3}$}};

\node[text width=0.1cm] at (-3,-7.9) 
    {{$\beta$}};
% Connectors to atoms

% phi-v

%\draw[thin] (-3,-5.3)--(-3,-7.6);

% beta-constants

\draw[thin] (-3,-7.3)--(0.25,-5.3);

% phi12-constants

\draw[thin] (0.25,-7.3)--(-3,-5.3);
\draw[thin] (0.25,-7.3)--(0.25,-5.3);
\draw[thin] (0.25,-7.3)--(3.75,-5.3);

% phi13-constants

\draw[thin] (3.75,-7.3)--(-3,-5.3);
\draw[thin] (3.75,-7.3)--(0.25,-5.3);
\draw[thin] (3.75,-7.3)--(5.5,-5.3);

%  epsilon1-constant
%\draw[thin] (0.25,-5.3)--(0.25,-7.6);
%  epsilon2-constant
%\draw[thin] (3.75,-5.3)--(3.75,-7.6);
%  epsilon3-constants
%\draw[thin] (5.5,-5.3)--(5.5,-7.6);

% 0-v

\draw[thin] (7,-7.3)--(-3,-5.3);

% 0-constant 8

\draw[thin] (7,-7.3)--(12.5,-5.3);
\draw[thin] (7,-7.3)--(10.75,-5.3); 
\draw[thin] (7,-7.3)--(9,-5.3); 
\draw[thin] (7,-7.3)--(7.25,-5.3); 
\draw[thin] (7,-7.3)--(5.5,-5.3); 
\draw[thin] (7,-7.3)--(3.75,-5.3); 
\draw[thin] (7,-7.3)--(2,-5.3); 
\draw[thin] (7,-7.3)--(0.25,-5.3); 

% alpha

%\draw[thin] (-2.8,-5.3)--(-2.8,-9.6); 
%\draw[thin] (0.25,-5.3)--(-2.8,-9.6); 
%\draw[thin] (5.5,-5.3)--(-2.8,-9.6); 

% beta

%\draw[thin] (-2.8,-5.3)--(0.25,-9.6); 
%\draw[thin] (0.25,-5.3)--(0.25,-9.6); 
%\draw[thin] (3.75,-5.3)--(0.25,-9.6); 

% gamma

%\draw[thin] (0.25,-5.3)--(2,-9.6); 

% delta

%\draw[thin] (3.75,-5.3)--(3.75,-9.6); 

% delta

%\draw[thin] (5.5,-5.3)--(5.5,-9.6); 

%\draw[thin] (0.25,-5.3)--(0.25,-9.6); 
%\draw[thin] (0.25,-5.3)--(6.4,-9.6); 
%\draw[thin] (12.5,-5.3)--(6.4,-9.6); 

%%%%
%%%% HERE DUAL
%%%%

% First positive example

\draw[thick] (20,-8)--(22,-8);    
\draw[thick] (21,-8)--(21,-6);    
\draw[thick] (22,-8)--(22,-6);    
\draw[thick] (20,-8)--(20,-6);  
\draw[thick] (20,-6)--(22,-6);
\draw[thick] (20,-7)--(22,-7);

%\draw[fill=black] (-0.5,1.5) rectangle (0.5,2.5);  
\draw[fill=black] (20,-8) rectangle (21,-6);  
%\draw[fill=black] (1,0) rectangle (2,1);  

% Second positive example

\draw[thick] (23,-8)--(25,-8);    
\draw[thick] (24,-8)--(24,-6);    
\draw[thick] (25,-8)--(25,-6);    
\draw[thick] (23,-8)--(23,-6);  
\draw[thick] (23,-6)--(25,-6);
\draw[thick] (23,-7)--(25,-7);

\draw[fill=black] (24,-8) rectangle (25,-6);  

% Separator between positive and negative examples

\draw[thick] (25.5,-7.3)--(25.5,-5.5);

% First negative example

\draw[thick] (26,-8)--(28,-8);    
\draw[thick] (27,-8)--(27,-6);    
\draw[thick] (28,-8)--(28,-6);    
\draw[thick] (26,-8)--(26,-6);  
\draw[thick] (26,-6)--(28,-6);
\draw[thick] (26,-7)--(28,-7);

\draw[fill=black] (26,-7) rectangle (27,-6);  
\draw[fill=black] (27,-8) rectangle (28,-7);  

% Second negative example

\draw[thick] (29,-8)--(31,-8);    
\draw[thick] (30,-8)--(30,-6);    
\draw[thick] (31,-8)--(31,-6);    
\draw[thick] (29,-8)--(29,-6);  
\draw[thick] (29,-6)--(31,-6);
\draw[thick] (29,-7)--(31,-7);

\draw[fill=black] (30,-7) rectangle (31,-6);  

% third negative example

\draw[thick] (32,-8)--(34,-8);    
\draw[thick] (33,-8)--(33,-6);    
\draw[thick] (34,-8)--(34,-6);    
\draw[thick] (32,-8)--(32,-6);  
\draw[thick] (32,-6)--(34,-6);
\draw[thick] (32,-7)--(34,-7);

\draw[fill=black] (33,-8) rectangle (34,-7);  

% Now the constants

% First constant

\draw[thick] (19.75,-2)--(20.75,-2);    
\draw[thick] (19.75,-1)--(20.75,-1);    
\draw[thick] (19.75,-2)--(19.75,-1);    
\draw[thick] (20.75,-2)--(20.75,-1);

\draw[fill=black] (19.5,-2.25) rectangle (20,-1.75);  

% Second constant

\draw[thick] (21.5,-2)--(22.5,-2);    
\draw[thick] (21.5,-1)--(22.5,-1);    
\draw[thick] (21.5,-2)--(21.5,-1);    
\draw[thick] (22.5,-2)--(22.5,-1);  

\draw[fill=black] (21.25,-1.25) rectangle (21.75,-0.75); 

% Third constant

\draw[thick] (23.25,-2)--(24.25,-2);    
\draw[thick] (23.25,-1)--(24.25,-1);    
\draw[thick] (23.25,-2)--(23.25,-1);    
\draw[thick] (24.25,-2)--(24.25,-1);  

\draw[fill=black] (24,-1.25) rectangle (24.5,-0.75);  

% Fourth constant

\draw[thick] (25,-2)--(26,-2);    
\draw[thick] (25,-1)--(26,-1);    
\draw[thick] (25,-2)--(25,-1);    
\draw[thick] (26,-2)--(26,-1);  

\draw[fill=black] (25.75,-2.25) rectangle (26.25,-1.75);  

% 5th constant (add 6.25 in x coordinate, and black goes to white)

\draw[thick] (26.75,-2)--(27.75,-2);    
\draw[thick] (26.75,-1)--(27.75,-1);    
\draw[thick] (26.75,-2)--(26.75,-1);    
\draw[thick] (27.75,-2)--(27.75,-1);  

\draw[fill=white] (26.5,-2.25) rectangle (27,-1.75);  

% 6th constant

\draw[thick] (28.50,-2)--(29.5,-2);    
\draw[thick] (28.50,-1)--(29.5,-1);    
\draw[thick] (28.50,-2)--(28.5,-1);    
\draw[thick] (29.50,-2)--(29.50,-1);  

\draw[fill=white] (28.25,-1.25) rectangle (28.75,-0.75); 

% 7th constant

\draw[thick] (30.25,-2)--(31.25,-2);    
\draw[thick] (30.25,-1)--(31.25,-1);    
\draw[thick] (30.25,-2)--(30.25,-1);    
\draw[thick] (31.25,-2)--(31.25,-1);  

\draw[fill=white] (31,-1.25) rectangle (31.5,-0.75);  

% 8th constant

\draw[thick] (32,-2)--(33,-2);    
\draw[thick] (32,-1)--(33,-1);    
\draw[thick] (32,-2)--(32,-1);    
\draw[thick] (33,-2)--(33,-1);  

\draw[fill=white] (32.75,-2.25) rectangle (33.25,-1.75);  
    
% Connectors from examples to constants

\draw[thin] (21,-5.7)--(20.25,-2.3);  
\draw[thin] (21,-5.7)--(22,-2.3);  
\draw[thin] (21,-5.7)--(30.75,-2.3);  
\draw[thin] (21,-5.7)--(32.5,-2.3); 

\draw[thin] (24,-5.7)--(23.75,-2.3); 
\draw[thin] (24,-5.7)--(25.5,-2.3); 
\draw[thin] (24,-5.7)--(27.25,-2.3); 
\draw[thin] (24,-5.7)--(29,-2.3); 

\draw[thin] (27,-5.7)--(22,-2.3); 
\draw[thin] (27,-5.7)--(25.5,-2.3); 
\draw[thin] (27,-5.7)--(27.25,-2.3); 
\draw[thin] (27,-5.7)--(30.75,-2.3); 

\draw[thin] (30,-5.7)--(23.75,-2.3); 
\draw[thin] (30,-5.7)--(27.25,-2.3); 
\draw[thin] (30,-5.7)--(29,-2.3); 
\draw[thin] (30,-5.7)--(32.5,-2.3); 

\draw[thin] (33,-5.7)--(25.5,-2.3); 
\draw[thin] (33,-5.7)--(27.25,-2.3); 
\draw[thin] (33,-5.7)--(29,-2.3); 
\draw[thin] (33,-5.7)--(30.75,-2.3); 

% Vertical constant and atoms

%\node[text width=0.1cm] at (16.7,2) 
%    {$[\phi]$}; 
    
%\node[text width=0.1cm] at (17.8,2) 
%    {$[\phi_{1}]$};
    
%\node[text width=0.1cm] at (19.60,2) 
%    {$[\epsilon_1]$};

\node[text width=0.1cm] at (16.7,2) 
    {$[\beta]$};
    
\node[text width=0.1cm] at (23.1,2) 
    {$[\phi_{3}]$};
    
\node[text width=0.1cm] at (19.6,2) 
    {$[\phi_{2}]$};
    
\node[text width=0.1cm] at (16.7,-1.5) 
    {$[v]$};

\node[text width=0.1cm] at (21,-11.5) 
    {$0^{*}$};

\node[text width=0.1cm] at (26.6,2) 
    {$[0]$};

\node[text width=0.1cm] at (27,-11.5) 
   {$\zeta_1$};

\node[text width=0.1cm] at (30,-11.5) 
    {$\zeta_2$};
    
\node[text width=0.1cm] at (33,-11.5) 
   {$\zeta_3$};    

%\node[text width=0.1cm] at (-3,-10) 
%    {$\Large{\alpha}$};

%\node[text width=0.1cm] at (0.25,-10) 
%    {$\Large{\beta}$};

%\node[text width=0.1cm] at (2,-10) 
%    {$\Large{\gamma}$};
    
%\node[text width=0.1cm] at (3.75,-10) 
%    {$\Large{\delta}$};

%\node[text width=0.1cm] at (5.5,-10) 
%    {$\Large{\epsilon}$};
    
%\node[text width=0.1cm] at (0.25,-10) 
%    {$\LARGE{\alpha}$};

%\node[text width=0.1cm] at (6.4,-10) 
%   {$\LARGE{0}$};
    
% Connectors to atoms

%  From [phi] to v
%\draw[thin] (17.2,1.3)--(17.2,-0.7); 

% From [Beta] to c1

\draw[thin] (17.2,1.3)--(20.25,-0.7); 

%  From [phi12] to constants

\draw[thin] (20.25,1.1)--(17.2,-0.7); 
\draw[thin] (20.25,1.1)--(20.25,-0.7); 
\draw[thin] (20.25,1.1)--(23.75,-0.7); 

%  From [phi13] to constants

\draw[thin] (23.75,1.1)--(17.2,-0.7); 
\draw[thin] (23.75,1.1)--(20.25,-0.7); 
\draw[thin] (23.75,1.1)--(25.5,-0.7); 

%  From [epsilon] to constants
%\draw[thin] (20.25,1.3)--(20.25,-0.7); 
%\draw[thin] (23.75,1.3)--(23.75,-0.7); 
%\draw[thin] (25.5,1.3)--(25.5,-0.7); 
% 0*-Examples

\draw[thin] (21,-10.9)--(21,-8.3); 
\draw[thin] (21,-10.9)--(33,-8.3); 
\draw[thin] (21,-10.9)--(24,-8.3); 
\draw[thin] (21,-10.9)--(27,-8.3); 
\draw[thin] (21,-10.9)--(30,-8.3); 
%\draw[thin] (1,-13.9)--(13,-11.3); 
% First two examples -v

\draw[thin] (21,-5.7)--(17.2,-2.3); 
\draw[thin] (24,-5.7)--(17.2,-2.3); 

% eta-Examples
\draw[thin] (27,-10.9)--(27,-8.3); 
\draw[thin] (30,-10.9)--(30,-8.3); 
\draw[thin] (33,-10.9)--(33,-8.3); 

% [0] to constants

\draw[thin] (27,1.1)--(17.2,-0.7);
\draw[thin] (27,1.1)--(32.5,-0.7);
\draw[thin] (27,1.1)--(30.75,-0.7);
\draw[thin] (27,1.1)--(29,-0.7);
\draw[thin] (27,1.1)--(27.25,-0.7);
\draw[thin] (27,1.1)--(25.5,-0.7);
\draw[thin] (27,1.1)--(23.75,-0.7);
\draw[thin] (27,1.1)--(22,-0.7);
\draw[thin] (27,1.1)--(20.25,-0.7);
% alpha

%\draw[thin] (-2.8,-5.3)--(-2.8,-9.6); 
%\draw[thin] (0.25,-5.3)--(-2.8,-9.6); 
%\draw[thin] (5.5,-5.3)--(-2.8,-9.6); 

% beta

%\draw[thin] (-2.8,-5.3)--(0.25,-9.6); 
%\draw[thin] (0.25,-5.3)--(0.25,-9.6); 
%\draw[thin] (5.5,-5.3)--(0.25,-9.6); 

% gamma

%\draw[thin] (0.25,-5.3)--(2,-9.6); 

% delta

%\draw[thin] (3.75,-5.3)--(3.75,-9.6); 

% delta

%\draw[thin] (5.5,-5.3)--(5.5,-9.6); 

%\draw[thin] (0.25,-5.3)--(0.25,-9.6); 
%\draw[thin] (0.25,-5.3)--(6.4,-9.6); 
%\draw[thin] (12.5,-5.3)--(6.4,-9.6); 

\end{tikzpicture}
\end{center}
Revising the first constant, we then have that its trace depends on the trace of the three atoms and atom $0$ as $\textbf{Tr}(c_{1})=\textbf{Tr}(0) \cap \textbf{Tr}(\phi_{2}) \cap \textbf{Tr}(\phi_{3}) \cap \textbf{Tr}(\beta)$, with $\textbf{Tr}(0)=\{0^{*},\zeta_{1},\zeta_{2},\zeta_{3}\}$, 
$\textbf{Tr}(\phi_{2})=\{0^{*},\zeta_{2}\}$, 
$\textbf{Tr}(\phi_{3})=\{0^{*},\zeta_{1},\zeta_{3}\}$ and $\textbf{Tr}(\beta)=\{0^{*}\}$. The constant $c_1$ would remain trace-invariant if we eliminated atoms $\phi_{2}$ and $\phi_{3}$ or only $\beta$. But, as for the invariance of constants $c_3$ and $c_4$ we need atoms $\phi_{2}$ and $\phi_{3}$, it is $\beta$ the atom we can eliminate. The stochastic \textbf{Algorithm \ref{traceReduction}} in \textbf{Appendix \ref{algorithms}} typically would delete atom $\beta$ in a single call or within a few calls. 

There are other reduction schemes. For example, a size reduction scheme based on keeping just enough atoms to discriminate the set $R^{-}$ ensures an atomization with a size under that of set $R^{-}$. The problem with this reduction scheme is that it fails to produce good generalizing models as it seems to reduce algebraic freedom (see {\bf{Section \ref{memorizingGeneralizing}}}) more than one would wish, especially at the initial phases of learning when the error is large and the algebra should grow rather than shrink. In addition, since this scheme can violate negative trace constraints, it can only be used once the full embedding is completed. If used at some intermediate stage of the embedding, subsequent Sparse Crossing operations may produce models that do not satisfy $R^{-}$.  This is a problem because the model can become very large before we can reduce its size. However, this scheme can be successfully applied to the dual, $M^{*}$, right before trace constraints are enforced, ensuring that the number of atoms in $M^{*}$ is never larger than the size of the set $R^{-}$. This reduction scheme corresponds to {\bf{Algorithm \ref{dualReduction}}} in \textbf{Appendix \ref{algorithms}}.

The trace-preserving reduction scheme presented in this section works well in combination with Sparse Crossing and finds small, generalizing models efficiently. For this reduction scheme there is no guarantee that the size of $M$ is going to end up under the size of $M^{*}$ or under the size of $R^{-}$. In fact, it is often the case that the atomization of $M$ is a few times larger than the atomization of $M^{*}$. Even when is smaller then $M$, $M^{*}$ does not generalize because it is not sufficiently free.

\subsection{Batch training} \label{batchTraining}

We have seen how to learn an atomized model from a set $R$ of positive and negative examples. In practice, we would check the accuracy of the learned model in test data. If the accuracy is below some desired level, we would continue training with a new set of examples. Rather than a single set $R$ we have a series of batches $R_{0}, R_{1},...,R_{n}$ where the subscript corresponds with the training epoch.

Assume we are in epoch $1$. In order to keep the graph $G(S)$ manageable we want to remove from it the nodes corresponding to elements mentioned in $R_{0}$, as well as deleting the set $R_{0}$ itself to leave space for the new set of relations $R_{1}$. However, if we delete $R_{0}$ we run into the following problem; Often when we resume learning by embedding $R_{1}$ some relations of $R_{0}$ no longer hold. We need a method to minimize the likelihood for this to occur.

In the following we discuss how to do batch learning. Once learning epoch $0$ is completed we have $R_{0}$ encoded into atoms. What we do is replacing $R_{0}$ by a set of relations that $\emph{define}$ its atoms in the following way; For each atom $\phi$, we create one term $T_{\phi}$ equal to the idempotent summation of all the constants that do not contain atom $\phi$. For each constant $c$ so $(\phi < c)$, we create a new relation $\neg(c < T_{\phi})$. We call this set of terms and relations \enquote{the  pinning structure} of the algebra because they help to preserve knowledge when $R_{0}$ is deleted and additional learning takes place. We call terms $T_{\phi}$ \enquote{pinning terms} and the set of relations, \enquote{pinning relations} and refer to them with $R_{p}$. 

Following this procedure in our toy example, we would create two pinning terms. For atom $\phi_{2}$, with $\phi_{2} < c_{1}$ and $\phi_{2} < c_{3}$, we create $T_{\phi_{2}}=c_{2} \odot c_{4} \odot c_{5} \odot c_{6} \odot c_{7} \odot c_{8}$. For atom $\phi_{3}$, with $\phi_{3} < c_{1}$ and $\phi_{2} < c_{4}$, we create $T_{\phi_{3}}=c_{2} \odot c_{3} \odot c_{5} \odot c_{6} \odot c_{7} \odot c_{8}$. 

We then require the negative relations $\neg(c_{1}<T_{\phi_{2}})$, $\neg(c_{3}<T_{\phi_{3}})$, $\neg(v<T_{\phi_{2}})$, $\neg(c_{1}<T_{\phi_{3}})$, $\neg(c_{4}<T_{\phi_{3}})$ and $\neg(v<T_{\phi_{3}})$, and obtain
\begin{center}
\begin{tikzpicture}[scale=0.35]

% Positive example

\draw[thin] (0,0)--(2,0);    
\draw[thick] (1,0)--(1,2);    
\draw[thick] (2,0)--(2,2);    
\draw[thick] (0,0)--(0,2);  
\draw[thick] (0,2)--(2,2);
\draw[thick] (0,1)--(2,1);

\draw[fill=black] (0,0) rectangle (1,2);  
\draw[fill=black] (1,1) rectangle (2,2);  

\draw[thick] (2.5,-0.5)--(2.5,2.5);

% Negative example

\draw[thin] (3,0)--(5,0);    
\draw[thin] (4,0)--(4,2);    
\draw[thin] (5,0)--(5,2);    
\draw[thin] (3,0)--(3,2);  
\draw[thin] (3,2)--(5,2);
\draw[thin] (3,1)--(5,1);

\draw[fill=black] (3,1) rectangle (5,2);

\draw[thin] (2.5,-0.5)--(2.5,2.5);

% Now the constants

% First constant

\draw[thin] (-0.25,-5)--(0.75,-5);    
\draw[thin] (-0.25,-4)--(0.75,-4);    
\draw[thin] (-0.25,-5)--(-0.25,-4);    
\draw[thin] (0.75,-5)--(0.75,-4);  

\draw[fill=black] (-0.50,-5.25) rectangle (0,-4.75);  

% Second constant

\draw[thin] (1.5,-5)--(2.5,-5);    
\draw[thin] (1.5,-4)--(2.5,-4);    
\draw[thin] (1.5,-5)--(1.5,-4);    
\draw[thin] (2.5,-5)--(2.5,-4);  

\draw[fill=black] (1.25,-4.25) rectangle (1.75,-3.75); 

% Third constant

\draw[thin] (3.25,-5)--(4.25,-5);    
\draw[thin] (3.25,-4)--(4.25,-4);    
\draw[thin] (3.25,-5)--(3.25,-4);    
\draw[thin] (4.25,-5)--(4.25,-4);  

\draw[fill=black] (4,-4.25) rectangle (4.5,-3.75);  

% Fourth constant

\draw[thin] (5,-5)--(6,-5);    
\draw[thin] (5,-4)--(6,-4);    
\draw[thin] (5,-5)--(5,-4);    
\draw[thin] (6,-5)--(6,-4);  

\draw[fill=black] (5.75,-5.25) rectangle (6.25,-4.75);  

% 5th constant (add 6.25 in x coordinate, and black goes to white)

\draw[thin] (6.75,-5)--(7.75,-5);    
\draw[thin] (6.75,-4)--(7.75,-4);    
\draw[thin] (6.75,-5)--(6.75,-4);    
\draw[thin] (7.75,-5)--(7.75,-4);  

\draw[fill=white] (6.5,-5.25) rectangle (7,-4.75);  

% 6th constant

\draw[thin] (8.50,-5)--(9.5,-5);    
\draw[thin] (8.50,-4)--(9.5,-4);    
\draw[thin] (8.50,-5)--(8.5,-4);    
\draw[thin] (9.50,-5)--(9.50,-4);  

\draw[fill=white] (8.25,-4.25) rectangle (8.75,-3.75); 

% 7th constant

\draw[thin] (10.25,-5)--(11.25,-5);    
\draw[thin] (10.25,-4)--(11.25,-4);    
\draw[thin] (10.25,-5)--(10.25,-4);    
\draw[thin] (11.25,-5)--(11.25,-4);  

\draw[fill=white] (11,-4.25) rectangle (11.5,-3.75);  

% 8th constant

\draw[thin] (12,-5)--(13,-5);    
\draw[thin] (12,-4)--(13,-4);    
\draw[thin] (12,-5)--(12,-4);    
\draw[thin] (13,-5)--(13,-4);  

\draw[fill=white] (12.75,-5.25) rectangle (13.25,-4.75);  
    
% Connectors from examples to constants

\draw[thin] (1,-0.3)--(0.25,-3.7);  
\draw[thin] (1,-0.3)--(2,-3.7);  
%\draw[thin] (1,-0.3)--(10.75,-3.7);  
\draw[thin] (1,-0.3)--(12.5,-3.7);  
\draw[thin] (1,-0.3)--(3.75,-3.7); 

%\draw[thin] (4,-0.3)--(5.5,-3.7); 
\draw[thin] (4,-0.3)--(2,-3.7); 
\draw[thin] (4,-0.3)--(3.75,-3.7); 
\draw[thin] (4,-0.3)--(7.25,-3.7); 
\draw[thin] (4,-0.3)--(12.5,-3.7); 
%\draw[thin] (4,-0.3)--(9,-3.7); 

\draw[thin] (7,-0.3)--(2,-3.7); 
\draw[thin] (7,-0.3)--(5.5,-3.7); 
\draw[thin] (7,-0.3)--(7.25,-3.7); 
\draw[thin] (7,-0.3)--(9,-3.7); 
\draw[thin] (7,-0.3)--(10.75,-3.7); 
\draw[thin] (7,-0.3)--(12.5,-3.7); 

%\draw[thin] (10,-0.3)--(2,-3.7); 
%\draw[thin] (10,-0.3)--(3.75,-3.7); 
%\draw[thin] (10,-0.3)--(7.25,-3.7); 
%\draw[thin] (10,-0.3)--(9,-3.7); 
%\draw[thin] (10,-0.3)--(10.75,-3.7); 
%\draw[thin] (10,-0.3)--(12.5,-3.7);

\draw[thin] (10,3.3)--(4,2.5); 
%\draw[thin] (10,-0.3)--(3.75,-3.7); 
%\draw[thin] (10,-0.3)--(7.25,-3.7); 
\draw[thin] (10,3.3)--(9,-3.7); 
\draw[thin] (10,3.3)--(10.75,-3.7); 
%\draw[thin] (10,-0.3)--(12.5,-3.7);

% Vertical constant and atoms

\node[text width=0.1cm] at (6.5,1) 
    {{$T_{\phi_{2}}$}};

%\node[text width=0.1cm] at (9.5,1) 
 %   {{$T_{\phi_{3}}$}};

\node[text width=0.1cm] at (9.5,4) 
    {{$T_{\phi_{3}}$}};
 
\node[text width=0.1cm] at (-3,-4.5) 
    {{$v$}};

\node[text width=0.1cm] at (7,-7.9) 
    {$0$};

%\node[text width=0.1cm] at (-3,-8) 
%    {{$\phi$}};

%\node[text width=0.1cm] at (0.25,-8) 
%    {{$\epsilon_1$}};

%\node[text width=0.1cm] at (3.75,-8) 
%    {{$\epsilon_2$}};

%\node[text width=0.1cm] at (5.5,-8) 
%    {{$\epsilon_3$}};
    
%\node[text width=0.1cm] at (-1.8,-8) 
%    {{$\phi_{1}$}};

\node[text width=0.1cm] at (0.25,-7.9) 
    {{$\phi_{2}$}};

\node[text width=0.1cm] at (3.75,-7.9) 
    {{$\phi_{3}$}};
    
% Connectors to atoms

% phi-v

%\draw[thin] (-3,-5.3)--(-3,-7.6);

% phi12-constants

\draw[thin] (0.25,-7.3)--(-3,-5.3);
\draw[thin] (0.25,-7.3)--(0.25,-5.3);
\draw[thin] (0.25,-7.3)--(3.75,-5.3);

% phi13-constants

\draw[thin] (3.75,-7.3)--(-3,-5.3);
\draw[thin] (3.75,-7.3)--(0.25,-5.3);
\draw[thin] (3.75,-7.3)--(5.5,-5.3);

%  epsilon1-constant
%\draw[thin] (0.25,-5.3)--(0.25,-7.6);
%  epsilon2-constant
%\draw[thin] (3.75,-5.3)--(3.75,-7.6);
%  epsilon3-constants
%\draw[thin] (5.5,-5.3)--(5.5,-7.6);

% 0-v

\draw[thin] (7,-7.3)--(-3,-5.3);

% 0-constant 8

\draw[thin] (7,-7.3)--(12.5,-5.3);
\draw[thin] (7,-7.3)--(10.75,-5.3); 
\draw[thin] (7,-7.3)--(9,-5.3); 
\draw[thin] (7,-7.3)--(7.25,-5.3); 
\draw[thin] (7,-7.3)--(5.5,-5.3); 
\draw[thin] (7,-7.3)--(3.75,-5.3); 
\draw[thin] (7,-7.3)--(2,-5.3); 
\draw[thin] (7,-7.3)--(0.25,-5.3); 

% alpha

%\draw[thin] (-2.8,-5.3)--(-2.8,-9.6); 
%\draw[thin] (0.25,-5.3)--(-2.8,-9.6); 
%\draw[thin] (5.5,-5.3)--(-2.8,-9.6); 

% beta

%\draw[thin] (-2.8,-5.3)--(0.25,-9.6); 
%\draw[thin] (0.25,-5.3)--(0.25,-9.6); 
%\draw[thin] (3.75,-5.3)--(0.25,-9.6); 

% gamma

%\draw[thin] (0.25,-5.3)--(2,-9.6); 

% delta

%\draw[thin] (3.75,-5.3)--(3.75,-9.6); 

% delta

%\draw[thin] (5.5,-5.3)--(5.5,-9.6); 

%\draw[thin] (0.25,-5.3)--(0.25,-9.6); 
%\draw[thin] (0.25,-5.3)--(6.4,-9.6); 
%\draw[thin] (12.5,-5.3)--(6.4,-9.6); 

%%%%
%%%% HERE DUAL
%%%%

% Positive example

\draw[thin] (20,-8)--(22,-8);    
\draw[thin] (21,-8)--(21,-6);    
\draw[thin] (22,-8)--(22,-6);    
\draw[thin] (20,-8)--(20,-6);  
\draw[thin] (20,-6)--(22,-6);
\draw[thin] (20,-7)--(22,-7);

%\draw[fill=black] (-0.5,1.5) rectangle (0.5,2.5);  
\draw[fill=black] (20,-8) rectangle (21,-6);  
\draw[fill=black] (21,-7) rectangle (22,-6); 
%\draw[fill=black] (1,0) rectangle (2,1);  

% Separator between positive and negative examples

\draw[thin] (22.5,-7.3)--(22.5,-5.5);

% Negative example

\draw[thin] (23,-8)--(25,-8);    
\draw[thin] (24,-8)--(24,-6);    
\draw[thin] (25,-8)--(25,-6);    
\draw[thin] (23,-8)--(23,-6);  
\draw[thin] (23,-6)--(25,-6);
\draw[thin] (23,-7)--(25,-7);

\draw[fill=black] (23,-6) rectangle (25,-7);  

% Now the constants

% First constant

\draw[thin] (19.75,-2)--(20.75,-2);    
\draw[thin] (19.75,-1)--(20.75,-1);    
\draw[thin] (19.75,-2)--(19.75,-1);    
\draw[thin] (20.75,-2)--(20.75,-1);

\draw[fill=black] (19.5,-2.25) rectangle (20,-1.75);  

% Second constant

\draw[thin] (21.5,-2)--(22.5,-2);    
\draw[thin] (21.5,-1)--(22.5,-1);    
\draw[thin] (21.5,-2)--(21.5,-1);    
\draw[thin] (22.5,-2)--(22.5,-1);  

\draw[fill=black] (21.25,-1.25) rectangle (21.75,-0.75); 

% Third constant

\draw[thin] (23.25,-2)--(24.25,-2);    
\draw[thin] (23.25,-1)--(24.25,-1);    
\draw[thin] (23.25,-2)--(23.25,-1);    
\draw[thin] (24.25,-2)--(24.25,-1);  

\draw[fill=black] (24,-1.25) rectangle (24.5,-0.75);  

% Fourth constant

\draw[thin] (25,-2)--(26,-2);    
\draw[thin] (25,-1)--(26,-1);    
\draw[thin] (25,-2)--(25,-1);    
\draw[thin] (26,-2)--(26,-1);  

\draw[fill=black] (25.75,-2.25) rectangle (26.25,-1.75);  

% 5th constant (add 6.25 in x coordinate, and black goes to white)

\draw[thin] (26.75,-2)--(27.75,-2);    
\draw[thin] (26.75,-1)--(27.75,-1);    
\draw[thin] (26.75,-2)--(26.75,-1);    
\draw[thin] (27.75,-2)--(27.75,-1);  

\draw[fill=white] (26.5,-2.25) rectangle (27,-1.75);  

% 6th constant

\draw[thin] (28.50,-2)--(29.5,-2);    
\draw[thin] (28.50,-1)--(29.5,-1);    
\draw[thin] (28.50,-2)--(28.5,-1);    
\draw[thin] (29.50,-2)--(29.50,-1);  

\draw[fill=white] (28.25,-1.25) rectangle (28.75,-0.75); 

% 7th constant

\draw[thin] (30.25,-2)--(31.25,-2);    
\draw[thin] (30.25,-1)--(31.25,-1);    
\draw[thin] (30.25,-2)--(30.25,-1);    
\draw[thin] (31.25,-2)--(31.25,-1);  

\draw[fill=white] (31,-1.25) rectangle (31.5,-0.75);  

% 8th constant

\draw[thin] (32,-2)--(33,-2);    
\draw[thin] (32,-1)--(33,-1);    
\draw[thin] (32,-2)--(32,-1);    
\draw[thin] (33,-2)--(33,-1);  

\draw[fill=white] (32.75,-2.25) rectangle (33.25,-1.75);  
    
% Connectors from examples to constants

\draw[thin] (21,-5.7)--(20.25,-2.3);  
\draw[thin] (21,-5.7)--(22,-2.3);  
\draw[thin] (21,-5.7)--(23.75,-2.3);  
\draw[thin] (21,-5.7)--(32.5,-2.3); 

\draw[thin] (24,-5.7)--(23.75,-2.3); 
\draw[thin] (24,-5.7)--(22,-2.3); 
\draw[thin] (24,-5.7)--(32.5,-2.3); 
\draw[thin] (24,-5.7)--(27.25,-2.3); 

\draw[thin] (27,-5.7)--(22,-2.3); 
\draw[thin] (27,-5.7)--(25.5,-2.3); 
\draw[thin] (27,-5.7)--(27.25,-2.3); 
\draw[thin] (27,-5.7)--(29,-2.3); 
\draw[thin] (27,-5.7)--(30.75,-2.3); 
\draw[thin] (27,-5.7)--(32.5,-2.3); 

%\draw[thin] (27,-5.7)--(22,-2.3); 
%\draw[thin] (30,-5.7)--(23.75,-2.3); 
%\draw[thin] (30,-5.7)--(27.25,-2.3); 
%\draw[thin] (30,-5.7)--(29,-2.3); 
%\draw[thin] (30,-5.7)--(30.75,-2.3); 
%\draw[thin] (30,-5.7)--(32.5,-2.3); 

\draw[thin] (29.75,-10)--(24,-8.3); 
%\draw[thin] (29.75,-5.7)--(27.25,-2.3); 
\draw[thin] (29.75,-10)--(29,-2.3); 
\draw[thin] (29.75,-10)--(30.75,-2.3); 
%\draw[thin] (30,-5.7)--(32.5,-2.3); 

% Vertical constant and atoms

%\node[text width=0.1cm] at (16.7,2) 
%    {$[\phi]$}; 
    
%\node[text width=0.1cm] at (17.8,2) 
%    {$[\phi_{1}]$};
    
%\node[text width=0.1cm] at (19.60,2) 
%    {$[\epsilon_1]$};

\node[text width=0.1cm] at (26.5,-7) 
    {$T_{\phi_{2}}$};
    
%\node[text width=0.1cm] at (29.5,-7) 
   % {$T_{\phi_{3}}$};
    
\node[text width=0.1cm] at (29.5,-11) 
    {$T_{\phi_{3}}$};
    
\node[text width=0.1cm] at (19.6,2) 
    {$[\phi_{2}]$};

\node[text width=0.1cm] at (23.1,2) 
    {$[\phi_{3}]$};
    
\node[text width=0.1cm] at (16.7,-1.5) 
    {$[v]$};

%\node[text width=0.1cm] at (21,-11.5) 
%    {$0^{*}$};

\node[text width=0.1cm] at (21,-13) 
    {$0^{*}$};
    
\node[text width=0.1cm] at (26.6,2) 
    {$[0]$};

%\node[text width=0.1cm] at (27,-11.5) 
%   {$\zeta_2$};

%\node[text width=0.1cm] at (30,-11.5) 
%    {$\zeta_3$};
    
%\node[text width=0.1cm] at (24,-11.5) 
%   {$\zeta_1$};    

\node[text width=0.1cm] at (27,-13) 
   {$\zeta_1$};

\node[text width=0.1cm] at (29.7,-13) 
    {$\zeta_2$};
    
%\node[text width=0.1cm] at (24,-11.5) 
%   {$\zeta_1$};  

% Connectors to atoms

%  From [phi] to v
%\draw[thin] (17.2,1.3)--(17.2,-0.7); 

%  From [phi12] to constants

\draw[thin] (20.25,1.1)--(17.2,-0.7); 
\draw[thin] (20.25,1.1)--(20.25,-0.7); 
\draw[thin] (20.25,1.1)--(23.75,-0.7); 

%  From [phi13] to constants

\draw[thin] (23.75,1.1)--(17.2,-0.7); 
\draw[thin] (23.75,1.1)--(20.25,-0.7); 
\draw[thin] (23.75,1.1)--(25.5,-0.7); 

%  From [epsilon]to constants
%\draw[thin] (20.25,1.3)--(20.25,-0.7); 
%\draw[thin] (23.75,1.3)--(23.75,-0.7); 
%\draw[thin] (25.5,1.3)--(25.5,-0.7); 

% 0*-Examples

\draw[thin] (21,-12.3)--(21,-8.3); 
%\draw[thin] (21,-10.9)--(33,-8.3); 
\draw[thin] (21,-12.3)--(24,-8.3); 
\draw[thin] (21,-12.3)--(27,-8.3); 
\draw[thin] (21,-12.3)--(29.75,-10); 

%\draw[thin] (21,-10.9)--(21,-8.3); 
%\draw[thin] (21,-10.9)--(33,-8.3); 
%\draw[thin] (21,-10.9)--(24,-8.3); 
%\draw[thin] (21,-10.9)--(27,-8.3); 
%\draw[thin] (21,-10.9)--(30,-8.3);
%\draw[thin] (1,-13.9)--(13,-11.3); 
% First two examples -v

\draw[thin] (21,-5.7)--(17.2,-2.3); 
%\draw[thin] (24,-5.7)--(17.2,-2.3); 

% eta-Examples

\draw[thin] (27,-12.3)--(27,-8.3); 
\draw[thin] (29.75,-12.3)--(29.75,-11.5); 
%\draw[thin] (24,-10.9)--(24,-8.3); 
%\draw[thin] (27,-10.9)--(27,-8.3); 
%\draw[thin] (29.75,-12)--(29.75,-10);

%\draw[thin] (33,-10.9)--(33,-8.3); 

% [0] to constants

\draw[thin] (27,1.1)--(17.2,-0.7);
\draw[thin] (27,1.1)--(32.5,-0.7);
\draw[thin] (27,1.1)--(30.75,-0.7);
\draw[thin] (27,1.1)--(29,-0.7);
\draw[thin] (27,1.1)--(27.25,-0.7);
\draw[thin] (27,1.1)--(25.5,-0.7);
\draw[thin] (27,1.1)--(23.75,-0.7);
\draw[thin] (27,1.1)--(22,-0.7);
\draw[thin] (27,1.1)--(20.25,-0.7);
% alpha

%\draw[thin] (-2.8,-5.3)--(-2.8,-9.6); 
%\draw[thin] (0.25,-5.3)--(-2.8,-9.6); 
%\draw[thin] (5.5,-5.3)--(-2.8,-9.6); 

% beta

%\draw[thin] (-2.8,-5.3)--(0.25,-9.6); 
%\draw[thin] (0.25,-5.3)--(0.25,-9.6); 
%\draw[thin] (5.5,-5.3)--(0.25,-9.6); 

% gamma

%\draw[thin] (0.25,-5.3)--(2,-9.6); 

% delta

%\draw[thin] (3.75,-5.3)--(3.75,-9.6); 

% delta

%\draw[thin] (5.5,-5.3)--(5.5,-9.6); 

%\draw[thin] (0.25,-5.3)--(0.25,-9.6); 
%\draw[thin] (0.25,-5.3)--(6.4,-9.6); 
%\draw[thin] (12.5,-5.3)--(6.4,-9.6); 
\end{tikzpicture}
\end{center} 
New pinning terms and relations are formed at the end of each learning epoch. We do not need to replace pinning relations of epochs $0,1,...,n-1$ with the pinning relations derived from the atoms of epoch $n$, instead we can let them accumulate, epoch after epoch. Pinning relations do not grow until becoming unmanageable. One of the reasons why this occurs is that pinning terms and relations do not only get created, they also get discarded. Discarding pinning relations is needed because the set $R_n \cup R_p$ may be inconsistent. We use $R_p$ to refer to all the accumulated pinning relations. 

Inconsistencies are detected as explained in {\bf{Section \ref{traceConstraints}}}. When we try to enforce the set $R_n \cup R_p$ in $M^{*}$ often we find that some negative relations cannot be enforced. We enforce the relations in $M^{*}$ by adding atoms in the dual to discriminate all the (reversed) negative relations, i.e. the relations in the set $R_{n}^{-} \cup R_p$. Once we calculate the transitive closure of the graph of $M^{*}$ we may find that some negative relations do not hold. It doesn't matter how many times we try or how we choose to introduce the discriminating atoms in $M^{*}$ the resulting relations that do not hold are always the same and are always negative. If a relation that belongs to $R_{n}^{-}$ does not hold, then the set $R_{n}$ is inconsistent. In this case, something is wrong with our training set. If the relation that fails belongs to $R_p$ we just discard it deleting its pinning term and associated pinning relations. We can regard $R_{p}$ as a set of hypotheses; some hypotheses are eventually found inconsistent with new data. 

Once inconsistent pinning relations have been discarded and we have a consistent set $R_n \cup R_p$ we have a couple of possible strategies. One is enforcing $R_n \cup R_p$ in epoch $n$. The other strategy, the one used in all the experiments of this paper, consists of creating a new set of atoms for $M^{*}$ enforcing only the relations of $R_n$ but with the pinning terms of $R_p$ present in the graph of $M^{*}$. Then we enforce trace constraints for $R_n$ and also for $R_p$ but only for the relations of $R_p$ that happen to hold in $M^{*}$. In different epochs different relations hold and different pinning terms are used. We found this strategy more efficient and computationally lighter than the first.

Pinning relations are all negative and there are good reasons for this. One reason has to do with maximizing algebraic freedom and it is discussed in section {\bf{Sections \ref{memorizingGeneralizing}}} and {\bf{Theorem \ref{pinningFree}}}, the second reason is that inconsistent negative relations can be individually detected while inconsistent positive relations cannot be detected so easily. In {\bf{Theorem \ref{positiveEntail}}} of {\bf{Appendix \ref{theorems}}} we prove that any positive relation that is entailed by a set of positive and negative relations is also entailed by the positive relations of the set alone. Negative relations do not have positive relations as logical consequences; their consequences are all negative, so by restricting ourselves to negative pinning relations we can detect and isolate inconsistences introduced by the pinning relations as they arise. 

Introducing pinning relations to deal with batch learning has other important advantage. Pinning relations found after embedding a batch of order relations $R_{n+1}$ tend to be quite similar to those obtained for the previous batch $R_{n}$, more so the more the algebra has already learned. This means that the number of pinning terms and relations tend to converge to a fixed number with training or grow very slowly. This approach is then clearly superior to one combining all training sets together. 

Pinning terms and relations accumulate the knowledge of the training and can be shared with other algebras.

\section{Analysis of solutions}

\subsection{Finding a class definition for the toy problem}

So far we have obtained, with 2 positive and 3 negative examples, a model with two atoms. With a few more training examples, two more atoms are obtained. In terms of the constants they are edged to, we can plot these four atoms as
\begin{center}
\begin{tikzpicture}[scale=0.5]
\label{picture:atoms22}
% First atom

\draw[thick] (-0.25,-5)--(0.75,-5);    
\draw[thick] (-0.25,-4)--(0.75,-4);    
\draw[thick] (-0.25,-5)--(-0.25,-4);    
\draw[thick] (0.75,-5)--(0.75,-4);  

\draw[fill=black] (-0.50,-5.25) rectangle (0,-4.75);  
\draw[fill=black] (0.50,-5.25) rectangle (1,-4.75);  

% Second atom

\draw[thick] (1.5,-5)--(2.5,-5);    
\draw[thick] (1.5,-4)--(2.5,-4);    
\draw[thick] (1.5,-5)--(1.5,-4);    
\draw[thick] (2.5,-5)--(2.5,-4);  

\draw[fill=black] (2.25,-4.25) rectangle (2.75,-3.75); 
\draw[fill=black] (1.25,-5.25) rectangle (1.75,-4.75); 

% Third atom

\draw[thick] (3.25,-5)--(4.25,-5);    
\draw[thick] (3.25,-4)--(4.25,-4);    
\draw[thick] (3.25,-5)--(3.25,-4);    
\draw[thick] (4.25,-5)--(4.25,-4);  

\draw[fill=black] (4,-4.25) rectangle (4.5,-3.75);  
\draw[fill=black] (3,-4.25) rectangle (3.5,-3.75);  

% Fourth atom

\draw[thick] (5,-5)--(6,-5);    
\draw[thick] (5,-4)--(6,-4);    
\draw[thick] (5,-5)--(5,-4);    
\draw[thick] (6,-5)--(6,-4);  

\draw[fill=black] (4.75,-4.25) rectangle (5.25,-3.75);  
\draw[fill=black] (5.75,-5.25) rectangle (6.25,-4.75);  

\end{tikzpicture} 
\end{center}
where the first two atoms are atoms $\phi_{3}$ and $\phi_{2}$, that we derived in previous sections. Using more examples, trace-invariant reduction and Sparse Crossing the reader can find the entire solution by hand.

The $16$ possible $2 \times 2$ images can be correctly classified into those with a black vertical bar that contain the four atoms and those without a bar that have one or more atoms missing. A $2 \times 2$ image $I$ has the property $v$ of the positive class when its term contains all four atoms. Those images that do not contain a vertical bar have terms that only contain three atoms or less. Describing the four atoms in terms of the constants, that for clarity we have renamed as $c_{ij\textbf{b}}$ for {\bf{b}}lack pixels in row $i$ and column $j$:
\begin{linenomath}
\begin{equation}
\begin{aligned}
(v < I) & \Leftrightarrow (c_{11\textbf{b}} < I \vee c_{12\textbf{b}} < I) \wedge (c_{11\textbf{b}} < I \vee c_{22\textbf{b}} < I)  \\
   &\wedge \, \, \, (c_{21\textbf{b}} < I \vee c_{12\textbf{b}} < I) \wedge (c_{21\textbf{b}} < I \vee c_{22\textbf{b}} < I),
 \label{eq:solution_bar}
\end{aligned}
\end{equation}
\end{linenomath}
we get a first order expression that defines $v$.

Using the distributive law, we can rewrite the solution in Equation (\ref{eq:solution_bar}) as
\begin{linenomath}
\begin{equation}
(v < I) \Leftrightarrow (c_{11 \textbf{b}} <I \wedge c_{21 \textbf{b}} <I) \vee (c_{12\textbf{b}} < I \wedge c_{22 \textbf{b}} < I),
\end{equation}
\end{linenomath}
or more compactly as
\begin{linenomath}
\begin{align}
\label{eq:imagewithline_small}
(v < I) \Leftrightarrow \vee_{j=1}^{2} \ \wedge_{i=1}^{2} (c_{ij \textbf{b}} < I).
\end{align}
\end{linenomath}
This last expression in Equation (\ref{eq:imagewithline_small}) says that positive examples have or column 1 or column 2 (or both) with row 1 and row 2 in black, which is the simplest definition of an image including a vertical bar. We have been able to learn and derive from examples a closed expression defining the class of images that contain a vertical bar. 

Deriving formal class definitions form examples is an exciting subject but in this paper we are mainly interested in approximate solutions that are sufficient for Machine Learning. To understand the approximate solutions, however, we are going to walk backwards from exact, closed expressions to the atoms. Specifically, we will show that for the vertical bar problem in a grid of size $M \times N$ there is an astronomically large number of suitable approximate solutions with a desired error rate, and that our stochastic learning algorithm only needs to find one.

\subsection{Analysis of exact solutions}  \label{exactSolutions}

So far we have analyzed the vertical bar problem only for $ 2 \times 2$ images. In the following we derive the exact solution for the more general case
of $M \times N$ images. We are not going to learn the solution from examples, instead we are going to derive the form of the atoms from the known concept of a vertical bar. This exact solution will be used in the next section to show that, for algebraic learning to find an approximate solution, it needs to find some valid subset of atoms from the exact solution and there are an astronomically large number of valid subsets.  

An $M \times N$ image contains a black vertical bar if there is at least one of the $N$ columns in the image for which all $M$ pixels are black. Formally, image $I$ has a vertical line if either column $1$ has all rows in black, or column $2$ or any of the $N$ columns,
\begin{linenomath}
\begin{equation}
(v < I) \Leftrightarrow \vee_{j=1}^{N} \wedge_{i=1}^{M} (c_{ij \textbf{b}} < I),
\label{eq:imagewithline0}
\end{equation}
\end{linenomath}
where the dusjunction $\vee$ runs over the columns of the image $I$ and the conjunction $\wedge$ over the rows and $c_{ij \textbf{b}}$ stands for the pixel in row $i$ and column $j$ being in black. 

In order to compare Equation (\ref{eq:imagewithline0}) with the general form of a solution learned using algebras we are going to start by expressing the partial order $a < b$ affecting any two elements $a$ and $b$ of an atomized algebra in terms of its constants rather than its atoms. We know how to determine if $a < b$ using atoms: $a < b$ if and only if the atoms of $a$ are also atoms of $b$. Just to clarify, we say an atom $\phi$ is \enquote{in $a$} or is \enquote{of $a$} or \enquote{contained in $a$} if it is in the lower segment ${\bf{L}}(a)$ or, equivalently in the graph, if it is edged to $a$, i.e. if $\phi \in {\bf{GL}}(a)$ or, also equivalently, if $\phi < a$.  

Element $a$ of an atomized algebra is lower than $b$ (or \enquote{in $b$}) if and only if all the atoms of $a$ are also in $b$,
\begin{linenomath}
\begin{equation}
\label{eq:atomic1}
(a<b) \Leftrightarrow \wedge_{\phi \,<\, a} (\phi < b),
\end{equation}
\end{linenomath}
with the conjunction running over all atoms of $a$. An atom $\phi$ is contained in $b$ if any of the constants that contain that atom, $c_{\phi k}$, is in element $b$,
\begin{linenomath}
\begin{equation}
\label{eq:atomic2}
(\phi<b) \Leftrightarrow \vee_{k} (c_{\phi k} < b),
\end{equation}
\end{linenomath}
where index $k$ runs along all constants $c_{\phi k}$ that contain atom $\phi$.

Substituting Equation (\ref{eq:atomic2}) into Equation (\ref{eq:atomic1}), we can write
\begin{linenomath}
\begin{equation}
\label{eq:lineinimage_algebra}
(a<b) \Leftrightarrow \wedge_{\phi \,<\, a}  \vee_{k} (c_{\phi k} < b),
\end{equation}
\end{linenomath}
where we have expressed $a<b$ with the constants of the algebra. 

Going back to our problem of vertical bars, we can apply to $v < I$ what we have learned and write
\begin{linenomath}
\begin{equation}
\label{eq:imagewithline_algebra}
(v<I) \Leftrightarrow \wedge_{\phi \,<\, v}  \vee_{k} (c_{\phi k} < I).
\end{equation}
\end{linenomath}
where $I$ is a term describing an image, and then compare it with our first expression, Equation (\ref{eq:imagewithline0}). This formula has a conjuction followed by a disjunction so we are going to transform Equation (\ref{eq:imagewithline0}) to conjunctive normal form (CNF) to match the form of Equation (\ref{eq:imagewithline_algebra}).

We first do the transformation for images of size $3 \times 2$ to keep it intuitive. For this size of images, the expression for a vertical bar to be in one of these images in Equation (\ref{eq:imagewithline0}) is of the form
\begin{linenomath}
\begin{equation}
\begin{aligned}
(v < I) & \Leftrightarrow \vee_{j=1}^{2} \ \wedge_{i=1}^{3} (c_{ij \textbf{b}} < I) \\
  &=(c_{11 \textbf{b}<I}\wedge c_{21 \textbf{b}<I}\wedge c_{31 \textbf{b}}<I) \vee c_{12\textbf{b}<I}\wedge c_{22 \textbf{b}<I}\wedge c_{32\textbf{b}}<I).
\end{aligned}
\end{equation}
\end{linenomath}
Using that conjuction and disjuction are distributive: 
\begin{linenomath}
\begin{equation}
\begin{aligned}
(v < I) &\Leftrightarrow \vee_{j=1}^{2}  \wedge_{i=1}^{3} (c_{ij \textbf{b}} < I) \\
  &=(c_{11\textbf{b}} \vee c_{12\textbf{b}}) \wedge (c_{11\textbf{b}} \vee c_{22\textbf{b}}) \wedge (c_{11\textbf{b}} \vee c_{32\textbf{b}}) \\
  & \vee \,
  (c_{21\textbf{b}} \vee c_{12\textbf{b}}) \wedge (c_{21\textbf{b}} \vee c_{22\textbf{b}}) \wedge (c_{21\textbf{b}} \vee c_{32\textbf{b}}) \\
  & \vee \,
  (c_{31\textbf{b}} \vee c_{12\textbf{b}}) \wedge (c_{31\textbf{b}} \vee c_{22\textbf{b}}) \wedge (c_{31\textbf{b}} \vee c_{32\textbf{b}}),
\end{aligned}
\end{equation}
\end{linenomath}
where for compactness we are not writing $ < I$ after each pixel $c_{ij \textbf{b}}$.
We now have a conjunction of $9$ terms, each term a disjuntion of two  constants. For the column $j$ the indexes are clear as they simply run over $1$ and $2$. For rows, note that each of the nine terms is one of the $9$ possible ways to assign each of the rows to a column 
\begin{linenomath}
\begin{equation}
\begin{aligned}
(v < I)  \Leftrightarrow & \vee_{j=1}^{2}  \wedge_{i=1}^{3} (c_{ij \textbf{b}} < I) \\
  = & \wedge_{\sigma}^{9} \vee_{j=1}^{2} (c_{ \sigma(j) j\textbf{b}} < I ),
\end{aligned}
\end{equation}
\end{linenomath}
where we have introduced $\sigma$, an index that runs over all possible assignations of a row to each column. To express that we are covering all possible combinations (variations, in fact) we write the symbol $j \rightarrow i$ to represent a new index that runs over all mappings from $j$ to $i$. We can thus write 
\begin{linenomath}
\begin{equation}
\begin{aligned}
(v < I) \Leftrightarrow & \vee_{j=1}^{2}  \wedge_{i=1}^{3} (c_{ij \textbf{b}} < I) \\
  = & \wedge_{j\rightarrow i}^{9} \vee_{j=1}^{2} (c_{i(j) j\textbf{b}} < I),
\label{eq:imagewithline2}
\end{aligned}
\end{equation}
\end{linenomath}
which is the CNF form we were after.  We could conveniently use symbol $j \rightarrow i$ to simply switch $\vee_{j} \wedge_{i}$ for $\wedge_{j\rightarrow i} \vee_{j}$ to get to the same result. The constants $c_{i(j) j\textbf{b}}$ are now written with an index $i$ that depends on index $j$. Said dependency is different for each of the 9 possible values of the \enquote{map index} $j \rightarrow i$. Each value of the index is a possible mapping function $i(j)$ from $j$ to $i$. 

Comparing the formula of Equation (\ref{eq:imagewithline2}) with the formula for the algebra in Equation (\ref{eq:imagewithline_algebra}) we get that for these $3 \times 2$ images there are $9$ atoms of the form 
\begin{linenomath}
\begin{equation}
\phi_{j\rightarrow i}=\vee_{j} c_{i(j)j \textbf{b}},
\end{equation}
\end{linenomath}
each edged to a black pixel in column $1$ and a black pixel in column $2$. 

The same argument follows for images of size $M \times N$, for which there are $M^N$ atoms of size $M \times N$, each edged to a  black pixel in each of the $N$ columns. The $M^N$ different atoms come from the possible mappings from columns to rows,
\begin{linenomath}
\begin{equation}
(v < I) \Leftrightarrow  \wedge_{j\rightarrow i} \phi_{j\rightarrow i}.
\end{equation}
\end{linenomath}
For images of size $3 \times 2$ the exact atomization has to $3^{2}=9$ atoms. For our toy example of $2 \times 2$ images, the exact atomization corresponds to $2^{2}=4$ atoms with the same form we found using the algebraic learning algorithm. For this simple problem Sparse Crossing managed to find the exact atomization.

Following an analogous procedure, in {\bf{Appendix \ref{perfectAtomizationAppendix}}} we show how to derive the form of the exact atomization for any problem for which we know a first-order formula, with or without quantifiers.

\subsection{Analysis of approximated solutions} \label{approximated}

Consider the vertical line problem again but this time for images of size $15 \times 15$. According to the analysis in the previous section, the exact atomization has $15^{15} \approx 4 \times 10^{17}$ atoms, each having $15$ black pixels, one per column. Now we calculate the number of atoms we would need for an approximated model. 

Suppose we are dealing with a training dataset that has a $10 \%$ noise defined as a probability to have a white background pixel transformed into black. Assume we are interested in a model with a false positive error rate of $1$ in a $1000$, that is, out of $1000$ images without a vertical bar we accept, on average, one false positive. For an image without a vertical bar to be classified as positive, it needs to contain all atoms of constant '$v$'. Let us first compute the probability that a given image contains one of the atoms, say atom $\phi$, of $v$. This atom $\phi$ is in one black pixel per column. In total $\phi$ is edged to $15$ black pixels. For an image without a bar to have this atom, out of its $10 \%$ noise pixels in black it needs to have at least one black pixel in the same position than one of the $15$ black pixels of $\phi$. The probability of this happening is the probability that not all of the $15$ black pixels in $\phi$ are white in our image,
\begin{linenomath}
\begin{equation}
p(\phi < I)=1-0.9^{15},
\end{equation}
\end{linenomath}
with $\,0.9 = 1 - 0.1\,$ the probability that any background pixel is white.

Suppose $v$ has $A$ atoms. The probability for the term associated to the image to contain all the atoms of $v$, considering the probabilities for each of the atoms to be in the image as approximately independent, is then given by
\begin{linenomath}
\begin{equation}
p(v < I)\approx(1-0.9^{15})^A.
\end{equation}
\end{linenomath}
For this probability to be below $1/1000$, we obtain for $A$ that
\begin{linenomath}
\begin{equation}
A>\frac{\log(1/1000)}{\log(1-0.9^{15})} =29.9.
\end{equation}
\end{linenomath}
This means that approximately $30$ atoms suffice to have a false positive error under $1$ in $1000$ and no false negatives. We can choose these $30$ atoms among the $4\times 10^{17}$ atoms of the exact atomization. This gives of the order of $10^{529}$ solutions. Our learning algorithm only needs to find one among this astronomical number of solutions.  

If we use Sparse Crossing to resolve this problem we start seeing atoms with the right form after some few hundred examples. In fact, we find $30$ atoms (and more) of the exact solution which renders the error rate to less than one in a thousand within $50.000$ examples. Learning occurs fast: error rate is about $5\%$ after the first $1000$ examples. Hundreds of atoms with the right form are found at the end.

Identifying vertical bars is easy because the atoms of the exact solution are small (this is discussed in general in {\bf{Appendix \ref{perfectAtomizationAppendix}}}). An exact solution with small atoms is not a general property of all problems. When the atoms of the exact solution are large, an atom taken from an approximate solution often corresponds to a \enquote{subatom} rather than an atom of the exact solution. A subatom is a smaller atom cotained only in some of the containing constants of an atom in the exact solution. By replacing large exact atoms with smaller subatoms false negatives are tradeoff in exchange for fewer false positives.  If the subatoms are chosen appropriately the size of the algebra and the error rate can be kept small.  

For example, if we want to distinguish images with an even number of bars from images with an odd number of bars, the atoms of the exact solution have a variety of forms and sizes.  Each atom is contained in each of the white pixels of one or more complete white bars and also in one black pixel of each of the remaining bars. These atoms may be very large, some contained in almost half of the constants. The form of the exact solution for this problem can be derived by using the technique detailed in appendix \ref{perfectAtomizationAppendix}.  The atoms we obtain by Sparse Crossing correspond to sub atoms of the exact solution and have variable sizes. The atoms of the exact solution for this problem can be partitioned in classes and approximate solutions select atoms in all or most of these classes. We have resolved this problem using Sparse Crossing with error rates well under $1\%$. We have solved it for small grids like $10 \times 10$ with a low or moderate noise below $10\%$ and for smaller grids like $5 \times 5$ with a background noise as large as $50\%$. For example, in $10 \times 10$ with a noise level of $1\%$ the number of atoms needed is about $1,800$. If noise is increased to $2.5\%$ the number of atoms needed grows to $4,200$ and give an error rate under $1\%$ and lower error of about $0.3 \%$ if we use multiple atomizations compatible with the same dual, as explained in {\bf{Section \ref{multipleAtomizations}}}.

In the case of the MNIST handwritten character dataset \cite{Lecun1998} there is no proper way to define a exact solution, however, anything we would consider as a good candidate has atoms of all sizes, from a few constants to (almost) $784$. Again, small approximate solutions with about $1\%$ error rate can be found with much smaller atoms, most contained in about $4$ to $10$ constants only.

\subsection{Memorizing and generalizing models} 
\label{memorizingGeneralizing}

The abilities of memorizing and generalizing are not incompatible in humans, nor should they be for machine learning algorithms. However, it seems to exist some kind of fundamental trade-of between memorizing and learning. Memorizing, instead of generalizing, (also known as overfitting) is a frequent problem for statistical learning algorithms. 

Memorizing may not be bad per-se but it is always expensive. It comes at a cost. The cost of memorizing for algebras is growing the model. Memorizing a relation requires to add a new atom or to make some of the existing atoms \enquote{larger}. Specifically, an atom $\phi$ is larger than atom $\omega$ if for each constant $\omega < c$ we have $\phi < c$.

Learning, in the other hand, may yield a negative cost; learning a relation can make the atomization smaller or at least grow it by an amount that is less than the information needed to store the relation. 

A goal of algebraic learning is to find a small model. We want small models not only because large models are expensive but also because an atomized model with substantially fewer atoms than the number of independent input relations is going to generalize. In the next section and in \textbf{Appendix \ref{compressionAndError}} we establish a relationship between generalization and compression rate for random models.

Smallness alone, however, is not enough to guarantee a generalizing model. Data compressors produce small representations of data but do not generalize. Furthermore, in order to acquire the information needed to extract relevant features from data, generalizing models may need to \enquote{grow} in an initial learning stage. In this sense generalizing algorithms behave very differently than a data compressor. So, if smallness is not enough, what is missing?  

In order to answer this question we are going to study first the memorizing models. We start by describing two algorithms that produce models that act as memories of $R$. It is not necessary for the reader to understand how these algorithms work to follow the discussion below. 

To build a memory we need to encode the training positive order relations $R^{+}$ as directed edges in the graph of $M^{*}$, construct the set $\Lambda$ formed by the constants $[b]$ such that there is some $a$ and some relation $\neg(a < b) \in R^{-}$, add a different atom in $M^{*}$ under each constant of $\Lambda$, add to the dual of each term mentioned in any relation of $R$ the atoms in the intersection of the duals of the term's component constants and calculate the transitive closure of the graph of $M^{*}$. Then there is a one-to-one mapping between each atom of $M^{*}$ and one atom we can introduce in $M$. 

Now, for each atom $\,\xi \in M^{*}$ introduce an atom $\phi_{\xi}\in M \,$ that satisfies for each constant $c$ in $M$: 
\begin{linenomath}
\begin{equation}
\neg (\xi < [c])\, \Leftrightarrow (\,\phi _\xi < c).
\end{equation}
\end{linenomath} 
If $a$ and $b$ are constants it follows easily that
\begin{linenomath}
\begin{equation}
\phi _\xi \in {\bf{dis}}(a,b)\, \Leftrightarrow \xi \in {\bf{dis}}([b],[a]),
\end{equation}
\end{linenomath} 
so $\neg(a< b)$ holds in $M$ if and only if $\neg([b] < [a])$ holds in the dual (see the notation in {{\bf{Appendix} \ref{Notation}} for the definition of discriminant). Theorem \ref{termInversion} extends this correspondence between discriminants to terms and therefore proves that the atomization of $M$ satisfies the same positive and negative relations than $M^{*}$. 

The first thing we should realize is that this algorithm is not stochastic. There is a single model produced by the algorithm. The model memorizes the negative input relations $R^{-}$ so, for any pair of terms or constants $a$ and $b$, the relation $a < b$ holds unless $R \models \neg(a < b)$. To the query \enquote{$(a<b)$?} the model always answers \enquote{yes} unless the input relations imply otherwise. The model is therefore unable to distinguish most elements as it also satisfies $b<a$.

The second thing we should realize is that this model does not need more atoms than negative relations we have in $R^{-}$. The model may be small (particularly if there are many more relations is $R^{+}$) and despite that entirely unable to generalize.

The ability of a model to distinguish terms is captured by the concept of algebraic freedom. A model $N$ is freer them a model $M$ if for each pair $a$ and $b$ of elements, $\neg(a < b)$ is true in $M$ implies that it is also true in $N$. The freest model of an algebra (any algebra, not only a semilattice, a group for example) corresponds with its \enquote{term algebra}. The term algebra is a model that has an element for each possible term and two terms correspond with the same element only if their equality is entailed by the axioms of the algebra.

The axioms of our algebra are the axioms of a semilattice plus the input relations $R$. The model produced by the memorizing algorithm above is precisely the least-free model compatible with these axioms. What about the freest model? 

The freest model can also be easily built; we do not even need the auxiliary $M^{*}$. To build the freest model introduce one different atom under each constant of $M$ and then enforce all the positive relations of $R^{+}$ one by one using full crossing. 

Again, this algorithm is not stochastic and there is a single output model. Since the number of atoms of this model tends to grow geometrically with the number of positive relations in $R^{+}$ we usually end up with an atomization with many atoms. 

Not surprisingly, this model also behaves as a memory. This time it remembers the relations in $R^{+}$ and to the query $(a<b)?$ the model always answers \enquote{no} unless $R^{+} \models (a < b)$. The model distinguishes most terms but it does not generalize and it is so large in practical problems that usually cannot be computed.

The freest and least-free models are both memories. However, there are important differences; free models are very large while least-free models are small. In addition, least-free models tend to produce larger atoms than freer models. \textbf{Figures \ref{fig:atomsFreest}} and \textbf{\ref{fig:atomsLeastFree}} depict the atoms of the freest and least free models of the toy vertical-line problem in $2\times 2$ dimension.

We want to keep our generalizing models reasonably away from the memorizing models. Because we specifically seek for small models staying away from least-free models, that are also small, is fundamental. We do not need to worry about free models that behave as memories because they are large and cardinal minimization only finds small solutions. The ingredient we need to complement cardinal minimization is algebraic freedom. 

\bigskip
\bigskip

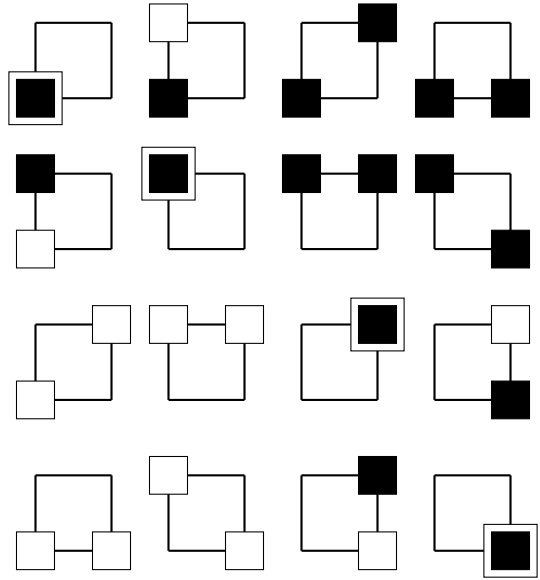
\begin{figure}[htb!]
\centering
\begin{tikzpicture}[scale=1]
% First atom

\draw[thick] (-0.25,-5)--(0.75,-5);    
\draw[thick] (-0.25,-4)--(0.75,-4);    
\draw[thick] (-0.25,-5)--(-0.25,-4);    
\draw[thick] (0.75,-5)--(0.75,-4);  

\draw[fill=white] (-0.6,-5.35) rectangle (0.1,-4.65);  % Down Left white
\draw[fill=black] (-0.50,-5.25) rectangle (0,-4.75);  % Down Left

%\draw[fill=black] (0.50,-5.25) rectangle (1,-4.75);   % Down Right
%\draw[fill=black] (-0.5,-4.25) rectangle (0,-3.75);   % Up Left
%\draw[fill=black] (0.5,-4.25) rectangle (1,-3.75);    % Up Right

% Second atom

\draw[thick] (1.5,-5)--(2.5,-5);    
\draw[thick] (1.5,-4)--(2.5,-4);    
\draw[thick] (1.5,-5)--(1.5,-4);    
\draw[thick] (2.5,-5)--(2.5,-4);  

%\draw[fill=white] (2.25,-4.25) rectangle (2.75,-3.75);      % Up Right
\draw[fill=white] (1.25,-4.25) rectangle (1.75,-3.75);      % Up Left
\draw[fill=black] (1.25,-5.25) rectangle (1.75,-4.75);      % Down Left
%\draw[fill=black] (2.25,-5.25) rectangle (2.75,-4.75);      % Down Right

% Third atom

\draw[thick] (3.25,-5)--(4.25,-5);    
\draw[thick] (3.25,-4)--(4.25,-4);    
\draw[thick] (3.25,-5)--(3.25,-4);    
\draw[thick] (4.25,-5)--(4.25,-4);  

\draw[fill=black] (4,-4.25) rectangle (4.5,-3.75);  % Up Right
%\draw[fill=black] (3,-4.25) rectangle (3.5,-3.75);  % Up Left
%\draw[fill=black] (4,-5.25) rectangle (4.5,-4.75);  % Down Right
\draw[fill=black] (3,-5.25) rectangle (3.5,-4.75);  % Down Left

% Fourth atom

\draw[thick] (5,-5)--(6,-5);    
\draw[thick] (5,-4)--(6,-4);    
\draw[thick] (5,-5)--(5,-4);    
\draw[thick] (6,-5)--(6,-4);  

%\draw[fill=black] (4.75,-4.25) rectangle (5.25,-3.75);  % Up Left
\draw[fill=black] (5.75,-5.25) rectangle (6.25,-4.75);  % Down Right
%\draw[fill=black] (5.75,-4.25) rectangle (6.25,-3.75);  % Up Right
\draw[fill=black] (4.75,-5.25) rectangle (5.25,-4.75);  % Down Left

% Fifth atom

\draw[thick] (-0.25,-7)--(0.75,-7);    
\draw[thick] (-0.25,-6)--(0.75,-6);    
\draw[thick] (-0.25,-7)--(-0.25,-6);    
\draw[thick] (0.75,-7)--(0.75,-6);  

\draw[fill=white] (-0.50,-7.25) rectangle (0,-6.75);  % Down Left
%\draw[fill=black] (0.50,-7.25) rectangle (1,-6.75);   % Down Right
\draw[fill=black] (-0.5,-6.25) rectangle (0,-5.75);   % Up Left
%\draw[fill=black] (0.5,-6.25) rectangle (1,-5.75);    % Up Right

% Sixth atom

\draw[thick] (1.5,-7)--(2.5,-7);    
\draw[thick] (1.5,-6)--(2.5,-6);    
\draw[thick] (1.5,-7)--(1.5,-6);    
\draw[thick] (2.5,-7)--(2.5,-6);  

%\draw[fill=black] (2.25,-6.25) rectangle (2.75,-5.75);      % Up Right
\draw[fill=white] (1.15,-6.35) rectangle (1.85,-5.65);      % Up Left White
\draw[fill=black] (1.25,-6.25) rectangle (1.75,-5.75);      % Up Left
%\draw[fill=black] (1.25,-7.25) rectangle (1.75,-6.75);      % Down Left
%\draw[fill=black] (2.25,-7.25) rectangle (2.75,-6.75);      % Down Right

% Seventh atom

\draw[thick] (3.25,-7)--(4.25,-7);    
\draw[thick] (3.25,-6)--(4.25,-6);    
\draw[thick] (3.25,-7)--(3.25,-6);    
\draw[thick] (4.25,-7)--(4.25,-6);  

\draw[fill=black] (4,-6.25) rectangle (4.5,-5.75);  % Up Right
\draw[fill=black] (3,-6.25) rectangle (3.5,-5.75);  % Up Left
%\draw[fill=black] (4,-7.25) rectangle (4.5,-6.75);  % Down Right
%\draw[fill=black] (3,-7.25) rectangle (3.5,-6.75);  % Down Left

% Eighth atom

\draw[thick] (5,-7)--(6,-7);    
\draw[thick] (5,-6)--(6,-6);    
\draw[thick] (5,-7)--(5,-6);    
\draw[thick] (6,-7)--(6,-6);  

\draw[fill=black] (4.75,-6.25) rectangle (5.25,-5.75);  % Up Left
\draw[fill=black] (5.75,-7.25) rectangle (6.25,-6.75);  % Down Right
%\draw[fill=black] (5.75,-6.25) rectangle (6.25,-5.75);  % Up Right
%\draw[fill=black] (4.75,-7.25) rectangle (5.25,-6.75);  % Down Left

% Ninth atom

\draw[thick] (-0.25,-9)--(0.75,-9);    
\draw[thick] (-0.25,-8)--(0.75,-8);    
\draw[thick] (-0.25,-9)--(-0.25,-8);    
\draw[thick] (0.75,-9)--(0.75,-8);  

\draw[fill=white] (-0.50,-9.25) rectangle (0,-8.75);  % Down Left
%\draw[fill=black] (0.50,-9.25) rectangle (1,-8.75);   % Down Right
%\draw[fill=white] (-0.5,-8.25) rectangle (0,-7.75);   % Up Left
\draw[fill=white] (0.5,-8.25) rectangle (1,-7.75);    % Up Right

% Tenth atom

\draw[thick] (1.5,-9)--(2.5,-9);    
\draw[thick] (1.5,-8)--(2.5,-8);    
\draw[thick] (1.5,-9)--(1.5,-8);    
\draw[thick] (2.5,-9)--(2.5,-8);  

\draw[fill=white] (2.25,-8.25) rectangle (2.75,-7.75);      % Up Right
\draw[fill=white] (1.25,-8.25) rectangle (1.75,-7.75);      % Up Left
%\draw[fill=black] (1.25,-9.25) rectangle (1.75,-8.75);      % Down Left
%\draw[fill=black] (2.25,-9.25) rectangle (2.75,-8.75);      % Down Right

% Eleventh atom

\draw[thick] (3.25,-9)--(4.25,-9);    
\draw[thick] (3.25,-8)--(4.25,-8);    
\draw[thick] (3.25,-9)--(3.25,-8);    
\draw[thick] (4.25,-9)--(4.25,-8);  

\draw[fill=white] (3.9,-8.35) rectangle (4.6,-7.65);  % Up Right White
\draw[fill=black] (4,-8.25) rectangle (4.5,-7.75);  % Up Right

%\draw[fill=black] (3,-8.25) rectangle (3.5,-7.75);  % Up Left
%\draw[fill=black] (4,-9.25) rectangle (4.5,-8.75);  % Down Right
%\draw[fill=black] (3,-9.25) rectangle (3.5,-8.75);  % Down Left

% 12th atom

\draw[thick] (5,-9)--(6,-9);    
\draw[thick] (5,-8)--(6,-8);    
\draw[thick] (5,-9)--(5,-8);    
\draw[thick] (6,-9)--(6,-8);  

%\draw[fill=black] (4.75,-8.25) rectangle (5.25,-7.75);  % Up Left
\draw[fill=black] (5.75,-9.25) rectangle (6.25,-8.75);  % Down Right
\draw[fill=white] (5.75,-8.25) rectangle (6.25,-7.75);  % Up Right
%\draw[fill=black] (4.75,-9.25) rectangle (5.25,-8.75);  % Down Left

% 13th atom

\draw[thick] (-0.25,-11)--(0.75,-11);    
\draw[thick] (-0.25,-10)--(0.75,-10);    
\draw[thick] (-0.25,-11)--(-0.25,-10);    
\draw[thick] (0.75,-11)--(0.75,-10);  

\draw[fill=white] (-0.50,-11.25) rectangle (0,-10.75);  % Down Left
\draw[fill=white] (0.50,-11.25) rectangle (1,-10.75);   % Down Right
%\draw[fill=black] (-0.5,-10.25) rectangle (0,-9.75);   % Up Left
%\draw[fill=black] (0.5,-10.25) rectangle (1,-9.75);    % Up Right

% 14th atom

\draw[thick] (1.5,-11)--(2.5,-11);    
\draw[thick] (1.5,-10)--(2.5,-10);    
\draw[thick] (1.5,-11)--(1.5,-10);    
\draw[thick] (2.5,-11)--(2.5,-10);  

%\draw[fill=black] (2.25,-10.25) rectangle (2.75,-9.75);      % Up Right
\draw[fill=white] (1.25,-10.25) rectangle (1.75,-9.75);      % Up Left
%\draw[fill=white] (1.25,-11.25) rectangle (1.75,-10.75);      % Down Left
\draw[fill=white] (2.25,-11.25) rectangle (2.75,-10.75);      % Down Right

% 15th atom

\draw[thick] (3.25,-11)--(4.25,-11);    
\draw[thick] (3.25,-10)--(4.25,-10);    
\draw[thick] (3.25,-11)--(3.25,-10);    
\draw[thick] (4.25,-11)--(4.25,-10);  

\draw[fill=black] (4,-10.25) rectangle (4.5,-9.75);  % Up Right
%\draw[fill=black] (3,-10.25) rectangle (3.5,-9.75);  % Up Left
\draw[fill=white] (4,-11.25) rectangle (4.5,-10.75);  % Down Right
%\draw[fill=black] (3,-11.25) rectangle (3.5,-10.75);  % Down Left

% 16th atom

\draw[thick] (5,-11)--(6,-11);    
\draw[thick] (5,-10)--(6,-10);    
\draw[thick] (5,-11)--(5,-10);    
\draw[thick] (6,-11)--(6,-10);  

%\draw[fill=black] (4.75,-10.25) rectangle (5.25,-9.75);  % Up Left
\draw[fill=white] (5.65,-11.35) rectangle (6.35,-10.65);  % Down Right White
\draw[fill=black] (5.75,-11.25) rectangle (6.25,-10.75);  % Down Right
%\draw[fill=black] (5.75,-10.25) rectangle (6.25,-9.75);  % Up Right
%\draw[fill=black] (4.75,-11.25) rectangle (5.25,-10.75);  % Down Left

\end{tikzpicture} 
\caption{Atoms of the freest model that satisfy the training examples for the toy problem of identifying vertical lines in dimension $2 \times 2$. A black square with a white border represents a pixel whose black color and white color constants contain the atom. }
\label{fig:atomsFreest}
\end{figure}

\bigskip
\bigskip

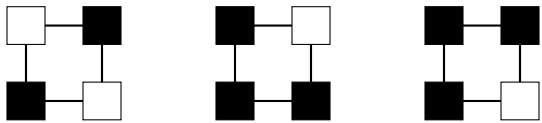
\begin{figure}[htb!]
\centering
\begin{tikzpicture}[scale=1]
% First atom

\draw[thick] (-0.25,-5)--(0.75,-5);    
\draw[thick] (-0.25,-4)--(0.75,-4);    
\draw[thick] (-0.25,-5)--(-0.25,-4);    
\draw[thick] (0.75,-5)--(0.75,-4);  

\draw[fill=black] (-0.50,-5.25) rectangle (0,-4.75);  % Down Left
\draw[fill=white] (0.50,-5.25) rectangle (1,-4.75);   % Down Right
\draw[fill=white] (-0.5,-4.25) rectangle (0,-3.75);   % Up Left
\draw[fill=black] (0.5,-4.25) rectangle (1,-3.75);    % Up Right

% Second atom

\draw[thick] (2.5,-5)--(3.5,-5);    
\draw[thick] (2.5,-4)--(3.5,-4);    
\draw[thick] (2.5,-5)--(2.5,-4);    
\draw[thick] (3.5,-5)--(3.5,-4);  

\draw[fill=white] (3.25,-4.25) rectangle (3.75,-3.75);      % Up Right
\draw[fill=black] (2.25,-4.25) rectangle (2.75,-3.75);      % Up Left
\draw[fill=black] (2.25,-5.25) rectangle (2.75,-4.75);      % Down Left
\draw[fill=black] (3.25,-5.25) rectangle (3.75,-4.75);      % Down Right

% Third atom

\draw[thick] (5.25,-5)--(6.25,-5);    
\draw[thick] (5.25,-4)--(6.25,-4);    
\draw[thick] (5.25,-5)--(5.25,-4);    
\draw[thick] (6.25,-5)--(6.25,-4);  

\draw[fill=black] (6,-4.25) rectangle (6.5,-3.75);  % Up Right
\draw[fill=black] (5,-4.25) rectangle (5.5,-3.75);  % Up Left
\draw[fill=white] (6,-5.25) rectangle (6.5,-4.75);  % Down Right
\draw[fill=black] (5,-5.25) rectangle (5.5,-4.75);  % Down Left

\end{tikzpicture} 
\caption{Atoms of the least free model that satisfy the training examples for the toy problem of identifying vertical lines in dimension $2 \times 2$.}
\label{fig:atomsLeastFree}
\end{figure}

\bigskip
\bigskip

\begin{figure}[htb!]
\centering
\begin{tikzpicture}[scale=1]
% First atom

% Third atom

\draw[thick] (3.25,-5)--(4.25,-5);    
\draw[thick] (3.25,-4)--(4.25,-4);    
\draw[thick] (3.25,-5)--(3.25,-4);    
\draw[thick] (4.25,-5)--(4.25,-4);  

\draw[fill=black] (4,-4.25) rectangle (4.5,-3.75);  % Up Right
%\draw[fill=black] (3,-4.25) rectangle (3.5,-3.75);  % Up Left
%\draw[fill=black] (4,-5.25) rectangle (4.5,-4.75);  % Down Right
\draw[fill=black] (3,-5.25) rectangle (3.5,-4.75);  % Down Left

% Fourth atom

\draw[thick] (5,-5)--(6,-5);    
\draw[thick] (5,-4)--(6,-4);    
\draw[thick] (5,-5)--(5,-4);    
\draw[thick] (6,-5)--(6,-4);  

%\draw[fill=black] (4.75,-4.25) rectangle (5.25,-3.75);  % Up Left
\draw[fill=black] (5.75,-5.25) rectangle (6.25,-4.75);  % Down Right
%\draw[fill=black] (5.75,-4.25) rectangle (6.25,-3.75);  % Up Right
\draw[fill=black] (4.75,-5.25) rectangle (5.25,-4.75);  % Down Left

\end{tikzpicture} 
\caption{Atoms of a generalizing model that satisfy the training examples for the toy problem of identifying vertical lines in dimension $2 \times 2$. }
\label{fig:atomsGeneralization}
\end{figure}
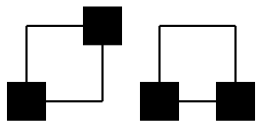

\bigskip
\bigskip

The Sparse Crossing algorithm enforces positive relations one by one using crossing just like the algorithm that finds the freest model. However the crossing is sparse in order to make the model smaller (Figure \ref{fig:atomsGeneralization}). In this way, a balance between cardinal minimization and freedom is naturally obtained. In this balance many generalizing models can be found. 

In both, the generalizing Sparse Crossing algorithm and the algorithm that finds the freest model, we start from an algebra that is very free and satisfies all the negative relations, and then we make it less free by enforcing the positive relations using crossing until we get a model of $R$. If we use full crossing, we reduce freedom just by the minimal amount needed to accommodate the positive order relations (see \textbf{Theorem \ref{crossingFree}}). If we use Sparse Crossing, we reach some compromise between freedom and size. 

We saw in \textbf{Section \ref{approximated}} that a few atoms can distinguish images with vertical lines from images without them. This is because the atoms are small, edged only to a few constants. Larger atoms are less discriminative, in the sense that they distinguish between fewer terms than smaller atoms. The atoms that resolve the MNIST dataset \cite{Lecun1998} are also small, most edged to fewer than 20 constants, many edged to as few as 5 or 6 constants which is much less than the 1568 constants needed to describe the images. Smaller atoms are more useful and increasing algebraic freedom pushes for smaller atoms.

For the toy problem the atoms produced by least-free models are contained in 4 constants each plus $v$. The atoms of the freest memory are only in two constants and $v$. The generalizing solution has 4 atoms all in two constants (plus $v$) each. 

Algebraic freedom is also at the core of the batch learning method based on adding pinning relations. Pinning relations are all negative. It can be proved that the pinning relations of a model $M$ capture all negative relations between terms. This means that whatever model we build compliant with this negative pinning relations is going to be able to distinguish between the pairs of terms that are discriminated in $M$. Any model that satisfies the pinning relations of $M$ is strictly freer than $M$. See {\bf{Theorem \ref{pinningFree}}}.

In favor of freedom maybe be argued that distinguishing between terms is by itself a desirable property of a model that understands the world. Irrespectively of whether algebraic freedom is fundamental or not, it is certainly a good counterbalance to cardinal minimization in semilattices.  

\subsection{Small and random algebraic models imply high accuracy} \label{ekIsConstant}

In the previous section we saw that Sparse Crossing balances cardinal minimization with algebraic freedom to stay away from memorizing models. In this section we prove that to find a good generalizing model we only need to pick at random a small model of the training set $R$. 
We also show that Sparse Crossing corresponds well with this theoretical result for small error. By seeking algebraic freedom, Sparse Crossing finds models of $R$ at random, away from easier-to-find, non-random, least-free models that act as memories. 

So far we have not worked with the idea of accuracy or error explicitly. Instead we have focused in finding small algebras that obey the training examples. We thus need to establish a formal link between small models and accuracy. There is some intuition from Physics, Statistics or Machine Learning that simpler models can generalize better, but it is still not obvious that the smaller the algebraic model the higher the accuracy.

We can demonstrate that for an algebra chosen at random among the ones that obey a large enough set $R$ of training examples, the expected error $\epsilon$ in a test example is (\textbf{Appendix \ref{compressionAndError}})

\begin{linenomath}
\begin{equation}
\epsilon=\frac{\ln{|\Omega_{Z}}|-\ln{|\Omega_{Z,R}}|}{|R|},
\end{equation}
\end{linenomath}

with $\Omega_{Z}$ the set of all possible atomizations with $Z$ atoms using $C$ constants and $\Omega_{Z,R}$ the set of atomizations that also satisfy $R$. The larger is the first set, $\Omega_{Z}$, the more training examples are going to be necessary to produce a desired error rate. On the other hand, the larger is the second set, $\Omega_{Z,R}$, the fewer training relations are needed. 

The quantity $\ln{|\Omega_{Z,R}}|$ measures the degeneracy of the solutions and is a subtracting term that works in to further reduce test error. $\ln{|\Omega_{Z}}|$ is an easy to calculate value that only depends upon $Z$ and the number of constants and determines an upper bound for the number of examples needed to produce an error rate $\epsilon$. Assuming that our constants come in pairs so the presence of one constant in a term implies the absence of the other (like the white pixel and the black pixel constant pair for images), the total number of atoms is $3^{p}$ with $p$ the number of pixels or $3^{C/2}$ for the constants $C=2 p$. Then $\ln{|\Omega_{Z}}|\approx \frac{ln(3)\,ZC}{2}$. Ignoring, for the moment, the beneficial effect of $\ln{|\Omega_{Z,R}}|$ and substituting above we can determine a worse-case relationship 
\newline
\begin{linenomath}
\begin{equation}
\epsilon=\frac{\ln{3}}{2} \, \frac{C}{\kappa},
\label{eq:prediction}
\end{equation}
\end{linenomath} \newline
where $\kappa$ is the compression ratio $\kappa=|R|/Z$. This expression implies that the more the algebra compresses the $R$ training examples into $Z$ atoms, the smaller the test error, and with a conversion between the two rates upper bounded by the factor $C\,\ln(3)/2$. Picking randomly an algebra that satisfies $R$ with a given compression ratio $\kappa$ would make for a good learning algorithm. It would have much better performance than using the non-stochastic, memory-like algebras of the previous section. The random picking, though, is an ideal algorithm we cannot efficiently compute.

The term $\ln{|\Omega_{Z,R}}|$ can be estimated using the symmetries of the problem (see \textbf{Appendix \ref{symmetries}}).
For the problem of detecting the presence of a vertical bar we can permute the rows and the columns of an input image without affecting its classification. For $d\times d$ image, we derived $\ln{|\Omega_{Z,R}}| > 2 \,Z\, \ln(d!)$, giving \newline 
\begin{linenomath}
\begin{equation}
\epsilon=\frac{d^{2} \ln{3}-2 \ln(d!)}{k}
\label{eq:prediction2}
\end{equation}
\end{linenomath} 
where $C=2p=2d^2$ has been used.

We have compared this prediction with the experimental results using the Sparse Crossing algorithm, and found that Sparse Crossing performs always better than this theoretical value. We have observed, however, that the higher the size of $R$ and the smaller $\epsilon$ becomes the closer are the observed values to this theoretical result. We also noticed that the harder the problem is, the faster the approach to the theoretical result as $|R|$ increases. In order to determine if we can find the theoretical prediction for a large value of $|R|$, we thus used a harder, albeit similar, problem to the detection of vertical bars. Instead of detecting the presence of vertical bars, we used the much harder problem of separating images with an even number of complete bars from images with an odd number of complete vertical bars in the presence of noise. This problem has the same symmetries than the simpler vertical bar problem so it should obey the same relation we have derived relating compression and error rates.

We generated a large number of training images. We did so by adding to an otherwise white image a random number of black vertical lines in random positions. The white background pixels are then turned into black with some probability. The training protocol started with $200$ training images. If the test error increased (decreased) in the next epoch we used $2\%$ more ($2 \%$ less) training images in the next batch. Experimental results show a clear proportionality between $\epsilon$ and $\kappa^{-1}$ with a proportionality constant that increases slowly as error rate decreases until clearly stabilizing (see \textbf{Figure \ref{fig:combinedResultsF}}) at a value that differs from the theoretical prediction in less than $10\%$ for $7 \times 7$ and about $5\%$ for $10 \times 10$ images. To get to this point we had to use as many as $37$ million examples.  

We plotted the theoretical prediction in Equation (\ref{eq:prediction2}) as a straight line, in green for $7 \times 7$ images and in blue for $10 \times 10$ (\textbf{Figure \ref{fig:accuracyfromcompression}}). The experimental results from Sparse Crossing are plotted as dots, again in green and blue for $7 \times 7$ and $10 \times 10$ images, respectively. \textbf{Figure \ref{fig:accuracyfromcompression}a} is a logarithmic plot that allows depicting the behavior of Sparse Crossing results for all values of error and compression obtained during learning. The linear scale in \textbf{Figure \ref{fig:accuracyfromcompression}b} is used to show the behavior at low errors, where algebraic learning and the theoretical expression show a good match.

We see a remarkable match between observed and theoretical values when error rates are small. For fewer training examples and higher error rates, Sparse Crossing appears more efficient that the theoretical result. This could be due to the assumptions made to derive the theoretical relation (high values of $R$ and low values of $\epsilon$) or, perhaps, Sparse Crossing is indeed more efficient than a random picking. Sparse Crossing searches the model space far away from memorizing least-free models and that can give it some edge over the random picking. However the volume taken by memorizing models compared with the overall volume of the space of models of $R$ is small, so we speculate that the measured superiority of Sparse Crossing over the theoretical derivation for the random picking is just due to the restricted validity of the theoretical result to very low error rates.

\begin{figure}[htb!]
\centering
%\hspace*{-4cm} 
\includegraphics[scale=0.5]{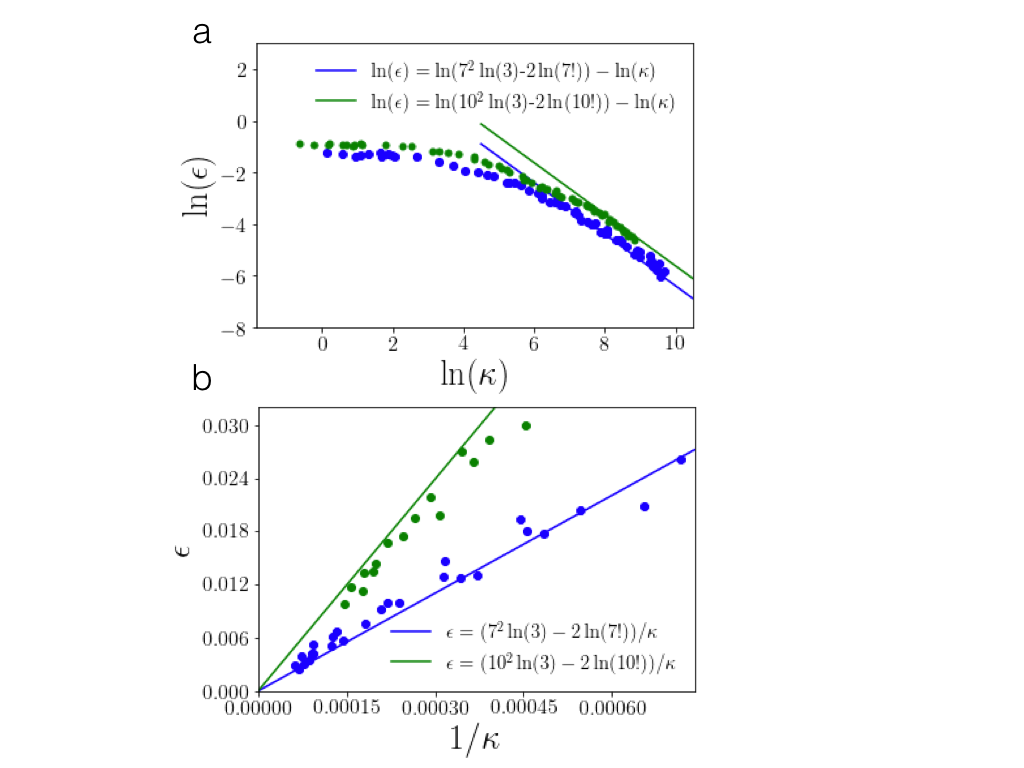} \caption{\textbf{Testing the relationship between compression and error rate in Equation (\ref{eq:prediction2})} \textbf{a}. Logarithm of error, $\ln (\epsilon)$ versus logarithm of compression, $\ln (\epsilon)$ for the problem of separating noisy images with even from odd number of vertical bars. Theoretical expression plotted as straight line, in green for $7 \times 7$ images (noise 5\%) and in blue for $10 \times 10$ images (noise 2.5\%). Algebraic learning results plotted as dots, using the same color scheme as for lines. \textbf{b} Same as a. but in linear scale to show results at low error and high compression.}
\label{fig:accuracyfromcompression}
\end{figure}

\section{Classification of hand-written digits}

Our first example of the vertical bar problem was simple enough to facilitate analysis. In this case there is a simple formula that separates positive from negative examples.  In this section we show that algebraic learning also works in real-world problems for which there is no formal or simple description. For this, we chose the standard example of hand-written digit recognition. A digit cannot be precisely defined in mathematical terms as was the case with the vertical bar, different people can write them differently and the standard dataset we use, MNIST \cite{Lecun1998}, has miss-labels in the training set, all factors making it a simple real-world case.

We used the $28 \times 28$ binary version of images of the MNIST dataset, with no pre-processing. The embedding technique is the same we applied to the toy problem of the vertical bar. An image is represented as an idempotent summation of $784$ constants representing pixels in black or white. Digits are treated as independent binary classifiers. The specific task is to learn to distinguish one digit from the rest in a supervised manner.  We use one constant per digit, each playing a similar role than the constant $v$ of the toy problem, in total $1,578$ constants. 

Our training protocol was as follows. We used $60,000$ images for training. Training epochs started with batches containing $100$ positive and $100$ negative examples. When identification accuracy in training did not increase with training epoch, the number of examples was increased by a $5\%$ until a maximum of $2,000$ positive and $2,000$ negative examples per batch. Increasing batch size and balancing of positive and negative examples seemed to accelerate convergence to some limited extent, but we did not find an impact in final accuracy values. Each digit was trained separately in a regular laptop.

For standard machine learning systems, data are separated into training, validation and test. Validation data is used to find the value of training hyperparameters that give highest accuracy in a dataset different to the one used in training. This is done to try to avoid overfitting, that is, learning specific 
features of the training data that decrease accuracy in the test set. We found no overfitting using algebraic learning (\textbf{Figure~\ref{fig:mnist1}}, top). This figure gives the error rate in the test set for the recognition of digits \enquote{0} to \enquote{9} as a function of the training epoch. The error decreases until training epoch $200$, from which it stays constant except for small fluctuations. As we did not find overfitting using algebraic learning, we did not need to use a validation dataset in our study of hand-written recognition. 

\begin{figure}[htb!]
%\centering
\hspace*{-0.5cm} 
 \includegraphics[scale=0.85]{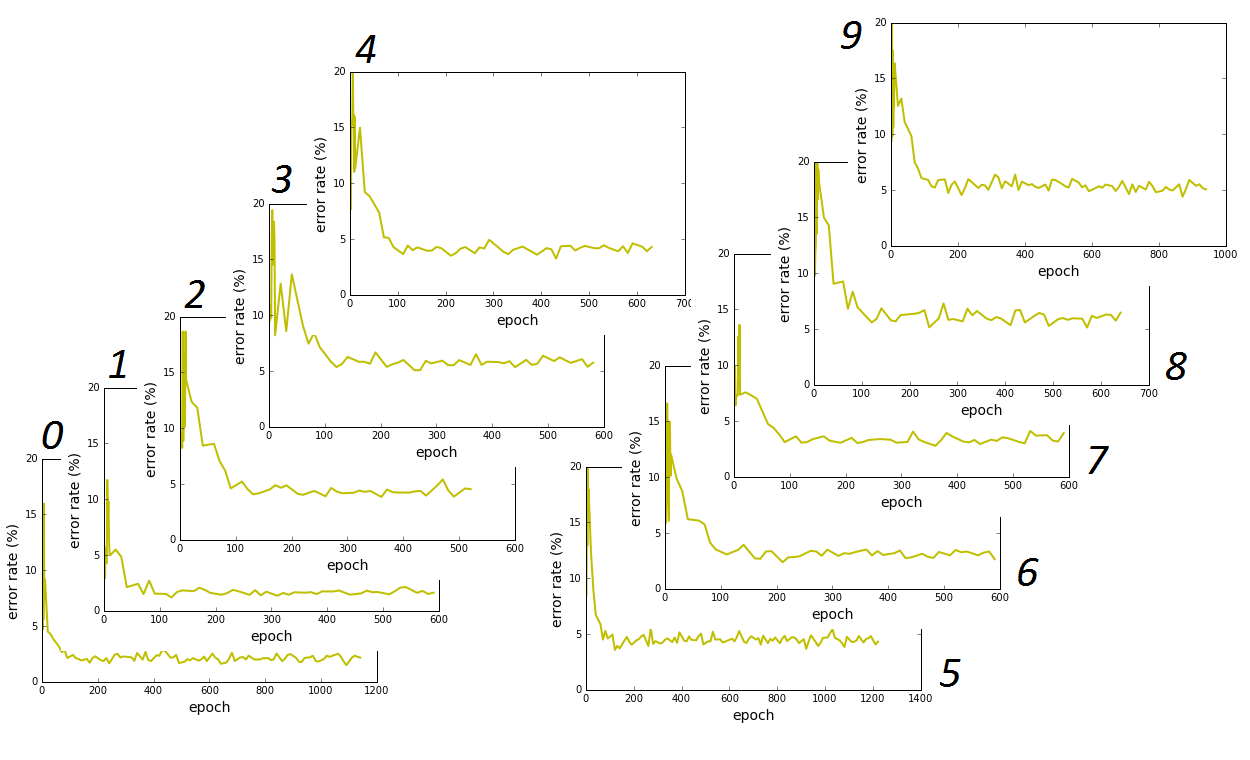}
	\caption{\textbf{Learning to distinguish one hand-written digit from the rest.} Error rate in test set for the recognition of MNIST digits \enquote{0} to \enquote{9} in dimension $28 \times 28$ at different training epochs using a single atomization. Digits are trained separately as binary classifiers. Learning takes place within the first two hundred epochs. Repeated training afterwards with the same examples does not affect error rate. }
    \label{fig:mnist1}
 \end{figure}

After training, we found that the error rate in the test dataset (a total of $10,000$ images) varies from $1.63\%$ for digit \enquote{1} to $6.46\%$ for digit \enquote{8} (see Table~\ref{tab:mnist}(A) for all digits), and an average error rate of $4.0\%$. 
 
Most atoms found consist of scattered white and black pixels, Figure~\ref{fig:mnist1b}. After training with a batch, the positive examples of the batch contain all master atoms while the negative examples contain less than all master atoms. This translates into master atomizations for which each atom is contained in at least one pixel of each positive example. Each atom corresponds to groups of pixels shared more frequently by positive examples (to give a minimum number of atoms) that appear less frequently in negative examples (to produce atoms of a minimum size so algebraic freedom is maximized). Also, pixels containing many atoms are correlated with the pixels more frequently found in the inverse of most negative examples. In this way the probability for a negative example to contain all atoms is small. 

Most atoms are contained in only a few pixels but we found a few atoms that resemble the inverse of rare versions of digits in the negative class. For example, a \enquote{6} that is very rotated in the third row and four column of \textbf{Figure~\ref{fig:mnist1b}} is an atom found during the algebraic training of digit \enquote{5} versus the rest of digits. These untypical training examples are learned by forming a specific memory with a single atom and in this way their influence in the form of the other atoms can be negligible. This may a reason for algebraic learning not being severely affected by mislabelings.

\begin{figure}
\centering
\hspace*{-0.5cm} 
 \includegraphics[scale=0.8]{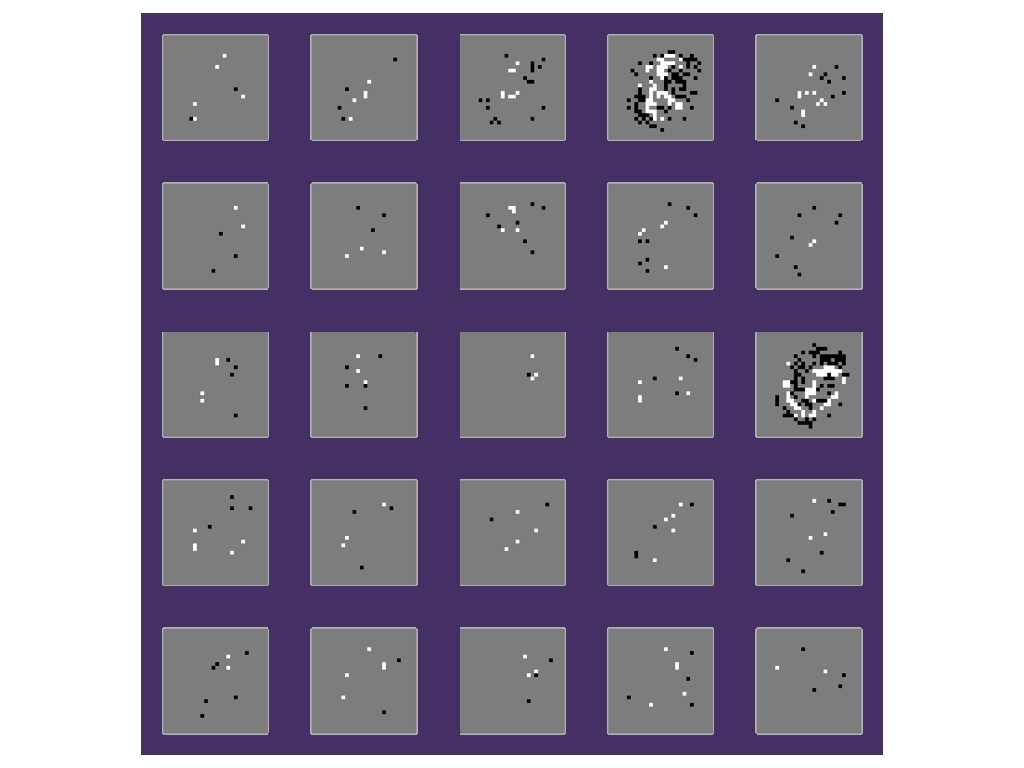}
    \caption{\textbf{Atoms in digit recognition}. Example atoms in the training for digit \enquote{5}.}
    \label{fig:mnist1b}
 \end{figure}
 
\newpage

\begin{table}[t]
\centering
\begin{minipage}{.7\textwidth}
\centering
\caption*{A: Single master atomization}
\begin{tabular}{@{}llll@{}}
\toprule
Digit      & Error (\%)  & FPR(\%)   & FNR (\%) \\ \midrule
0          &  2.17            & 2.15    &  2.35   \\
1          &  1.63          & 4.15      &  1.50   \\
2          &  4.54          & 5.47      &  7.94    \\
3          &  5.75          & 3.87      &  8.22    \\
4          &  4.29          & 4.02      &  8.15    \\    	  
5          &  4.32          & 2.40       &  7.40   \\
6          &  2.65          & 3.70       &  5.01    \\
7          &  3.92          & 6.08      &  5.84    \\
8          &  6.46          & 4.84      &  9.96    \\
9          &  5.06          & 2.15      &  7.04    \\
													\\
Average    &  4.08          & 3.83     &  6.34    \\ \bottomrule
\\
\end{tabular}
\end{minipage}
\begin{minipage}{.7\textwidth}
\centering
\caption*{B: 10 master atomizations }
%\caption{Errors, false positives and false negatives in recognition of hand-written digits in test set using a single master atomization. }
%\end{table}
%\begin{table}[]
\begin{tabular}{@{}llll@{}}
\toprule
Digit      & Error (\%)  & FPR(\%)   & FNR (\%) \\ \midrule
0          &  0.97            & 0.99    &  0.82   \\
1          &  0.69         & 0.65      &  0.97   \\
2          &  1.60         & 1.40      &  3.29    \\
3          &  2.44         & 2.40      &  2.77    \\
4          &  1.80         & 1.68     &  2.85   \\    	  
5          &  1.61        & 1.57      &  2.02  \\
6          &  1.48         & 1.41      &  2.09   \\
7          &  1.36          & 1.14    &  3.31    \\
8          &  2.54         & 2.47     &  3.18    \\
9          &  2.29         & 2.12     &  3.77    \\
													\\
Average    &  1,68          & 1.52      &  2.41    \\ \bottomrule
     \\
\end{tabular}
\end{minipage}
\caption{Errors, false positives and false negatives in recognition of hand-written digits in test set for (A) one master atomization, and (B) for 10 master atomizations requiring $5$ or more agreements to classify an example as positive and fewer than $5$ as negative. } 
\label{tab:mnist}
\end{table}

\subsection{Using several master atomizations} \label{multipleAtomizations}

The result of embedding a batch of training examples is an atomization satisfying all the examples in the batch. At each epoch, a suitable atomization of the dual is chosen of the many possible and then the Sparse Crossing algorithm produces an atomization of the master consistent with the training set and the pinning relations or a subset of them. Enforcing of the batch is carried out using a stochastic algorithm over the chosen atomization of the dual, which contains the pinning terms learned in previous epochs. The enforcing of a batch then results in one of the many suitable atomizations of the master algebra.

Nothing prevents us from using more than one atomization for a training or a test batch. Changing the atomization for the dual results in different trace constraints and, hence, in a different atomization for the master. For the hadwritten digits problem, we counted how many among $10$ atomizations classify the positive test images as positive. In \textbf{Figure~\ref{fig:mnist2}} we show results for digits \enquote{0} and \enquote{9}. For digit \enquote{0}, for example, approximately $90 \%$ of the positive test images have the $10$ atomizations agreeing in that a digit is indeed a digit \enquote{0}. For less than $8\%$ of the test cases, it is $9$ out of the $10$ atomizations that agree in that the digit is a \enquote{0}. Agreement of less of the algebras meet with even smaller percentages of the cases. We found that, for digit \enquote{0}, more algebras agree the more round the digit (\textbf{Figure~\ref{fig:mnist2}}, top insets). For digit \enquote{9}, disagreement exists for incomplete and rotated version of the digit (\textbf{Figure~\ref{fig:mnist2}}, top insets). Using more than one atomization is a simple procedure that extracts more information from the algebra. In \textbf{Table ~\ref{tab:mnist}B} we give test set results using the atomizations obtained from the last $10$ epochs of training.

\begin{figure}[!htb]
\centering
\hspace*{-1cm} 
 \includegraphics[scale=0.8]{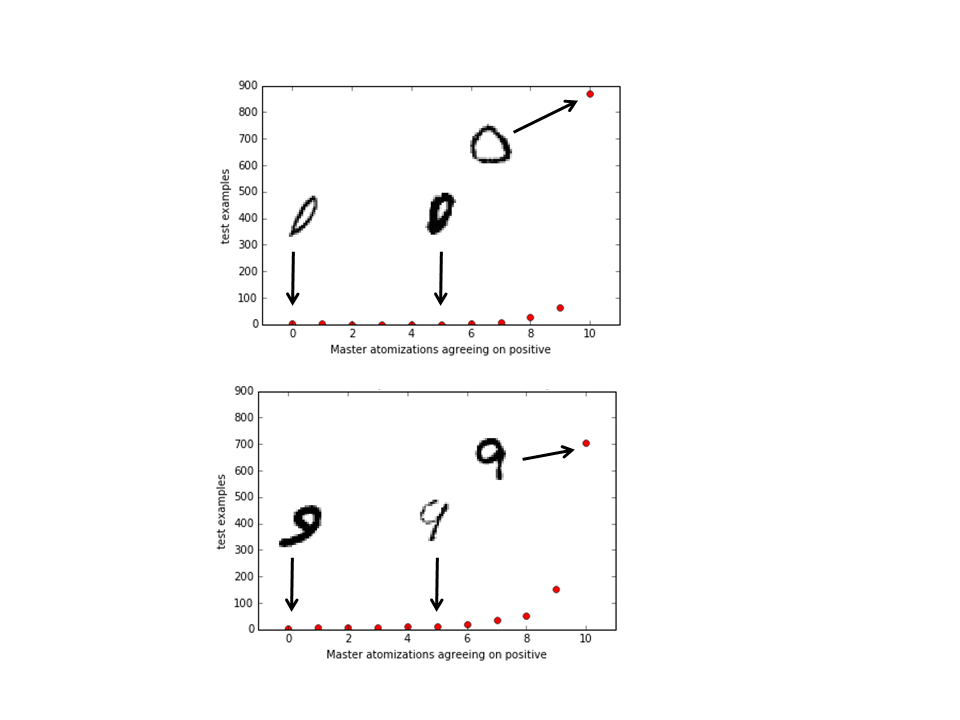}
    \caption{\textbf{Agreement of 10 master atomizations.} Top: Count of the number of test examples correctly classified as digit \enquote{0} by $0, 1, 2,...,10$ master atomizations out of 10 atomizations. Most images are correctly identified by all the atomizations. Insets are examples of images of digit \enquote{0} for complete agreement, disagreement and complete lack of identification. Bottom: Same but for digit \enquote{9}}
    \label{fig:mnist2}
 \end{figure}

The number of examples correctly classified as positive in Figure~\ref{fig:mnist2} increases approximately as an exponential with the number of agreements. A very similar exponentially-looking function is observed when we plot the number of negative examples versus the number of atomizations correctly agreeing on a negative (plot not shown). The exponential increase might be understood with each image $I$ having a probability $p_I$ of misclassification when using a single atomization. The value $p_I$ is typically low for most example images but it may be high (even closer to 1) for some difficult images. For each image, a binomial distribution describes the number of times it is misclassified among the $10$ tests corresponding with $10$ different atomizations. The distribution for all test images should be a mixture of binomial distributions with different values of $p_I$ and, with most images been easy to identify, the weight of the easily identifiable examples dominates producing the exponentially-looking distribution of \textbf{Figure~\ref{fig:mnist2}}.  

There is a subset of the test examples for which a small probability of misclassification exists even though they may look very clear to a human. However, the risk of misclassification due to this intrinsic probability of failure goes away exponentially if multiple atomizations are used. A few atomizations should suffice to classify correctly these examples with small $p_I$. On the other hand, doesn't matter how many atomizations we use we cannot expect to correctly classify the examples with high $p_I$. In this case only additional training with new examples can improve the rate of success. Training with the same examples neither increases nor decreases the error rate. 

Using the criterion that at least $7$ or more of the atomizations need to agree that an image is a \enquote{0} to declare it a \enquote{0}, obtains an error rate of $0.6\%$ for this digit, with false positive and negative ratios of $\textnormal{FPR}=0.5\%$ and $\textnormal{FNR}=1.22\%$, respectively. The same criterion finds for digit \enquote{9} an error of $1.36\%$, and $\textnormal{FPR}=0.8\%$ and $\textnormal{FNR}=6.5\%$. The average over all digits is found to give an error rate of $1.07\%$ and $\textnormal{FPR}=0.56\%$ and $\textnormal{FNR}=5.6\%$. 

A criterion consisting of requiring $5$ or more atomizations to agree that an example is positive gives more balanced false and negative ratios, $\textnormal{FPR}=1.5\%$ and $\textnormal{FNR}=2.4\%$ but a higher total error rate of $1.68\%$  (see \textbf{Table ~\ref{tab:mnist}B} for all digits). A higher false negative ratio is consistent with the fact then we have $10$ times more negative examples than positive examples. 

The MNIST dataset has a limited training set that does not allow to see the effect of multiple atomizations in test results at very low error rates. For the problem of separating noisy images with even vs odd number of vertical bars we have an unlimited supply of training examples. In this case we get the results of \textbf{Figure~\ref{fig:LogFPRFNR}}, on the right. The false positive and negative ratios using $10$ atomizations decrease with the training epochs and at all times during the training remain significantly smaller than the error rate obtained with a single atomization. An almost perfect exponential dependence of the example count with the number of agreements (like in \textbf{Figure~\ref{fig:mnist2}}) is also observed for both, the positive and negative examples (data not shown). It doesn't matter how much training we do there is always an advantage in using a few master atomizations to extract the most information from the algebra.

\begin{figure}
\centering
 \includegraphics[scale=0.9]{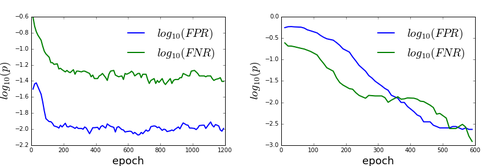}
    \caption{\textbf{ False positive and negative ratios using multiple master atomizations.} Left: The logarithm of the false positive and false negative ratios for digit \enquote{5} in dimension $28 \times 28$ requiring $5$ or more agreements out of $10$ master atomizations to classify an example as positive and fewer than $5$ as negative. False positive and negative ratios decrease fast at first but subsequent training using the same training examples does not improve, nor deteriorate, results. Right: Same criterion applied to the classification of even vs odd number of vertical bars in images of size $10 \times 10$ in the presence of 2.5\% noise. In this case new training examples are used at each epoch which results in monotonically decreasing false positive and negative ratios. }
 \label{fig:LogFPRFNR}
 \end{figure}

For the MNIST dataset, using $10$ master atomizations leads to a reduction of the overall error rate from $4.0\%$ to $1.07\%$. We asked if this is the best we can do. To answer this, in the following we investigate how much information can be extracted from the pinning terms. 

For the handwritten digits, a single atomization in the master has of the order of few hundred atoms. As a a consequence of cardinal minimization of the algebra, each negative example of a training batch contains typically all atoms except one. Cardinal minimization is finding the right atoms but is not optimizing error rate. Error rate decreases as a side effect of cardinal minimization. In fact there is no need other than reducing the size of the representation for requiring a single atom miss to separate negative from positive examples. 

To further reduce error rate, we may consider a separation of positive from negative examples using more than a single atom. Pinning terms are derived from atoms so atoms can be recovered from pinning terms. If we convert all pinning terms back into atoms, negative examples are separated from positive examples by many atom misses. However, many positive examples now also have a few atom misses. We thus proceed in the following way. Define \textit{misses cut-off} as the arbitrary maximum number of misses allowed for an example to be declared positive. For digit \enquote{0}, for example, we find an interval of \textit{misses cut-off} of $10-50$ with an error rate below $1\%$, with a minimum of $0.36\%$ error rate for $23$ misses. Digits differ in the optimal misses cut-off, with values from $13$ to $27$, but all have quite flat error rates in a wide interval. Different cut-offs could be defined to minimize error, false positive or false negative ratios. The best error rate obtained gives an overall $0.78\%$ for the $10$ digits. Error rates for all digits are given in the table of \textbf{Figure~\ref{fig:mnist3}} for the cut-offs that minimize error. 

This value of $0.78\%$ for the error rate is lower but similar to the $1.07\%$ error rate obtained using $10$ master atomizations. 

\begin{figure}[!htb]
%\centering
\hspace*{-1.4cm}  
\vspace*{-1.5cm} 
 \includegraphics[scale=0.7]{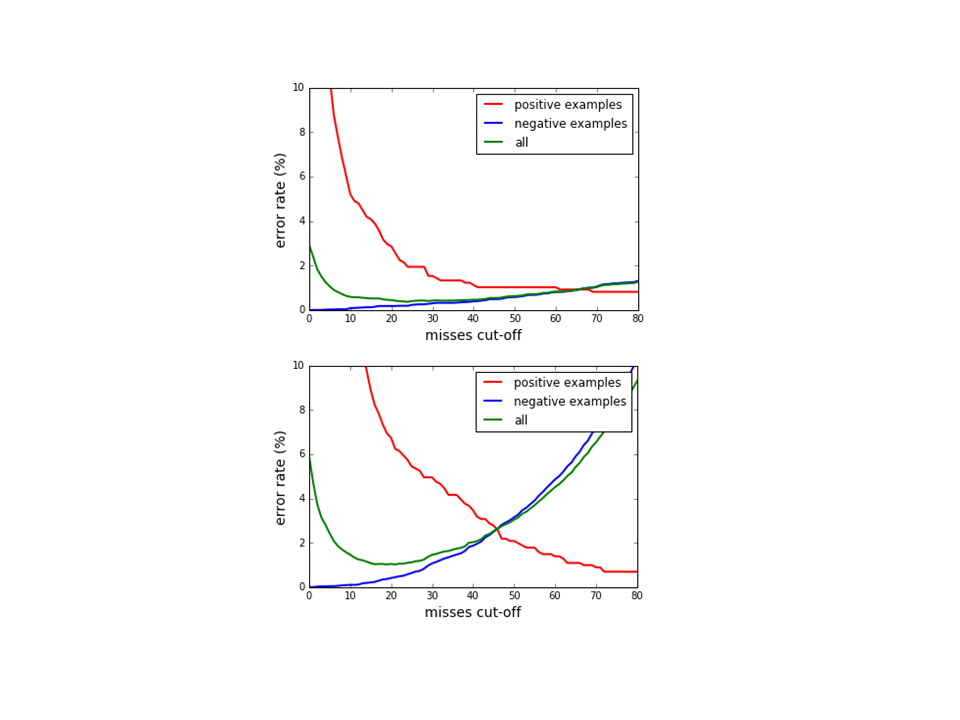}
    %\begin{table}[]
%\caption*{A: Single master atomization}
\centering
%\label{mnist_optimal}
\begin{tabular}{@{}lllll@{}} 
%\vspace*{-1cm}  
\toprule

%\multicolumn{2}{c}{Item} &            \\ \cmidrule(r){1-2}

Digit      & Error (\%) & Misses & FPR(\%)   & FNR (\%) \\ \midrule
0          &  0,36   &24     & 0,19    &  1,90   \\
1          &  0,28   &15    & 0,11     &  1,80   \\
2          &  0,75   &26     & 0,45      &  3,5    \\
3          &  1,083  &17     & 0,51     &  6,24    \\
4          &  0,841  &27       & 0,49     & 4,00   \\  
5          &  0,792  &27   & 0,28      &  5,40  \\
6          &  0,76   &13  & 0,20     &  5,80  \\
7          &  0,735  &20    & 0,33    &  4,38    \\
8          &  1,235  &22    & 0,75    &  5,60    \\
9          &  1,017  &19     & 0,36    &  6,93    \\
													\\
Average    & 0,78 &    & 0,37 & 4,55 \\ \bottomrule
\end{tabular}
%\end{table}
    
        \caption{\textbf{How pinning terms best distinguish positive from negative test images using different number of allowed misses (\textit{misses cut-off}).} Top: Error rate and false positive ratios for test images of digit \enquote{0} using all the atoms recovered from the pinning terms of the algebra. Errors are shown as a function the \textit{misses cut-off}. Middle: Same as top but for digit \enquote{9}. Bottom: For each digit, minimal error rate and corresponding false positive and negative ratios.} \label{fig:mnist3}
\end{figure}

\section{Solving the $N$-Queens Completion Problem}

So far we have used three supervised learning examples in which an algebra is trained to learn from data. Algebras can learn from examples but they can also incorporate formal relationships, for example symmetries or known constraints. They are also capable of learning in unsupervised manner.

In the following we study the $N$-Queens Completion problem as such a case in which we incorporate several relationships. In this problem, we fix $N$ queens to $N$ positions of a $M \times M$ chessboard, and we want the algebra to find how to add $M-N$ queens to the board so none of the $M$ queens attack each other.  In the following we detail how to embed this problem into the algebra.  

\subsection{Board description}

A simple and effective embedding uses of $2N^{2}$ constants to describe the board, two constants for each board square. A constant $Q_{xy}$ describes that board position $(x,y)$ contains a queen. A constant $E_{xy}$ describes that board position $(x,y)$ is empty. There is no need to distinguish white and black squares.  

A board or subset of the board is represented by a term $B$ that is an idempotent summation of some constants $Q_{xy}$ and $E_{xy}$.

\subsection{Attack rules}

To encode the queen attack rules we used an additional constant $U$. Let $A(x,y)$ be the set of board squares attacked by a queen at $(x,y)$, and let's agree that $A(x,y)$ does not include the square $(x,y)$. When avoiding attacks, represented by the presence of constant $U$, a queen at $(x,y)$ implies the presence of empty squares at each position in the set $A(x,y)$,
\begin{linenomath}
\begin{equation}
\forall x\forall y \forall i \forall j \,\left( (i,j) \in A(x,y) \, \Rightarrow \,  E_{ij} < U\odot Q_{xy}  \right).
\end{equation}
\end{linenomath}
By adding these rules to the training set, the idempotent summation of constant $U$ and a board $B$ (or a subset of a board) with one or more queens results in an extended board that has the empty square constants at positions attacked by the queens in $B$.  

\subsection{Rule to add queens}

The previous rule adds empty squares to a board subset. We can also write a rule that adds queens by extending the definition of constant $U$. When the term $B$ contains a subset of a board with a row or column of empty squares missing just one square, the summation of $U$ and $B$ completes the column or row by adding a queen, 
\begin{linenomath}
\begin{align}
&\forall x\forall y (Q_{xy} < U \odot_{i, i \neq x} E_{iy})   \\
&\forall x\forall y (Q_{xy} < U \odot_{j, j \neq y} E_{xj}), 
\end{align}
\end{linenomath}
where we used $\odot_{i, i \neq x}$ to represent the idempotent summation of all values of $i$ except $i = x$.   

\subsection{Definition of $R_{x}$ and $C_{y}$}

To place $M$ non-attacking queens on a $M \times M$ board, all rows and all columns should have a queen. We introduced an additional set of  $2N$ constants, $R_{x}$ and $C_{y}$, to require that a queen must be present at every row $x$ and at every column $y$ of the board $B$.  
We define these two constants with the help of the order relations
\begin{linenomath}
\begin{equation}
\forall x\forall y (R_{x} \odot C_{y} < Q_{xy}),
\end{equation}
\end{linenomath}
and the negative relations
\begin{linenomath}
\begin{equation}
\forall x  (R_{x} \not < (\odot_{ij, i \neq x } Q_{ij}) \odot (\odot_{ij} E_{ij})), 
\end{equation}
\end{linenomath}
and
\begin{linenomath}
\begin{equation}
\forall y  (C_{x} \not < (\odot_{ij, j \neq y } Q_{ij}) \odot (\odot_{ij} E_{ij})), 
\end{equation}
\end{linenomath}
where idempotent summations run along all possible values of indexes $i$ and $j$ and $\odot_{ij, i \neq x}$ represents a summation for all board positions except those with row equal to $x$.

\subsection{Independence rules}

Now let's encode the independence of board square constants. No term $B$ representing a complete or partial chessboard should contain a queen at $(x,y)$, $Q_{xy} \not < B$, unless $Q_{xy}$ is a component in the explicit definition of term $B$. To capture this we require that
\begin{linenomath}
\begin{equation}
\forall x\forall y  (Q_{xy} \not < (\odot_{ij, (i,j) \neq (x,y) } Q_{ij}) \odot (\odot_{ij} E_{ij})),  
\end{equation}
\end{linenomath}
and analogously for empty spaces,
\begin{linenomath}
\begin{equation}
\forall x\forall y  (E_{xy} \not < (\odot_{ij, (i,j) \neq (x,y) } E_{ij}) \odot (\odot_{ij} Q_{ij})). 
\end{equation} 
\end{linenomath}
If $E_{xy}$ is not one of the components defining term $B$ then it follows that $B < (\odot_{ij, (i,j) \neq (x,y) } E_{ij}) \odot (\odot_{ij} Q_{ij})$ and than $E_{xy} < B$ contradicts the independence relation above. Note that if the rule to add a queen and the attack rules were defined without using the extra constant $U$ they would contradict the independence rules.

Similar relations can be written for constants $R_{x}$ and $C_{y}$ representing any queen in a row $x$ or in a column $y$ as
\begin{linenomath}
\begin{align}
& \forall x (R_{x} \not < (\odot_{ij, i \neq x} Q_{ij}) \odot (\odot_{ij} E_{ij}))  \\ 
& \forall y (C_{y} \not < (\odot_{ij, j \neq y} Q_{ij}) \odot ( \odot_{ij} E_{ij}))
\end{align} 
\end{linenomath}
We can add additional independence order relations such as
\begin{linenomath}
\begin{align}
&\forall x\forall y  (Q_{xy} \not < U \odot E_{xy}), \\ &\forall x\forall y  (E_{xy} \not < U \odot Q_{xy}), \\
&\forall x\forall y  (Q_{xy} \not < U \odot_i R_{i} \odot_j C_{j}), \, \, \, \text{and} \\
&\forall x\forall y  (E_{xy} \not < U \odot_i R_{i} \odot_j C_{j}).
\end{align} 
\end{linenomath}
 
\subsection{Embedding an $N$ Queens Completion game}

We refer to all the above rules as \enquote{the rule set}. Now we are going to add additional relations to encode a particular $N$-completion game. We use a constant $S$ to represent the solution we are looking for.   We want to find a completion for a board already with, say, two queens fixed at positions $(p,q)$ and $(r,s)$. To require a solution with the two fixed queens we add the relation:
\begin{linenomath}
\begin{align}
&Q_{pq} \odot Q_{rs} < S.
\end{align}
\end{linenomath}
The solution $S$ should be a particular configuration of queens and empty positions on a board, and must therefore be contained in the set of all possible configurations, or equivalently in the idempotent summation of all board squares both empty and with a queen,
\begin{linenomath}
\begin{equation}
S <  \odot_{ij} \, (E_{ij} \odot Q_{ij}).
\end{equation}
\end{linenomath}
However, no board position can be simultaneously empty and with a queen, that is, it cannot contain both $E_{xy}$ and  $Q_{xy}$, 
\begin{linenomath}
\begin{equation}
\forall x\forall y  (E_{xy} \odot Q_{xy}) \not  <  U \odot  S.
\end{equation}
\end{linenomath}
We also know that $M$ non-attacking queens on a $M \times M$ chessboard must occupy each row, 
\begin{linenomath}
\begin{align}
&\forall x  (R_{x} < S),
\end{align}
\end{linenomath}
and also each column,
\begin{linenomath}
\begin{align}
&\forall y (C_{y} < S). 
\end{align}
\end{linenomath}

\subsection{Solving the $2$-blocked $8 \times 8$ completion problem}

The \enquote{rule set} and the rules to \enquote{embed an $N$-queen completion game} are the complete set $R$ of input order relations. We enforce $R$ at each epoch. {\bf{Figure}~\ref{fig:chessboard8}} gives the results of several epochs of algebraic learning. We chose the initial state to be two queens in positions \textbf{b4} and \textbf{d5} (\textbf{Figure~\ref{fig:chessboard8}}, queens in blue). After the first run of the Sparse Crossing algorithm, we get an incomplete board (\textbf{Figure~\ref{fig:chessboard8}}, epoch 1). The board is plotted by querying at each board square if relation $Q_{xy}<S$ or relation $E_{xy}<S$ is satisfied. When we find that neither of the two relations are satisfied we add a question mark to that position in \textbf{Figure~\ref{fig:chessboard8}}. In this first epoch, all positions except those attacked by the two initial queens are in question mark.   

The second epoch has some pinning terms and pinning relations defined from the atoms generated in the first epoch.  We then find a new atomization for the dual that also satisfies the pinning relations generated in the first epoch. $R$ is already satisfied in the master before the second epoch starts, however, the trace constraints are not because the atomization of the dual has changed. Enforcing trace constraints introduce new atoms in the master that have to be crossed. As a result we get a different atomization for the solution $S$ and the other constants. This is not very different from what we did for the handwriten character recognition. Using multiple master atomizations, extracts more information from the pinning terms. With each atomization of the master, the algebra looks at the solution from a \enquote{different angle} and it can learn from it by creating new pinning terms and relations.

Eventually, at epoch $12$, a complete board is found. It is remarkable that a solution is found without searching for a particular configuration. Following the chessboards generated at each epoch, the solution seems to appear \enquote{out of the blue} in the sense of not showing any intermediate boards. The algebra is learning the structure of the search space. When enough pinning terms are added, the algebra can produce board solutions. 

The approach seems to benefit form inserting idle cycles (epochs) for which no fixed queens are set and only the order relations of the rule set are enforced. In the problem of {\bf{Figure}~\ref{fig:chessboard8}}, iddle cycles were used in epochs 8,9 and 10 and 19, 20 and 21. In this way, we could find the two different completions compatible with the initial queens in epochs 12 and 28.

\begin{figure}
\centering
\hspace*{-1cm} 
\includegraphics[scale=0.8]{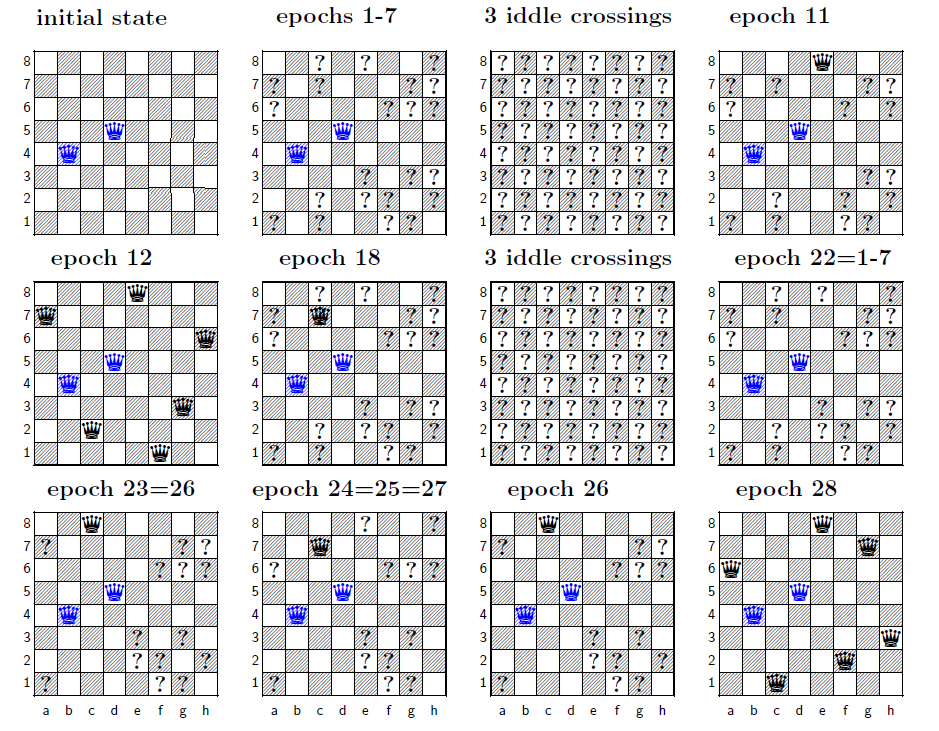} \caption{\textbf{Algebraic learning of the 2-blocked $8 \times 8$ queens problem.} Chessboards at different learning epochs arranged in increasing epoch order. They are generated by querying the algebra at each board position for presence of queen, presence of empty square or absence of both, marked as '?'.}
\label{fig:chessboard8}
\end{figure}

\begin{figure}[t]
\begin{center}
 \includegraphics[scale=0.8]{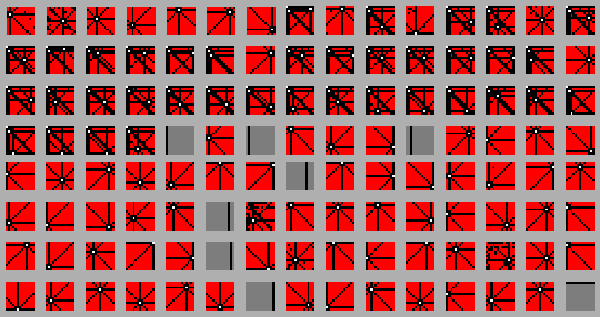}
\end{center}
    \caption{\textbf{A subset of atoms for the 2-blocked $13 \times 13$ queens problem.} Master atoms for epoch $42$, with white corresponding to empty, black to queen, red both, and gray none. }
    \label{fig:chessboard13}
 \end{figure}

\subsection{Algebraic learning in larger chessboards}

The straightforward approach of the previous section works well for $8 \times 8$ or larger boards, like $13 \times 13$. At least for low dimensions, algebraic learning is capable of finding complete boards \emph{at once}, without apprently following intermediate steps.

We found, not surprisingly, that building solutions step by step works better for larger boards. At each epoch we inserted a queen at some random but legal position. The algebra had no instructions regarding what to do with the inserted queens. It can keep them or eliminate them. A queen is inserted in the board by adding the relation $Q_{xy} < S$ in just one single epoch. In following epochs it is up to the algebra to keep it there or not.  

We studied the case of a $17 \times 17$ chessboard. When queens are located at legal but random positions the usual outcome is a board with fewer than $17$ queens. Adding more queens is not possible without attacking others. The situation is different for algebraic learning. 

Consider the following experiment. We ran 33 attempts at finding a complete board for 17 queens with one blocked. Each attempt consisted of 20 epochs, 17 epochs were a legal queen is added and 3 additional idle epochs. The first attempt starts with a blocked queen at position \textbf{c10} and in every epoch a legal queen is added, if possible. For the first 26 attempts no complete board was achieved. 

At attempt 27 a complete board was found ({\bf{Figure}~\ref{fig:chessboard17}}, left). In this figure, the initial blocked queen is in blue at position \textbf{c10}, in red the queens we randomly introduced and in black the queens the algebra found. 

The simplified dynamics of this experiment can be summarized in the following way. The board starts with a blocked queen (1 queen on board), algebraic learning rans for an epoch and resulted a board with two other queens (3), we then added a queen randomly to a legal position (4). A new learning epoch kept this queen but eliminated the previous ones (2), then five queens were inserted randomly one by one and kept (7 queens on board). Another queen was then inserted and the algebra added a new one (9 on board), then again three more queens were inserted (12). Finally, one queen was randomly inserted and 4 more created by the algebra at once, making a total of 17 and the board was completed. 

At attempt 33 another complete board is found ({\bf{Figure}~\ref{fig:chessboard17}}, right). The simplified dynamics at this attempt was: starting with a blocked queen at \textbf{c10} (1 queen on board), we added two queens (3 queens on board) and two were inserted by the algebra (5 on board), more queens were randomly inserted for 5 steps (10 on board) and then 7 appeared at once, added by the algebra to a total of 17 legal queens.

Many atoms obtained resemble legal or almost legal board subsets. Most of these atoms correspond to boards with a few queens, some a single one, but others look like boards with small groups and some with larger groups of queens ({\bf{Figure}~\ref{fig:chessboard17}}, bottom). Atoms look similar when no queens are manually inserted ({\bf{Figure}~\ref{fig:chessboard13}}).

We repeated the same experiment $10$ times with a $17 \times 17$ board. Each experiment consists on a number of attempts to produce a complete board, typically around $60$ (see {\bf{Table}~\ref{tab:queens_experiments}}). Each of these attempts consists in adding at most $17$ queens. We also tested that these results of algebraic learning cannot be explained by random placement of queens on legal positions. We compared the results of the $10$ experiments with a purely random placement of $17$ legal queens. The purely random case produces a board in an attempt with probability $p=0.008$. For each experiment we give in {\bf{Table}~\ref{tab:queens_experiments}} the probability to produce by chance a similar or better result. Overall, this probability is less than $p=7 \times 10^{-12}$. In contrast to the random case, algebraic learning seems in many cases to take a number of attempts to produce a full board (say epoch 24 in Experiment 1 or 37 in Experiment 4) and then can quickly produce more boards, sometimes 6, 7 and 8 complete boards are found in very few attempts. 

In some of the experiments the same full board configuration is found more than once (marked with an $r$ in the table). Since finding repeated boards by chance is unlikely this is probbaly due to the recall of previous attempts. To compute the p-values, repeated boards were not counted as valid.

\begin{figure}
\begin{center}
 \includegraphics[scale=0.83]{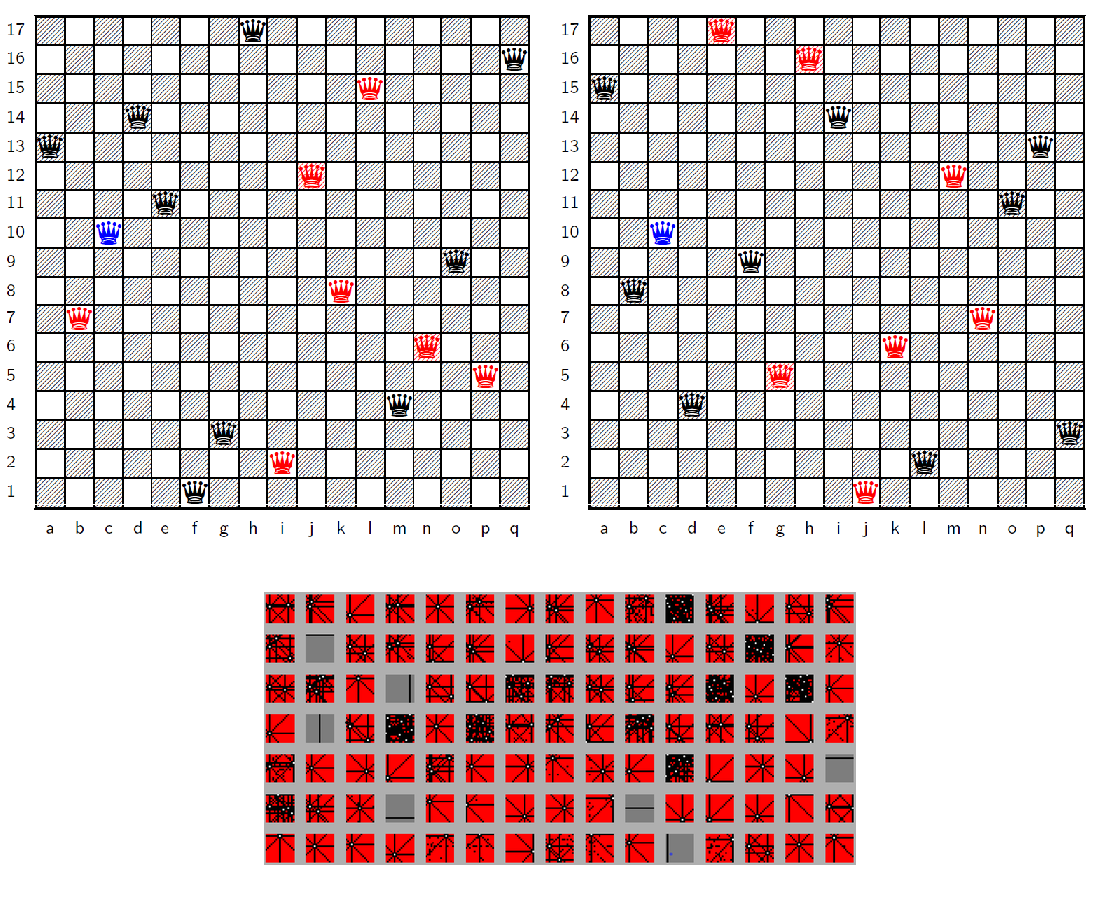}
\end{center}
    \caption{\textbf{Algebraic learning of the 1-blocked $17 \times 17$ queens problem.} Top: Two solutions found by algebraic learning. Starting from the blocked blue queen in each epoch a legal queen (red) is inserted. Inserted queens can be kept or discarded. In black queens placed by the algebra.  Bottom: A subset of atoms of $M$ for epoch 420. Each atom is represented by the constants that contain it, with white corresponding to the empty square constant, black to the queen, red both constants, and gray none.}
\label{fig:chessboard17}
\end{figure}

\begin{table}[b]
\begin{tabular}{@{}llllll@{}}
\toprule
Exp        & Epoch with full board              & Attempts & Full boards     & p-value \\ \midrule
 \\
1          &  24, 25, 26, 29, 32r, 39, 60, 67r         & 77       &  8 (2 repeated)  & $4\times 10^{-5}$  \\
2          &  19, 35                             & 51       &  2              & $0.06$       \\
3          &  39                                & 58       &  1              & $0.37$       \\
4          &  37, 40, 46, 49, 50r, 54                & 55       &  6 (1r)         & $8.5 \times 10^{-5}$ \\    5          &  None                              & 70       &  0              & $1$                   \\
6          &  9, 22                              & 66       &  2              & $0.1$    \\
7          &  18, 21, 26, 40, 44, 47r, 48r, 66r        & 79       &  8 (3r)         & $4.7\times 10^{-4}$  \\
8          &  34, 41, 52                          & 54       &  3              & $0.01$    \\
9          &  5, 6, 20, 41, 42r, 71, 74               & 77       &  7 (1r)          & $4.0 \times 10^{-5}$   \\
10         &  28, 34                             & 37      &  2  	          &$0.04$ \\ \bottomrule
\end{tabular}
\caption{\textbf{10 experiments in algebraic learning for the 1-blocked $17 \times 17$ queens problem}}\label{tab:queens_experiments}
\end{table}

\section{Discussion} \label{discussion}
 
We have shown how algebraic representations can be used to learn from data. The algebraic learning technique we propose is a parameter-free method. A small algebraic model grows out of operations on the data and it can generalize without overfitting. We used four examples to illustrate some of its properties. In the following we summarize the results obtained to point to open questions.

We used the simple case of learning whether an image contains a vertical bar to introduce the reader to Sparse Crossing. In this case the result of learning is a compression of examples into a few atoms that correctly classify the images. We also used this example as a simple case in which it is possible to obtain the definition of a class from the atoms, obtaining explicitly a definition of a bar in an image. This goes beyond the classification problem and we believe that further development of these techniques may lead to the ability to derive formal concepts from data. To understand the challenge, we suggest reading \textbf{Appendix \ref{perfectAtomizationAppendix}} as it deals with the inverse problem of predicting atoms from known formal descriptions.

The vertical bar problem was also used to demonstrate explicitly that the number of possible solutions of the system with low but non-zero error is astronomically large and that the algebra only needs to find one. This helps understanding why finding a generalizing model with algebraic learning is not as hard as finding a needle in a haystack. In addition, we show that these solutions can be small.

As algebraic learning is only aimed at producing small algebraic models we had to show that the compression of a large number of input constraints (training examples) into small representations translates into accuracy. We proved that an algebra picked at random among those that correctly model the training data has a test error that is inversely proportional to how much it compresses training examples into atoms. We have observed this inverse proportionality in all problems for which many training examples were available so error could be made arbitrarily small. We proved that for a randomly chosen model the dependence has the form $\epsilon=\log(3) \, C/2k$. We used the problem of distinguishing images with even or odd number of bars to show that algebraic learning using Sparse Crossing also obeys this theoretical result when error rates are low. 

We argued that the search for small generalizing models requires avoiding non-free models so increasing algebraic freedom should also be pursued when searching for models of small cardinal. If algebraic freedom is not sought, the models obtained are far from random and do not generalize.

We improved accuracy results using several master atomizations instead of a single one. The way algebraic learning works, the information learned is accumulated in pinning terms and pinning relations and not only in master atoms of a single batch. To better extract the information contained in the pinning relations we used 10 master atomizations. As different master atomizations express different pinning terms, using multiple atomizations stochastically samples the information gathered by pinning relations. We showed that using a majority voting of multiple atomizations gives a much higher accuracy for the problem of identifying hand-written digits. Each time the input data is represented in the algebra with a different atomization, additional information is extracted.

An alternative but not very different method would be to use many algebras in parallel. Indeed, one key advantage of algebras is that learning is crystalized into atoms and pinning relations, and these atoms and relations can be shared among algebras, which makes the system parallelizable at large scale. Learning can occur in disconnected algebras working in the same or different problems, in parallel, for as long as necessary before information is shared. 

Perhaps the more conceptually interesting example was how algebraic learning can solve the $N$-blocked $M \times M$ Queen Completion problem \cite{Gent2017}. At the technical level, it illustrates how algebraic learning can naturally incorporate any kind of extra relations, in this case teaching the system what the board, the legal moves and the goal of the game are. At the conceptual level, it stands out as an interesting case of unsupervised learning. In this case, algebraic learning learns the form of the search space within the given constraints and uses it to boost the search for a solution of the puzzle. For large chess boards, we accelerated the learning by using some stochastic input of legal queens. The method to find a solution is very different to others proposed before. There is no systematic search that guarantees a solution, there is no backtracking and the method is not specific for the task. Algebraic learning may be useful as a general purpose technique to explain to a machine which are the rules of a problem (a game) so the machine can learn to search for solutions compatible with the rules. These solutions are relevant as they correspond with small algebraic representations of the current game, its general rules and the previously gathered experience.

Algebraic Learning shows instances of human-to-algebra, algebra-to-algebra and algebra-to-human communication. Human-to-algebra communication was exemplified in how we taught the game rules of the $N$ Queens Completion problem to an algebra. Algebra-to-algebra communication takes place when algebras share pinning terms and relations. A simple example of algebra-to-human communication was given for an algebra conceptualizing that positive example images contained a vertical bar.

Our semilattices produced new atoms in learning, but a more general approach should also modify the constants, which should allow for an improved conversion of compression into accuracy. Note that test error is proportional to the number of constants $C$ in the formula that relates error to compression. While Sparse Crossing leaves the constants fixed the proportionality of error with $C$ indicates that further developments should involve learning new constants to reduce not only the number of atoms but also the number of constants they depend upon.

We have used semilattices because they have a rich structure \cite{Papert} despite their simplicity. They are simplest algebras with idempotent operators. Other idempotent algebras such as semilattices extended with unary operators may be relevant to machine learning. Unary operators can be used to easily extend the techniques in this paper to finitely-generated\cite{Burris} infinite models, which could be used to apply machine learning to more abstract domains.

While algebraic learning is not intended to be a model of a brain, there are some interesting parallels. First, atoms relate to constants by an OR operation, that parallels the activation of neurons by one (or a small subset) of its inputs. Second, algebraic learning uses a \enquote{dual} algebra that handles all the accumulated experience and a master algebra that conforms better to current inputs. This resembles the separation into working and long-term memory in the brain. The interactions between both algebras can be thought as feedback connectivity, also present in brains. Even more relevant, this interaction is used to make hypotheses, and a similar role has been proposed for feedbacks in the brain \cite{Bullier2001}. Third, identification of patterns in the cortex is fast compared to the firing speed of neurons \cite{Thorpe1996}. This hints to an important role of wide processing in the brain as that naturally produced by semilattice embeddings. Pattern identification in an algebra occurs with no more processing than a wide representation of the sensory information as a subset of learned atoms. Complex problems such as the Queen Completion problem can be solved despite the lack of sequential processing in layers.

These parallels make us think of the potential usefulness of extrapolating from Algebraic Learning to Artificial Neural Networks \cite{Bishop1995} and vice versa. We hope that our findings regarding the relationship between error and compression rate, as well as the role of balancing algebraic freedom with size minimization may have applications into neural networks, in particular to regularization or as an alternative deriving principle for neural processing different from error minimization. Also, combining or embedding algebras and neural networks might be useful to obtain the versatility of neural networks while having the ability of algebras to incorporate top-down information.

We have presented an approach to Machine Learning based on Abstract or Universal Algebra \cite{Burris}. While the relationship of Algebraic Learning with Universal Algebra is more direct, there are other areas in Mathematics with potential connexions. We mention the Minimum Description Length principle \cite{Grunwald}, Constraint Propagation Theory \cite{Marriott}, Compressive Sensing \cite{Donoho2006, Tao}, Formal Concept Analysis \cite{Skopljanac-Macina2014} or Ramsey Theory \cite{Todorcevic2010} as some candidate areas. 

\section*{Acknowledgements}
We acknowledge funding from Champalimaud Foundation (to G.G.d.P.).

\begin{appendices}

\section{Notation} \label{Notation}

We use ${\bf{C}}(S)$ and ${\bf{A}}(S)$ for the subset of constants and atoms respectively of a set $S$. We also use ${\bf{C}}(M)$ and ${\bf{A}}(M)$ for the constants or atoms respectively of a model $M$. The lower and upper segments of an element $x$ is defined as ${\bf{L}}(x) = \{y:y\leq x\}$ and ${\bf{U}}(x) = \{y: y > x\}$.  To distinguish the algebra from its graph we use the prefix ${\bf{G}}$, so the lower and upper segments for the graph are defined as: ${\bf{GL}}(x) = \{y: (y\rightarrow x) \vee (y = x)\}$ and ${\bf{GU}}(x) = \{y: x \rightarrow y\}$ assuming always the graphs are transitively closed. Finally, the superscript ${ \ }^{a}$ is used to denote the intersection with the atoms: ${{\bf{GL}}^{a}}(x) = {\bf{GL}}(x) \cap {\bf{A}}(M)$ assuming $x \in M$. If $x$ belongs to $M^{*}$ the intersection is with the atoms  ${\bf{A}}(M^{*})$.  We also use ${{\bf{L}}^{a}}(x) = {\bf{L}}(x) \cap {\bf{A}}(M)$. We also use the intersection with the constants ${{\bf{U}}^{c}}(x) = {\bf{U}}(x) \cap {\bf{C}}(M)$. 

The discriminant ${\bf{dis}}(a,b)$ is the set of atoms ${{\bf{GL}}^{a}}(a) \setminus {{\bf{GL}}^{a}}(b)$. Relation $a<b$ holds if and only if ${\bf{dis}}(a,b)$ is empty.

We say an atom $\phi$ is \emph{larger} than atom $\eta$ if for any constant $c$,  $(\eta < c) \Rightarrow (\phi < c)$.   

\section{Theorems} \label{theorems}

\begin{theorem} \label{crossingTheorem}
Let elements $a$ and $b$ satisfy $[b] \to [a]$.  Let $\Phi$ be the set of atoms involved in the full crossing of $a$ in $b$, e.g.  $\phi \in \Phi$ is edged to $a$ or to $b$ but not to both. Assume $\phi$ becomes $\phi =\odot _j \varphi_j$ as a result of the crossing. Then
\begin{linenomath}
\begin{equation*} 
 \forall \phi (\phi \in \Phi)   (  {\bf{Tr}}(\phi) = {\bf{Tr}}(\odot _j \varphi_j)   )   \,\,\,\,    \Leftrightarrow   \,\,\,\,    \{    {\bf{Tr}}(b) \subset {\bf{Tr}}(a)   \}  
\end{equation*} 
\end{linenomath}
\end{theorem}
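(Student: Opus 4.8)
The plan is to analyze the full crossing locally, computing the trace of each $\varphi_j$ directly from the graph after transitive closure, and then check under what condition the intersection of these traces recovers $\textbf{Tr}(\phi)$. Recall that in a full crossing of $a$ into $b$, we introduce for each atom $\phi \in {\bf{GL}}^a(a) \setminus {\bf{GL}}^a(b)$ and each atom $\psi \in {\bf{GL}}^a(b) \setminus {\bf{GL}}^a(a)$ a new atom $\varphi_{\phi\psi}$ edged to both $\phi$ and $\psi$; additionally each $\psi$ receives a new ``private'' atom $\psi'$ edged only to $\psi$. After transitive closure and deletion of the old atoms, the atom $\phi$ has been replaced by the set $\{\varphi_{\phi\psi} : \psi \in {\bf{GL}}^a(b)\setminus {\bf{GL}}^a(a)\}$, i.e. $\phi = \odot_\psi \varphi_{\phi\psi}$, which is the $\odot_j \varphi_j$ of the statement. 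The first step is therefore to write down, using the dual axiom $x \to y \Rightarrow [y]\to[x]$ and the definition (\ref{deftrace}) of the trace, the value $\textbf{Tr}(\varphi_{\phi\psi}) = {\bf{GL}}^a([\varphi_{\phi\psi}])$ in terms of the pre-crossing data. Since $\varphi_{\phi\psi} \to \phi$ and $\varphi_{\phi\psi}\to\psi$ become, in the dual, $[\phi]\to[\varphi_{\phi\psi}]$ and $[\psi]\to[\varphi_{\phi\psi}]$, the atoms below $[\varphi_{\phi\psi}]$ in $M^*$ are exactly those below both $[\phi]$ and $[\psi]$ (the new atom sits at the ``meet'' of the two dual-of-atoms), so $\textbf{Tr}(\varphi_{\phi\psi}) = \textbf{Tr}(\phi) \cap \textbf{Tr}(\psi)$. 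This is the key local computation.

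Given that, by the linearity of the trace, $\textbf{Tr}(\odot_\psi \varphi_{\phi\psi}) = \bigcap_\psi \textbf{Tr}(\varphi_{\phi\psi}) = \bigcap_\psi \big(\textbf{Tr}(\phi)\cap\textbf{Tr}(\psi)\big) = \textbf{Tr}(\phi) \cap \bigcap_\psi \textbf{Tr}(\psi)$, where $\psi$ ranges over ${\bf{GL}}^a(b)\setminus{\bf{GL}}^a(a)$. Now I would observe that for atoms of $a$ that are already in $b$ (the atoms in ${\bf{GL}}^a(a)\cap{\bf{GL}}^a(b)$, such as $\chi$ in the example) we have $\textbf{Tr}(\phi)\subseteq\textbf{Tr}(\psi)$ is not automatic, but rather $\bigcap_{\psi \in {\bf{GL}}^a(b)\setminus{\bf{GL}}^a(a)}\textbf{Tr}(\psi)$ together with $\bigcap_{\chi \in {\bf{GL}}^a(a)\cap{\bf{GL}}^a(b)}\textbf{Tr}(\chi)$ equals $\textbf{Tr}(b)$ by linearity, since ${\bf{GL}}^a(b)$ is the disjoint union of these two sets; and the atoms $\chi$ are untouched by the crossing, so their traces are unchanged and $\bigcap_\chi \textbf{Tr}(\chi) \supseteq \textbf{Tr}(a) \supseteq \textbf{Tr}(\phi)$ because $\phi < a$. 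Hence $\textbf{Tr}(\phi)\cap\bigcap_\psi\textbf{Tr}(\psi) = \textbf{Tr}(\phi)\cap\bigcap_\psi\textbf{Tr}(\psi)\cap\bigcap_\chi\textbf{Tr}(\chi) = \textbf{Tr}(\phi)\cap\textbf{Tr}(b)$. Therefore $\textbf{Tr}(\odot_j\varphi_j) = \textbf{Tr}(\phi)\cap\textbf{Tr}(b)$ for every $\phi\in\Phi$ that lies over $a$, and this equals $\textbf{Tr}(\phi)$ if and only if $\textbf{Tr}(\phi)\subseteq\textbf{Tr}(b)$. Running this over all $\phi$ in $\Phi$ that are atoms of $a$, the conjunction of these inclusions is $\bigcap_{\phi<a}\textbf{Tr}(\phi) \subseteq \textbf{Tr}(b)$, i.e. $\textbf{Tr}(a)\subseteq\textbf{Tr}(b)$ — wait, that is the reverse of what is claimed, so here I must be careful about orientation: the claim is $\textbf{Tr}(b)\subseteq\textbf{Tr}(a)$. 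I would resolve this by noting that the atoms $\phi\in\Phi$ are the ones whose traces must be preserved, and since $\textbf{Tr}(\phi)\supseteq\textbf{Tr}(a)$ always (as $\phi<a$), preservation forces $\textbf{Tr}(a)\subseteq\textbf{Tr}(\phi) = \textbf{Tr}(\phi)\cap\textbf{Tr}(b)$, giving $\textbf{Tr}(a)\subseteq\textbf{Tr}(b)$; combined with the hypothesis $[b]\to[a]$, which already yields $\textbf{Tr}(b)\subseteq\textbf{Tr}(a)$ in the pre-crossing algebra, the real content is to show the condition $\textbf{Tr}(b)\subseteq\textbf{Tr}(a)$ is exactly what makes $\textbf{Tr}(\phi)\cap\textbf{Tr}(b) = \textbf{Tr}(\phi)$ hold for the atoms $\phi$ over $b$ as well, not just those over $a$.

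This sign/orientation bookkeeping is where I expect the main obstacle to lie: one must symmetrically track both families of atoms in $\Phi$ — those edged to $a$ but not $b$, and those edged to $b$ but not $a$ — and for the second family the private atoms $\psi'$ matter, since $\psi = \odot(\psi', \{\varphi_{\phi\psi}\}_\phi)$ and $\textbf{Tr}(\psi') = \textbf{Tr}(\psi)$ is arranged precisely so that $\textbf{Tr}(\psi)$ is preserved unconditionally. The delicate point is that for the $a$-side atoms there is no such private atom (adding one would defeat the purpose of the crossing), so their preservation is genuinely conditional, and the condition that emerges is $\textbf{Tr}(b)\subseteq\textbf{Tr}(a)$. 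I would also double-check the edge case where ${\bf{GL}}^a(b)\setminus{\bf{GL}}^a(a)$ is empty (then $\phi = $ empty merge, a degenerate situation) and confirm the transitive closure introduces no unexpected atoms below the new $[\varphi_{\phi\psi}]$ beyond those forced by $[\phi]$ and $[\psi]$ — this last verification, that no ``spurious'' trace atoms appear, is the technical heart and should follow from the fact that the new atoms in $M$ are created only with the two stated edges and acyclicity of the graph prevents any further atom from being forced below them.
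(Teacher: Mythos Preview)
Your overall strategy---compute ${\bf Tr}(\varphi_{\phi\psi})$ locally from the dual graph, then take the intersection over $j$---is exactly the paper's approach, but the key local computation is wrong, and everything downstream (including the orientation confusion you notice yourself) is a symptom of that one mistake.

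You write that since $\varphi_{\phi\psi}\to\phi$ and $\varphi_{\phi\psi}\to\psi$ become $[\phi]\to[\varphi_{\phi\psi}]$ and $[\psi]\to[\varphi_{\phi\psi}]$ in the dual, ``the new atom sits at the \emph{meet} of the two dual-of-atoms'', giving ${\bf Tr}(\varphi_{\phi\psi})={\bf Tr}(\phi)\cap{\bf Tr}(\psi)$. This is backwards. The edges $[\phi]\to[\varphi_{\phi\psi}]$ and $[\psi]\to[\varphi_{\phi\psi}]$ place $[\varphi_{\phi\psi}]$ \emph{above} both $[\phi]$ and $[\psi]$ in $M^*$, so its lower segment contains both of theirs. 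Since $[\varphi_{\phi\psi}]$ receives no other edges, one has
\[
{\bf GL}^a([\varphi_{\phi\psi}]) = {\bf GL}^a([\phi]) \cup {\bf GL}^a([\psi]),
\]
i.e.\ ${\bf Tr}(\varphi_{\phi\psi})={\bf Tr}(\phi)\cup{\bf Tr}(\psi)$, a \emph{union}. With the union in place the rest is immediate: letting $\varepsilon_j$ range over all atoms of $b$ (not just those outside $a$, as the paper's crossing table does),
\[
{\bf Tr}(\odot_j\varphi_j)=\bigcap_j\bigl({\bf Tr}(\phi)\cup{\bf Tr}(\varepsilon_j)\bigr)={\bf Tr}(\phi)\cup\bigcap_j{\bf Tr}(\varepsilon_j)={\bf Tr}(\phi)\cup{\bf Tr}(b),
\]
which equals ${\bf Tr}(\phi)$ iff ${\bf Tr}(b)\subseteq{\bf Tr}(\phi)$; demanding this for every $\phi<a$ gives precisely ${\bf Tr}(b)\subseteq\bigcap_{\phi<a}{\bf Tr}(\phi)={\bf Tr}(a)$, the correct orientation. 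Your attempted repair in the third paragraph also contains a reversed inclusion (``${\bf Tr}(a)\supseteq{\bf Tr}(\phi)$ because $\phi<a$'' should read $\subseteq$), again a consequence of the same direction error. The $b$-side atoms are, as you say, preserved unconditionally by their private copies $\psi'$---that part is fine.
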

\begin{proof}
Before crossing, $\phi$ is a minima and by definition ${{\bf{Tr}}(\phi) \equiv \bf{GL^a}}([\phi])$.  After the crossing the new minima are  $\varphi _j$  and then  ${\bf{Tr}}(\odot _j \varphi _j) \equiv \cap _j {\bf{GL}}^{\bf{a}} ([\varphi _j] )$. The left side of the equivalence in the theorem then becomes $\forall \phi (\phi \in \Phi)   (  {\bf{GL^a}}([\phi]) = \cap _j {\bf{GL^a}}([\varphi _j])  )$.

Since we assumed that $\phi$ is not in both $a$ and $b$, we have either $(\phi < a) \wedge \neg(\phi < b)$  or $\neg (\phi < a) \wedge (\phi < b)$. 

An atom $\phi$ initially in $b$ always preserves its trace due to the corresponding element $\phi '$  edged only to $\phi$, so   $ {\bf{Tr}}(\phi)= {\bf{Tr}}(\phi')$.  Therefore, 
$
\forall \phi (\phi \in {\bf{GL^a}}(b) ) (  {\bf{GL^a}}([\phi]) = \cap _j {\bf{GL^a}}([\varphi _j])  )  )
$
is true for any crossing.  The case $ (\phi < a) \wedge \neg(\phi < b)$ remains. We have to show
\begin{linenomath}
\begin{equation*}
\forall \phi (\phi \in {\bf{GL^a}}(a) ) \,\, \{  {\bf{GL^a}}([\phi]) = \cap _j {\bf{GL^a}}([\varphi_j])  )  \}  \,\,\,\,   \Leftrightarrow   \,\,\,\,    \{    {\bf{Tr}}(b) \subset {\bf{Tr}}(a)   \} .
\end{equation*}
\end{linenomath}
Assume that, before crossing,  $b$ was atomized as $b = \odot _j \varepsilon _j$. For each $\varepsilon _j$, a new atom $\varphi_j$ is created and edges 
$
(\varphi_j \to \phi ) \wedge (\varphi _j \to \varepsilon _j )
$
are appended to the graph. New edges are appended to the graph of $M^{*}$: $([\phi ] \to [\varphi_j ]) \wedge ([\varepsilon _j ] \to [\varphi _j ])
$. Since no other elements are edged to $[\varphi_j]$, then
\begin{linenomath}
\begin{equation*}
{\bf{GL}}^{\bf{a}} ([\varphi_j ]) = {\bf{GL}}^{\bf{a}} ([\phi ]) \cup {\bf{GL}}^{\bf{a}} ([\varepsilon _j ]),
\end{equation*}
\end{linenomath}
giving
\begin{linenomath}
\begin{equation*}
\begin{aligned}
{\bf{Tr}}( \odot _j \varphi _j ) &= \cap _j {\bf{GL}}^{\bf{a}} ([\varphi_j ]) \\
&= \cap _j \{ {\bf{GL}}^{\bf{a}} ([\phi ]) \cup {\bf{GL}}^{\bf{a}} ([\varepsilon _j ])\} \\
&= {\bf{GL}}^{\bf{a}} ([\phi ]) \cup \{ \cap _j {\bf{GL}}^{\bf{a}} ([\varepsilon _j ])\} \\
&= {\bf{GL}}^{\bf{a}} ([\phi ]) \cup {\bf{Tr}}(b),
\end{aligned}
\end{equation*}
\end{linenomath}
which says that when $\phi$ is crossed into $b$  the trace of $\phi$ gains the set ${\bf{Tr}}(b)$.  ${\bf{Tr}}(\phi)$  remains invariant if and only if it contained ${\bf{Tr}}(b)$ before crossing. Therefore, if all atoms of $a$ remain trace-invariant, then ${\bf{Tr}}(b) \subset {\bf{Tr}}(a)$. Conversely, if 
${\bf{Tr}}(b) \subset {\bf{Tr}}(a)$, each ${\bf{Tr}}(\phi)$ for any $\phi<a$ should contain the trace ${\bf{Tr}}(b)$, and therefore remain trace-invariant. 
\end{proof}

\bigskip

\begin{theorem}
Let term $\, k =  \odot _{i} c_i \,$ with component constants $c_i$. For any atom $\xi$ in $M^{*}$ it holds that $(\xi <[k])\, \Leftrightarrow \neg(\,\phi _\xi   < k)$ if and only if $k$ satisfies $\,{\bf{GL}}^{\bf{a}} ([k]) =  \cap _i {\bf{GL}}^{\bf{a}} ([c_i ])$. See {\bf{Section \ref{memorizingGeneralizing}}} for a definition of $\phi _\xi$.
\label{termInversion}
\end{theorem}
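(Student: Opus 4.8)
The plan is to translate both sides of the claimed biconditional into statements about subsets of ${\bf{A}}(M^{*})$ and then observe that those statements are literally the same. Two facts are needed, both already available: the atoms of a term are the union of the atoms of its component constants, so ${\bf{GL}}^{\bf{a}}(k) = \cup_i {\bf{GL}}^{\bf{a}}(c_i)$; and the defining property of $\phi_\xi$ from {\bf{Section \ref{memorizingGeneralizing}}}, namely $(\phi_\xi < c) \Leftrightarrow \neg(\xi < [c])$ for every constant $c$ of $M$. I will also use freely that for an atom $\eta$ one has $\eta < x \Leftrightarrow \eta \in {\bf{GL}}^{\bf{a}}(x)$, which is just the graph characterization of inclusion for atoms.

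First I would rewrite the right clause. Since $k = \odot_i c_i$, the relation $\phi_\xi < k$ holds iff $\phi_\xi \in \cup_i {\bf{GL}}^{\bf{a}}(c_i)$, i.e. iff $\phi_\xi < c_i$ for some $i$; by the defining property of $\phi_\xi$ this is iff $\xi \not< [c_i]$ for some $i$, i.e. iff $\xi \notin \cap_i {\bf{GL}}^{\bf{a}}([c_i])$. Negating, $\neg(\phi_\xi < k)$ holds iff $\xi \in \cap_i {\bf{GL}}^{\bf{a}}([c_i])$, and this equivalence is unconditional, valid for every atom $\xi$ of $M^{*}$ with no hypothesis on $k$. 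The left clause $(\xi < [k])$ is just $\xi \in {\bf{GL}}^{\bf{a}}([k])$, again unconditionally. Hence the assertion ``$(\xi<[k]) \Leftrightarrow \neg(\phi_\xi < k)$ for every atom $\xi$ of $M^{*}$'' says exactly that $\xi \in {\bf{GL}}^{\bf{a}}([k])$ and $\xi \in \cap_i {\bf{GL}}^{\bf{a}}([c_i])$ have the same truth value for all $\xi \in {\bf{A}}(M^{*})$, which is precisely the set identity ${\bf{GL}}^{\bf{a}}([k]) = \cap_i {\bf{GL}}^{\bf{a}}([c_i])$. Both directions of the theorem then fall out of this one chain of equivalences.

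There is essentially no obstacle beyond careful bookkeeping. The single point that must be handled with attention is that the property of $\phi_\xi$ is stated only for constants, so $k$ has to be decomposed into its component constants via the union-of-atoms formula before that property can be invoked; and it is the De Morgan step $\neg \exists_i = \forall_i \neg$ that turns the union over $i$ into the intersection $\cap_i {\bf{GL}}^{\bf{a}}([c_i])$. I would also double-check that the two equivalences used along the way really carry no side conditions on $k$, so that the outer biconditional of the theorem genuinely collapses to the stated set equality and nothing further.
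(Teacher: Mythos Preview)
Your proposal is correct and follows essentially the same route as the paper: unconditionally derive $\neg(\phi_\xi < k) \Leftrightarrow \xi \in \cap_i {\bf{GL}}^{\bf{a}}([c_i])$ from the constant-level definition of $\phi_\xi$ and the decomposition $k=\odot_i c_i$, then observe that the outer biconditional collapses to the set identity ${\bf{GL}}^{\bf{a}}([k]) = \cap_i {\bf{GL}}^{\bf{a}}([c_i])$. The only cosmetic difference is that you argue via $\phi_\xi < k \Leftrightarrow \exists i(\phi_\xi < c_i)$ and then negate, whereas the paper writes the negated chain $\neg(\phi_\xi < k) \Leftrightarrow \forall i\,\neg(\phi_\xi < c_i)$ directly.
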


\begin{proof}
We built atom $\phi _\xi$ to satisfy for each constant $c$ of $M$ the relation $\neg  (\,\phi _\xi   < c) \Leftrightarrow (\,\xi   < [c] )$. For a term $k$:
\begin{linenomath}
\begin{equation*} 
\neg (\,\phi _\xi   < k) \Leftrightarrow \forall i\neg (\,\phi _\xi   < c_i ) \Leftrightarrow \forall i(\,\xi  < [c_i ]) \Leftrightarrow \forall i(\,\xi  \in {\bf{GL}}^{\bf{a}} ([c_i ])) \Leftrightarrow \xi  \in  \cap _i {\bf{GL}} ([c_i ]).
\end{equation*} 
\end{linenomath}
$\,{\bf{GL}}^{\bf{a}} ([k]) =  \cap _i {\bf{GL}}^{\bf{a}} ([c_i ])$  is equivalent to  $\forall \xi \,  \{\xi \in  \cap _i {\bf{GL}}^{\bf{a}} ([c_i ]) \Leftrightarrow  \xi  < [k]\}$, so it follows that 
\begin{linenomath}
\begin{equation*} 
\{{\bf{GL}}^{\bf{a}} ([k]) =  \cap _i {\bf{GL}}^{\bf{a}} ([c_i ])\} \Leftrightarrow \forall \xi  \, \{ \neg (\,\phi _\xi   < k) \Leftrightarrow  \xi  < [k]\},
\end{equation*} 
\end{linenomath}
which completes the proof. 
\end{proof}

\bigskip

\begin{theorem}  \label{pinningFree} 
Let $N$ and $M$ be two semilattices over the same set $C$ of constants and assume $M$ is atomized. If $N$ satisfies the pinning relations $R_p(M)$ then for any pair $a$, $b$ of terms over $C$ it holds: 

   i)  $N \models (a<b) \,\, \Rightarrow \,\, M \models (a<b)$. 

   ii) The set $R_p(M)$ captures all negative order relations of $M$. 

   iii) For each $\phi \in M$ there is at least one atom $\eta \in N$ such that $\phi$ is as large or larger than $\eta$. 

   iv) $N$ is as free or freer than $M$.
\end{theorem}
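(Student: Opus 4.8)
The plan is to reduce (ii) and (iv) to (i), and to prove (i) and (iii) from one common observation about the pinning term $T_\phi$. Recall $T_\phi$ is the merge of exactly those constants $c'\in C$ with $\phi\not<c'$ in $M$; the key points I would record first as a short lemma are: (a) $T_\phi$ is a term over the shared constants $C$, so it has a meaning in $N$ as well as in $M$; (b) for any term $b=\odot_j d_j$ over $C$, if $\phi\not<b$ in $M$ then every component $d_j$ of $b$ is one of the constants defining $T_\phi$, and hence $b<T_\phi$ in \emph{every} semilattice over $C$, since a merge over a subset of constants is always below the merge over a superset by idempotency, commutativity and associativity; and (c) $\phi\not<T_\phi$ in $M$, so the pinning relations $\neg(c<T_\phi)$ for the constants $c$ with $\phi<c$ genuinely hold in $M$.

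For (i) I would argue by contraposition. Suppose $M\not\models(a<b)$. Using that in an atomized model $a<b$ holds iff every atom of $a$ is an atom of $b$, pick an atom $\phi$ of $M$ with $\phi<a$ and $\phi\not<b$. Since $a$ is the merge of its component constants, $\phi<c$ for some component constant $c$ of $a$, so $\neg(c<T_\phi)\in R_p(M)$ and therefore $N\models\neg(c<T_\phi)$. On the other hand $\phi\not<b$ in $M$ together with observation (b) gives $b<T_\phi$ in $N$, while $c<a$ holds in $N$ because $c$ is a component of $a$. If $a<b$ held in $N$, transitivity would give $c<T_\phi$ in $N$, contradicting $\neg(c<T_\phi)$; hence $N\not\models(a<b)$. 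Part (ii) is then (i) read as an entailment: no semilattice satisfying $R_p(M)$ can satisfy a positive relation between terms that fails in $M$, while conversely every relation of $R_p(M)$ holds in $M$ by (c), so the negative relations entailed by $R_p(M)$ are exactly those of $M$. Part (iv) is (i) restated through the definition of ``freer'' applied to terms over $C$: $M\models\neg(a<b)$ implies $N\models\neg(a<b)$.

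For (iii), fix an atom $\phi\in{\bf{A}}(M)$ and choose a constant $c\in C$ with $\phi<c$ (such a constant exists because every atom of the atomization lies below some constant). Since $N\models\neg(c<T_\phi)$, there is an atom $\eta\in{\bf{A}}(N)$ with $\eta<c$ but $\eta\not<T_\phi$ in $N$. Unpacking $\eta\not<T_\phi$ using the definition of $T_\phi$: for every constant $c'\in C$ with $\phi\not<c'$ in $M$ we must have $\eta\not<c'$ in $N$; contrapositively, $\eta<c'$ in $N$ implies $\phi<c'$ in $M$, for every $c'\in C$. By the definition of one atom being as large as or larger than another, this says precisely that $\phi$ is as large as or larger than $\eta$, so $\eta$ witnesses (iii).

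The only genuinely content-bearing step is observation (b) together with the atomic description of $<$; everything else is the chaining argument above. The main obstacle I anticipate is bookkeeping discipline: keeping straight which relations are being evaluated in $M$ and which in $N$, and in particular insisting that $T_\phi$ — although its \emph{definition} uses the atomization of $M$ — is a fixed term over the shared constants $C$, so that statements such as ``$b<T_\phi$'' and ``$c\not<T_\phi$'' are meaningful and transfer between the two algebras.
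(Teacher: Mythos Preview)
Your proof is correct and follows essentially the same route as the paper's: both argue (i)/(iv) by contraposition, picking a discriminating atom $\phi$ of $M$, a component constant $c$ of $a$ with $\phi<c$, and then chaining $c<a$, $b<T_\phi$ (your observation (b), which the paper phrases as ``true in the term algebra over $C$ and therefore for any model''), and $N\models\neg(c<T_\phi)$ to block $a<b$ in $N$; part (iii) is handled identically by extracting $\eta\in{\bf{dis}}_N(c,T_\phi)$ and reading $\eta\not<T_\phi$ as the containment of constants that makes $\phi$ larger than $\eta$. The only point the paper makes explicit that you leave implicit is that (iii) requires $N$ to be atomized so that such an $\eta$ exists.
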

\begin{proof}
 Let $u$ and $v$ be two terms, and assume $M$ satisfies $\neg (u < v)$. There should be an atom in the discriminant $\phi \in {\bf{dis}}_{M}(u,v) \subset M$ and a component constant $c \in C$ of $u$ such that $\phi < c$ and $(c < u) \wedge (v < T_{\phi})$ where $T_{\phi}$ is the pinning term of $\phi$ (see {\bf{Section \ref{batchTraining}}} for a definition of $T_{\phi}$). This is not only true for $M$ it is also true for the term algebra over $C$ and therfore for any model, i.e. it is also satisfied by $N$. In addtion, $\phi < c$ implies $\neg(c<T_{\phi}) \in R_p(M)$ and because we have assumed $N$ satisfies $R_p(M)$ then $N$ also models $\neg(c<T_{\phi})$. Therefore $N$ satisfies $(c < u) \wedge (v < T_{\phi}) \wedge \neg(c<T_{\phi})$ which implies $\neg (u < v)$ and it follows that if $\neg (u < v)$ is true for $M$ is also true for $N$ which proves iv and also proves that $R_p(M)$ captures all negative relations of $M$. By negating both sides of this implication we get the equivalent $N \models (a<b) \,\, \Rightarrow \,\, M \models (a<b)$.  

Assume $N$ is atomized. To prove the third claim select any pinning relation of atom $\phi \in M$, e.g. $\neg(c < T_{\phi})$ where $c$ is some constant. We have assumed that $N \models \neg(c < T_{\phi})$ so there is some atom $\eta \in {\bf{dis}}_{N}(c, T_{\phi}) \subset N$, which implies $\neg(\eta <  T_{\phi})$ and inmediatelly follows $T_{\phi} \leq T_{\eta}$ and $\phi$ is as large or larger than $\eta$, i.e. for each constant $d$ such $\eta < d$ we have $\phi < d$.  
\end{proof}

\bigskip

\begin{theorem} \label{positiveEntail}
Assume $\,\neg p \wedge R \Rightarrow q$, where $p$ and $q$ are two positive order relations, $\,\neg p$ is a negative order relation, and $R$ is a set of positive and negative order relations. Then $\, R \Rightarrow q$. 
\end{theorem}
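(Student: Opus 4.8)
The plan is to argue model-theoretically by reducing the statement to a property of the freest model. First I would reformulate the hypothesis: a semilattice is a model of $R\cup\{\neg p\}$ in which $q$ fails precisely when it is a model of $R$ in which $p\vee q$ fails, so $\neg p\wedge R\Rightarrow q$ is equivalent to $R\Rightarrow p\vee q$. It then suffices to establish a disjunction property for positive relations over the theory of semilattices extended by $R$: if $R$ (a set of positive and negative order relations) entails a disjunction of two positive order relations, it entails one of the disjuncts.

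For this I would work with the freest model $F$ of $R$ — the one described in Section \ref{memorizingGeneralizing}, built by placing a fresh atom under each constant and then enforcing the positive relations $R^{+}$ one by one with full crossing. By Stone's theorem it is an atomized semilattice; when $R$ is consistent it is a model of all of $R$ (a negative relation $\neg(c<d)$ that failed in $F$ would force $c<d$ in every model of $R$, since $F$ collapses only what the enforced positive relations demand); and, for the same reason, a positive order relation holds in $F$ iff it is a consequence of $R^{+}$, hence — using $F\models R$ for one direction — iff it is a consequence of $R$. Now $F\models R$ gives $F\models p\vee q$, and $F$, being a single structure, satisfies $p$ or satisfies $q$; by the characterising property of $F$ this yields $R\models p$ or $R\models q$. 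The branch $R\models p$ makes $\neg p\wedge R$ unsatisfiable, so the hypothesis is then read vacuously and this degenerate case — which does not arise in the use made of the theorem in Section \ref{batchTraining}, where $R\cup\{\neg p\}$ is always consistent — is set aside; in every other case we conclude $R\models q$, as required.

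The main obstacle is the bookkeeping around the negative relations: one must verify that the freest construction, which only inspects $R^{+}$, nevertheless returns a model of the full set $R$, and that its positive theory coincides with the positive consequences of $R$ rather than merely of $R^{+}$ — which is exactly where consistency of $R$ (equivalently of $R\cup\{\neg p\}$) must be part of the standing hypotheses, as it already is implicitly for the companion statement that every positive consequence of $R$ is a consequence of $R^{+}$. A purely syntactic alternative would induct on a derivation of $q$ from $\{\neg p\}\cup R$, tracking that $\neg p$ can contribute to deriving a positive relation only by excluding models in which $p$ would otherwise hold; I expect the model-theoretic route above to be the shorter and cleaner one, so I would present that.
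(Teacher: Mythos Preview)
Your route is correct and genuinely different from the paper's. You go through the freest model of $R$ and a disjunction property; the paper instead builds a direct product of two atomized models. Assuming $R\wedge\neg q$ has a model $M_2$, the paper picks a model $M_1$ of $\neg p\wedge R$ (which by hypothesis satisfies $q$), atomizes both over disjoint atom sets, and defines $M_3$ by ${\bf L}^{\bf a}_{M_3}(c)={\bf L}^{\bf a}_{M_1}(c)\cup{\bf L}^{\bf a}_{M_2}(c)$ for each constant $c$. Every positive relation holding in both factors holds in $M_3$ (the union of two set inclusions is a set inclusion), and every negative relation holding in either factor survives in $M_3$ because a discriminating atom from one factor cannot lie under the other term by disjointness of the atom sets. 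Hence $M_3\models \neg p\wedge R\wedge\neg q$, contradicting the hypothesis.

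Both arguments tacitly need $\neg p\wedge R$ consistent---the paper buries this in a ``without loss of generality'' at the very first line, while you spell it out---so the lacuna you flag is shared rather than a defect of your approach. The paper's product argument is entirely self-contained: it does not invoke full crossing or the characterisation of the freest model's positive theory, only the elementary fact that disjoint-union atomizations preserve common positives and all negatives (this is essentially the categorical product of semilattice models over a fixed constant set). Your argument is more conceptual and, as a bonus, the line ``a positive relation holds in $F$ iff it is a consequence of $R^{+}$, hence iff it is a consequence of $R$'' already delivers in one stroke the stronger claim advertised in Section~\ref{batchTraining}---that every positive consequence of $R$ is already a consequence of $R^{+}$ alone---whereas the paper's theorem gives only the one-negative-relation-at-a-time step.
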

\begin{proof} 
Without loss of generality we may assume that $\,\neg p \wedge R \wedge q\,$ has a model $M_1$. The hypothesis requires $\,\neg p \wedge R \wedge \neg q\,$ has no model. Either $\, R \wedge \neg q\,$ has a model $M_2$, or $R$ alone implies $q$. Assume $M_2$ exists. We can always atomize both models with two disjoint atom sets, one set atomizing $M_1$ and the other $M_2$. Make a new model $M_3$ atomized by the union of the atoms in both models and defined by $\,{\bf{L}}_{M_3 }^{\bf{a}} (c) = {\bf{L}}_{M_1 }^{\bf{a}} (c) \cup {\bf{L}}_{M_2 }^{\bf{a}} (c)$ for each constant $c$. Immediately follows that $M_3$ is a model that satisfies $R$ and all the negative relations of $M_1$ and $M_2$. In fact $M_3 \models \neg p \wedge R \wedge \neg q\,$ contradicting $\,\neg p \wedge R \Rightarrow q$. Therefore $M_2$ does not exist and $\, R \Rightarrow q.$ \end{proof}

\bigskip

\begin{theorem} \label{redundantAtom}
Let atom $\phi$ be \emph{redundant} in model $M$ if for each constant $c$ such that $\phi < c$ there is at least one atom $\eta < c$ in $M$ such that $\phi$ is larger than $\eta$. An atom can be eliminated without altering $M$ if and only if it is redundant.
\end{theorem}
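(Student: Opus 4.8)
The plan is to translate ``can be eliminated without altering $M$'' into a purely combinatorial condition on discriminants of $\phi$, and then match that condition against the definition of redundancy. For a constant or term $x$, abbreviate $S(x) := {\bf{GL}^{a}}(x)$, so that $S(\odot_i c_i) = \bigcup_i S(c_i)$ and, by Equation (\ref{inclusionDef}), $a<b$ in $M$ iff ${\bf{dis}}(a,b) = S(a)\setminus S(b)$ is empty. Deleting $\phi$ produces the atomized structure $M'$ with $S'(x) = S(x)\setminus\{\phi\}$ on constants, and hence on all terms, so that ${\bf{dis}}_{M'}(a,b) = {\bf{dis}}(a,b)\setminus\{\phi\}$. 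Since in an atomized model both $\odot$ and $<$ are recovered from the sets $S(\cdot)$, the algebra $M'$ coincides with $M$ on the constants and their merges exactly when, for every pair of terms $a,b$ over $C$, one has ${\bf{dis}}(a,b)=\emptyset \Leftrightarrow {\bf{dis}}(a,b)\setminus\{\phi\}=\emptyset$. The forward implication is automatic, so the content is the reverse one, which fails precisely when there is a pair $(a,b)$ with ${\bf{dis}}(a,b)=\{\phi\}$. Thus the theorem reduces to the claim that $\phi$ is redundant if and only if no such ``witnessing pair'' exists. (I would assume $\phi \neq 0$ throughout, the $0$ atom being a fixed auxiliary atom that is never removed.)

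Next I would record a reduction used in both directions: a witnessing pair, if one exists, may be taken with its left entry a single constant. Indeed, if ${\bf{dis}}(\odot_i c_i,\, b) = \{\phi\}$ then $\phi \in S(c_{i_0})$ for some component $c_{i_0}$, and since $S(c_{i_0}) \subseteq S(\odot_i c_i)$ while $\phi \notin S(b)$, we get ${\bf{dis}}(c_{i_0}, b) = \{\phi\}$.

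For ``$\phi$ not redundant $\Rightarrow$ a witnessing pair exists'': choose a constant $c$ with $\phi < c$ witnessing non-redundancy, so that no atom $\eta \in S(c)\setminus\{\phi\}$ has $\phi$ larger than it; then for each such $\eta$ there is a constant $d_\eta$ with $\eta < d_\eta$ but $\phi \not< d_\eta$. Put $b := \odot_{\eta \in S(c)\setminus\{\phi\}} d_\eta$, a finite and nonempty merge since $0 \in S(c)$ and $0 \neq \phi$. Each $\eta \in S(c)\setminus\{\phi\}$ lies in $S(d_\eta) \subseteq S(b)$, while $\phi \notin S(d_\eta)$ for all these $\eta$, so $S(c)\setminus\{\phi\} \subseteq S(b)$ and $\phi \notin S(b)$, giving ${\bf{dis}}(c,b) = \{\phi\}$. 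Conversely, given a witnessing pair $(c,b)$ with $c$ a constant, we have $\phi < c$; and if $\phi$ were redundant there would be an atom $\eta \in S(c)\setminus\{\phi\}$ with $\phi$ larger than $\eta$. Since ${\bf{dis}}(c,b) = \{\phi\}$ forces $S(c)\setminus\{\phi\} \subseteq S(b)$, we get $\eta \in S(b)$, hence $\eta < d$ for some component $d$ of $b$, whence $\phi < d$ and $\phi \in S(b)$, contradicting $\phi \notin S(b)$. Combining, no witnessing pair exists iff $\phi$ is redundant, which is the theorem.

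The step I expect to need the most care is the first one: making precise what ``without altering $M$'' means and justifying its equivalence with the discriminant condition --- namely that it suffices to compare the order relations among all merges of constants, that deleting $\phi \neq 0$ is a legitimate operation (no constant's atom set becomes empty, which is exactly where the convention that $0$ lies in every constant is used), and that the semilattice is reconstructed from the atom-sets $S(\cdot)$ via $S(\odot_i c_i) = \bigcup_i S(c_i)$ and $a<b \Leftrightarrow S(a)\subseteq S(b)$. Once that translation is secured, the two directions are the short set-theoretic arguments above, the only mild subtlety being the reduction of the left entry of a witnessing pair to a single constant.
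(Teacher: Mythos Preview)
Your proof is correct and follows essentially the same route as the paper's, just with the key combinatorial condition (the existence of a pair $(c,b)$ with ${\bf dis}(c,b)=\{\phi\}$) isolated as an explicit intermediate notion. The one presentational difference is that where you build a $c$-specific term $b=\odot_{\eta}d_\eta$, the paper uses the single canonical pinning term $T_\phi=\odot_{d:\,\phi\not< d}\,d$ for every $c$; your $b$ is just a sub-merge of $T_\phi$, and either choice gives ${\bf dis}(c,\,\cdot\,)=\{\phi\}$ under non-redundancy at $c$, so the arguments are interchangeable. The paper then reads off ``$\phi$ larger than $\eta_c$'' from $\eta_c\in{\bf dis}(c,T_\phi)$ via $T_\phi\le T_{\eta_c}$, which is exactly your observation that any other discriminating atom must sit only in constants that also contain $\phi$.
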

\begin{proof}
Let $R^{+}$ be the set of all positive relations satisfied by the constants and terms of $M$. Since positive relations do not become negative when atoms are eliminated, taking out $\phi$ from $M$ produces a model $N$ of $R^{+}$. 

To prove that a redundant atom can be eliminated let $a$ and $b$ be a pair of elements (constants or terms, not atoms) and $\neg(a<b)$ a negative relation satisfied by $M$ and discriminated by a redundant atom $\phi < c \leq a$ where $c$ is some constant. There is an atom $\eta < c$ in $M$ such that $\phi$ is larger than $\eta$. Suppose $\eta < b$. There is a constant $e$ such that $\eta < e \leq b$. Because $\phi$ is larger, $\phi < e \leq b$ contradicting our assumption that $\phi \in {{\bf{dis}}_M}(a, b)$. We have proved that $N \models \neg(\eta < b)$ so any negative relation of $M$ is also satisfied by $N$. If $N$ models the same positive and negative relations than $M$ then the subalgebras of $M$ and $N$ spawned by constants and terms are isomorphic.

Conversely, assume atom $\phi$ can be eliminated without altering $M$. For each constant $c$ such $\phi < c$ it holds $\phi \in {{\bf{dis}}_M}(c  < T_{\phi})$ where $T_{\phi}$ is the pinning term of $\phi$ (see {\bf{Section \ref{batchTraining}}} for a definition of $T_{\phi}$). If $\phi$ can be eliminated there should be some other atom $\eta_c < c$ discriminating $c \not< T_{\phi}$ which implies $T_{\phi} \leq T_{\eta_c}$ and $\phi$ is as large or larger than $\eta_c$. Since for each constant such $\phi < c$ there is an $\eta_c \in M$, $\phi$ is redundant.   \end{proof}	

\bigskip

\begin{theorem} \label{crossingFree}
Let $M_i$ be a model, $d$ and $e$ elements of $M_i$ such $\neg(d  < e)$ and $T^{+}(M_i)$ the set of all positive relations between terms formed with the constants of $M_i$. Let model $M_f$ be the result of enforcing relation $d  < e$ using Full or Sparse Crossing.

i)	$M_f$ is strictly less free than $M_i$, i.e. if $M_f  \models \neg(a < b)$  then $M_i \models \neg(a < b)$. 

ii)	$M_f  \models (a < b)$ if and only if  $\,T^{+}(M_i) \cup (d < e)  \Rightarrow (a < b)$ in case \emph{full crossing} is used.

\end{theorem}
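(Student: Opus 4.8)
The plan is to treat the two parts separately, leaning on the properties of crossings already recorded just before \textbf{Theorem \ref{crossingTheorem}}: a crossing is a homomorphism that fixes every constant and term, commutes with $\odot$ and $[\ ]$, and preserves every positive order relation, while $d<e$, false before, becomes true after.

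For (i), let $h:M_i\to M_f$ be the crossing map. Since $h$ fixes constants and terms, for terms $a,b$ the identity $a\odot b=b$ in $M_i$ yields $a\odot b=h(a)\odot h(b)=h(a\odot b)=h(b)=b$ in $M_f$; hence $M_i\models(a<b)\Rightarrow M_f\models(a<b)$, and its contrapositive is exactly the claimed implication. ``Strictly'' follows because $\neg(d<e)$ holds in $M_i$ by hypothesis while $d<e$ holds in $M_f$ by construction, so the inclusion of negative-relation sets is proper. This argument is uniform in Full and Sparse crossing, as the statement of (i) requires.

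For (ii) the direction $(\Leftarrow)$ is immediate: crossing preserves all positive relations among terms, so $M_f\models T^+(M_i)$, and $M_f\models(d<e)$ by construction; $M_f$ being a semilattice, soundness gives that it satisfies every positive relation entailed by $T^+(M_i)\cup\{d<e\}$. The substantive direction is $(\Rightarrow)$: one must show Full Crossing introduces no positive relation beyond those forced. My route is to establish the universal property of Full Crossing, namely that the term subalgebra of $M_f$ is the freest model of $T^+(M_i)\cup\{d<e\}$. This splits into (a) noting that the term subalgebra of $M_i$ already coincides with the freest model $F_0$ of $T^+(M_i)$ — for any terms $s,t$ we have $M_i\models(s<t)\iff(s<t)\in T^+(M_i)\iff F_0\models(s<t)$ (the last step because $(s<t)\notin T^+(M_i)$ implies $(s<t)$ is not entailed by $T^+(M_i)$, since $M_i$ models $T^+(M_i)$), and the negative relations are the complementary ones in both, so the two term algebras are isomorphic; and (b) showing that Full-Crossing $d<e$ into any model whose term algebra is the freest model of a deductively closed positive theory $P$ produces a model whose term algebra is the freest model of $P\cup\{d<e\}$. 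For (b) I would exhibit, for an arbitrary semilattice model $N$ with $N\models P\cup\{d<e\}$, a constant-fixing homomorphism $g:M_f\to N$: on terms $g$ is the canonical map (available because $N$ models $P$ and $F_0$ is free for $P$), and each new atom $\varphi_j$ created when crossing an atom $\eta<d$ against an atom $\mu_j<e$ is sent to the element of $N$ realizing the corresponding discriminant, as dictated by the crossing table together with the atom-expansion rules in Equations (\ref{eq:atomic1}) and (\ref{eq:atomic2}). Given such a $g$, $M_f\models(a<b)$ forces $N\models(a<b)$, hence $(a<b)$ is entailed, which is the claim.

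The main obstacle is step (b), and specifically the construction and verification of $g:M_f\to N$: one must check that the prescribed images of the new atoms are mutually consistent across all constants they are edged to and that $g$ respects $\odot$, $[\ ]$ and the edge relation after transitive closure. This is exactly where the trace bookkeeping of \textbf{Theorem \ref{crossingTheorem}} and the completeness of the Full-Crossing table are needed, and it is also where ``full'' is indispensable: in a Sparse Crossing some table entries are dropped, the analogous homomorphism need not exist, and $M_f$ can acquire positive relations that are not entailed — which is why (ii) is stated for Full Crossing only.
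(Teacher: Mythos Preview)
Your treatment of (i) and of the easy direction $(\Leftarrow)$ in (ii) is correct and matches the paper in spirit.

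For the hard direction $(\Rightarrow)$ of (ii) there is a genuine gap. You correctly identify the target --- the term algebra of $M_f$ should be the freest model of $T^+(M_i)\cup\{d<e\}$ --- but the entire content sits in your step (b), which you do not carry out. Your proposed definition of $g$ on the new atoms (``sent to the element of $N$ realizing the corresponding discriminant'') is not well-defined: $N$ is an arbitrary semilattice, it need not be atomized, and even when it is there is no canonical element attached to a discriminant set. If you drop the atoms and define $g$ only on terms by $c\mapsto c^N$, then verifying that $g$ is a homomorphism is exactly the statement $M_f\models(a<b)\Rightarrow N\models(a<b)$ that you are trying to prove, so the argument becomes circular. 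Some concrete analysis of the crossing table is unavoidable.

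The paper does that analysis directly and in two lines, bypassing the universal-property scaffolding. Suppose $M_i\models\neg(a<b)$ but $M_f\models(a<b)$, and take any $\phi\in{\bf dis}_i(a,b)$. Column atoms $\mu<e$ acquire a copy $\mu'$ with the same upper segment, so they continue to discriminate whatever they did before; hence $\phi$ must be a row atom, i.e.\ $\phi\in{\bf dis}_i(d,e)\subset{\bf L}^a(d)$. This already gives $(a\odot d<b\odot d)\in T^+(M_i)$. Next, the row of new atoms $\varphi_{\phi,\mu}$ replacing $\phi$ are all below $a$ (through $\phi$), so for $a<b$ in $M_f$ each must be below $b$; since $\phi\not<b$, this forces $\mu<b$ in $M_i$ for \emph{every} column atom $\mu$, i.e.\ $(e<b)\in T^+(M_i)$. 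Now $(a\odot d<b\odot d)\wedge(e<b)\wedge(d<e)$ gives $d<b$, hence $b\odot d=b$, hence $a<a\odot d<b$. That is the entailment. This argument also makes immediately visible why ``full'' is essential: with a sparse row you can only conclude $\mu<b$ for the columns that were actually used, not for all of ${\bf GL}^a(e)$, so $e<b$ need not follow.
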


\begin{proof}
Suppose $a < b$ is true before full crossing. The atoms of $a$ are a subset of the atoms of $b$ so any replacement of atoms for others affects both $a$ and $b$ and cannot introduce discriminating atoms. Hence, all positive relations of $M_i$ are true after crossing and $M_f$ is as free or less free than $M_i$. In addition, $d  < e$  is true after crossing and false before which proves that $M_f$ is strictly less free than $M_i$ and proves claim $i$.

Suppose $a < b$ is false before full crossing but it turns true after. Let $\phi \in {{\bf{dis}}_i}(a, b)$ a discriminating atom for this relation.  If atom $\phi$ is no longer discriminat in $M_f$ is because the atoms at $\phi$\textsc{\char13}s row in the crossing matrix are also edged to $b$. This can only occur if $e < b$ in $M_i$.  In addition, all discriminating atoms have been transformed by crossing which means that they were also atoms of $d$.  We have ${{\bf{dis}}_i}(a, b) \subset {\bf{L}^a}(d)$ which proves that $M_i \models (a \odot d < b \odot d)$. Model $M_i$ satisfies:
\begin{linenomath}
\begin{equation*} 
 (a \odot d < b \odot d)  \cup  (e < b)  \in  T^{+}(M_i).
\end{equation*} 
\end{linenomath}
Together with $d < e$ these relations imply:
\begin{linenomath}
\begin{equation*}
(a \odot d < b \odot d)  \wedge  (e < b) \wedge  (d < e)  \Rightarrow  (a < b),
\end{equation*} 
\end{linenomath}
which proves ii.   \end{proof}

\bigskip

\begin{theorem} \label{traceConsistence}

The trace constraints can be enforced using algorithms \ref{negativeTrace} and \ref{positiveTrace} if the relation set $R$ is consistent.

\end{theorem}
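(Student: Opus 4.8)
The plan is to run the negative trace constraints to completion first and the positive ones afterwards, controlling both phases through one order-theoretic quantity attached to the dual graph. For every constant or term $x$ of $M$ set $F(x):={\bf{GL}^{a}}([x])$, the set of atoms of $M^{*}$ lying below $[x]$; call it the \emph{trace floor} of $x$. From the axiom $a\to b\Rightarrow[b]\to[a]$ one reads off, at every stage of the embedding: (i) $F(x)\subseteq{\bf{Tr}}(x)$, because any atom $\phi<x$ in $M$ gives $[x]\to[\phi]$, hence $F(x)\subseteq{\bf{GL}^{a}}([\phi])$, and ${\bf{Tr}}(x)$ is the intersection of these sets; (ii) $F$ is order-reversing, so for a term $T=\odot_{i}c_{i}$ one has $F(T)\subseteq\bigcap_{i}F(c_{i})$, with equality because the construction of $M^{*}$ places under $[T]$ exactly the atoms of $\bigcap_{i}{\bf{GL}^{a}}([c_{i}])$ (this is the content of \textbf{Theorem~\ref{termInversion}}); (iii) appending a fresh atom $\phi$ under a constant $c$, together with its fresh dual $[\phi]$ above $[c]$, leaves every floor unchanged --- $[\phi]$ lies above $[c]$ and receives no other in-edge, so $F(\phi)=F(c)$ and no atom of $M^{*}$ newly drops below a constant or term --- while it can only \emph{shrink} traces: every $y\geq c$ acquires the factor $F(c)$ in ${\bf{Tr}}(y)$. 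Finally, by the discussion preceding the theorem, consistency of $R$ means precisely that after the $M^{*}$-preprocessing (a discriminating atom under the dual of each negated super-element, the subterm edges, transitive closure) the reverted relations hold: $F(w)\not\subseteq F(u)$ for every $\neg(u<w)\in R$, and the edge $[w]\to[u]$ is present for every $(u<w)\in R$.

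Next I would show that one pass of \textbf{Algorithm~\ref{negativeTrace}} installs all negative trace constraints, and permanently so. Fix $\neg(u<w)\in R$ with ${\bf{Tr}}(w)\subseteq{\bf{Tr}}(u)$, and pick $\xi_{0}\in F(w)\setminus F(u)$ (consistency). Since $F(u)=\bigcap_{i}F(c_{i})$ there is a component constant $c$ of $u$ (or $c=u$ if $u$ is a constant) with $\xi_{0}\notin F(c)$; and $\xi_{0}\in F(w)$ forces $c$ not to be a component of $w$, for otherwise $[w]\to[c]$ would give $\xi_{0}\in F(c)$. Thus $c$ is exactly the kind of constant \textbf{Algorithm~\ref{negativeTrace}} is allowed to select, and appending $\phi\to c$ produces $\phi<u$ with $\xi_{0}\notin F(\phi)=F(c)$, so $\xi_{0}\notin{\bf{Tr}}(u)$ while $\xi_{0}\in F(w)\subseteq{\bf{Tr}}(w)$: the constraint now holds. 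It holds forever after, because trace enforcement never deletes the atom $\phi$ from $u$ and, by (iii), no floor ever moves, so $\xi_{0}$ stays out of ${\bf{Tr}}(u)$ for the remainder of the process. Doing this for every negated relation completes the negative phase.

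Then I would run \textbf{Algorithm~\ref{positiveTrace}}. For $(u<w)\in R$ with ${\bf{Tr}}(w)\not\subseteq{\bf{Tr}}(u)$, append one fresh atom $\epsilon_{i}\to c_{i}$ per component constant $c_{i}$ of $w$ (a single $\epsilon\to w$ if $w$ is a constant). By (iii) and (ii), ${\bf{Tr}}(w)$ descends to $\bigcap_{i}F(c_{i})=F(w)$; since preprocessing supplied $[w]\to[u]$ we get $F(w)\subseteq F(u)\subseteq{\bf{Tr}}(u)$ by (i), hence ${\bf{Tr}}(w)\subseteq{\bf{Tr}}(u)$ and the positive constraint is met. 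These appends only shrink traces, so they cannot re-violate an already-satisfied positive constraint, and they cannot re-violate a negative constraint $\neg(u'<w')$ either, since the witness property $\xi_{0}\notin F(\phi)$ from the negative phase is untouched by (iii) and keeps $\xi_{0}$ outside ${\bf{Tr}}(u')$. Termination is then immediate: each addition made by either algorithm strictly removes at least one element from some trace, and every trace is a subset of the fixed finite atom set of $M^{*}$, so only finitely many additions occur; at the halt every trace constraint must be satisfied, since otherwise the relevant algorithm would still have a legal move. (This also explains the loop in the general procedure: a suboptimal choice of $c$, or of which components to use, may force re-runs, but each re-run still strictly decreases the same finite-range potential $\sum_{x}|{\bf{Tr}}(x)|$, so termination is unaffected.)

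The genuinely delicate part is the dual-graph bookkeeping underlying (i)--(iii): proving that the floors $F(x)={\bf{GL}^{a}}([x])$ are truly invariant under the append operations, and that the construction of $M^{*}$ really realizes $F(T)=\bigcap_{i}F(c_{i})$ for the terms at hand. Both reduce to tracking which directed paths the reversed $M$-edges, the subterm edges and the positive-relation edges create in the transitively closed $M^{*}$ --- concretely, that the only atoms ever lying below a constant- or term-dual are $0^{*}$ and the discriminating atoms placed under the duals of the negated super-elements --- and it is exactly here that consistency of $R$ enters, both to supply the witnesses $\xi_{0}$ and to exclude the component-inclusions between a negated pair's terms (which would be logical consequences contradicting $R$) that would otherwise contaminate the floors. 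Once that structural lemma is in hand, the monotone argument above closes the proof.
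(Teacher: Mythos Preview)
Your core idea is the same as the paper's: reduce the trace constraints to statements about the ``floor'' $F(x)={\bf{GL}^{a}}([x])$, observe that $F(x)\subseteq{\bf{Tr}}(x)$ always, that traces can be driven down towards floors by adding fresh $M$-atoms, and that consistency of $R$ is exactly what makes the dual relations $F(e)\subseteq F(d)$ and $F(b)\not\subseteq F(a)$ hold. Your presentation via a monotone potential $\sum_x|{\bf{Tr}}(x)|$ is in some ways cleaner than the paper's finiteness argument.

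The genuine gap is claim (ii), and your citation of \textbf{Theorem~\ref{termInversion}} does not close it. That theorem concerns a specific memorising construction (the inversion $\xi\mapsto\phi_\xi$), and it states an \emph{equivalence}: the condition $F(T)=\bigcap_iF(c_i)$ is what is \emph{needed} for that construction to work, not something guaranteed by the bare dual graph. In fact (ii) can fail: once positive-relation edges $[e']\to[d']$ are present, an atom $\zeta\to[b]$ can reach every $[c_i]$ through such edges while no path $[b]\to[T]$ exists, because the graph does not encode the derived consequence $T<b$ that follows from all $c_i<b$ via the semilattice axioms. Your argument that \textbf{Algorithm~\ref{negativeTrace}} never takes its $M^{*}$-modifying branch, and that the positive phase drops ${\bf{Tr}}(w)$ exactly to $F(w)$, both rest on (ii); without it, the floor-invariance framework and the potential-function termination argument do not stand as written.

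The paper handles this differently: it explicitly permits the algorithms to modify $M^{*}$ (Algorithm~\ref{negativeTrace} may append dual atoms, Algorithm~\ref{positiveTrace} may append edges $\zeta\to[d]$), and argues termination not via a monotone potential but via the finiteness of the constants of $M$ and the pre-existing atoms of $M^{*}$ --- only finitely many \emph{meaningful} operations are possible, after which the constraints reduce to the dual relations that consistency guarantees. This sidesteps (ii) entirely. What your last paragraph sketches as a ``structural lemma'' is, in effect, the statement that under consistency these $M^{*}$-modifying branches either never fire or fire only in situations that could equally be resolved without them; that may well be true (every counterexample I can build to your positive phase forces $R$ to be inconsistent), but it is a non-trivial claim about which $R^{+}$-consequences the transitively-closed $M^{*}$ actually encodes, and it is not supplied by Theorem~\ref{termInversion}.
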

\begin{proof}

By adding a new atom to a constant $c \in M$ and only to this constant it is always possible to make ${\bf{Tr}}(c) =  {\bf{GL}^a}([c])$. In the same way, by adding new atoms to the component constants of a term $k$ (one new atom per constant) it is possible to enforce ${\bf{Tr}}(k) = {\bf{GL}^a}([k])$ unless there is an atom $\zeta \in M^{*}$ in the lower segment of all the duals of the component constants of $k$. In such case a new edge $\zeta \rightarrow [k]$ should be added to the graph of $M^{*}$, which we do while enforcing positive trance constraints, and obtain ${\bf{Tr}}(k) =  {\bf{GL}^a}([k])$. Therefore, if $x$ is a constant or term of $M$ we can make ${\bf{Tr}}(x) = {\bf{GL}^a}([x])$ by adding atoms to $M$ and edges to $M^{*}$.

We want to enforce trace constraints for positive relations $(d < e) \in R^{+}$ and negative relations $\neg(a < b) \in R^{-}$. By adding new atoms to $M$ and edges to $M^{*}$ we can enforce the trace constraints ${\bf{Tr}}(e) \subset {\bf{Tr}}(d)$ and ${\bf{Tr}}(b) \not\subset {\bf{Tr}}(a)$ if we can enforce the simpler constraints ${{\bf{GL}}^a}([e]) \subset {\bf{Tr}}(d)$ and ${\bf{Tr}}(b) \not\subset {\bf{GL}^a}([a])$. 

\textbf{Algorithms \ref{negativeTrace}} and \textbf{\ref{positiveTrace}} add new atoms to some constants of $M$ and edges to some atoms of $M^{*}$. These constants and atoms existed before the algorithms are applied. Constants of $M$ and atoms of $M^{*}$ are finite and adding more than one new atom under a constant of $M$ and only under this constant has no effect in the traces or any other algebraically meaningful property. The same is true for the edges added to initially existing atoms of $M^{*}$. At some finite time it is possible to transform the original constraints into the simpler constraints which may or may not happen while running the algorithms but it can always happen, if needed, to enforce the trace constraints.

Enforcing negative trace constraints is carried out by adding new atoms to $M^{*}$. Adding new atoms can violate already holding positive trace constraints and fixing these imply adding edges to $M^{*}$ that can violate other negative trace constraints and so on. We are about to see that this process ends if it is possible to enforce the dual relations of $R$ in $M^{*}$. 

Assume that it is possible to enforce the duals of the relations of $R$, i.e. to enforce $[e] < [d]$ for $(d < e) \in R^{+}$ and $\neg([b] < [a])$ for $\neg(a < b) \in R^{-}$. Then we can enforce the positive constraints ${{\bf{GL}}^a}([e]) \subset {\bf{Tr}}(d)$ because it is always true ${{\bf{GL}}^a}([d]) \subset {\bf{Tr}}(d)$ and, for negative constraints, ${\bf{Tr}}(b) \not\subset {{\bf{GL}}^a}([a])$ follows from ${{\bf{GL}}^a}([b]) \subset {\bf{Tr}}(b)$ and $\neg([b] < [a])$. This proves that by adding atoms to $M$ and edges to $M^{*}$ we can enforce the trace constraints if it is possible to enforce the dual relations of $R$, which we can always do unless $R$ is inconsistent.\end{proof}

%%% optional fonts and color configuration
\SetAlFnt{\sffamily}
\renewcommand\ArgSty{\normalfont\sffamily}
\renewcommand\KwSty[1]{\textnormal{\textbf{\sffamily#1}}\unskip}
\SetAlCapFnt{\normalfont\sffamily\large}
\renewcommand\AlCapNameFnt{\sffamily\large}

\newpage

\section{Algorithms}
\label{algorithms}

%%% vertical rules in cyan color
\makeatletter
\renewcommand{\algocf@Vline}[1]{%     no vskip in between boxes but a strut to separate them, 
  \strut\par\nointerlineskip% then interblock space stay the same whatever is inside it
  \algocf@push{\skiprule}%        move to the right before the vertical rule
  \hbox{\bgroup\color{cyan}\vrule\egroup%
    \vtop{\algocf@push{\skiptext}%move the right after the rule
      \vtop{\algocf@addskiptotal #1}\bgroup\color{cyan}\Hlne\egroup}}\vskip\skiphlne% inside the block
  \algocf@pop{\skiprule}%\algocf@subskiptotal% restore indentation
  \nointerlineskip}% no vskip after
\renewcommand{\algocf@Vsline}[1]{%    no vskip in between boxes but a strut to separate them, 
  \strut\par\nointerlineskip% then interblock space stay the same whatever is inside it
  \algocf@bblockcode%
  \algocf@push{\skiprule}%        move to the right before the vertical rule
  \hbox{\bgroup\color{cyan}\vrule\egroup%               the vertical rule
    \vtop{\algocf@push{\skiptext}%move the right after the rule
      \vtop{\algocf@addskiptotal #1}}}% inside the block
  \algocf@pop{\skiprule}% restore indentation
  \algocf@eblockcode%
}
\makeatother
%%% end of optional fonts and color configuration

\SetKwProg{Fn}{Function}{}{}
\SetKwRepeat{Do}{do}{while}%

\begin{linenomath}
\begin{algorithm}
\ForEach{$(a \not< b) \in R^{-}$}{
  \If{  ${\bf{Tr}}(b) \subset {\bf{Tr}}(a)$  }{
    \Do{$ c \equal \emptyset $}{
  		 $c = findStronglyDiscriminantConstant(a, b)$\;
   
  		\If{$ c \equal \emptyset $}{
  
       		choose $h \in {\bf{C}}(M^{*})$ so $h \in  {\bf{GL}}^{c}([b]) \backslash  {\bf{GL}}([a])$\;
      		add new atom $\zeta$ to $M^{*}$ and edge $\zeta \rightarrow h$\;
   		}
     }
  	 add new atom $\phi$ to $M$ and edge $\phi \rightarrow c$\;
 } } 

\Fn{findStronglyDiscriminantConstant(a, b)}{
 calculate the set $ {\bf{\Omega}}(a) \equiv \{ [c]: c \in  {\bf{GL}}(a)  \cap  {\bf{C}}(M)  \} $\;
 initialize $U \equiv {\bf{Tr}}(b)$\;
 \While{$U \neq \emptyset$ }{
    choose atom $\zeta \in U$ and remove it from U\;
 	\If{  ${\bf{\Omega}}(a) \backslash {\bf{GU}}(\zeta ) $ not empty  }{
        choose $[c] \in {\bf{\Omega}}(a) \backslash {\bf{GU}}(\zeta ) $\;
    	\Return  c\;
   	}
 }
 \Return  $\emptyset$\;
}
\caption{enforce negative trace constraints}
\label{negativeTrace}
\end{algorithm}
\end{linenomath}

\newpage

\begin{linenomath}
\begin{algorithm}
\ForEach{$(d < e) \in R^{+}$}{
  \While{  ${\bf{Tr}}(e) \not\subset {\bf{Tr}}(d)$  }{
  	choose an atom $\zeta \in {\bf{Tr}}(e) \backslash {\bf{Tr}}(d)$ at random\;
    calculate ${\bf{\Gamma}}(\zeta, e)  \equiv \{ c\in {\bf{GL}}(e) \cap {\bf{C}}(M) :  \zeta \not\in \ {\bf{GL}}([c])  \}$\;
 	\eIf{ ${\bf{\Gamma}}(\zeta, e) = \emptyset $ }{
    	add edge $\zeta \rightarrow [d]$\;
   	}{
		choose $c \in {\bf{\Gamma}}(\zeta, e)$ at random\; 
        add new atom $\phi$ to $M$ and edge $\phi \rightarrow c$\;
    }
   
 } } 
\caption{enforce positive trace constraints}
\label{positiveTrace}
\end{algorithm}
\end{linenomath}

\begin{linenomath}
\begin{algorithm} 
  calculate $A \equiv {\bf{dis}}(a, b)  \equiv {{\bf{GL}}^a}(a) \backslash {\bf{GL}}(b)$\;
  \ForEach{$\phi \in A$}{
	initialize sets $U \equiv \emptyset$,  $B \equiv {{\bf{GL}}^a}(b)$ and $\Delta \equiv {{\bf{A}}(M^*)} \backslash {\bf{GL}}([\phi])$\;
    \Do{$\Delta \neq \emptyset$}{
  	   choose an atom $\epsilon \in B$\ at random\;
       calculate $\Delta' \equiv \Delta \cap {\bf{GL}}([\epsilon])$\;

 	   \If{$\Delta' \neq \Delta$ or $\Delta = \emptyset$}{
		  create new atom $\psi$\ and edges $\psi \rightarrow \phi$ and $\psi \rightarrow \epsilon$\;
          replace $\Delta$ by $\Delta'$\;
          add $\epsilon$ to $U$\;
   	   }
       substract $\epsilon$ from $B$;
    } 
  } 
  \ForEach{$\epsilon \in U$}{
      create new atom $\epsilon'$ and edge $\epsilon' \rightarrow \epsilon$\;
  }
  delete all atoms in $U \cup A$\;

\caption{Sparse Crossing of \emph{a} into \emph{b}}
\label{sparseCrossing}
\end{algorithm}
\end{linenomath}

\begin{linenomath}
\begin{algorithm} 
  initialize sets $Q \equiv \emptyset$ and $\Lambda \equiv {\bf{C}}(M)$\;
  \Do{$\Lambda \neq \emptyset$}{
    choose $c \in \Lambda$ at random and remove it from $\Lambda$\;
    calculate $S_c \equiv Q \cap {\bf{GL}}(c)$\;
	\eIf{$S_c = \emptyset$}{
		define $W_c \equiv {\bf{A}}(M^{*})$\;
	}{
    	calculate $W_c \equiv \cap_{\phi \in S_c} {{\bf{GL}}^a}([\phi])$\;
    }
    calculate $\Phi_c \equiv \{ [\phi] : \phi \in {{\bf{GL}}^a}(c) \}$\;
    \While{$W_c \neq {\bf{Tr}}(c)$}{
  	    choose an atom $\xi \in W_c \, \backslash \, {\bf{Tr}}(c)$ at random\;
        choose an atom $\phi$ such that $[\phi] \in \Phi_c \, \backslash \, {\bf{GU}}(\xi)$ at random\;
		add $\phi$ to set $Q$\;
		replace $W_c$ with $W_c  \cap{{\bf{GL}}^a}([\phi])$\;
    } 
  } 
  delete all atoms in the set ${\bf{A}}(M) \, \backslash \, Q$\;
\caption{atom set reduction}
\label{traceReduction}
\end{algorithm}
\end{linenomath}

\begin{linenomath}
\begin{algorithm} 
  initialize sets $Q \equiv \emptyset$ and $S \equiv R^{-}$\;
  \While{$S \neq \emptyset$}{
    choose $r \in S$ at random and remove it from $S$. Let $r \equiv \neg(a<b)$\; 
	\If{${\bf{dis}}_{M^{*}}([b], [a]) \cap Q = \emptyset$ }{
		choose an atom $\xi \in {\bf{dis}}_{M^{*}}([b], [a])$ and add it to Q\; 
	}
  } 
  delete all atoms in the set ${\bf{A}}(M^{*}) \, \backslash \, Q$\;
\caption{atom set reduction for the dual algebra}
\label{dualReduction}
\end{algorithm}
\end{linenomath}

\begin{linenomath}
\begin{algorithm} 
  let $R_p$ be a new or exisitng set of pinning relations\;
  \ForEach{$\phi \in M$}{
	  calculate the set $H = C(M) \backslash {\bf{U}}(\phi)$\;
      create the pinning term $T_{\phi} = \odot_{c\, \in H} \, c\, $\;
	  \ForEach{ $c \in C(M) \cap {\bf{U}}(\phi)$    }{
	      add $r \equiv \neg(c < T_{\phi})$  to the set $R_p$\;
      }
  }
\caption{generation of pinning terms and relations}
\label{dualReduction}
\end{algorithm}
\end{linenomath}

\newpage
Graphs are assumed to be transitively closed at all times. This requirement, however, can be delayed at some steps to speed up calculations. Always when atoms or edges are added to the graph of $M$ the corresponding duals and reverted edges should also be added to the graph of $M^{*}$.  When an element is deleted its dual should also be deleted from the graph of $M^{*}$.

\setcounter{equation}{0}
\renewcommand\theequation{A.\arabic{equation}}

\section{Exact atomizations} \label{perfectAtomizationAppendix}

Consider again our toy problem of the vertical lines. We want constant $v$ to satisfy $v < I$ if and only if $I$ is (the term of) an image that has a vertical line. Using subscript $i$ for rows and $j$ for columns we can write:
\begin{linenomath}
\begin{equation} 
(v<I)\Leftrightarrow \vee_{j}\wedge_{i}(c_{ij\,\bf{b}}<I)
\end{equation} 
\end{linenomath}
which simply states that the image should have a black pixel $c_{ij\,\bf{b}}$ at every row $i$ of some column $j$. The boldface index $\bf{b}$ stands for the particular value (color back). 

Recapitulating from {\bf{Section \ref{exactSolutions}}}, an element $b$ has an atom $\phi$ if and only if $b$ contains any of the constants that contain $\phi$. We say
\begin{linenomath}
\begin{equation} 
(\phi<b)\Leftrightarrow \vee_{k}(c_{\phi k}<b),
\end{equation} 
\end{linenomath}
where index $k$ at the disjunction runs along the constants $c_{\phi k}$ that contain atom $\phi$. Relation $a <b$ holds if and only if 
\begin{linenomath}
\begin{equation} 
(a<b)\Leftrightarrow\wedge_{\phi \in a} (\phi<b)\Leftrightarrow \wedge_{\phi \in a}\vee_{k}(c_{\phi k}<b),
\end{equation} 
\end{linenomath}
where the conjunction runs along all atoms in $a$.

\subsection{The map index} 

If we compare the solution of the vertical bar problem and the general form for $(a<b)$ we see that they differ only in the order of the connectors. The representation of elements in atomized semilattices corresponds with a first-order formula with a conjunction followed by a disjunction which is known as conjunctive normal form, CNF. We have to swap the connectors $\vee$ and $\wedge$ to understand how the vertical lines look represented in the semilattice. Interchanging connectors can be done by using the distributive law the same way we can interchange the multiplication and addition operators of linear algebra:
\begin{linenomath}
\begin{equation*} 
\otimes_{j}\oplus_{i} c_{ij}=\oplus_{j\rightarrow i}\otimes_{j} c_{ij}
\end{equation*} 
\end{linenomath}

We introduced the \enquote{map index} $j\rightarrow i$ in {\bf{Section \ref{exactSolutions}}} to represent an index that runs along all possible functions from j to i. For $\oplus_{j\rightarrow i}\otimes_{j} c_{ij}$ each summand is characterized by a particular function from $j$ to $i$. If $j$ takes "J" possible values and $i$ takes $I$ possible values the summation now has $I^J$ summands each summand a multiplication of $J$ factors. To make more explicit the functional dependence we can write $\oplus_{j\rightarrow i}\otimes_{j} c_{ij(i)}$ to emphasize that the value of $i$ on each factor depends upon the factor $j$ through a function $i(j)$ that is different for each summand. The handy map index has the following properties:
\begin{linenomath}
\begin{equation} 
\oplus_{i\rightarrow jk}=\oplus_{i\rightarrow j}\oplus_{i\rightarrow k}
\end{equation} 
\end{linenomath}
\begin{linenomath}
\begin{equation} 
\oplus_{i\rightarrow (j\rightarrow k)}=\oplus_{ij\rightarrow k}.
\end{equation} 
\end{linenomath}
These properties also apply to both, conjunction and disjunction. Unlike multiplication and addition, conjunction and disjunction are both distributive with respect to each other so we can interchange them in any order. 
\begin{linenomath}
\begin{equation} 
\wedge_{j}\vee_{i} c_{ij}=\vee_{j\rightarrow i}\wedge_{j} c_{ij(i)}
\end{equation} 
\end{linenomath}
\begin{linenomath}
\begin{equation} 
\vee_{j}\wedge_{i} c_{ij}=\wedge_{j\rightarrow i}\vee_{j} c_{ij(i)}
\end{equation} 
\end{linenomath}
We can now interchange connectors for the vertical line problem:
\begin{linenomath}
\begin{equation} 
(v<I)\Leftrightarrow \vee_{j}\wedge_{i}(c_{ij\,\bf{b}}<I) \Leftrightarrow \wedge_{j\rightarrow i}\vee_{j}(c_{i(j)j\,\bf{b}}<I) .
\end{equation} 
\end{linenomath}
From the structure of the CNF form we know that the exact embedding into a semilattice of the vertical line problem has $I^J$ atoms of the form:
\begin{linenomath}
\begin{equation} 
\phi_{j\rightarrow i}=\vee_{j}c_{i(j)j\,\bf{b}},
\end{equation} 
\end{linenomath}
where each atom is characterized by a function i(j). Each atom is in one black pixel per column and it is characterized by a particular choice of a row per column.

In this case we know the formal solution of the problem in advance and then we can work out the form of the {\emph{exact atomization}} using the map index. Usually we have examples and a general expression is unknown.

\subsection{Calculating atomizations for complex descriptions} 

The map index just introduced is powerful enough to characterize the form of any embedding provided that we have a first-order formula with or without quantifiers. We are dealing only with finite algebras so universal quantifiers can be treated as conjunctions and existential quantifiers as disjunctions. We first write the formula as a sequence of conjunctions and disjunctions. This is always possible by extending indexes and perhaps adding some trivial clauses that are always $true$ or always $false$. For example,
\begin{linenomath}
\begin{equation} 
[\vee_{i}\wedge_{j}(a_{ij}<I)] \wedge [\vee_{u}(b_{u}<I)] = \wedge_{s} \vee_{r=i \,\,\cup \,u}\wedge_{j}\,g_{srj},
\end{equation} 
\end{linenomath}
with $g_{srj}$ 
\begin{linenomath}
\begin{equation} 
g_{srij} = \left\{\begin{array}{lr}
s=0,\,\,\,r \in {\bf{i}} & a_{ij}<I\\
s=0,\,\,\,r \in {\bf{u}} & false\\
s=1,\,\,\,r \in {\bf{i}} & false\\
s=1,\,\,\,r \in {\bf{u}} & b_{r}<I
\end{array}\right\} ,
\end{equation} 
\end{linenomath}
The trick is simply to extend the scope of the index at the disjunction to $r=i \,\,\cup \,u$, so it can take all possible values of $i$ and $u$ by adding some trivial clauses equal to \textit{false}. To extend an index in a conjunction we would add extra \textit{true} clauses. 

Suppose we want to find the exact embedding for a problem with a solution:
\begin{linenomath}
\begin{equation} 
(h<I)\Leftrightarrow\vee_{a}\wedge_{b}\vee_{c}\neg\wedge_{d}\vee_{e}\, g_{abcde}
\end{equation} 
\end{linenomath}
where $g$ is a function that maps a tupla of indexes $abcde$ to \textit{true}, \textit{false} or some clause $(c_{k} < I)$, 
\begin{linenomath}
\begin{equation} 
g_{abcde}= \{c_{k}< I,\, true,\, false\}.
\end{equation} 
\end{linenomath}
To transform a chain of connectors to CNF, we first get rid of the negations: 
\begin{linenomath}
\begin{equation} 
\vee_{a}\wedge_{b}\vee_{c}\neg\wedge_{d}\vee_{e}\,g_{abcde}=\vee_{a}\wedge_{b}\vee_{c}\vee_{d}\wedge_{e}\neg g_{abcde}=\vee_{a}\wedge_{b}\vee_{cd}\wedge_{e}\, \bar{g}_{abcde},
\end{equation} 
\end{linenomath}
and then move the connectors where we want them by using the map index,
\begin{linenomath}
\begin{equation} 
\vee_{a}\wedge_{b}\vee_{cd}\wedge_{e}\, \bar{g}_{abcde}=\wedge_{a\rightarrow b}\vee_{a}\vee_{cd}\wedge_{e} \,\bar{g}_{ab(a)cde}=\wedge_{a\rightarrow b}\wedge_{acd\rightarrow e}\vee_{acd} \,\bar{g}_{ab(a)cde(acd)}.
\end{equation} 
\end{linenomath}
From the index structure of the conjunctions, we know that the exact model contains at most $B^A E^{ACD}$ atoms, each atom of the form
\begin{linenomath}
\begin{equation} 
\phi_{a\rightarrow b, \, acd\rightarrow e}=\vee_{acd} \,\bar{g}_{ab(a)cde(acd)},
\end{equation} 
\end{linenomath}
contained in at most $ACD$ constants, and characterized for two functions, $A:a\rightarrow b$ and $E:acd\rightarrow e$. 

The inverse problem looks very different and it can be much easier or harder to learn,
\begin{linenomath}
\begin{equation} 
(\bar{h}<I)\Leftrightarrow\neg\vee_{a}\wedge_{b}\vee_{c}\neg\wedge_{d}\vee_{e}\, g_{abcde}=\wedge_{a}\vee_{b}\wedge_{cd}\vee_{e}\, g_{abcde}=\wedge_{a}\wedge_{b\rightarrow cd}\vee_{be}\, g_{abc(b)d(b)e}.
\end{equation} 
\end{linenomath}
The exact model for the inverse problem contains at most $A (CD)^B$ atoms each atom contained on at most $BE$ constants, with the form
\begin{linenomath}
\begin{equation} 
\psi_{{\bf{a}}, \, b\rightarrow cd}=\vee_{be}\, g_{{\bf{a}}\, bc(b)d(b)e}.
\end{equation} 
\end{linenomath}
We say "at most" because the exact models may contain fewer atoms than the expected from the structure of the conjunction indexes. First, notice that disjunctions with trivial $true$ clause are always satisfied and never become atoms. Some atoms may be identical to others. Some other atoms are contained in a constant and in its inverse constant which become disjunctive clauses that are always satisfied so they can be ignored. Other atoms we can discard are the ones that are \emph{redundant} as in {\bf{Theorem \ref{redundantAtom}}}. Redundant atoms add nothing to the atomization that is not already required by other (smaller) atoms. Smaller atoms are contained in fewer constants than larger atoms. We say an atom is smaller than other if the other is larger as defined in {\bf{Appendix \ref{Notation}}}. 

From the form of the disjunctive clause $\psi_{{\bf{a}}, \, b\rightarrow cd}$ we see that atoms in this model are in at most $BE$ constants. Because $g_{abcde}$ maps to a clauses with a mapping that is not necessarily injective the same constant may appear multiple times in the disjunctive expression of an atom. Additionally $false$ clauses also result in missing constants so at the end an atom may be included in significantly less than $BE$ constants. Because of the difference in atom sizes it is possible for some atoms to be supersets of others. 

Consider that we potentially have a large set of symbols $g_{abcde}$ with as many as $ABCDE$ symbols that correspond with at most the number of constants defined for the problem. We should expect many repetitions in problems with many indexes (many connectors). When calculated using a computer we often find for many problems that their prefect models have by far fewer atoms than calculated from the conjunction indexes. In any case, the exact model is usually very large.
 
Interestingly, the fact that the excat model of a problem is larger than the exact model of another problem does not necessarily mean that the "larger" problem is harder to learn. In general the size of the atoms of a model is a much better indicator of problem hardness. The smaller the atoms the easier is to find an approximated solution to the problem. 

We finish this section with an interesting property. Any atom in a constant $x$ intersects in at least one constant any other atom (albeit reverted) of its inverse constant $\neg x$. For example, any two $\bar{\phi}_{a\rightarrow b, \, acd\rightarrow e}$ and $\psi_{{\bf{a}}, \, b\rightarrow cd}$ always intersect in the constant inclusion clause,
\begin{linenomath}
\begin{equation} 
\bar{g}_{\bf{a}b(\bf{a})c(b(\bf{a}))d(b(\bf{a}))e(\bf{a}c(b(\bf{a}))d(b(\bf{a})))}
\end{equation} 
\end{linenomath}
or the negation of this clause if we choose to revert $\bar{\psi}$ instead of $\phi$. To see why this is true, first notice that $\psi_{{\bf{a}}, \, b\rightarrow cd}$ sets a value $\bf{a}$ for index $a$. Once we have $\bf{a}$ fixed, we just need to look into the expression of $\bar{\phi}_{a\rightarrow b, \, acd\rightarrow e}$ to find out that fixing $\bf{a}$ sets a value $b(\bf{a})$ for $b$ which in turn, going back to $\psi_{{\bf{a}}, \, b\rightarrow cd}$, fixes $c(b(\bf{a}))$ and $d(b(\bf{a}))$ that finally sets the value $e(\bf{a}c(b(\bf{a}))d(b(\bf{a})))$ using again the expression of $\bar{\phi}_{a\rightarrow b, \, acd\rightarrow e}$. We have been jumping from one atom to the other selecting values for indexes until we find the intersecting clause. This clause corresponds always with a constant inclusion and never with a trivial $true$ or $false$ clause because atoms do not contain $true$ clauses. An atom may contain $false$ clauses but to intersect in a $false$ clause with the inverse of another atom requires a $true$ clause in this one.

\newpage

\section{Error is smaller the higher the compression} \label{compressionAndError}

\subsection{Derivation} \label{CompressionaAndErrorproof}

Assume that we sample $Q$ test questions from a distribution $D_{test}$ and that we have a learning algorithm that answers all the questions correctly. Let the failure rate be the probability for our algorithm to fail in one test question sampled using distribution $D_{test}$. 

The probability to have a failure rate greater than $\varepsilon$ and still answer the $Q$ questions correctly is bounded by:
\begin{linenomath}
\begin{equation} 
P(\text{Q tests correct} \, | \, \text{failure} > \varepsilon) < (1 - \varepsilon)^{Q}.
\end{equation} 
\end{linenomath}
Suppose that we have a set $\Omega$ of possible algorithms (or parameters) and we select one from this set. Assume the selected algorithm correctly responds the $Q$ test questions. We want to derive an upper bound for the failure rate $\varepsilon$ based on the fact that it responded to all the test questions correctly. We have:
\begin{linenomath}
\begin{equation} 
P(\text{failure} > \varepsilon \, | \, \text{Q tests correct}) < \frac{P(\text{failure} > \varepsilon) (1 - \varepsilon)^{Q}}{P(\text{Q tests correct})},
\end{equation} 
\end{linenomath}
where $P(\text{failure} > \varepsilon)$ is the probability to pick an algorithm from $\Omega$ that has an error rate larger than $\varepsilon$, and P(\text{Q tests correct}) is the probability to pick an algorithm that answers all $Q$ questions correctly. If $\varepsilon$ is small we may safely assume that $P(\text{failure} > \varepsilon) \approx 1$, and write:
\begin{linenomath}
\begin{equation} 
\delta \equiv \frac{ (1 - \varepsilon_{\delta})^{Q}}{P(\text{Q tests correct})}  ,
\end{equation} 
\end{linenomath}
where $\delta$ is (an overestimation of) the risk we are willing to accept for the error rate to be larger than $\varepsilon_{\delta}$. Solving for the error rate:
\begin{linenomath}
\begin{equation} 
\varepsilon_{\delta}= 1 - \delta^{\frac{1}{Q}} P(Q)^{\frac{1}{Q}}.
\end{equation} 
\end{linenomath}
The smaller the risk the larger is the error rate we have to accept. $\varepsilon_{\delta}$ has been derived from an upper bound of $P(\text{Q tests correct} \, | \, \text{failure} > \varepsilon)$ so the actual error rate we expect to measure is lower than $\varepsilon_{\delta}$.

Of course, this calculation is meaningless unless there is a well-defined distribution $p(\epsilon)$: 
\begin{linenomath}
\begin{equation} 
P(Q) = \Sigma_{\epsilon} p(\epsilon) p(Q | \epsilon) = \Sigma_{\epsilon} p(\epsilon) (1 - \epsilon)^{Q}
\end{equation} 
\end{linenomath}
where $p(\epsilon)$ is the probability to pick an algorithm that has an error rate equal to $\epsilon$ and the summation runs along all possible error rates.

If we don't know $P(Q)$ we cannot derive $\varepsilon_{\delta}$. It is tempting to use $\varepsilon_{\delta}= 1 - \delta^{\frac{1}{Q}}$ and, in fact, it may work well to approach the average value of $\epsilon$ when $Q$ is not too large. However, when $Q$ is large enough this approach dangerously underestimates $\varepsilon_{\delta}$ and cannot be used. 

It is also tempting to approximate $P(Q) \approx 0.5^Q$ if we know that the proportion of algorithms in $\Omega$ that are expected to do well in test questions is extremely small compared with the cardinal of $\Omega$. Even when the distribution $p(\epsilon)$ is very biased towards randomly responding algorithms for a sufficiently large value of $Q$ the distribution $P(Q)$ is always dominated by the algorithms that do well in test questions. If $Q$ is large enough $P(Q)$ becomes much larger than $0.5^Q$ and the approximation does not work. In general there is no way to derive an error rate unless we know $P(Q)$.

So, let's assume that we know $P(Q)$. As the cardinal of $Q$ grows we get $P(Q) << \delta$ quite rapidly for any reasonable $\delta$. If $Q$ is large enough, the term $P(Q)^{\frac{1}{Q}}$ dominates over $\delta^{\frac{1}{Q}}$ and $\varepsilon_{\delta}$ becomes independent of $\delta$. In general $P(Q)$ dominates unless we demand the risk $\delta$ to be extremely small, and there is no need for that. It is interesting and unintuitive that we get a meaningful value of $\varepsilon_{\delta}$ even if we let the risk to be as large as $\delta = 1$. When $Q$ is large there is a limit value:
\begin{linenomath}
\begin{equation} 
\varepsilon = 1 - P(Q)^{\frac{1}{Q}}.
\end{equation} 
\end{linenomath}
that is independent of the risk. 

Now that we know how to calculate an error rate from test example results we are going to apply a similar reasoning to training examples. 

Assume we sample $\emph{different}$ training examples from a distribution $D_{train}$. If multiple learning batches are used the algorithm may not remember well all the examples seen, particularly training examples seen in past epochs. Let's define $R$ as the number of training examples that have been correctly \enquote{retained} by the algorithm. When the error rate is small we expect the difference between $R$ and the total number of training examples to become small compared to $R$.

Again, $\varepsilon$ is the probability for our algorithm to fail in one test question randomly sampled using distribution $D_{test}$. We are going to assume test and train distributions equal, i.e. $D_{test} = D_{train}$.

We are interested in algebraic learning with semilattices here, so our algorithms in $\Omega$ are semilattice models. Imagine we have a random picking mechanism that selects one model among all models consistent with $R$. Assume the chosen model has $Z$ atoms. The probability to get a model with an error rate worse than $\varepsilon$ is given by:
\begin{linenomath}
\begin{equation} 
P(\text{failure} > \varepsilon \, | \, \text{R correct}\, \wedge \, \text{Z atoms} ) = 
\end{equation} 
\end{linenomath}
\begin{linenomath}
\begin{equation} 
= \frac{ P(\text{Z atoms} ) \, P(\text{failure} > \varepsilon \, \wedge \, \text{R correct}\, | \, \text{Z atoms} ) }{ P(\text{R correct}\, \wedge \, \text{Z atoms} ) }. 
\end{equation} 
\end{linenomath}
The models we can handle in practice are very small compared with the number of different atoms ($2^C$ for $C$ constants) a model could have, so realistic models are not very far (compared to $2^C$) from the minimal size of $Z$ for which there is some model consistent with $R$. In this range of $Z$ values we hypothesize
\begin{linenomath}
\begin{equation} 
P(\text{failure} > \varepsilon \, \wedge \, \text{R correct}\, | \, \text{Z atoms} ) \leq P(\text{failure} > \varepsilon \, \wedge \, \text{R correct}). 
\end{equation} 
\end{linenomath}
This inequality occurs when the proportion of small models that perform bad within the set of small models is not greater than the proportion of small models that perform bad in the set of large models. We can expect this to be the case based on the fact that there are many more large models than small models; with more atoms we get more models consistent with $R$ but we also get an even greater number of models inconsistent with $R$. 

Using the inequality above it is possible to derive an upper bound for the conditional probability:
\begin{linenomath}
\begin{equation} 
P(\text{failure} > \varepsilon \, | \, \text{R correct} \wedge \text{Z atoms}) < \frac{P(\text{failure} > \varepsilon) (1 - \varepsilon)^{R}}{P(\text{R correct} | \text{Z atoms})},
\end{equation} 
\end{linenomath}
which is almost the same result we got before for test examples with $P(\text{Q tests correct})$ replaced by $P(\text{R correct} \,|\, \text{Z atoms})$. Again, we can safely use the approach $P(\text{failure} > \varepsilon) \approx 1$ and replace the conditional probability in the denominator by: 
\begin{linenomath}
\begin{equation} 
P(\text{R correct}\, | \, \text{Z atoms}) = \frac{ |\Omega_{R\wedge Z}| }{ |\Omega_{Z}| },
\end{equation} 
\end{linenomath}
where $\Omega_{Z}$ is the number of models with $Z$ atoms, and $\Omega_{R\wedge Z}$ is the number of models with $Z$ atoms and consistent with $R$. There is a bound for $\Omega_{Z}$: 
\begin{linenomath}
\begin{equation} 
|\Omega_{Z}| < {{2^C}\choose{Z}},
\end{equation} 
\end{linenomath}
that we can use to get an upper bound for the probability: 
\begin{linenomath}
\begin{equation} 
P(\text{failure} > \varepsilon \, | \, \text{R correct} \wedge \text{Z atoms}) < \ \frac{ {{2^C}\choose{Z}} \, (1 - \varepsilon)^{R}} {|\Omega_{R\wedge Z}|}.
\end{equation} 
\end{linenomath}
The combinatorial number corresponds with all possible atomizations of size $Z$. It does not correspond with the number of possible models of size $Z$ because there are multiple atomizations that produce the same model. This is a consequence of \emph{redundant} atoms (see theorem \ref{redundantAtom}), but it provides an upper bound for $\Omega_{Z}|$.

If the risk we are willing to accept to get a bad performing model of size $Z$ is set to $\sigma$:
\begin{linenomath}
\begin{equation} 
\sigma \equiv \frac{ {{2^C}\choose{Z}} \, (1 - \varepsilon_{\sigma})^{R}} {\Omega_{R\wedge Z}},
\end{equation} 
\end{linenomath}
we can derive an upper bound for the error rate $\varepsilon_{\sigma}$. 

The logarithm of the combinatorial number can be estimated assuming $Z << 2^{C}$ with: 
\begin{linenomath}
\begin{equation} 
\ln {{2^C}\choose{Z}} \approx \ln(2) Z C + O(max(Z,\,C)),
\end{equation} 
\end{linenomath}
that can be derived from Stirling's factorial formula. Substituting this estimation in the equation above and using $ln(1 - \varepsilon) \approx -\varepsilon + O(\epsilon^2)$
\begin{linenomath}
\begin{equation} 
\ln(2)ZC - \varepsilon_{\sigma} \, R - \ln(|\Omega_{R\wedge Z}|) = \ln(\sigma).
\end{equation} 
\end{linenomath}
The error rate is dominated by $P(\text{R correct}\, | \, \text{Z atoms})$ and it becomes independent of the risk $\sigma$ for any reasonable value, just as it happened before with test examples. The quantity $log_{2} |\Omega_{R\wedge Z}|$ measures the degeneracy of the solutions. We have $1 \leq |\Omega_{R\wedge Z}| << |\Omega_{Z}|$, and even if $|\Omega_{R\wedge Z}|$ is a very large number, it is going to be very small compared to $|\Omega_{Z}|$. We expect: 
\begin{linenomath}
\begin{equation} 
O(\ln(|\Omega_{Z}|) - \ln(|\Omega_{R\wedge Z}|)) \approx O(\ln(|\Omega_{Z}|))
\end{equation} 
\end{linenomath}
so the contribution of $\ln(|\Omega_{R\wedge Z}|)$ is small (albeit not necessarily negligible) compared to $NC$. 

If we neglect $ln(|\Omega_{R\wedge Z}|)$ the following equation gives us the error rate we expect to get for a model selected using the random picking algorithm:
\begin{linenomath}
\begin{equation} 
\varepsilon R = \ln 2\, ZC.
\end{equation} 
\end{linenomath}
Reorganizing the equation and introducing the compression rate $\kappa$ we get for the random picking algorithm: \begin{linenomath}
\begin{equation} 
\kappa \equiv \frac{R}{Z}
\end{equation} 
\end{linenomath}
we finally get
\begin{linenomath}
\begin{equation} 
\varepsilon = \frac{\ln2 \,C}{\kappa}
\end{equation} 
\end{linenomath}
which says that error and compression rates are inversely proportional and their product depends only upon the number of constants or degrees of freedom of our data.

The random picking algorithm would actually be a valid learning algorithm if we could choose a low $Z$ value at will. We may do better than the random picking algorithm but what is actually easy is to do worse! We can do much worse, for example, if we use one of the memorizing algorithms described in {\bf{Section \ref{memorizingGeneralizing}}}. 

Experimental results suggest that the Sparse Crossing algorithm may learn faster than the random picking when the error is large. However, it seems that when the error rate gets small the Sparse Crossing algorithm asymptotically approaches the exact performance of the random picking algorithm. We also have to consider that the approximations made here for the random picking algorithm assume a low error rate, so we do not really know the performance of the random picking algorithm at high error rates.

When the input constants are divided in pairs, so the presence of one constant in the pair implies the absence of the other (like the white and black pixel constants) the number of different atoms is $3^{\frac{C}{2}}$ rather than $2^C$. Each atom can be either in one of the constants of the pair or in none of them: in total three states per constant pair. Atoms that have both constants of the same pair in its upper segment do not appear in simple classification problems. It is very easy to see why. Suppose we are classifying images. If an atom is in both, the white and its corresponding black pixel's constant, then it is in the lower segment of every term representing an image and has no use. In this case the proportionality law reads: 
\begin{linenomath}
\begin{equation} 
\varepsilon\kappa = {\frac{\ln\,3}{2}} C.
\end{equation} 
\end{linenomath}
This equation and its proportionality constant are in good agreement with experimental results. We compare the theoretical values with experimental results in the next section of the appendix and in section {\bf{Section \ref{ekIsConstant}}}. The inverse proportionality between error and compression rates is clear.

For classification problems for which there are symmetries of the input data that do not alter the hidden classes we can give a better estimation of the relation between error and compression rates. This is the subject of the next section.

\subsection{The role of symmetries} 
\label{symmetries}

In \textbf{Appendix \ref{perfectAtomizationAppendix}} we showed how to derive the atoms of a constant for which we have a formal description as a first order formula. The atoms can be described by combinatorial variations of other constants determined by the map index. We departed from a known formal description that uses some explicit indexes that map to constants. When learning from data the formal description is not known and the indexes are hidden but are still implicit in the atoms learned. Pairs of atoms of the exact atomization are related by one or multiple swappings of two constants. Two different values of the same index, map to two constants that can be swapped. The structure of the hidden problem gets reflected into the symmetries of the atoms. 

In the same way, if input data has a symmetry, meaning that some constants can be interchanged without affecting the hidden classes, the atoms also display the symmetry. To be more specific, consider the problem of separating images with an even count of vertical bars from images with an odd count. We can take an input image and permute the columns and also permute the rows without affecting in which class the image should be classified. In this case the atoms also manifest the same symmetry, i.e. we can apply the same permutations to an atom and obtain another atom of the exact atomization. 

This is potentially useful in practice. If a problem has a known symmetry new atoms can be derived and added to a model by applying the symmetry to the existing atoms. We get new atoms \enquote{for free} without the need to learn them, i.e. without the need of extensively train for all possible values that the symmetry can take. For example, we could use this technique to improve accuracy of translation-invariant pattern recognition with fewer training examples.

In section \ref{ekIsConstant} we studied the problem of separating even from odd using Sparse Crossing. We showed that the relation between error and compression fits well the theoretical predictions for the random picking at low error rates. The proportionality between error and the inverse of the compression rate is clear. For grids of size $7\times7$ and $10\times10$ the measured proportionality constant and the predicted proportionality constant for the random picking only differ in about 20\% and 35\% respectively. Not bad for an adimensional quantity that can take any value. 

We are going to use our knowledge of the symmetries of the even-versus-odd separation problem to improve our theoretical predictions. The term $ln(|\Omega_{R\wedge Z}|)$ that we considered small compared with $ln(|\Omega_{Z}|)$ corresponds with the logarithm of the number of atomizations with $Z$ atoms that satisfy $R$. This is a subtracting term that measures degeneracy of the solutions, so the larger it is the more efficient is the transformation of compression into accuracy. For each atom there are other $(d!)^2$ atoms in the exact atomization that correspond with a permutation of rows and a permutation of columns (where $d\times d$ is the dimension of the grid). For the number of solutions of $R$ with $Z$ atoms we should also expect to have $(d!)^2$ as a multiplying factor:
\begin{linenomath}
\begin{equation} 
|\Omega_{R\wedge Z}| \approx \alpha(R, Z) (d!)^{2Z},
\end{equation} 
\end{linenomath}
where $\alpha(R, Z)$ is some quantity larger than $1$. If we use this estimation we get:
\begin{linenomath}
\begin{equation} 
\frac{\ln(3)}{2} ZC - \varepsilon \, R - 2 \ln(d!) Z - \ln(\alpha(R, Z)) = 0.
\end{equation} 
\end{linenomath}
and solving for $\varepsilon\kappa$:
\begin{linenomath}
\begin{equation} 
\varepsilon \, \kappa = \frac{\ln(3)}{2} C - 2 \ln(d!) - \frac{\ln(\alpha(R, Z))}{Z}.
\end{equation} 
\end{linenomath}
Neglecting the last term and substituting $C = 2d^{2}$, we get the new proportionality constant:
\begin{linenomath}
\begin{equation} 
\varepsilon \, \kappa = \ln(3) d^{2} - 2 \ln(d!).
\end{equation} 
\end{linenomath}
With the new estimation the observed discrepancy between measured and experimental values of this constant drop to about $10\%$ for dimensions $7\times 7$ and $5\%$ for dimension $10\times 10$. Convergence to the theoretical prediction is reached when the error becomes small enough, see (\textbf{Figure \ref{fig:combinedResultsF}}). 
\begin{figure}
\centering
\hspace*{-1cm} 
\includegraphics[scale=0.3]{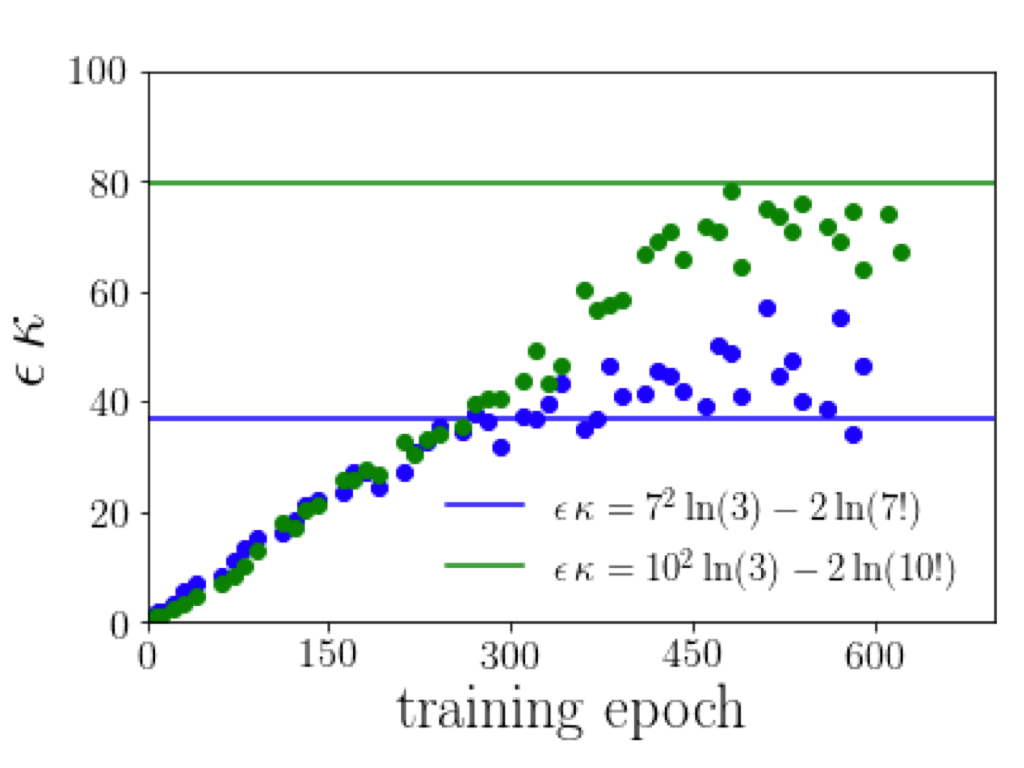} \caption{\textbf{Convergence of learning by Sparse Crossing to theoretical predictions in the problem of distinguishing whether an image has an even or odd number of vertical bars.} Lines indicate theoretical prediction at low error, to which Sparse Crossing approximately converges. blue: $7\times7$ images. green: $10\times10$ images.}
\label{fig:combinedResultsF}
\end{figure}

\nolinenumbers

\end{appendices}

%Bibliography

\bibliographystyle{unsrt}
\bibliography{algebraic}

\end{document}